\theoremstyle{definition}  %Sets style of subsequent newtheorems to 'definition'
\theoremstyle{theorem}
\newtheorem{lemma}{Lemma}
\newtheorem{theorem}{Theorem}
\newtheorem{corollary}{Corollary}
\newtheorem{remark}{Remark}
\newtheorem{definition}{Definition}
\newtheorem{proposition}{Proposition}
\xpatchcmd{\proof}{\itshape}{\normalfont\proofnameformat}{}{}
\newcommand{\proofnameformat}{\bfseries}
\newcommand{\pref}[1]{\prettyref{#1}}
\newcommand{\savehyperref}[2]{\texorpdfstring{\hyperref[#1]{#2}}{#2}}
\newcommand\numberthis{\addtocounter{equation}{1}\tag{\theequation}}
\DeclarePairedDelimiter{\abs}{\lvert}{\rvert} 
\DeclarePairedDelimiter{\brk}{[}{]}
\DeclarePairedDelimiter{\crl}{\{}{\}}
\DeclarePairedDelimiter{\prn}{(}{)}
\DeclarePairedDelimiter{\nrm}{\|}{\|}
\DeclarePairedDelimiter{\tri}{\langle}{\rangle}
\DeclarePairedDelimiter{\ceil}{\lceil}{\rceil}
\let\Pr\undefined
\DeclareMathOperator{\En}{\mathbb{E}}
\DeclareMathOperator{\Pr}{Pr}
\DeclareMathOperator*{\argmin}{argmin} % * Places subscript directly under operator
\DeclareMathOperator*{\argmax}{argmax}
\newcommand{\ls}{\ell}
\newcommand{\ldef}{\vcentcolon=}
\newcommand{\wt}[1]{\widetilde{#1}}
\newcommand{\wh}[1]{\widehat{#1}}
\newcommand{\mb}[1]{\boldsymbol{#1}}
\def\ddefloop#1{\ifx\ddefloop#1\else\ddef{#1}\expandafter\ddefloop\fi}
\def\ddef#1{\expandafter\def\csname bb#1\endcsname{\ensuremath{\mathbb{#1}}}}
\def\ddefloop#1{\ifx\ddefloop#1\else\ddef{#1}\expandafter\ddefloop\fi}
\def\ddef#1{\expandafter\def\csname b#1\endcsname{\ensuremath{\mathbf{#1}}}}
\def\ddef#1{\expandafter\def\csname c#1\endcsname{\ensuremath{\mathcal{#1}}}}
\def\ddef#1{\expandafter\def\csname h#1\endcsname{\ensuremath{\widehat{#1}}}}
\def\ddef#1{\expandafter\def\csname hc#1\endcsname{\ensuremath{\widehat{\mathcal{#1}}}}}
\def\ddef#1{\expandafter\def\csname t#1\endcsname{\ensuremath{\widetilde{#1}}}}
\def\ddef#1{\expandafter\def\csname tc#1\endcsname{\ensuremath{\widetilde{\mathcal{#1}}}}}
\newcommand{\diag}{\textrm{diag}}
\newcommand{\grad}{\nabla}
\newcommand{\proman}[1]{\prn*{\romannumeral #1}}
\newcommand{\overleq}[1]{\overset{ #1}{\leq{}}}
\newcommand{\overgeq}[1]{\overset{#1}{\geq{}}}
\newcommand{\overeq}[1]{\overset{#1}{=}}
\newcommand{\whj}{\wh j} 
\newcommand{\relu}[1]{\brk*{#1}_{+}}
\let\wt\undefined
\newcommand{\wt}[1]{\widetilde{#1}}
\newcommand{\tepsilon}{\tilde{\epsilon}}
\newcommand{\convexclass}{\mathscr{D}_c}
\newcommand{\indicator}[1]{\mb{1}\crl*{#1}}
\newcommand{\fB}{f_{(\ref{eq:empfn_basic_cons})}}
\newcommand{\reals}{\mathbb{R}}
\newcommand{\w}{w}
\newcommand{\W}{\mathcal{W}}
\renewcommand{\epsilon}{\varepsilon}
\newcommand{\indic}{\mathbb{1}}
\newcommand{\fD}{f_{\eqref{eq:multipass-fn}}}
\newcommand{\wmp}{\wh w^{\text{MP}}}
\newcommand{\newn}{m}
\newcommand{\fullvariable}{w} 
\newcommand{\poly}{\text{poly}} 
\newcommand{\sign}[1]{\,\text{sign}\crl{#1}} 
\renewcommand{\relu}{\text{ReLU}}
\newcommand{\fA}{f_{(\ref{eq:empfn_basic_cons_sco})}}
\title{SGD: The Role of Implicit Regularization, Batch-size and Multiple-epochs} 
\author{Satyen Kale \\ 
Google Research, NY \\ 
        {\small\texttt{satyen@google.com}}\\ 
	  \and
	Ayush Sekhari\\ 
	Cornell University\\
                {\small\texttt{as3663@cornell.edu}}\\
       \and
        Karthik Sridharan\\ 
        Cornell University\\
        {\small\texttt{ks999@cornell.edu}}\\
} 
\date{} 
\begin{document} 

\maketitle 
 \begin{abstract}  
Multi-epoch, small-batch, Stochastic Gradient Descent (SGD) has been the method of choice for learning with large over-parameterized models. A popular theory for explaining why SGD works well in practice is that the algorithm has an implicit regularization that biases its output towards a good solution. Perhaps the theoretically most well understood learning setting for SGD is that of Stochastic Convex Optimization (SCO), where it is well known that SGD learns at a rate of $O(1/\sqrt{n})$, where $n$ is the number of samples. In this paper, we consider the problem of SCO and explore the role of implicit regularization, batch size and multiple epochs for SGD. Our main contributions are threefold: 
\begin{enumerate}
\item We show that for any regularizer, there is an SCO problem for which Regularized Empirical Risk Minimzation fails to learn. This automatically rules out any implicit regularization based explanation for the success of SGD.
\item We provide a separation between SGD and learning via Gradient Descent on empirical loss (GD) in terms of sample complexity. We show that there is an SCO problem such that GD with any step size and number of iterations can only learn at a suboptimal rate: at least $\wt{\Omega}(1/n^{5/12})$.
\item We present a multi-epoch variant of SGD commonly used in practice. We prove that this algorithm is at least as good as single pass SGD in the worst case. However, for certain SCO problems, taking multiple passes over the dataset can significantly outperform single pass SGD. 
\end{enumerate}
We extend our results to the general learning setting by showing a problem which is learnable for any data distribution, and for this problem, SGD is strictly better than RERM for any regularization function. We conclude by discussing the implications of our results for deep learning, and show a separation between SGD and ERM for two layer diagonal neural networks. 
\end{abstract} 

% Main body 
\section{Introduction} \label{sec:introduction} 
We consider the problem of  stochastic optimization of the form: 
\begin{align}
\mathrm{Minimize}~~ F(w)
 \end{align}
where the objective $F: \reals^d \mapsto \reals$ is given by $F(w) = \En_{z \sim D} \brk*{f(w; z)}$.
The goal is to perform the minimization based only on samples $S = \crl{z_1,\ldots,z_n}$ drawn i.i.d.\ from some distribution $\cD$. Standard statistical learning problems can be cast as  stochastic optimization problems, with $F(w)$ being the population loss and $f(w,z)$ being the instantaneous loss on sample $z$ for the model $\w$. 

\paragraph{Stochastic Convex Optimization (SCO).}Perhaps the most well studied stochastic optimization problem is that of SCO.
We define a problem to be an instance of a SCO problem if, 
 \begin{align}
\text{\textbf{Assumption I}: Population loss $F$ is convex}   \label{eq:ass1}. 
\end{align}
Notice above that we only require the population loss to be convex and do not impose such a condition on the instantaneous loss functions $f$. 

Algorithms like Stochastic Gradient Descent (SGD), Gradient Descent on training loss (GD), and methods like Regularized Empirical Risk Minimization (RERM) that minimize training loss with additional penalty in the form of a regularizer are all popular choices of algorithms used to solve the above problem and have been analyzed theoretically for various settings of Stochastic Optimization problems (convex and non-convex).  We discuss below a mix of recent empirical and theoretical insights about SGD algorithms that provide motivation for this work. 

\paragraph{SGD and Implicit Regularization.}
A popular theory for why SGD generalizes so well when used on large over-parameterized models has been that of implicit regularization. It has been oberved that in large models, often there are multiple global minima for the empirical loss. However not all of these empirical minima have low suboptimality at the population level. SGD when used as the training algorithm often seems to find empirical (near) global minima that also generalize well and have low test loss. Hence while a general Empirical Risk Minimization (ERM) algorithm might fail, the implicit bias of SGD seems to yield a well-generalizing ERM. The idea behind implicit regularization is that the solution of SGD is equivalent to the solution of a Regularized Empirical Risk Minimizer (RERM) for an appropriate implicit regularizer. 

The idea of implicit regularization of SGD has been extensively studied in recent years. In \citet{gunasekar2018characterizing}, the classical setting of linear regression (with square loss) is considered and it was shown that when considering over-parameterized setting, the SGD algorithm is equivalent to fitting with a linear predictor with the smallest euclidean norm. In \citet{soudry2018implicit, ji2018risk} linear predictors with logistic loss are considered and it was noted that SGD for this setting can be seen as having an implicit regularization of $\ls_2$ norm. \citet{gunasekar2018implicitb} considered multi-layer convolutional networks with linear activation are considered and showned that SGD for this model can be seen as having an implicit regularization of $\ls_{2/L}$ norm (bridge penality for depth $L$ networks) of the Fourier frequencies corresponding to the linear predictor. \citet{gunasekar2018implicit} considered matrix factorization and showed that running SGD is equivalent to having a nuclear norm based regularizer. More recent work of \citet{arora2019implicit, razin2020implicit} shows that in particular in the deep matrix factorization setting, SGD cannot be seen as having any norm based implicit regularizer but rather a rank based one. However, in all these cases, the behavior of SGD corresponds to regularization functions that are independent of the training data (e.g. rank, $l_p$-norm, etc). 

One could surmise that a grand program for this line of research is that for problems where SGD works well, perhaps there is a corresponding implicit regularization explanation. That is, there exists a regularizer $R$ such that SGD can been seen as performing exact or approximate RERM with respect to this regularizer. In fact, one can ask this question specific to  SCO problems. That is, for the problem of SCO, is there an implicit regularizer $R$ such that SGD can be seen as performing approximate RERM? In fact, a more basic question one can ask: is it true that SCO problem is always learnable using some regularized ERM algorithms? \emph{We answer both these questions in the negative.} 

\paragraph{SGD vs GD: Smaller the Batch-size Better the Generalization.} It has been observed that in practice, while SGD with small batch size, and performing gradient descent (GD) with empirical loss as the objective function both minimize the training error equally well, the SGD solution generalizes much better than the full gradient descent one \citep{keskar2016large, kleinberg2018alternative}. However, thus far, most existing literature on theorizing why SGD works well for over-parameterized deep learning models also work for gradient descent on training loss \citep{allen2019can, allen2018learning, allen2018convergence, arora2018optimization}. In this work, we construct a convex learning problem where single pass SGD provably outperforms GD run on empirical loss, which converges to a solution that has large excess risk. We thus provide \emph{a problem instance where SGD works but GD has strictly inferior sample complexity (with or without early stopping)}. 

\paragraph{Multiple Epochs Help.}  The final empirical observation we consider is the fact that multiple epochs of SGD tends to further continually decrease not only the training error but also the test error \citep{zhang2016understanding, ma2018power, bottou201113}. In this paper, we construct \emph{an SCO instance for which multiple epochs of single sample mini-batched SGD significantly outperforms single pass SGD}, and for the same problem, \emph{RERM fails to converge to a good solution}, and hence, so does GD when run to convergence.

\subsection{Our Contributions} 
We now summarize our main contributions in the paper:
\begin{enumerate} [leftmargin=5mm] 
\item  {\bf SGD and RERM Separation.} In \pref{sec:SGDimp}, we demonstrate a SCO problem where a single pass of SGD over $n$ data points obtains a $1/\sqrt{n}$ suboptimality rate. However, for any regularizer $R$, regularized ERM does not attain a diminishing suboptimality. We show that this is true even if the regularization parameter is chosen in a sample dependent fashion. Our result immediately rules out the explanation that SGD is successful in SCO due to some some implicit regularization. 
\item  {\bf SGD and GD Separation.} In \pref{sec:GD_fails}, we provide a separation between SGD and GD on training loss in terms of sample complexity for SCO. To the best of our knowledge, this is the first\footnote{Despite the way the result is phrased in \cite{amir2020gd}, their result does not imply separation between SGD and GD in terms of sample complexity but only number of iterations.} such separation result between SGD and GD in terms of sample complexity. In this work, we show the existence of SCO problems where SGD with $n$ samples achieves a suboptimality of $1/\sqrt{n}$, however, irrespective of what step-size is used and how many iterations we run for, GD cannot obtain a suboptimality better than $1/(n^{5/12} \log^2(n) )$. 
\item  {\bf Single-pass vs Multi-pass SGD.} In \pref{sec:multi_pass}, we provide an adaptive multi-epoch SGD algorithm that is provably at least as good as single pass SGD algorithm. On the other hand, we also show that this algorithm can far outperform single pass SGD on certain problems where SGD only attains a rate of $1/\sqrt{n}$. Also, on these problems RERM fails to learn, indicating that GD run to convergence fails as well.
\item {\bf SGD and RERM Separation in the Distribution Free Agnostic PAC Setting.} 
The separation result between SGD and RERM introduced earlier was for SCO. However, it turns out that the problem is not agnostically learnable for all distributions but only for distributions that make $F$ convex. In \pref{sec:dist_free_rerm}, we provide a learning problem that is distribution-free learnable, and where SGD provably outperforms any RERM. 
\item {\bf Beyond Convexity (Deep Learning).} The convergence guarantee for SGD can be easily extended to stochastic optimization settings where the population loss $F(w)$ is \textit{linearizable}, but may not be convex. We formalize this in \pref{sec:nn_connections}, and show that for two layer diagonal neural networks with $\relu$ activations, there exists a distribution for which the population loss is \textit{linearizable} and thus SGD works, but ERM algorithm fails to find a good solution. This hints at the possibility that the above listed separations between SGD and GD / RERM for the SCO setting, also extend to the deep learning setting. 
\end{enumerate} 

\subsection{Preliminaries}
A standard assumption made by most gradient based algorithms for stochastic optimization problems is the following: 
\begin{align} 
\textbf{Assumption II: }\begin{cases} & \text{$F$ is $L$-Lipschitz (w.r.t. $\ell_2$ norm)} \\ 
& \exists \w^\star \in \argmin_{\w} F(w) \textrm{ s.t. }\|w_1 - \w^\star\| \le B \\ 
&\sup_{\w} \En_{z \sim D} \|\nabla f(w,z) - \nabla F(w)\|^2  \le \sigma^2 \end{cases}. \label{eq:ass2} 
\end{align} 
In the above and throughout this work, the norm denotes the standard Euclidean norm and $\w_1$ is some initial point known to the algorithm. Next, we describe below more formally what the regularized ERM, GD and SGD algorithms are. 
\paragraph{(Regularized) Empirical Risk Minimization.} Perhaps the simplest algorithm one could consider is the Empirical Risk Minimization (ERM) algorithm where one returns a minimizer of training loss (empirical risk) 
$$F_S(\w) \ldef{} \frac{1}{n} \sum_{t=1}^n f(\w;z_t)~.$$ That is, 
$\w_{\mathrm{ERM}} \in \argmin_{\w \in \W} F_S(\w)$. A more common variant of this method is one where we additionally penalize complex models using a regularizer function $R: \reals^d \mapsto \reals$. That is, a Regularized Empirical Risk Minimization (RERM) method consists of returning:
\begin{align}  \label{eq:RERM_def} 
\w_{\mathrm{RERM}} = \argmin_{\w \in \W} F_S(\w)  + R(\w). 
\end{align}

\paragraph{Gradient Descent (GD) on Training Loss.} Gradient descent on training loss is the algorithm that performs the following update on every iteration:
\begin{align}\label{eq:GD}
\w^{\mathrm{GD}}_{i+1} \leftarrow \w^{\mathrm{GD}}_{i} - \eta \nabla F_S(\w^{\mathrm{GD}}_{i}), 
\end{align}
where $\eta$ denotes the step size. After $t$ rounds, we return the point $\widehat{\w}^{\mathrm{GD}}_{t} \ldef{} \frac{1}{t} \sum_{i=1}^t\w^{\mathrm{GD}}_{i}$. 

\paragraph{Stochastic Gradient Descent (SGD).} Stochastic gradient descent (SGD) has been the method of choice for training large over-parameterized deep learning models, and other convex and non-convex learning problems. Single pass SGD algorithm runs for $n$ steps and for each step takes a single data point and performs gradient update with respect to that sample. That, is on round $t$,
\begin{align}\label{eq:SGD}
\w^{\mathrm{SGD}}_{i+1} \leftarrow \w^{\mathrm{SGD}}_{i} - \eta \nabla f(\w^{\mathrm{SGD}}_i;z_i) 
\end{align} 
Finally, we return $\widehat{\w}^{\mathrm{SGD}}_n = \frac{1}{n} \sum_{i=1}^n \w^{\mathrm{SGD}}_{i}$. Multi-epoch (also known as multi-pass) SGD algorithm simply cycles over the dataset multiple times continuing to perform the same update specified above. It is well know that single pass SGD algorithm enjoys the following convergence guarantee: 
\begin{theorem}[\citet{nemirovskij1983problem}] \label{thm:sgd_works_main} On any SCO problem satisfying Assumption I in \pref{eq:ass1} and Assumption II in \pref{eq:ass2}, running SGD algorithm for $n$ steps with the step size of $\eta = 1/\sqrt{n}$ enjoys the guarantee 
$$ 
\En_{S}[F(\widehat{\w}^{\mathrm{SGD}}_n)] - \inf_{\w \in \reals^d} F(\w) \le O\prn[\Big]{\frac{1}{\sqrt{n}}}, 
$$ 
where the constant in the order notation only depends on constants $B, L$ and $\sigma$ in \pref{eq:ass2} and is independent of the dimension d. 
\end{theorem}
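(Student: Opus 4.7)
The plan is to execute the classical Nemirovski--Yudin analysis for stochastic mirror/gradient descent, specialized to the Euclidean setting. The central identity is the one-step ``potential'' expansion around $w^\star$. For each iterate,
\begin{align*}
\|\w^{\mathrm{SGD}}_{i+1} - \w^\star\|^2 = \|\w^{\mathrm{SGD}}_i - \w^\star\|^2 - 2\eta \langle \nabla f(\w^{\mathrm{SGD}}_i; z_i), \w^{\mathrm{SGD}}_i - \w^\star\rangle + \eta^2 \|\nabla f(\w^{\mathrm{SGD}}_i; z_i)\|^2.
\end{align*}
Taking expectation conditional on $\w^{\mathrm{SGD}}_i$ and using that $z_i$ is independent of the iterate, the cross term becomes $-2\eta\langle \nabla F(\w^{\mathrm{SGD}}_i), \w^{\mathrm{SGD}}_i - \w^\star\rangle$, which by convexity of $F$ (Assumption~I) is at most $-2\eta(F(\w^{\mathrm{SGD}}_i) - F(\w^\star))$.

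Next I would bound the quadratic noise term. Writing $\nabla f(w;z) = \nabla F(w) + (\nabla f(w;z) - \nabla F(w))$, the cross term vanishes in expectation, so
\begin{align*}
\En \|\nabla f(w;z)\|^2 = \|\nabla F(w)\|^2 + \En\|\nabla f(w;z) - \nabla F(w)\|^2 \le L^2 + \sigma^2,
\end{align*}
using $L$-Lipschitzness of $F$ (which gives $\|\nabla F\|\le L$) and the variance bound in Assumption~II. Combining, rearranging, and taking full expectation yields
\begin{align*}
2\eta\, \En[F(\w^{\mathrm{SGD}}_i) - F(\w^\star)] \le \En\|\w^{\mathrm{SGD}}_i - \w^\star\|^2 - \En\|\w^{\mathrm{SGD}}_{i+1} - \w^\star\|^2 + \eta^2 (L^2+\sigma^2).
\end{align*}

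Summing over $i=1,\ldots,n$ telescopes the distance terms, leaving $\|\w_1 - \w^\star\|^2 \le B^2$. Dividing by $2\eta n$ and applying Jensen's inequality to push the expectation inside $F$ through the averaged iterate gives
\begin{align*}
\En[F(\wh\w^{\mathrm{SGD}}_n)] - F(\w^\star) \le \frac{B^2}{2\eta n} + \frac{\eta (L^2+\sigma^2)}{2}.
\end{align*}
Finally I would plug in $\eta = 1/\sqrt{n}$ to obtain the claimed $O(1/\sqrt{n})$ bound with constants depending only on $B,L,\sigma$ and in particular no dimension dependence. There is no real obstacle here: the only subtlety worth flagging is that $F$ being Lipschitz is used twice (to control $\|\nabla F\|$ and hence $\En\|\nabla f\|^2$, and implicitly to ensure $F(\w^\star)$ is attained or at least approached). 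The convexity of the instantaneous losses $f(\cdot;z)$ is \emph{not} needed anywhere: only $F = \En f$ must be convex, which is exactly the content of Assumption~I and the reason the same $1/\sqrt n$ rate holds in the SCO setting studied throughout the paper.
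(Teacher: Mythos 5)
Your proof is correct and follows essentially the same potential-function argument the paper gives in Appendix~A.2 (\pref{thm:app_SGD_convergence}): expand $\|w_{t+1}-w^\star\|^2$, take conditional expectation, invoke convexity of $F$, telescope, and apply Jensen. The only cosmetic difference is that you use the exact bias--variance identity $\En\|\nabla f\|^2 = \|\nabla F\|^2 + \En\|\nabla f - \nabla F\|^2$ to get the $\tfrac{\eta}{2}(L^2+\sigma^2)$ term, whereas the paper bounds $\tfrac{\eta}{2}\|\nabla f\|^2 \le \eta\|\nabla f-\nabla F\|^2 + \eta\|\nabla F\|^2$ and lands on $\eta(L^2+\sigma^2)$; both are fine and immaterial to the stated $O(1/\sqrt{n})$ rate.
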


In fact, even weaker assumptions like one-point convexity or star convexity of $F$ w.r.t. an optimum on the path of the SGD suffices to obtain the above guarantee. We use this guarantee of SGD algorithm throughout this work. Up until \pref{sec:dist_free_rerm}  we only consider SCO problems for which SGD automatically works with the above guarantee.

\section{Related Work} 
On the topic of RERM, implicit regularization and SGD, perhaps the work most relevant to this paper is that of \cite{dauber2020can}. Just like this work, they also consider the general setting of stochastic convex optimization (SCO) and show that for this setting, for no so-called \emph{``admissible''} data independent regularizer, SGD can be seen as performing implicit regularized ERM. For the implicit regularization part of our work, while in spirit the work aims at accomplishing some of the similar goals, their work is in the setting where the instantaneous losses are also convex and Lipschitz. This means that, for their setting, while SGD and regularized ERM may not coincide, regularized ERM is indeed still an optimal algorithm as shown in \cite{shalev2009stochastic}. In this work we show a strict separation between SGD and RERM with an example where SGD works but no RERM can possibly provide a non-trivial learning guarantee. Second, qualitatively, their separation result in Theorems 2 and 3 are somewhat unsatisfactory. This is because while they show that for every regularizer there is a distribution for which the SGD solution is larger in value than the regularized training loss of the regularized ERM w.r.t. that regularizer, the amount by which it is larger can depend on the regularizer and can be vanishingly small. Hence it might very well be that SGD is an approximate RERM. In fact, if one relaxes the requirement of their admissible relaxations then it is possible that their result doesn't hold. For instance, for square norm regularizer, the gap on regularized objective between RERM and SGD is only shown to be as large as the regularization parameter which is typically set to be a diminishing function of $n$. 

On the topic of comparison of GD on training loss with single epoch SGD, one can hope for three kinds of separation. First, separation in terms of work (number of gradient computations), second, separation in terms of number of iterations,  and finally separation in terms of sample complexity. A classic result from \cite{nemirovskij1983problem} tells us that for the SCO setting, the number of gradient computations required for SGD to obtain a suboptimality guarantee of $\epsilon$ is equal to the optimal sample complexity (for $\epsilon$) and hence is optimal (in the worst case). On the other hand, a single iteration of GD on training loss requires the same number of gradient computations and hence any more than a constant number of iterations of GD already gives a separation between GD and SGD in terms of work. This result has been explored in \cite{ShalevSiSr07} for instance. The separation of GD and SGD in terms of number of iterations has been considered in \cite{amir2020gd}. \citet{amir2020gd} demonstrate a concrete SCO problem on which GD on the training loss requires at least $\Omega(1/\epsilon^4)$ steps to obtain an $\epsilon$-suboptimal solution for the test loss. Whereas, SGD only requires $O(1/\epsilon^2)$ iterations. However, their result does not provide any separation between GD and SGD in terms of sample complexity. Indeed, in their example, using the upper bound for GD using \cite{bassily2020stability} one can see that if GD is run on $n$ samples for $T = n^{2}$ iterations with the appropriate step size, then it does achieve a $1/\sqrt{n}$ suboptimality. In comparison our work provides a much stronger separation between GD and SGD. We show a sample complexity separation, meaning that to obtain a specific suboptimality, GD requires more samples than SGD, irrespective of how many iterations we run it for. Our separation result also yields separation in terms of both number of iterations and number of gradient computations.

\section{Regularized ERM, Implicit Regularization and SGD} \label{sec:SGDimp}
%!TEX root=../paper.tex
In \citet{shalev2009stochastic} (see also \cite{feldman2016generalization}), SCO problems where not just the population loss $F$ but where also for each $z \in \cZ$, the instantaneous loss $f(\cdot,z)$ is convex is considered. In this setting, the authors show that the appropriate $\ell_2$ norm square regularized RERM always obtains the optimal rate of $1/\sqrt{n}$. However, for SGD to obtain a $1/\sqrt{n}$ guarantee, one only needs convexity at the population level.   In this section, based on construction of SCO problems that are convex only at population level and not at the empirical level, we show a strict separation between SGD and RERM. Specifically, while SGD is always successful for any SCO problem we consider here, we show that for any regularizer $R$, there is an instance of an SCO problem for which RERM with respect to this regularizer has suboptimality lower bounded by a constant. 

\begin{theorem} 
\label{thm:lowersco} 
For any regularizer $R$, there exists an instance of a SCO problem that satisfies both Assumptions I and II given in Equations (\ref{eq:ass1}) and (\ref{eq:ass2}), for which  
$$ 
\En_{S}[F(\w_{\mathrm{RERM}})] - \inf_{\w \in \reals^d} F(\w) \ge \Omega(1), 
$$
where $\w_{\mathrm{RERM}}$ is the solution to  \pref{eq:RERM_def} with respect to the prescribed regularizer.  
\end{theorem}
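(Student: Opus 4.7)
The plan is to construct, separately for each given regularizer $R$, an adversarial SCO instance that satisfies Assumptions~I and~II on which RERM fails to achieve diminishing suboptimality. The crucial freedom is that SGD requires only convexity of the population loss $F$ (not of the individual losses $f(\cdot, z)$), which leaves considerable latitude to shape the empirical loss landscape $F_S$ to have spurious minima at locations of our choosing. The overall strategy is to place a spurious empirical minimum near a point $v_R \in \mathcal{W}$ at which the given $R$ takes a small value, while pinning the true minimizer $w^\star$ of $F$ far from $v_R$. RERM then prefers $v_R$ both on the empirical loss side (due to the engineered dip) and on the regularizer side (by choice of $v_R$), while $w^\star$ wins only on the population loss, which RERM never sees.

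Concretely, I would work in a very high-dimensional (or infinite-dimensional separable Hilbert) space with $\mathcal{W} = \{w : \|w\| \le B\}$. Given $R$, I would pick $v_R$ near an (approximate) minimizer of $R$ over a rich subset of $\mathcal{W}$, then pick $w^\star$ so that $\|v_R - w^\star\| = \Omega(1)$. The population loss would be a simple convex function such as $F(w) = \tfrac{1}{2}\|w - w^\star\|^2$, so that $F$ is minimized at $w^\star$ and $F(v_R) - F(w^\star) = \Omega(1)$ by construction. The instance loss would take the form $f(w;z) = F(w) + \psi(w; z)$, where $\psi(\cdot, z)$ is a non-convex, mean-zero perturbation designed so that, with high probability over a sample of size $n$, the empirical average $\tfrac{1}{n}\sum_{t} \psi(\cdot, z_t)$ has a sharp negative dip at $v_R$. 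A ``needle in a haystack'' design realizes this: $z$ lives in a large discrete set, $\psi(w, z)$ is supported on a narrow $w$-slice that depends on $z$, and the slices are arranged so that with high probability at least one sample activates a slice near $v_R$. Crucially, $\mathbb{E}_{z \sim \mathcal{D}}[\psi(w, z)] = 0$ for all $w$, so that $F$ is unchanged by $\psi$ and Assumption~I holds.

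The RERM objective then satisfies $F_S(v_R) + R(v_R) < F_S(w^\star) + R(w^\star)$ with high probability: the first difference is negative because of the engineered empirical dip, and the second is non-positive because of how $v_R$ was chosen relative to $R$. Consequently $w_{\mathrm{RERM}}$ is close to $v_R$ with high probability, giving $\mathbb{E}_S[F(w_{\mathrm{RERM}})] - F(w^\star) = \Omega(1)$. The rest of Assumption~II --- Lipschitzness, $\|w_1 - w^\star\| \le B$, and the gradient-variance bound $\sigma^2$ --- is then verified by controlling the amplitude and support of $\psi$ and the scaling of $F$.

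The main obstacle I anticipate is reconciling two competing requirements on $\psi$. It must produce a deep enough empirical dip at $v_R$ that RERM is trapped even when $R$ tries to flatten the objective, yet be sparse enough that the per-sample gradient variance $\sup_{w} \mathbb{E}_{z} \|\nabla f(w,z) - \nabla F(w)\|^2$ stays bounded by a constant independent of $n$ and the dimension. The natural resolution is to let $\psi(\cdot, z)$ be gradient-nonzero only on a low-probability event over $z$ (keeping the variance contribution bounded) while making the dip depth at $v_R$ exceed both the $O(1/\sqrt{n})$ uniform-convergence fluctuations and any residual $R$-gap between $v_R$ and $w^\star$. A secondary subtlety is the choice of $v_R$: for arbitrary $R$, the near-infimum of $R$ need not lie at a unique or convenient location, so the argument must allow $v_R$ to be chosen from a sufficiently rich set (leveraging the high dimensionality of $\mathcal{W}$) in order to handle regularizers that place their mass pathologically.
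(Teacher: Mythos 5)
Your plan has a real gap at its core: you want the empirical loss $F_S = F + \frac{1}{n}\sum_t \psi(\cdot,z_t)$ to develop an $\Omega(1)$-deep dip at a \emph{fixed, pre-chosen} point $v_R$ so that RERM is drawn there. But under Assumption~II this is impossible. Writing $X_t := \psi(v_R, z_t) - \psi(w^\star, z_t)$, the mean-value theorem and Cauchy--Schwarz give $\En[X_t^2] \le \|v_R - w^\star\|^2 \sup_w \En_z\|\nabla\psi(w,z)\|^2 \le B^2\sigma^2$, so $\En X_t = 0$ and $\mathrm{Var}(X_t) = O(1)$. Hence $\frac{1}{n}\sum_t X_t = O(1/\sqrt{n})$ with constant probability, while $F(v_R) - F(w^\star) = \Omega(1)$, so $F_S(v_R) - F_S(w^\star) = \Omega(1) - O(1/\sqrt{n}) > 0$. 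There is no dip at $v_R$: the empirical loss there is still higher than at $w^\star$. In particular the construction fails already for $R \equiv 0$ (plain ERM). Your proposed ``sparse activation'' fix hits the same wall quantitatively: to get an $\Omega(1)$ per-sample contribution at $v_R$ after dividing by $n$ you need $\psi(\cdot,z)$ steep enough that, with gradient variance $\le \sigma^2 = O(1)$, the activation probability is $O(1/n^2)$, so the chance that \emph{any} of the $n$ samples activates near $v_R$ is $O(1/n)$, not constant.

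The paper's construction (the loss $\fA(\w;z)=y\|(\w-\alpha)\odot x\|$ in $d>2^n$ dimensions) avoids this by placing the spurious empirical minimum at a \emph{random, sample-dependent} coordinate $\hat j$: with $d \ge 2^n$ coordinates, anti-concentration guarantees that \emph{some} coordinate exhibits a perfect $y$--$x[\hat j]$ correlation, giving a constant-depth dip at $e_{\hat j}$, even though the dip at any particular fixed coordinate is only $O(1/\sqrt n)$. Since $\hat j$ is not known in advance, one cannot preselect $v_R$ there. The missing idea is the paper's adversarial choice of the \emph{true minimizer}: for each candidate coordinate $j$ it defines $\w^*_j \in \argmin_{\|\w - e_j\|\le 100\eps} R(\w)$ and sets $j^* \in \argmax_j R(\w^*_j)$, taking the population minimizer to be $e_{j^*}$. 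This guarantees $R(\w^*_{\hat j}) \le R(\w^*_{j^*})$ regardless of which $\hat j$ materializes, so the regularizer can never compensate for the empirical dip at $\w^*_{\hat j}$. Your instinct---bias RERM toward regions where $R$ is small and away from $w^\star$---is the right one, but the mechanism must choose $w^\star$ as the worst-for-$R$ member of a combinatorially large candidate family rather than choosing the dip location as the best-for-$R$ point.
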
 

The regularizer $R$ we consider in the above result is sample independent, that is, it has to be chosen before receiving any samples.  In general, if one is allowed an arbitrary sample dependent regularizer, one can encode any learning algorithm as an RERM. This is because, for any algorithm, one can simply contrive the regularizer $R$ to have its minimum at the output model of the algorithm on the given sample, and a very high penalty on other models. Hence, to have a meaningful comparison between SGD (or for that matter any algorithm) and RERM, one can either consider sample independent regularizers or at the very least, consider only some specific restricted family of sample dependent regularizers. One natural variant of considering mildly sample dependent regularizers is to first, in a sample independent way pick some regularization function $R$ and then allow arbitrary sample dependent regularization parameters that multiply the regularizer $R$. The following corollary shows that even with such mildly data dependent regularizers, one can still find SCO instances for which the RERM solution has no non-trivial convergence guarantees.

\begin{corollary}  \label{corr:lowersco_lambda} For any regularizer $R$, there exists an instance of a SCO problem that satisfies both Assumptions I and II, for which the point $\w_{\mathrm{RERM}} = \argmin_{\w \in \W} F_S(\w)  + \lambda R(\w), $ where $\lambda$ is any arbitrary sample dependent regularization parameter, has the lower bound 
$$ 
\En_{S}[F(\w_{\mathrm{RERM}})] - \inf_{\w \in \reals^d} F(\w) \ge \Omega(1). 
$$
\end{corollary}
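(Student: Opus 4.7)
The plan is to reduce the corollary to a ``regularization-path'' strengthening of \pref{thm:lowersco}. For each fixed realization of $S$, $\lambda(S)$ is just a scalar, and $\w_{\mathrm{RERM}}$ is one point on the path $\lambda \mapsto \w_\lambda^\star(S) \ldef \argmin_\w F_S(\w) + \lambda R(\w)$. The worst-case sample-dependent $\lambda$ picks out the pointwise infimum along this path, so that $\inf_{\lambda(\cdot)} \En_S[F(\w_{\lambda(S)}^\star(S))] = \En_S[\inf_{\lambda \geq 0} F(\w_\lambda^\star(S))]$. It therefore suffices to exhibit an SCO instance (depending on $R$ only) such that with $\Omega(1)$ probability over $S$, the entire regularization path is trapped in a region of $\Omega(1)$ population loss.

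My first step would be to inspect the construction used in the proof of \pref{thm:lowersco} and argue that its ``badness'' is uniform in $\lambda$. Two endpoints require separate verification: at $\lambda = 0$, $\w_\lambda^\star$ is a plain ERM solution, which is covered by \pref{thm:lowersco} applied with a trivial regularizer; at $\lambda \to \infty$, $\w_\lambda^\star$ converges to the sample-independent point $\argmin_\w R(\w)$, so the construction must be shifted/translated so that this $R$-minimizer itself has $\Omega(1)$ population loss. For intermediate $\lambda$, $\w_\lambda^\star(S)$ traces a continuous curve between the two endpoints, and I would argue that this curve remains in a bad region using a structural property of the construction --- for instance, a plateau of $F$ that contains both the cheater empirical minima from \pref{thm:lowersco} and the (shifted) $R$-minimizer, together with the line segments connecting them.

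The main obstacle is precisely ruling out that the intermediate path dips into a good region at some $\lambda \in (0,\infty)$. If a direct scale-invariance argument fails for a given $R$, the fallback is to quantize: discretize $\lambda$ geometrically into buckets $\lambda_k = 2^k$ over a bounded range (the extremes trivialize as above), invoke \pref{thm:lowersco} separately for each regularizer $\lambda_k R$ to produce hard instances $\cI_k$, and assemble them into a single SCO instance by direct sum across orthogonal subspaces of the parameter space, with $R$ chosen to decompose additively over these subspaces so that a single shared $\lambda$ acts as $\lambda_k$ on each component up to a constant factor. The subtle point in this route --- and the likely main technical difficulty --- is to verify that when the single shared $\lambda$ is close to some $\lambda_k$, the $k$\textsuperscript{th} component's $\Omega(1)$ suboptimality is not washed out by averaging with the other components, which requires the bad events to hold with large enough probability and to contribute additively to the total population loss.
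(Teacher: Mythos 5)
Your high-level reduction is sound: a sample-dependent $\lambda$ can at best realize $\inf_{\lambda\geq 0}F(w_\lambda^\star(S))$ for each $S$, so the task is to exhibit one instance for which the entire regularization path stays $\Omega(1)$-bad on an $\Omega(1)$-probability event. But the mechanisms you propose for achieving this either will not work or miss the point, and the gap is the key structural observation that makes the corollary almost free given \pref{thm:lowersco}.

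The crucial fact is that the hard instance constructed in the proof of \pref{thm:lowersco} is \emph{scale-invariant in $R$}: it depends on $R$ only through the points $w^*_j \in \argmin_{\nrm{w-e_j}\le 100\epsilon} R(w)$ and the index $j^* \in \argmax_j R(w^*_j)$, both of which are unchanged if $R$ is replaced by $\lambda R$ for any $\lambda>0$. So the instance built for $R$ is already the hard instance for every $\lambda R$ simultaneously. Moreover, inside the contradiction chain the regularizer appears only via the difference $\lambda\bigl(R(w^*_{j^*}) - R(w^*_{\wh j})\bigr)$, which is $\geq 0$ for every $\lambda\ge 0$ precisely because $j^*$ was chosen as an argmax. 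Hence the exact same inequalities \pref{eq:RERM_proof_gen5_1} and \pref{eq:RERM_proof_gen6_1}, with $R$ replaced by $\lambda R$, yield $200\epsilon \ge \tfrac{3}{200} + \lambda\bigl(R(w^*_{j^*}) - R(w^*_{\wh j})\bigr) \ge \tfrac{3}{200}$ for \emph{all} $\lambda\geq 0$ at once, conditional on the combinatorial event $E$. No path-tracing, plateau, or endpoint analysis is needed; the argmax choice of $j^*$ is exactly what makes the argument $\lambda$-uniform.

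Your discretization-plus-direct-sum fallback has two concrete flaws. First, as just noted, invoking \pref{thm:lowersco} with regularizer $\lambda_k R$ for different $k$ produces the \emph{same} instance $\cI_k$ for every $k$ (same $w_j^*$, same $j^*$), so the stratification is vacuous. Second, you propose to choose $R$ to decompose additively across orthogonal subspaces, but $R$ is \emph{given and arbitrary} in the statement; the prover gets to design the SCO instance, not the regularizer, so you cannot impose an additive structure on $R$. Your first-route idea of ``shifting the construction so the global $R$-minimizer has bad population loss'' is also unnecessary (and would fail for, e.g., $R$ achieving its minimum near $e_{j^*}$) — the argmax choice of $j^*$ already implicitly rules out the problematic $\lambda\to\infty$ behavior without any shifting.
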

 
The SCO problem in \pref{thm:lowersco} and \pref{corr:lowersco_lambda} is based on the function $\fA$ given by:  
\begin{align} 
  \fA(\w; z) = y \nrm*{(\w - \alpha) \odot x} \tag{A}, \label{eq:empfn_basic_cons_sco} 
\end{align}  where each instance $z \in \cZ$ can be written as a triplet $z = (x,y,\alpha)$ and the notation $\odot$ denotes Hadamard product (entry wise product) of the two vectors. We set $x \in \{0,1\}^d$, $y \in \{\pm1\}$ and $\alpha \in \{0,e_1,\ldots,e_d\}$ where $e_1$ to $e_d$ denote the standard basis in $d$ dimensions. We also set $d > 2^n$.

In the following, we provide a sketch for why $\ell_2$-norm square regularization fails and show no regularization parameter works. In the detailed proof provided in the Appendix, we deal with arbitrary regularizers. The basic idea behind the proof is simple, Consider the distribution: $x \sim \mathrm{Unif}(\{0,1\})^d$, and $y$ is set to be $+1$ with probability $0.6$ and $-1$ with probability $0.4$, and set $\alpha = e_1$ deterministically. In this case, note that the population function is $0.2 \En_{x\sim \mathrm{Unif}\{0,1\}^d}\|x \odot (\w - e_1)\|_2$ which is indeed convex, Lipchitz and sandwiched between $0.1 \|\w - e_1\|$ and $0.2\|\w - e_1\|$. Hence, any $\epsilon$ sub-optimal solution $\wh \w$ must satisfy $\|\wh \w - e_1\| \le 10 \epsilon$.

Since $d > 2^n$, with constant probability there is at least one coordinate, say $\wh {j} \in [d]$, such that for any data sample $(x_t, y_t)$ for $t \in [n]$, we have $x_t[\wh {j}]=0$ whenever $y_t = +1$ and $x_t[\wh {j}] = 1$ whenever $y_t = -1$. Hence, ERM would simply put large weight on the $\wh j$ coordinate and attain a large negative value for training loss (as an example, $\w = e_{\wh {j}}$ has a training loss of roughly $-0.4$). However, for test loss to be small, we need the algorithm to put little weight on coordinate 
$\wh {j}$. Now for any square norm regularizer $R(\w) = \lambda \|\w\|_2^2$, to prevent RERM from making this $\wh {j}$ coordinate large, $\lambda$ has to be at least as large as a constant. 

However, we already argued that any $\epsilon$ sub-optimal solution $\wh {\w}$ should be such that $\|\wh {\w} - e_1\| \le 10 \epsilon$. When $\lambda$ is chosen to be as large as a constant, the regularizer will bias the solution towards $0$ and thus the returned solution will never satisfy the inequality $\|\wh {\w} - e_1\| \le 10 \epsilon$. This leads to a contradiction: To find an $\epsilon$ sub-optimal solution, we need a regularization that would avoid picking large value on the spurious coordinate $\wh {j}$ but on the other hand, any such strong regularization, will not allow RERM to pick a solution that is close enough to $e_1$. Hence, any such regularized ERM cannot find an $\epsilon$ sub-optimal solution. This shows that no regularization parameter works for norm square regularization. In the Appendix we expand this idea for arbitrary regularizers $R$. 

The construction we use in this section has dimensionality that is exponential in number of samples $n$. However similarly modifying the construction in \cite{feldman2016generalization}, using the $y = \pm1$ variable multiplying the construction there, one can extend this result to a case where dimensionality is $\Theta(n)$. Alternatively, noting that for the maximum deviation from mean over $d$ coordinates is of order $\sqrt{\log d/n}$, one can use a simple modification of the exact construction here, and instead of getting the strong separation like the one above where RERM does not learn but SGD does, one can instead have $d = n^{p}$ for $p \in (0,1]$ and obtain a separation where SGD obtains the $1/\sqrt{n}$ rate but no RERM can beat a rate of order $\sqrt{\log n/n}$.

\paragraph{Implicit Regularization.}  As mentioned earlier, a proposed theory for why SGD algorithms are so successful is that they are finding some implicitly regularized empirical risk minimizers, and that this implicit bias helps them learn effectively with low generalization error. However, at least for SCO problems, the above strict separation result tells us that SGD cannot be seen as performing implicitly regularized ERM, neither exactly nor approximately.  

\section{Gradient Descent (Large-batch) vs SGD (Small-batch)} \label{sec:GD_fails} 
\newcommand{\wgd}{\wh w^{\text{GD}}} 

In the previous section, we provided an instance of a SCO problem on which no regularized ERM works as well as SGD. When gradient descent algorithm (specified in  \pref{eq:GD}) is used for training, one would expect that GD, and in general large batch SGD, will also eventually converge to an ERM and so after enough iterations would also fail to find a good solution. In this section, we formalize this intuition to provide lower bounds on the performance of GD. 

\begin{theorem} 
\label{thm:GD_vs_SGD_comparison} 
There exists an instance of a SCO problem such that for any choice of step size $\eta$ and number of iterations $T$, the following lower bound on performance of GD holds:
\begin{align*} 
\En_{S}[F(\wh w^{\text{GD}}_T)]  - \inf_{\w \in \reals^d} F(\w) \geq \Omega\prn[\Big]{\frac{1}{n^{5/12}}}. 
\end{align*} 
\end{theorem}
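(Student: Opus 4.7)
The plan is to exhibit a concrete SCO instance for which, regardless of step size $\eta$ and iteration budget $T$, the GD trajectory on the empirical loss is provably driven toward a sample-dependent spurious direction in a way that prevents the population excess risk from dropping below $\Omega(1/n^{5/12})$. I would build on the construction underlying \pref{thm:lowersco}: reusing $\fA(w;z) = y\,\|(w-\alpha)\odot x\|$ as a base, but augmenting the instantaneous loss with an auxiliary convex, smooth component (a shallow quadratic or hinge-like ramp along the $\alpha = e_1$ direction) whose population minimizer lies near $e_1$. The role of the auxiliary term is to force GD to make non-trivial progress along a benign direction, while the $\fA$-component continues to inject a small, sample-persistent push along a spurious coordinate $\wh j$ that, by the $d > 2^n$ argument of \pref{sec:SGDimp}, exists with constant probability and appears in every empirical gradient with the same sign.

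The central step is a decomposition of the GD iterates into a ``good'' coordinate (aligned with the population optimum) and a ``bad'' coordinate (aligned with $\wh j$). On the good coordinate, the empirical and population losses essentially agree, so standard convex analysis gives an optimization error $\Omega(1/(\eta T)^{a})$ for some exponent $a$ determined by the auxiliary component (lower-bounding how fast GD can approach $e_1$). On the bad coordinate, every empirical subgradient has a systematic component in the $\wh j$ direction of magnitude $\Theta(1)$ (after averaging over the $n$ samples that all share the spurious pattern), so the iterate accumulates mass $\Theta(\eta T)$ along $\wh j$. Because the population loss assigns a cost to such mass at a rate controlled by the scale of the $\fA$-term, this contributes a population excess risk of the form $\Theta((\eta T)^{b}/n^{c})$ for exponents fixed by the construction.

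The proof then becomes a one-variable minimax argument over $\rho := \eta T$: for small $\rho$, the optimization lower bound from the good coordinate dominates; for large $\rho$, the generalization lower bound from the spurious coordinate dominates. Tuning the constants in the auxiliary component and the scale of the $\fA$ component so that balancing yields the exponent $5/12$ gives the claimed rate. Two auxiliary pieces are needed: first, a uniform bound on the step size (using Lipschitzness / smoothness of the constructed loss to rule out pathological large-$\eta$ regimes where iterates diverge but are then truncated by the constraint set); second, a verification that the averaged iterate $\wh w^{\mathrm{GD}}_T$, not just the final iterate, inherits both the optimization and spurious-direction lower bounds.

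The main obstacle is engineering the instantaneous loss so that (i) Assumptions~I and~II still hold with dimension-independent constants, (ii) the per-iteration spurious push has exactly the right scaling in $n$ for \emph{every} sample realization in a probability-$\Omega(1)$ event, and (iii) the two lower bounds combine to the precise exponent $5/12$. Getting all three to hold simultaneously — in particular ensuring the spurious-direction growth is robust to arbitrary $\eta$ rather than cancelled by the auxiliary convex term — is the delicate part; the $n^{5/12}$ rate is presumably the best that this specific trade-off can deliver, and a stronger lower bound would require a qualitatively different construction.
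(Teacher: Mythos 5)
Your proposal takes a genuinely different route from the paper's proof, and unfortunately it has two gaps that would prevent the exponent $5/12$ from emerging. The most fundamental issue is that a one-variable minimax over $\rho := \eta T$ cannot yield a separation from SGD. The paper makes this explicit when discussing \pref{eq:our_GD_lb_sum}: a lower bound of the form $\Omega(\eta\sqrt{T} + 1/(\eta T) + \eta T/n)$ can be driven all the way down to $O(1/\sqrt{n})$ by choosing $\eta = n^{-3/2}$ and $T = n^2$, which matches SGD's rate. Balancing $1/\rho^a$ against $\rho^b/n^c$ over the single variable $\rho$ has exactly this deficiency; the natural balance of $1/\rho$ against $\rho/n$ gives $1/\sqrt{n}$, not $n^{-5/12}$. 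The paper therefore needs a third, qualitatively different ingredient: the kink-function construction of \pref{lem:small_step_size_lb}, which forces GD with any step size $\eta < 1/(64 n^{5/4})$ to stall in a flat region of the empirical loss and incur excess risk $\Omega(n^{-3/8})$. That step-size cutoff, combined with minimizing $\eta\sqrt{T} + 1/(\eta T) + \eta T/n$ over $T$ subject to $\eta \geq 1/n^{5/4}$, is precisely where $5/12 = (5/4)\cdot(1/3)$ comes from. Your proposal contains no mechanism at all for constraining $\eta$, and without one the argument structurally cannot go below $1/\sqrt{n}$.

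The second gap is that the spurious-drift intuition you import from \pref{sec:SGDimp} does not actually apply to GD on $\fA$. The subgradient of $y\|(w-\alpha)\odot x\|$ with respect to a coordinate $j$ outside the support of $\alpha$ is proportional to $w[j]$, so it vanishes at $w[j]=0$. GD initialized at $w_1 = 0$ feels no push toward the spurious coordinate $\wh j$; the bad ERM exists, but GD does not flow to it. The paper sidesteps this by not using the $\fA$ instance in the GD lower bound at all. Instead it stitches together an anytime version of the \citet{amir2020gd} instance (supplying the $\eta\sqrt{T} + 1/(\eta T)$ terms, which crucially do \emph{not} depend on $\eta$ and $T$ only through their product), a separate one-dimensional $\eta T/n$ construction driven by anti-concentration of a Rademacher sum (\pref{lem:GD_lower_bound_ours}), and the kink construction. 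To salvage your plan you would need a loss whose empirical gradient, not merely its empirical minimizer, is biased in the spurious direction along the GD trajectory; and even then you would still need the step-size cutoff to break past $1/\sqrt{n}$.
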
 

\pref{thm:GD_vs_SGD_comparison} suggests that there is an instance of a SCO problem for which the performance of GD is lower bound by $1/n^{0.42}$. On the other hand, SGD with the step size of $\eta = 1/ n^{0.5}$ learns at a rate of $1/n^{0.5}$ for this problem. This suggests that GD (large batch size) is a worse learning algorithm than SGD. We defer the proof to \pref{app:gd_fails} and give a brief sketch below. 

Our lower bound proof builds on the recent works of  \cite{amir2020gd}, which gives an instance of a SCO problem for which the performance guarantee of GD algorithm is $\Omega \prn{\eta \sqrt{T} + 1 / \eta T}$. We provide an instance of a SCO problem for which GD has a lower bound of $\Omega \prn{\eta T /n}$. Adding the two instances together gives us an SCO problem for which GD has lower bound of 
\begin{align*}
	\Omega\prn[\Big]{ \eta \sqrt{T} + \frac{1}{\eta T} + \frac{\eta T}{n}}. \numberthis \label{eq:our_GD_lb_sum} 
\end{align*}
This lower bound is by itself not sufficient. In particular, for $\eta = 1 / n^{3/2}$ and $T = n^2$, the right hand side  of \pref{eq:our_GD_lb_sum} evaluates to $O(1/\sqrt{n})$, which matches the performance guarantee of SGD for this SCO problem. Hence, this leaves open the possibility that GD may work as well as SGD. However, note that in order to match the performance guarantee of SGD, GD needs to be run for quadratically more number of steps and with a smaller step size. In fact, we can show that  in order to attain a $O(1/\sqrt{n})$ rate in \pref{eq:our_GD_lb_sum}, we must set $\eta = O(1/n^{3/2})$ and $T = \omega(n^2)$. 

The lower bound in \pref{thm:GD_vs_SGD_comparison} follows by adding to the SCO instance in \pref{eq:our_GD_lb_sum} another objective that rules out small step-sizes. This additional objective is added by increasing the dimensionality of the problem by one and on this extra coordinate adding a stochastic function that is convex in expectation. The expected loss on this coordinate is a piecewise linear convex function. However, the stochastic component on this coordinate has random kinks at intervals of width $1/n^{5/4}$ that vanish in expectation, but can make the empirical loss point in the opposite direction with probability $1/2$. Since the problem is still an SCO problem, SGD works as earlier. On the other hand, when one considers training loss, there are up to $n$ of these kinks and roughly half of them make the training loss flat. Thus, if GD is used on training loss with step size smaller than $1/n^{5/4}$ then it is very likely that GD hits at least one such kink and will get stuck there. This  function, thus, rules out GD with step size $\eta$ smaller than $1/ n^{5/4}$. Restricting the step size $\eta = \Omega\prn{1/ n^{5/4}}$, the lower bound in \pref{eq:our_GD_lb_sum} evaluates to the $\Omega(1 / n^{5/12})$ giving us the result of \pref{thm:GD_vs_SGD_comparison}.

Our lower bound in \pref{eq:our_GD_lb_sum} matches the recently shown performance guarantee for GD algorithm by \cite{bassily2020stability};  albeit under slightly different assumptions on the loss function $f(\cdot; z)$. Their work assumes that the loss functions $f(w; z)$ is convex in $w$ for every $z \in \cZ$. On the other hand, we do not require convexity of $f$ but only that $F(w)$ is convex in $w$ (see our Assumptions I and II).

\section{Single-pass vs Multi-pass SGD} \label{sec:multi_pass}  
%!TEX root=../paper.tex
State of the art neural networks are trained by taking multiple passes of SGD over the dataset. However, it is not well understood when and why multiple passes help. In this section, we provide theoretical insights into the benefits of taking multiple passes over the dataset. The multi-pass SGD algorithm that we consider is: 
\begin{enumerate}
\setlength\itemsep{-0.1em} 
\item Split the dataset $S$ into two equal sized datasets $S_1$ and $S_2$. 
\item Run $k$ passes of SGD algorithm using a fixed ordering of the samples in $S_1$ where, 
\begin{enumerate}[label=$\bullet$, leftmargin=5mm]
\item The step size for the $j$th pass is set as $\eta_j = 1 / \sqrt{nj}$. 
\item At the end of $j$th pass, compute $\wh \w_j = \frac{2}{n j} \sum_{t=1}^{nj/2} w^{\mathrm{SGD}}_j$ as the average of all the iterates generated so far. 
\end{enumerate} 

\item Output the point $\wh \w^{\mathrm{MP}} \ldef{} \argmin_{\w \in \wh \cW} F_{S_2}(\w)$ where $\wh \cW \ldef{} \crl*{\wh \w_1, \ldots, \wh \w_k}$. 
\end{enumerate}  
The complete pseudocode is given in the Appendix. In the following, we show that the above multi-pass SGD algorithm performs at least as well as taking a single pass of SGD. 

\begin{proposition} 
\label{prop:multi_pass_works}
The output $\wh \w^{\mathrm{MP}}$ of multipass-SGD algorithm satisfies 
$$ 
\En_{S}[ F(\wh w^{\mathrm{MP}})] \leq \En_{S}[F(\wh w^{\mathrm{SGD}}_{n/2})] + \wt O\prn[\Big]{\frac{1}{\sqrt{n}}}, 
$$
where $\wh w^{\mathrm{SGD}}_{n/2}$ denotes the output of running (one pass) SGD algorithm for $n/2$ steps with the step size of $1/\sqrt{n}$. 
\end{proposition}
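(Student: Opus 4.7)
The plan is to isolate two observations and combine them. First, the first-pass iterate average $\wh w_1$ is distributionally identical to the single-pass SGD output $\wh w^{\mathrm{SGD}}_{n/2}$: the first pass runs exactly $n/2$ SGD steps on the i.i.d.\ half-sample $S_1$ with constant step size $\eta_1 = 1/\sqrt{n}$ and averages the iterates, which is precisely the procedure of \pref{thm:sgd_works_main} with sample size $n/2$ and step size $1/\sqrt n$. Hence $\En_S[F(\wh w_1)] = \En_S[F(\wh w^{\mathrm{SGD}}_{n/2})]$, and it suffices to prove $\En_S[F(\wh w^{\mathrm{MP}}) - F(\wh w_1)] \le \wt O(1/\sqrt n)$.

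The second observation is a uniform convergence argument on a finite class. Crucially, the candidate set $\wh \cW = \{\wh w_1,\ldots,\wh w_k\}$ is a deterministic function of $S_1$ and is therefore independent of $S_2$. Conditioning on $S_1$, every $w \in \wh \cW$ becomes a fixed vector and $F_{S_2}(w)$ is then the empirical mean of $n/2$ i.i.d.\ copies of $f(w, z)$. A scalar concentration inequality together with a union bound over the $k$ candidates yields, with high probability over $S_2$,
\[
\sup_{w \in \wh \cW} \abs*{F_{S_2}(w) - F(w)} \le \wt O\prn*{1/\sqrt n},
\]
where the $\wt O$ absorbs $\sqrt{\log k}$ and other polylog-in-$n$ factors. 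Combining this with the defining optimality of $\wh w^{\mathrm{MP}}$ in $F_{S_2}$ over $\wh\cW$,
\[
F(\wh w^{\mathrm{MP}}) \le F_{S_2}(\wh w^{\mathrm{MP}}) + \wt O(1/\sqrt n) \le F_{S_2}(\wh w_1) + \wt O(1/\sqrt n) \le F(\wh w_1) + \wt O(1/\sqrt n),
\]
and taking expectations over $S = (S_1, S_2)$ together with the first observation completes the argument.

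The main obstacle will be making the concentration step quantitative under Assumption II alone, since $f(\cdot, z)$ is not assumed bounded or even pointwise Lipschitz in $w$. I would first control the SGD trajectory: the inequality $\En\nrm*{\nabla f(w,z)}^2 \le 2L^2 + 2\sigma^2$ (from $L$-Lipschitzness of $F$ and the variance hypothesis) together with a martingale/Doob-style maximal inequality shows that, with high probability, every iterate produced during the $k$ passes, and hence the whole set $\wh\cW$, lies in a ball of $\poly(n)$ radius around $w_1$. Decomposing
\[
f(w,z) - F(w) = \brk*{f(w,z) - f(w_1,z) - (F(w) - F(w_1))} + \brk*{f(w_1,z) - F(w_1)},
\]
the first bracket has zero mean with variance controlled by the gradient-variance hypothesis on the bounded ball, while the second is a fixed zero-mean scalar with bounded variance; Bernstein's inequality applied to each piece followed by a union bound over $\wh \cW$ then yields the advertised $\wt O(1/\sqrt n)$ rate, completing the plan.
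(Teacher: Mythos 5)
Your proof plan has the same two-step skeleton that the paper's proof uses: (1) observe that $\wh w_1$ is distributionally identical to $\wh w^{\mathrm{SGD}}_{n/2}$, reducing the claim to bounding $\En[F(\wh w^{\mathrm{MP}}) - F(\wh w_1)]$; and (2) exploit the independence of $S_2$ from $\wh\cW$ to obtain uniform concentration of $F_{S_2}$ over the finite candidate set by a scalar concentration inequality plus a union bound over the $k$ candidates, and then invoke that $\wh w^{\mathrm{MP}}$ minimizes $F_{S_2}$ over $\wh\cW$. On structure you have the argument right.

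Where you diverge is the concentration step, and there the plan has a genuine gap. The paper's proof opens with ``Assume that $f$ is bounded by $M$'' --- an additional hypothesis beyond Assumption II that is not stated in the proposition --- and then runs Hoeffding, a union bound, and the routine high-probability-to-expectation conversion (which again uses the boundedness). You correctly flag that under Assumption II alone $f$ need not be bounded, but the repair via Bernstein does not close. Bernstein requires an almost-sure bound on the summands or sub-exponential tails; Assumption II only controls the second moment $\En_z\nrm{\nabla f(w,z)-\nabla F(w)}^2\le\sigma^2$, which yields a variance bound and nothing stronger. In particular, your second bracket $f(w_1,z)-F(w_1)$ need not even have finite variance --- the variance hypothesis is on the gradient, not the function value --- and your first bracket has variance only bounded by $\sigma^2\nrm{w-w_1}^2$. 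With only second-moment control and a union bound over $k$ candidates, Chebyshev gives a deviation of order $\nrm{w-w_1}\sqrt{k/n}$ rather than $\sqrt{\log k/n}$, which is not $\wt O(1/\sqrt n)$ once $k$ grows polynomially. The clean fix is to add the boundedness of $f$ as a standing hypothesis (as the paper's proof implicitly does), or to restrict to loss families for which the projection step confines the iterates to a compact set on which $f$ is uniformly bounded; with that in hand, the Hoeffding-plus-union-bound argument already yields the stated bound.
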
  

This suggests that the output point of the above multi-pass SGD algorithm is not too much worse than that of SGD (single pass). For problems in SCO, SGD has a rate of $O(1/\sqrt{n})$  (see \pref{thm:sgd_works_main}), and in for these problems, the above bound implies that multi-pass SGD also enjoys the rate of $\wt{O}(1/\sqrt{n})$. 

\subsection{Multiple Passes Can Help!} 
In certain favorable situations the output of multi-pass SGD can be much better:  
\begin{theorem} \label{thm:multipass} Let $k$ be a positive integer and let $R(\cdot)$ be a regularization function. There exists an instance of a SCO problem such that: 

\begin{enumerate}[label=(\alph*)] 
\item For any step size $\eta$, the output of the SGD algorithm has the lower bound 
\begin{align*}  
\En_{S}[F(\wh w^{\text{SGD}}_n)]  - \inf_{\w \in \reals^d} F(\w) \geq \Omega\prn[\Big]{\frac{1}{\sqrt{n}}}. 
\end{align*}
Furthermore, running SGD with $\eta = 1/\sqrt{n}$ achieves the above $1/\sqrt{n}$ rate. 
\item On the other hand, multi-pass SGD algorithm with $k$ passes has the following guarantee: 
\begin{align*} 
\En_{S}[F( \wh \w^{\mathrm{MP}} )]  - \inf_{\w \in \reals^d} F(\w) \leq O\prn[\Big]{\frac{1}{\sqrt{nk}}}. 
\end{align*}
\item RERM algorithm has the lower bound: $ 
\En_{S}[F(\w_{\mathrm{RERM}})] - \inf_{\w \in \reals^d} F(\w) \ge \Omega(1).$
\end{enumerate} 
\end{theorem}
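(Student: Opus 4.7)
The plan is to construct a single SCO instance that simultaneously witnesses all three claims by augmenting the $\fA$ family from \pref{thm:lowersco}. The instance combines an RERM-hard block (which drives part (c) directly) with a second coordinate carrying Rademacher-type noise (which drives the single-pass SGD lower bound in part (a)). Assumption I and the Lipschitz, bounded-diameter, and bounded-variance hypotheses of Assumption II hold for the combined objective blockwise, so the combined problem is a legitimate SCO instance. Part (c) then follows immediately: any regularizer on the combined objective induces a regularizer on the $\fA$ coordinates, and the $\Omega(1)$ RERM lower bound of \pref{thm:lowersco} (via its extension \pref{corr:lowersco_lambda} to sample-dependent regularization parameters) transfers.

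For part (a), I would take the extra coordinate to be of the form $f_0(w_0;z_0) = z_0 w_0$ with $z_0 \in \{\pm 1\}$ Rademacher and $w_0$ projected to a bounded interval. A classical SCO hardness argument shows that any single-pass SGD on this coordinate, regardless of step size, incurs $\Omega(1/\sqrt n)$ expected suboptimality on $w_0$, and this transfers to the full objective since the two blocks are independent. The matching $O(1/\sqrt n)$ upper bound for $\eta = 1/\sqrt n$ is just \pref{thm:sgd_works_main} applied to the combined instance.

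For part (b), the key feature of the $\fA$ construction to exploit is the interpolation identity $\fA(w^\star;z)\equiv 0$ at the population minimizer $w^\star = e_1$. This makes $w^\star$ an empirical minimizer of $F_{S_1}$ almost surely with $\min_w F_{S_1}(w) = 0 = F(w^\star)$, so the standard convex SGD analysis applied to the \emph{fixed} convex function $F_{S_1}$ with step-size schedule $\eta_j = 1/\sqrt{nj}$ and $nj/2$ total iterations yields $\En_{S_1}[F_{S_1}(\hat w_j)] = O(1/\sqrt{nj})$ for each $j$. Since $F$ is sandwiched between constant multiples of $\|w - w^\star\|$ (as in the sketch of \pref{sec:SGDimp}) and, using interpolation, $F_{S_1}$ is sandwiched similarly in a neighborhood of $w^\star$ with high probability, this empirical bound converts to a population-loss bound on $\hat w_j$ at the same rate. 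The final validation step over the finite set $\{\hat w_1,\ldots,\hat w_k\}$ using the independent holdout $S_2$ then picks an iterate whose population loss is of the desired $O(1/\sqrt{nk})$ order, absorbing the union-bound $\log k$ factor into the constant.

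The central technical difficulty I expect is precisely this translation from the empirical-loss progress $O(1/\sqrt{nj})$ on $F_{S_1}$ to a population-loss bound on $F$ without incurring an $\Omega(1/\sqrt n)$ uniform-convergence floor that would wipe out the $k$-gain. Overcoming it requires using the interpolation identity in an essential way, and arguing that the multi-pass SGD trajectory $\{\hat w_j\}$ actually stays in a neighborhood of $w^\star = e_1$ on which $F_{S_1}$ and $F$ are comparable, rather than drifting toward one of the spurious empirical minimizers of $F_{S_1}$ of the kind that defeats RERM in part (c). Making this trajectory control compatible with the noisy $w_0$ coordinate needed for part (a) is the delicate step I expect to dominate the proof.
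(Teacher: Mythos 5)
Your construction for part~(b) rests on two claims that are both false for the $\fA$ family. First, $\fA(w;z)=y\,\nrm*{(w-\alpha)\odot x}$ is concave in $w$ whenever $y=-1$, so the empirical risk $F_{S_1}$ is not a convex function, and the ``standard convex SGD analysis'' you invoke cannot be applied to it. Second, there is no interpolation at $w^\star=e_{j^*}$: as the sketch in \pref{sec:SGDimp} explains, with constant probability a spurious coordinate $\wh j$ exists for which $w=e_{\wh j}$ achieves training loss roughly $-0.4$, so $\min_w F_{S_1}(w)<0$ and $e_{j^*}$ is not an empirical minimizer. Multi-pass SGD on a fixed dataset is, as $k$ grows, just gradient descent on this non-convex, RERM-hostile empirical landscape, so increasing $k$ should make things worse rather than produce a $1/\sqrt{nk}$ rate. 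The paper flags exactly this pitfall (``it is tempting to simply add the SCO instance from \pref{thm:lowersco}, but that may break the required upper bound of $O(1/\sqrt{nk})$ for multipass SGD''). The key idea you are missing is that in the actual construction each data point $z$ carries $k$ \emph{independent} copies $(\xi_1,\ldots,\xi_k)$ of the $\fA$ sample, together with a monotone control coordinate $u$ whose value selects which copy contributes to the loss. Since $u$ increases deterministically through the intervals $I_1,\ldots,I_k$ over the $k$ passes, pass $s$ sees only the $s$-th (fresh, never-before-used) copy; multi-pass SGD therefore processes $nk/2$ genuinely i.i.d.\ samples, and the $O(1/\sqrt{nk})$ rate follows from the single-pass SGD analysis, not from any empirical-to-population transfer.

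Part~(a) has a parallel flaw. The coordinate you propose, $f_0(w_0;z_0)=z_0w_0$ with Rademacher $z_0$, has population loss identically zero, so its suboptimality is zero at every point and it yields no lower bound. More fundamentally, any $\Omega(1/\sqrt{n})$ lower bound that comes from a \emph{statistical} estimation argument would bind multi-pass SGD as well, contradicting part~(b). The correct tool is a hard \emph{deterministic} convex function: the paper uses the Nesterov example $f_N(v)$ from \citep[Section~3.2.1]{nesterov}, on which SGD coincides with GD and the bottleneck is iteration count rather than sample size. This gives $\Omega(1/\sqrt{n})$ suboptimality after $n$ gradient steps (one pass) and $O(1/\sqrt{nk})$ after $nk$ steps (multi-pass), which is precisely the separation that parts (a) and (b) jointly require and that your block cannot supply.
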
 

We defer the proof details to \pref{app:multi_pass} and provide the intuition below. Parts (a) and (b) follow easily by taking a standard construction in \citep[Section 3.2.1]{nesterov}. Here, a convex deterministic function $F_N$ over an optimization variable $v$ is provided such that Assumption II is satisfied with $L, B = O(1)$, and any gradient based scheme (such as SGD, which is equivalent to GD on $F_N$) has a suboptimality of $\Omega(1/\sqrt{n})$ after $n$ iterations (which corresponds to single pass SGD). On the other hand, because $F_N$ satisfies Assumption II, $k$ passes of SGD, which correspond to $nk$ iterations of GD, will result in suboptimality of $O(1/\sqrt{nk})$ for the given step sizes.

The more challenging part is to prove part (c). It is tempting to simply add the SCO instance from \pref{thm:lowersco}, but that may break the required upper bound of $O(1/\sqrt{nk})$ for multipass SGD. To get around this issue, we construct our SCO instance by making $z$ consist of $k$ components $\crl{\xi_1, \ldots, \xi_k}$ drawn independently from the distribution considered in \pref{thm:lowersco}. Furthermore, the loss function considered in \pref{thm:lowersco} also defines the loss corresponding to each $\xi_i$ component for an optimization variable $w$ that is different from the optimization variable $v$.

The key idea in the construction is that, while each data sample consists of $k$ independently sampled components, at any time step, SGD gets to see only one of these components. Specifically, in the first pass over the dataset, we only observe the $\xi_1$ component of every sample, and in the second pass we only observe the  $\xi_2$ component for every sample and so on for further passes. This behavior for SGD is induced by using an independent control variable $u$. The optimization objective for $u$ is such that SGD increases $u$ monotonically with every iteration. The value of $u$ controls which of the $\xi_i$ components is observed during that time step. In particular, when we run our multipass SGD algorithm, during the first pass, the value of $u$ is such that we get to see $\xi_1$ only. However, on the second pass, $u$ has become large enough to reveal $\xi_s2$, and so on for further passes. Thus, in $k$ passes, multipass SGD gets to see $nk$ ``fresh'' samples and hence achieves an suboptimality of $O(1/\sqrt{nk})$, as required in part (b). Finally, part (c) follows from the same reasoning as in \pref{thm:lowersco} since the same SCO instance is used.

\section{Distribution Free Learning} \label{sec:dist_free_rerm} 

%!TEX root=../paper.tex
In this section, we consider the general learning setting \citep{vapnik2013nature} and aim for a distribution free learnability result. That is, we would like to provide problem settings where one has suboptimality that diminishes with $n$ for any distribution over the instance space $\cZ$. For the SCO setting we considered earlier, the assumptions in \pref{eq:ass1} and the assumption in \pref{eq:ass2} that the population loss is convex, imposes restrictions on the distributions allowed to be considered. Specifically, since $f(\cdot ~; z)$ need not be convex for every $z$, one can easily construct distributions for which  the problem is not SCO and in fact, may not even be learnable. E.g., consider the distribution that puts all its mass on $z$ for which $f(\cdot ~; z)$ is non-convex. In this section we will consider problems that are so called learnable, meaning that there is a learning algorithm with diminishing in $n$ suboptimality for any distribution $\cD$ on the instance space $\cZ$. Under this setting, we show a separation between SGD and RERM. Specifically, we provide a problem instance that is learnable at a rate of $c_n > \Omega(1/n^{1/4})$ for any distribution over the instance space $\cZ$. In particular, we show that the worst case rate of SGD is $c_n$ for this problem. However, on the subset of distributions for which the problem is SCO, SGD as expected obtains a rate of $1/\sqrt{n}$. On the other hand, for the same problem we show that RERM while having worst case rate no better than $c_n$, has a lower bound of $\Omega(c_n^2)$ on SCO instances. 

Our lower bound in this section is based on the following learning setting: For $d > 2^n$, let $\cZ = \{0,1\}^d \times \{0,e_1,\ldots,e_d\}$ be the instance space, let the instantaneous loss function be given by:  
\begin{align} 
	f_{(\ref{eq:empfn_basic_cons})}(w; z) = & \frac{1}{2} \nrm*{\prn*{w - \alpha} \odot x}^2 - \frac{c_n}{2} \nrm*{w - \alpha}^2 	+  \max\{1, \nrm*{w}^4\}, \tag{B} \label{eq:empfn_basic_cons}  
\end{align} 
where $z = (x,\alpha)$ and $c_n \ldef{}  n^{-(\frac{1}{4} - \gamma)}$ for some $\gamma > 0$. 

\par Additionally, let $\convexclass$ denote the set of distributions over instances space $\cZ$ for which $F$ is convex. Specifically, 
\begin{align}\label{eq:convdis} 
\convexclass \ldef{} \crl*{\cD \mid F(\w) = \En_{z \sim \cD} \brk*{\fB(\w, z)} \text{ is convex in $\w$}}. 
\end{align} 
Note that under the distributions from $\convexclass$, the problem is an instance of an SCO problem. We show a separation between SGD and RERM over the class $\convexclass$. In the theorem below we claim that the above learning problem is learnable with a worst case rate of order $c_n$ (by SGD), and whenever $\cD \in \convexclass$, SGD learns at a faster rate of $1/\sqrt{n}$. To begin with we first provide the following proposition for the problem described by the function $\fB$ that shows that no algorithm can have a worst case rate better than $c_n$ for this problem.

\begin{proposition} 
\label{prop:general_learning_lb} 
For any algorithm $\mathrm{ALG}$ that outputs $\widehat{\w}^\mathrm{ALG}$, there is a distribution $\cD$ on the instance space, such that for the learning problem specified by function $f_{(\ref{eq:empfn_basic_cons})}$:
$$\En_{S}[F(\widehat{\w}^\mathrm{ALG})] - \inf_{\w \in \W} F(\w) \geq \frac{c_n}{4}$$
\end{proposition}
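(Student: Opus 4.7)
My plan is a no-free-lunch / Yao-style argument. I will exhibit a family of distributions indexed by a hidden coordinate $J \in [d]$ that is statistically unidentifiable from $n$ samples since $d > 2^n$, and then average the algorithm's error over $J$. The main delicacy I anticipate is preventing the algorithm from exploiting the concave term $-\tfrac{c_n}{2}\|w-\alpha\|^2$ by outputting a large-norm vector; this is precisely what the $\max\{1,\|w\|^4\}$ clipping is designed to control.

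\textbf{Hard family.} For each $J \in [d]$, let $\cD_J$ put $\alpha = 0$ deterministically and draw $x \in \{0,1\}^d$ with $x_J = 0$ always and $x_k \sim \text{Bernoulli}(1/2)$ independently for $k \neq J$. Using $\En[x_k^2] = P(x_k = 1)$, one computes
\begin{align*}
F_J(w) \;=\; \En_{z \sim \cD_J}[\fB(w; z)] \;=\; \frac{1 - 2c_n}{4}\,\|w\|^2 \;-\; \frac{w_J^2}{4} \;+\; \max\{1, \|w\|^4\}.
\end{align*}
Observing that for fixed $\|w\| = r$ the objective is minimized by concentrating mass on coordinate $J$, it suffices to minimize $-\tfrac{c_n}{2}r^2 + \max\{1,r^4\}$ over $r \geq 0$; the minimum is $F_J^\star = 1 - c_n/2$, attained at $w = e_J$.

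\textbf{Averaging over $J$.} Take $J$ uniform on $[d]$ and write $S = (x^{(1)}, \ldots, x^{(n)})$. The likelihood $P(S \mid J)$ equals $2^{-n(d-1)}$ whenever $x_k^{(t)} = 0$ for all $t$ at $k = J$, and is zero otherwise. Thus, with $M(S) := \{k \in [d] : x_k^{(t)} = 0 \text{ for all } t\}$, the posterior $J \mid S$ is uniform on $M(S)$. For any (possibly randomized) algorithm producing $\widehat w = \widehat w(S)$, the only $J$-dependent piece of $F_J$ is $-\tfrac{1}{4}\widehat w_J^2$, so
\begin{align*}
\En_{J \mid S}[F_J(\widehat w)] \;=\; \frac{1-2c_n}{4}\|\widehat w\|^2 + \max\{1,\|\widehat w\|^4\} - \frac{1}{4|M(S)|}\!\!\sum_{k \in M(S)}\!\! \widehat w_k^2 \;\geq\; \frac{1-2c_n}{4}\|\widehat w\|^2 + \max\{1,\|\widehat w\|^4\} - \frac{\|\widehat w\|^2}{4|M(S)|}.
\end{align*}
Case split: when $|M(S)| = 1$, the coefficient of $\|\widehat w\|^2$ is $-c_n/2$ and optimizing over $\|\widehat w\|$ yields only the trivial lower bound $1 - c_n/2$; when $|M(S)| \geq 2$, the coefficient becomes $\tfrac{1-4c_n}{8} \geq 0$ (for $c_n \leq 1/4$), so the $\|\widehat w\|^2$ term may be dropped and $\En_{J\mid S}[F_J(\widehat w)] \geq \max\{1, \|\widehat w\|^4\} \geq 1$.

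\textbf{Conclusion.} Combining the two cases, $\En_{J,S}[F_J(\widehat w)] \geq P(|M(S)|\geq 2) + (1-c_n/2)\,P(|M(S)|=1)$, and subtracting $\En_J F_J^\star = 1 - c_n/2$ gives expected suboptimality at least $(c_n/2)\,P(|M(S)| \geq 2)$. Conditioned on $J$, $|M(S)| - 1 \sim \text{Binomial}(d - 1, 2^{-n})$, so since $d > 2^n$,
\begin{align*}
P(|M(S)| \geq 2) \;=\; 1 - (1 - 2^{-n})^{d-1} \;\geq\; 1 - (1 - 2^{-n})^{2^n} \;\geq\; 1 - 1/e \;>\; 1/2.
\end{align*}
Hence $\En_{J,S}[F_J(\widehat w)] - F_J^\star \geq c_n/4$, and by averaging there exists a deterministic $J^\star \in [d]$ for which $\cD := \cD_{J^\star}$ achieves the claimed bound.
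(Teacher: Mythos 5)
Your proof is correct, and it takes a genuinely different (and arguably cleaner) route than the paper's. The paper argues by indistinguishability between two kinds of distributions: $\cD_1$ with all coordinates $\cB(1/2)$ and $\alpha=0$, which forces any near-optimal $w$ to be small in norm, and $\cD_2$ with a uniformly random hidden coordinate zeroed out, which forces near-unit mass on that hidden coordinate; since with constant probability a $\cD_1$-sample contains an all-zero column that cannot be distinguished from $\cD_2$'s hidden coordinate, and no single $w$ can satisfy both requirements, one of the two distributions must be hard. Your argument dispenses with $\cD_1$ entirely and runs a pure Bayesian/Assouad computation over the $\{\cD_J\}_{J\in[d]}$ family alone: conditional on $S$, the posterior over $J$ is exactly uniform on $M(S)$, so the only $J$-dependent, favorable term $-w_J^2/4$ in $F_J$ is diluted to $-\|\widehat w\|^2/(4|M(S)|)$, which for $|M(S)|\geq 2$ and $c_n\leq 1/4$ is already dominated by the positive quadratic $\tfrac{1-2c_n}{4}\|\widehat w\|^2$ that is intrinsic to $F_J$. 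In other words, the penalty on $\|\widehat w\|$ that the paper obtains from the auxiliary $\cD_1$ is instead supplied by $F_J$ itself, combined with the dilution coming from the posterior spread. Your computation also nails the constant cleanly: $\En_{J,S}[\text{subopt}]\geq (c_n/2)\,P(|M(S)|\geq 2)\geq (c_n/2)(1-1/e) > c_n/4$, and then a best-in-mean $J^\star$ exists, which is tighter bookkeeping than the paper's somewhat informal ``with constant probability the samples are indistinguishable'' conclusion (and in fact yields the slightly stronger constant $(1-1/e)c_n/2$).
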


Next, we show that SGD obtains this worst case rate, and a much better rate of $1/\sqrt{n}$ for any distribution in $\convexclass$. However, we also show that for this problem, no RERM can obtain a rate better than $c_n^2$ for every distribution in $\convexclass$. This shows that while SGD and RERM have the same worst case rate, SGD outperforms any RERM whenever the problem turns out to be convex.

\begin{theorem} \label{thm:sgd_learns} For the learning problem specified by function $f_{(\ref{eq:empfn_basic_cons})}$:
\begin{enumerate}[label=(\alph*)]
\item  For every regularizer $R$, there exists a distribution $\cD \in \convexclass$ such that,
$$
\En_{S}[F(\w_{\mathrm{RERM}})] - \inf_{\w \in \reals^d} F(\w) \ge \Omega(c_n^2).
$$ 
\item For the SGD algorithm we have the following upper bounds:  
\begin{align*}
\forall \cD \in \convexclass, ~ \En_{S}[F(\widehat{\w}^{\mathrm{SGD}}_n)] - \inf_{\w \in \reals^d} F(\w) \le O\left(\frac{1}{\sqrt{n}}\right), \\ \forall \cD \notin \convexclass, ~
\En_{S}[F(\widehat{\w}^{\mathrm{SGD}}_n)] - \inf_{\w \in \reals^d} F(\w) \le O\left(c_n\right).~~~~
\end{align*}
\end{enumerate}   
\end{theorem}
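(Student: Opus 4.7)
For both bullets, I first note that the $\max\{1,\|w\|^4\}$ penalty confines the SGD iterates to a ball of constant radius (when $\|w_t\|>1$, the gradient contains the radial term $4\|w_t\|^2w_t$, which dominates the other $O(\|w_t\|)$ contributions and acts as an inward pull; formally I would either argue directly using a Lyapunov-type bound on $\|w_t\|^2$ or simply project onto a fixed large ball, which changes nothing in the convex/weakly-convex analysis). Inside that ball $f(\cdot;z)$ is $O(1)$-Lipschitz with $O(1)$ gradient variance (note $\|(w-\alpha)\odot x\|\le\|w-\alpha\|$ since $x\in\{0,1\}^d$), so Assumption~II of \pref{eq:ass2} is met with constants independent of $d$. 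When $\cD\in\convexclass$, $F$ is convex and \pref{thm:sgd_works_main} immediately gives the $O(1/\sqrt n)$ bound. When $\cD\notin\convexclass$, the key observation is that $F$ is nonetheless $c_n$-weakly convex, because the identity
\[
F(w)+\tfrac{c_n}{2}\|w\|^2 \;=\; \tfrac{1}{2}\En\|(w-\alpha)\odot x\|^2 + c_n\langle w,\En[\alpha]\rangle -\tfrac{c_n}{2}\En\|\alpha\|^2 + \max\{1,\|w\|^4\}
\]
exhibits the left-hand side as a sum of convex pieces. Plugging the weak-convexity one-point inequality $F(w_t)-F(w^\star)\le\langle\nabla F(w_t),w_t-w^\star\rangle+\tfrac{c_n}{2}\|w_t-w^\star\|^2$ into the standard SGD potential calculation yields an additional $\tfrac{c_n}{2}\max_t\|w_t-w^\star\|^2 = O(c_n)$ term on top of the usual $O(1/\sqrt n)$, for a total of $O(c_n)$ since $c_n=n^{-(1/4-\gamma)}\gg n^{-1/2}$.

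\textbf{Plan for part (a).} Fixing an arbitrary regularizer $R$, I would follow the blueprint of \pref{thm:lowersco} with adjustments for the $-\tfrac{c_n}{2}\|w-\alpha\|^2$ term. The adversarial distribution takes $\alpha$ deterministic in $\{0,e_1,\ldots,e_d\}$ and $x$ uniform on a coordinate sub-cube of $\{0,1\}^d$ chosen as a function of $R$ (the same adaptive choice as in \pref{corr:lowersco_lambda}). A direct expansion gives $F(w)=\sum_i a_i(w_i-\alpha_i)^2+\max\{1,\|w\|^4\}$ with $a_i=\tfrac{1}{2}\En[x_i^2]-\tfrac{c_n}{2}\ge 0$, so $F$ is convex (hence $\cD\in\convexclass$), minimized at $w^\star=\alpha$ with $F(w^\star)=1$, and has quadratic growth $F(w)-F(w^\star)\ge \Omega(\|w-w^\star\|^2)$ inside the unit ball. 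Since $d\ge 2^n$, on an event of probability $\Omega(1)$ there is a ``bad'' coordinate $\hat j$ outside $\mathrm{supp}(\alpha)$ with $x_t[\hat j]=0$ for all $t$, making the coefficient of $w_{\hat j}^2$ in $\hat F_S$ equal to $-c_n/2$. I then run a dichotomy on $\hat w\ldef{}w_{\mathrm{RERM}}$: either $\|\hat w-w^\star\|\ge\Omega(c_n)$, in which case quadratic growth directly yields $F(\hat w)-F(w^\star)=\Omega(c_n^2)$; or $\|\hat w-w^\star\|=o(c_n)$, in which case RERM optimality against the perturbation $\hat w\mapsto\hat w+\lambda e_{\hat j}$ combined with the exact identity $\hat F_S(\hat w+\lambda e_{\hat j})-\hat F_S(\hat w)=-c_n\hat w_{\hat j}\lambda-\tfrac{c_n}{2}\lambda^2$ (valid while the perturbed point remains in the unit ball) forces $R(\hat w+\lambda e_{\hat j})-R(\hat w)\ge \tfrac{c_n}{2}\lambda^2+c_n\hat w_{\hat j}\lambda$. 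Symmetrizing $\pm\lambda$ shows $R$ is effectively $c_n$-strongly convex along $e_{\hat j}$ at $\hat w$, and because the adaptive choice of $\cD$ makes the coordinates of the bad sub-cube exchangeable from $R$'s perspective, this curvature must also hold along a direction aligned with $\mathrm{supp}(\alpha)$; the first-order condition in that direction then pulls $\hat w$ off $w^\star$ by $\Omega(c_n)$, again producing excess risk $\Omega(c_n^2)$.

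\textbf{Main obstacle.} The delicate step is the second branch of the dichotomy: upgrading a local ``$c_n$-curvature along the single random coordinate $e_{\hat j}$'' condition into a global $\Omega(c_n)$ displacement of $\hat w$ from $w^\star$, uniformly over all data-independent regularizers $R$, including non-smooth and asymmetric ones. I plan to handle this exactly as in the proof of \pref{thm:lowersco}, by reducing to an adversarial family of distributions parameterized by the choice of $\alpha$ and by the bad sub-cube, and then applying an averaging/pigeonhole step so that a fixed $R$ cannot simultaneously shield $w^\star$ across every member of the family. Essentially all of the appendix-level bookkeeping for this theorem will live in this pigeonhole step; the only non-routine ingredient beyond \pref{thm:lowersco} is tracking the factor of $c_n$ carefully so that the two branches of the dichotomy meet at the quadratic scale $c_n^2$.
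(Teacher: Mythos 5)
Your plan for part (b) is essentially right and takes a valid, slightly different route from the paper. You package the non-convexity as $c_n$-weak convexity of $F$ (the identity you display is correct) and absorb it as an extra $\tfrac{c_n}{2}\max_t\|w_t-w^\star\|^2$ term in the SGD potential; the paper instead decomposes $f=g+\tilde g$ into a convex part and the $-\tfrac{c_n}{2}\|w-\alpha\|^2$ part and bounds $\langle\nabla\tilde G(w_t),w^\star-w_t\rangle$ via Cauchy--Schwarz. These are essentially the same estimate. Two small omissions: you should not replace the unprojected SGD of \pref{eq:SGD} with a projected variant (the theorem is about the unprojected algorithm), so you do need the Lyapunov argument for boundedness (the paper's \pref{lem:bounded-iterates}); and you need a Jensen step for the iterate average, which for weakly convex $F$ costs an additional $O(c_n)$ (apply Jensen to $F+\tfrac{c_n}{2}\|\cdot\|^2$) that you should say out loud. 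Neither is a serious obstacle.

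Part (a) has genuine gaps. First, the distribution is mis-specified. You take $x$ ``uniform on a coordinate sub-cube,'' which means some coordinates are deterministically $0$; but for $\fB$ the curvature of $F$ in coordinate $i$ is $\tfrac{1}{2}\En[x_i^2]-\tfrac{c_n}{2}$, which is negative exactly on those coordinates, so $F$ is non-convex and the distribution is \emph{not} in $\convexclass$. The paper instead draws every coordinate as $\cB(c_n+\delta)$ with $\delta=c_n/10$, giving curvature $\delta/2>0$ uniformly. This calibration is not cosmetic: with a constant Bernoulli parameter (which is what you would need to make your claimed constant-coefficient quadratic growth $F(w)-F^\star\ge\Omega(\|w-w^\star\|^2)$ true), the cost of sitting near $e_{\hat j}$ in the empirical risk is $\approx \tfrac{p}{2}=\Theta(1)$ from the coordinate $j^\star$, which swamps the $\Theta(c_n)$ benefit of the negative term on coordinate $\hat j$, and the contradiction dissolves. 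The paper takes $p=\Theta(c_n)$ precisely so that these two effects are comparable, at the price that the quadratic-growth constant is $\Theta(c_n)$ rather than $\Theta(1)$ --- which also means your dichotomy threshold ``$\|\hat w - w^\star\|\ge\Omega(c_n)$ implies $\Omega(c_n^2)$ excess risk'' is off: with growth constant $\Theta(c_n)$ you would need displacement $\Omega(\sqrt{c_n})$, not $\Omega(c_n)$.

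Second, the closing ``exchangeability'' step in your second branch does not go through for arbitrary $R$. From RERM optimality you extract a curvature constraint on $R$ at the random point $\hat w$ in the random direction $e_{\hat j}$, and then want to transfer that curvature to a direction aligned with $\alpha=e_{j^\star}$. But $R$ can be completely asymmetric, e.g.\ nearly flat along one special coordinate and strongly curved along all others, so no symmetry or averaging over sub-cubes forces curvature in the direction the adversary needs. The paper never needs such a transfer: it defines $\w^*_j\in\argmin\{R(w):\|w-e_j\|^2\le\tilde\epsilon\}$ and then sets $j^\star\in\argmax_j R(\w^*_j)$ \emph{before} drawing data, so that $R(\w^*_{j^\star})-R(\w^*_{\hat j})\ge 0$ holds deterministically for whatever $\hat j$ the sample produces; the contradiction is then between the regularized empirical risk at $\w^*_{\hat j}$ and over the ball around $e_{j^\star}$, with no curvature or first-order condition involved. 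Deferring to ``exactly as in \pref{thm:lowersco}'' is not a plug-in for your dichotomy, because \pref{thm:lowersco} has the argmax-over-$j$ structure rather than a local curvature-transfer argument; if you want to salvage your plan you would need to build the $j^\star=\argmax_j R(\w^*_j)$ choice into your construction explicitly and abandon the perturbation/exchangeability route.
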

 
As an example, plugging in $c_n = n^{-\frac{1}{8}}$ implies that when $\cD \notin \convexclass$, the suboptimality of SGD is bounded by $O(1/n^{1/8})$, and when $\cD \in \convexclass$, the suboptimality of SGD is bounded by $O(1/\sqrt{n})$. However, for any RERM, there exists a distribution $D \in \convexclass$, on which the RERM has a suboptimality of $\Omega(1/n^{1/4})$ and the worst case rate of any RERM is also $n^{-1/8}$. This suggests that SGD is a superior algorithm to RERM for any regularizer $R$, even in the distribution free learning setting.

\section{$\alpha$-Linearizable Functions and Deep Learning}    \label{sec:nn_connections}  
%!TEX root=../paper.tex 
While the classic convergence proof for SGD is shown for SCO setting, a reader familiar with the proof technique will recognize that the same result also holds when we only assume that the population loss $F$ is star-convex or one-point-convex with respect to  any optimum $\w^*$, or in fact even if it is star-convex only on the path of SGD. The following definition of Linearizable population loss generalizes star-convexity and one-point-convexity. 
\begin{definition}[$\alpha$-Linearizable] A stochastic optimization problem with population loss $F(w)$ is  $\alpha$-Linearizable if there exists a $w^* \in \argmin F(w)$ such that for every point $w \in \bbR^d$, 
\begin{equation*}
		F(w) - F(w^*) \leq \alpha \tri*{\grad F(w), w - w^*}. 
\end{equation*}
\end{definition}  
For linearizable function, one can upper bound the suboptimality at any point $w$ by a linear function given by $\grad F(w)$. The convergence guarantee for SGD now follows by bounding the cumulative sum of this linear function using standard arguments, giving us the following performance guarantee for SGD. 
\begin{theorem} \label{thm:sgd_works_linearizable} On any $\alpha$-Lineariazable stochastic optimization problem satisfying Assumption II in \pref{eq:ass2}, running SGD algorithm for $n$ steps with the step size of $\eta = 1/\sqrt{n}$ enjoys the guarantee: 
$$ 
\En_{S}[F(\widehat{\w}^{\mathrm{SGD}}_n)] - \inf_{\w \in \reals^d} F(\w) \leq O\prn[\Big]{\frac{\alpha}{\sqrt{n}}}, 
$$ 
where the constant in the order notation only depends on constants $B, L$ and $\sigma$ in \pref{eq:ass2} and is independent of the dimension $d$.  
\end{theorem}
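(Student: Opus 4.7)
}

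My plan is to follow the standard SGD potential argument based on tracking $\|w_i^{\mathrm{SGD}} - w^*\|^2$, and swap the use of convexity for the linearizability inequality at the very last step. Write $g_i \ldef \nabla f(w_i^{\mathrm{SGD}}; z_i)$ for the stochastic gradient, so that under the SGD sampling $\En[g_i \mid w_i^{\mathrm{SGD}}] = \nabla F(w_i^{\mathrm{SGD}})$ and the variance term in Assumption~II yields $\En\|g_i - \nabla F(w_i^{\mathrm{SGD}})\|^2 \le \sigma^2$. Expanding the SGD update \pref{eq:SGD} gives
\begin{align*}
\|w_{i+1}^{\mathrm{SGD}} - w^*\|^2 = \|w_i^{\mathrm{SGD}} - w^*\|^2 - 2\eta \tri*{g_i, w_i^{\mathrm{SGD}} - w^*} + \eta^2 \|g_i\|^2.
\end{align*}
Taking expectations and using $\En[g_i \mid w_i^{\mathrm{SGD}}] = \nabla F(w_i^{\mathrm{SGD}})$ replaces $g_i$ in the inner product by $\nabla F(w_i^{\mathrm{SGD}})$.

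Next, I would bound the noise term using the fact that $F$ is $L$-Lipschitz (so $\|\nabla F(w_i^{\mathrm{SGD}})\| \le L$) together with the variance bound: $\En \|g_i\|^2 \le L^2 + \sigma^2$. Telescoping over $i = 1, \ldots, n$ and using $\|w_1 - w^*\| \le B$ then yields
\begin{align*}
\sum_{i=1}^n \En \tri*{\nabla F(w_i^{\mathrm{SGD}}), w_i^{\mathrm{SGD}} - w^*} \le \frac{B^2}{2\eta} + \frac{\eta n (L^2 + \sigma^2)}{2}.
\end{align*}
At this point I invoke $\alpha$-linearizability, which states $F(w) - F(w^*) \le \alpha \tri*{\nabla F(w), w - w^*}$ for all $w$. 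Applying it iterate by iterate and dividing by $n$ gives
\begin{align*}
\frac{1}{n}\sum_{i=1}^n \En\brk*{F(w_i^{\mathrm{SGD}}) - F(w^*)} \le \alpha \prn*{\frac{B^2}{2 n \eta} + \frac{\eta (L^2 + \sigma^2)}{2}} = O\prn[\Big]{\frac{\alpha}{\sqrt{n}}}
\end{align*}
upon choosing $\eta = 1/\sqrt{n}$.

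The one subtlety — and the step I flag as the main obstacle — is converting this bound on the \emph{average of function values} $\frac{1}{n}\sum_i \En F(w_i^{\mathrm{SGD}})$ into a bound on $\En F(\widehat{w}_n^{\mathrm{SGD}})$ at the averaged iterate. In the SCO case this is a free application of Jensen's inequality, but $\alpha$-linearizability does not imply convexity, so Jensen fails in general. I would resolve this either (i) by reinterpreting the output as a uniformly random iterate from $\crl{w_1^{\mathrm{SGD}}, \ldots, w_n^{\mathrm{SGD}}}$, for which $\En F(w_\tau) = \frac{1}{n}\sum_i \En F(w_i^{\mathrm{SGD}})$ by construction, or (ii) by outputting $\argmin_i F(w_i^{\mathrm{SGD}})$ (evaluated on a validation split, analogous to the multi-pass construction in \pref{sec:multi_pass}), which is only better than the average. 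Either choice gives the desired $O(\alpha/\sqrt n)$ suboptimality with constants depending only on $B, L, \sigma$ and independent of the ambient dimension $d$, matching the statement of \pref{thm:sgd_works_linearizable}.
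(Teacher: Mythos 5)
Your argument is, line for line, the same potential-function argument the paper uses in its appendix proof of this theorem: expand $\|w_{t+1}-w^*\|^2$, take conditional expectations, bound $\En\|g_t\|^2$ via Lipschitzness and the variance assumption, telescope, and then invoke $\alpha$-linearizability at each iterate to convert the sum of inner products into a sum of suboptimality gaps. Up to that point you and the paper coincide.

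The interesting part of your write-up is the "subtlety" you flag at the end, and you are right to flag it. The paper's own proof finishes by writing "an application of Jensen's inequality on the left hand side" to pass from $\frac{1}{n}\sum_t \En\brk{F(w_t)}$ to $\En\brk{F(\widehat{w}_n^{\mathrm{SGD}})}$ where $\widehat{w}_n^{\mathrm{SGD}} = \frac{1}{n}\sum_t w_t$. That Jensen step is exactly what requires convexity of $F$, which $\alpha$-linearizability does \emph{not} supply (it gives a one-point gradient inequality at $w^*$, not a two-point comparison between arbitrary iterates). So the gap you identified is real, and in fact it is present in the paper's proof as well, not just in your reconstruction of it. Your two proposed repairs are both mathematically sound — outputting a uniformly random iterate makes the average-of-values bound apply directly, and outputting the best iterate on a held-out split can only be better (this is precisely the device used in \pref{alg:multi_pass_SGD}). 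Just note that both fixes change the algorithm's output from what the theorem literally states, since $\widehat{w}_n^{\mathrm{SGD}}$ is defined in \pref{eq:SGD} as the uniform average of the iterates. To prove the statement as written for a merely $\alpha$-linearizable $F$, one would need either to add a star-convexity-type assumption that does support a Jensen-like inequality along the SGD trajectory, or to redefine the returned point; absent that, the cleanest reading is that the theorem silently assumes $F$ convex (consistent with \pref{thm:sgd_works_main}, where Assumption~I is in force), and the linearizability constant $\alpha$ only enters the telescoping step.
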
 
The hypothesis that this phenomenon is what makes SGD successful for deep learning has been proposed and explored in various forms \citep{zhou2019sgd, kleinberg2018alternative}. On the other hand our lower bound results hold even for the simpler SCO setting. Of course, to claim such a separation of SGD with respect to GD or RERM in the deep learning setting, one would need to show that our lower bound constructions can be represented as deep neural networks with roughly the same dimensionality as the original problem. In fact, all the functions that we considered so far can be easily expressed by restricted deep neural networks (where some weights are fixed) with square activation functions as we show in \pref{app:NN_representations}. Although, it would be a stretch to claim that practical neural networks would look anything like our restricted neural network constructions; it still opens the possibility that the underlying phenomena we exploit to show these separations hold in practical deep learning setting. In the following, we give an example of a simple two layer neural network with ReLU activation function where SGD enjoys a rate of $1/\sqrt{n}$, but any ERM algorithm fails to find an $O(1)$-suboptimal solution. 

Let the input sample $(x, y)$ be such that $x \in \crl*{0, 1}^d$ and $y \in \crl{-1, 1}$. Given the weights $w = (w_1, w_2)$, where $w_1 \in \bbR^d$ and $w_2 \in \bbR^d$, we define a two layer ReLU neural network that on the input $x$ outputs  
 \begin{align*} 
	h(w; x) = \relu\prn{w^\top_2 \relu(w_1 \odot x)}. 
\end{align*} 
 This is a two layer neural network with the input layer having a diagonal structure and output layer being fully connected (hence the name diagonal neural network). Suppose the network is trained using the absolute loss, i.e. on data sample $z = (x, y)$ and for weights $w = (w_1, w_2)$, we use the loss
 \begin{equation}
 f(w; z) = \abs{y - h(w; z)} = \abs{y - \relu\prn{w^\top_2 \relu(w_1 \odot x)}}.  \label{eq:nn_loss_fn_defn} 	
 \end{equation}
\begin{theorem}[Two layer diagonal network] \label{thm:two_layer_failure} For the loss function given in \pref{eq:nn_loss_fn_defn}  using a two layer diagonal neural network, there exists a distribution $\cD$ over the instance space $\cZ$ such that: \begin{enumerate}[label=$(\alph*)$, ]
\item $F(w)$ is $1/2$-Linearizable, and thus $SGD$ run with step-size $1/\sqrt{n}$ has excess risk $O(1/\sqrt{n})$. 
\item For $d \geq 2^n$, with probability at least $0.9$, ERM algorithm fails to find an $O(1)$-suboptimal point. 
\end{enumerate} 
\end{theorem}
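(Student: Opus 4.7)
The plan is to take the distribution $x \sim \uniform(\{0,1\}^d)$ together with $y \in \{-1, +1\}$ drawn independently of $x$ with $\Pr(y = +1) = p$ for a fixed $p \in (1/2, 1)$ (say $p = 2/3$). The key algebraic simplification is that since $x_i \in \{0,1\}$, one has $\relu(w_1 \odot x) = \relu(w_1) \odot x$ coordinatewise, so $h(\w; x) = \relu(\langle v(\w), x\rangle)$ where $v(\w)_i := w_{2,i}\,\relu(w_{1,i})$. Using $\relu \geq 0$, the population loss decomposes as $F(\w) = p\,\En_x|1 - \relu(\langle v(\w), x\rangle)| + (1-p)\prn*{1 + \En_x \relu(\langle v(\w), x\rangle)}$, from which a direct calculation yields $\inf_\w F(\w) = 2(1-p)$, approached at $\w^* := (\mathbf{1}, (2/d)\mathbf{1})$: indeed $v(\w^*) = (2/d)\mathbf{1}$ and $\langle v(\w^*), x\rangle = 2|x|/d$ concentrates around $1$ by a Chernoff bound on $|x| \sim \mathrm{Bin}(d, 1/2)$, giving $F(\w^*) = 2(1-p) + O(1/\sqrt{d})$.

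For part $(a)$, I would verify the $1/2$-Linearizable inequality $F(\w) - F(\w^*) \leq \tfrac{1}{2} \langle \nabla F(\w), \w - \w^*\rangle$ at this $\w^*$ by direct computation. Expanding $\nabla F$ through the chain rule gives $\nabla_{w_1} F = w_2 \odot \indic\{w_1 > 0\} \odot \nabla_v G(v(\w))$ and $\nabla_{w_2} F = \relu(w_1) \odot \nabla_v G(v(\w))$, where $G(v) := \En_{x,y}|y - \relu(\langle v, x\rangle)|$. On the $w_{1,i} > 0$ orthant the bracket $\partial_{w_{1,i}}F\cdot(w_{1,i} - 1) + \partial_{w_{2,i}}F\cdot(w_{2,i} - 2/d)$ simplifies to $(\nabla_v G)_i\cdot(2v_i - w_{2,i} - (2/d)w_{1,i})$, which can be lower-bounded using the uniform-product structure of $x$ together with $p > 1/2$ (which pushes $\relu(\langle v, x\rangle)$ toward $1$). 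Once linearizability is in hand, \pref{thm:sgd_works_linearizable} delivers the excess-risk bound $O(1/\sqrt{n})$ for SGD with step-size $1/\sqrt{n}$.

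For part $(b)$, I would follow the spurious-coordinate recipe from the sketch of \pref{thm:lowersco}. A Poisson-approximation / second-moment argument shows that whenever $d \geq c \cdot 2^n$ for a sufficiently large absolute constant $c$, with probability at least $0.9$ over the sample there exists $\hat j \in [d]$ such that $x_t[\hat j] = \indic\{y_t = +1\}$ for every $t \in [n]$ (the expected count of such coordinates is $d\cdot 2^{-n}$). Setting $\widehat{\w} := (e_{\hat j}, e_{\hat j})$ gives $v(\widehat{\w}) = e_{\hat j}$, so $h(\widehat{\w}; x_t) = x_t[\hat j]$ exactly matches $\indic\{y_t = +1\}$ on each sample: loss $0$ on every $y_t = +1$ sample and loss $1$ on every $y_t = -1$ sample. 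Since $\relu \geq 0$ forces $|y_t - \relu(\cdot)| \geq 1$ whenever $y_t = -1$, this empirical risk is the global minimum of $F_S$, so $\widehat{\w}$ is a legitimate ERM output. A one-line calculation then gives $F(\widehat{\w}) = p\cdot\tfrac{1}{2} + (1-p)\cdot\tfrac{3}{2} = (3 - 2p)/2$, so $F(\widehat{\w}) - \inf_\w F(\w) \geq (2p-1)/2 - O(1/\sqrt d) = \Omega(1)$.

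The hard part will be the $1/2$-Linearizability check in $(a)$: $F$ is genuinely non-convex in $\w$ because of the product parameterisation $v_i = w_{2,i}\,\relu(w_{1,i})$, and even $G(v)$ fails to be convex in $v$ (the single-sample piece $|1 - \relu(\langle v,x\rangle)|$ has slope sequence $(0, -1, +1)$ in $\langle v, x\rangle$). Proving the tight constant $1/2$ will therefore require carefully exploiting (i) the symmetry of the uniform $x$-distribution, (ii) the coordinate-constant structure of $v(\w^*) = (2/d)\mathbf{1}$, and (iii) the chain-rule factorisation of $\nabla F$ through $v(\w)$ described above. If the sharp constant $1/2$ proves elusive, $\alpha$-Linearizability for any absolute constant $\alpha$ still suffices to obtain the claimed $O(1/\sqrt{n})$ rate via \pref{thm:sgd_works_linearizable}, leaving the qualitative separation with ERM intact.
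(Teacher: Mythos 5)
The spurious-coordinate argument you give for part (b) is essentially the paper's, and it goes through. The fatal problem is in part (a). Your chosen distribution ($p = 2/3 > 1/2$) makes the optimal predictor $h \equiv 1$, not $h \equiv 0$, and this breaks the $\alpha$-Linearizability condition at its root. Two concrete failures. First, $\argmin_\w F(\w) = \emptyset$: any network satisfies $h(\w; \mathbf{0}) = \relu(w_2^\top \relu(w_1 \odot \mathbf{0})) = 0 \neq 1$, so $F(\w) > 2(1-p)$ strictly for every $\w$, and the point $\w^* = (\mathbf{1}, (2/d)\mathbf{1})$ you name is only an approximate minimizer. The definition of $\alpha$-Linearizable requires a genuine $\w^* \in \argmin_\w F(\w)$. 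Second, and more damningly, even if one tolerated an approximate $\w^*$, the linearization inequality fails at $\w = 0$: with the standard subgradient convention $\partial \relu(0) = 0$, one has $\nabla F(0) = 0$, while $F(0) = 1 > F(\w^*) \approx 2/3$, so $F(0) - F(\w^*) \approx 1/3 \not\leq \alpha\langle \nabla F(0), 0 - \w^*\rangle = 0$ for \emph{any} $\alpha$. So your fallback plan (accept any absolute-constant $\alpha$) cannot rescue this either.

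The idea you are missing is to pick the label distribution so that $\w^* = 0$ is the exact minimizer, and then exploit that $h(\w;x) = \relu(w_2^\top \relu(w_1 \odot x))$ is \emph{2-homogeneous} in $\w$. The paper takes $y \sim \cB(1/4)$ with $y \in \{0,1\}$, so $F(\w) = \En_x[\tfrac{3}{4}h(\w;x) + \tfrac{1}{4}|1-h(\w;x)|] \geq \tfrac14 = F(0)$. Then the chain rule plus 2-homogeneity gives, pointwise in $z$,
\begin{align*}
\tri*{\w, \nabla_\w f(\w;z)} = -2\,\sign{y - h(\w;x)}\,h(\w;x) = 2\abs*{y - h(\w;x)} - 2\,\sign{y - h(\w;x)}\,y,
\end{align*}
and taking expectations, since $\En|y| = F(0)$, one gets $\tri*{\w, \nabla F(\w)} \geq 2(F(\w) - F(0))$, which is exactly $1/2$-Linearizability with $\w^* = 0$. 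The homogeneity of the network is precisely what makes $\w^* = 0$ the right reference point — the trick fails as soon as you shift $\w^*$ away from the origin, which is why your derivation stalls. You correctly flagged the linearizability check as "the hard part," but the resolution is not a harder computation: it is a different choice of distribution, one that puts the optimum at zero.
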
 

\begin{remark} The result of \pref{thm:two_layer_failure} can be extended to diagonal two layer neural networks trained with linear loss $f(w; z) = y h(w; x)$, or with hinge loss $f(w; x) = \max\crl{0, 1 - y h(w; z)}$. 
\end{remark} 

While the above result shows that for a simple two layer neural network, SGD performs better than ERM, our construction requires the first layer to be diagonal. It is an interesting future research direction to explore whether a similar phenomena can be demonstrated in more practical network architectures, for eg. fully connected networks, convolutional neural networks (CNN), recurrent neural networks (RNN), etc. It would also be interesting to extend our lower bounds for GD algorithm from \pref{sec:GD_fails} to these network architectures. The key idea is that SGD only requires certain nice properties (eg. convex, Linearizable, etc) at the population level, which might fail to hold at the empirical level in large dimensional models; hence, batch algorithms like GD and RERM might fail.

\subsection*{Acknowledgements} 
We thank Dylan Foster, Roi Livni, Robert Kleinberg and Mehryar Mohri for helpful discussions. AS was an intern at Google Research, NY when a part of the work was performed. KS acknowledges support from NSF CAREER Award 1750575.  

{
	\setlength{\bibsep}{6pt}
	\bibliography{refs} 
}

% Appendix 
\newpage
\appendix 
\renewcommand{\contentsname}{Contents of Appendix} 
\tableofcontents  
\addtocontents{toc}{\protect\setcounter{tocdepth}{3}} 
\clearpage 

\setlength\parindent{0pt}
\setlength{\parskip}{0.25em} 

\section{Preliminaries} 
\subsection{Additional notation} 
For a vector $\w \in \bbR^d$, fo any $j \in [d]$, $\w[j]$ denotes the $j$-th coordinate of $w$, $\nrm{\w}$ denotes the Euclidean norm and $\nrm{\w}_\infty$ denotes the $\ls_\infty$ norm. For any two vectors $\w_1$ and $\w_2$,  $\tri{\w_1, \w_2}$ denotes their inner product, and $\w_1 \odot \w_2$ denotes the vector generated by taking the Hadamard product of $\w_1$ and $\w_2$, i.e. $(\w_1 \odot \w_2)[j] = \w_1[j] \w_2[j]$ for $j \in [d]$. We denote by $\mb{1}_{d}$ a $d$-dimensional vector of all $1$s, and the notation $\bbI_d$ denotes the identity matrix in $d$-dimensions. The notation $\cN(0, \sigma^2)$ denotes Gaussian distribution with variance $\sigma^2$, and $\cB(p)$ denotes the Bernoulli distribution with mean $p$. 

For a function $f: \bbR^d \times \bbR$, we denote the gradient of $f$ at the point $\w \in \bbR^d$ by $\grad f(\w) \in \bbR^d$. The function $f$ is said to be $L$-Lipschitz if $f(\w_1) - f(\w_2) \leq L \nrm*{\w_1 - \w_2}$ for all $\w_1, \w_2$.

\subsection{Basic algorithmic results}  \label{app:basic_algorithms} 
The following convergence guarantee for SGD algorithm is well known in stochastic convex optimization literature, and is included here for the sake of completeness.  

\begin{theorem}[\citet{nemirovskij1983problem}]  
\label{thm:app_SGD_convergence} 
Let $w \in \bbR^d$ and $z \in \cZ$. Given an initial point $w_1 \in \bbR^d$, loss function $f(w; z)$ and a distribution $\cD$ such that: 
\begin{enumerate}[label=(\alph*), leftmargin=8mm]
\item The population loss  $F(\w) = \En_{z \sim \cD} \brk*{f(\w; z)}$ is $L$-Lipschitz and convex in $w$. 
\item For any $w$, $\En_{z \sim \cD} \brk*{\nrm*{\grad f(w; z) -\grad F(w)}} \leq \sigma^2$. 
\item The initial point $w_1$ satisfies $\nrm*{w_1 - w^*} \leq B$ where $w^* \in \argmin F(w)$. 
\end{enumerate}
	
Further, let $S \sim \cD^n$. Then, the point $\wh w_n^{\mathrm{SGD}}$ obtained by running SGD algorithm given in \pref{eq:SGD}, with step size $\eta = 1 / \sqrt{n}$ for $n$ steps using the dataset $S$ satisfies
 \begin{align*}
\En \brk*{F(\wh w_n^{\mathrm{SGD}}) - F^* } &\leq \frac{1}{\sqrt{n}} \prn*{ \sigma^2 + L^2 + B^2},  
\end{align*} 
where $F^* \ldef{} \min_{w} F(w)$. 
\end{theorem}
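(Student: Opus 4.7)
The plan is to carry out the standard single-pass SGD analysis via a potential argument on $\|w_i - w^\star\|^2$, combined with convexity of the population loss $F$ and Jensen's inequality applied to the averaged iterate $\wh w_n^{\mathrm{SGD}} = \tfrac{1}{n}\sum_{i=1}^n w_i$. First, I would fix the sample $S = (z_1, \dots, z_n)$ drawn i.i.d.\ from $\cD$, let $g_i \ldef \nabla f(w_i; z_i)$, and note that since $z_i$ is independent of $w_i$ (which depends only on $z_1,\dots,z_{i-1}$), we have $\En[g_i \mid w_i] = \nabla F(w_i)$. Expanding the SGD update gives
\begin{align*}
\|w_{i+1} - w^\star\|^2 = \|w_i - w^\star\|^2 - 2\eta \tri*{g_i, w_i - w^\star} + \eta^2 \|g_i\|^2,
\end{align*}
so taking conditional expectation, using convexity of $F$ in the form $F(w_i) - F(w^\star) \leq \tri*{\nabla F(w_i), w_i - w^\star}$, and rearranging yields
\begin{align*}
\En\brk*{F(w_i) - F(w^\star) \mid w_i} \leq \frac{1}{2\eta}\prn*{\|w_i - w^\star\|^2 - \En[\|w_{i+1} - w^\star\|^2 \mid w_i]} + \frac{\eta}{2}\En[\|g_i\|^2 \mid w_i].
\end{align*}

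Next I would bound the stochastic gradient's second moment using the decomposition $\En\|g_i\|^2 \leq 2\|\nabla F(w_i)\|^2 + 2\En\|g_i - \nabla F(w_i)\|^2 \leq 2L^2 + 2\sigma^2$, where the first term uses the $L$-Lipschitzness of $F$ and the second uses hypothesis (b). Taking a total expectation over the sample $S$, summing the resulting inequality over $i = 1, \dots, n$, and telescoping gives
\begin{align*}
\sum_{i=1}^n \En\brk*{F(w_i) - F(w^\star)} \leq \frac{\|w_1 - w^\star\|^2}{2\eta} + \frac{\eta n}{2}(2L^2 + 2\sigma^2) \leq \frac{B^2}{2\eta} + \eta n (L^2 + \sigma^2),
\end{align*}
where the final non-negative term $\En\|w_{n+1} - w^\star\|^2$ is dropped.

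Finally, I would apply Jensen's inequality to the averaged iterate: since $F$ is convex, $F(\wh w_n^{\mathrm{SGD}}) \leq \tfrac{1}{n}\sum_{i=1}^n F(w_i)$, so dividing the above by $n$ gives $\En[F(\wh w_n^{\mathrm{SGD}}) - F^\star] \leq \tfrac{B^2}{2\eta n} + \eta(L^2 + \sigma^2)$. Substituting $\eta = 1/\sqrt{n}$ produces the bound $\tfrac{1}{\sqrt{n}}\prn*{\tfrac{B^2}{2} + L^2 + \sigma^2} \leq \tfrac{1}{\sqrt{n}}(B^2 + L^2 + \sigma^2)$, matching the claim. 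There is no real obstacle here since this is the textbook Nemirovski analysis; the only mild subtlety is being careful about measurability (each $w_i$ is a function of $z_1,\dots,z_{i-1}$ only, so $g_i$ is unbiased conditionally), and noting that the proof only uses convexity of $F$ (not of $f(\cdot; z)$ for each $z$), which is precisely what makes this result applicable throughout the SCO-style constructions used in the main body.
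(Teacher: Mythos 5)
Your proposal is correct and follows essentially the same route as the paper's own proof: the potential argument on $\nrm{w_t - w^*}^2$, the second-moment bound $\En\nrm{\nabla f(w_t;z_t)}^2 \leq 2L^2 + 2\sigma^2$ via the variance assumption and Lipschitzness of $F$, convexity of the population loss only, telescoping, and Jensen on the averaged iterate with $\eta = 1/\sqrt{n}$. The constants you obtain ($B^2/2 + L^2 + \sigma^2$) match the paper's derivation up to the same final relaxation to $B^2 + L^2 + \sigma^2$, so there is nothing to correct.
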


\begin{proof} Let $\crl{w_t}_{t \geq 1}$ denote the sequence of iterates generated by the SGD algorithm. We note that for any time $t \geq 1$, 
\begin{align*} 
\nrm*{\w_{t +1} - \w^*}^2_2 &=  \nrm*{\w_{t +1} - \w_{t} + \w_{t}  - \w^*}^2_2 \\
					&= \nrm*{\w_{t +1} - \w_{t} }_2^2 + \nrm*{\w_{t}  - \w^*}^2_2 + 2 \tri*{  \w_{t +1} - \w_{t},  \w_{t}  - \w^*} \\ 
					&= \nrm*{ - \eta \nabla f(\w_{t}; z_t)}_2^2 + \nrm*{\w_{t} - \w^*}_2^2 + 2\tri*{  - \eta \nabla f(\w_{t}; z_t), \w_{t} - \w^* }, 
\end{align*} where the last line follows from plugging in the SGD update rule that  $ \w_{t + 1} = \w_{t} - \eta \nabla f(\w_{t}; z_t)$. 

Rearranging the terms in the above, we get that 
\begin{align*} 
\tri*{\nabla f(\w_{t}; z_t), \w_{t} - \w^* } &\leq \frac{\eta}{2} \nrm*{ \nabla f(\w_{t}; z_t)}_2^2  + \frac{1}{2 \eta} \prn[\big]{ \nrm*{\w_{t} - \w^*}_2^2 - \nrm*{\w_{t+1}- \w^*}_2^2}. 
\intertext{Taking expectation on both the sides, while conditioning on the point $\w_t$, implies that} 
\tri*{\nabla F(\w_{t}), \w_{t} - \w^* } &\leq \frac{\eta}{2} \En \brk*{\nrm*{ \nabla f(\w_{t}; z_t)}_2^2 } + \frac{1}{2\eta} \prn[\big]{ \nrm*{\w_{t} - \w^*}_2^2 - \nrm*{\w_{t+1}- \w^*}_2^2} \\
&\leq \eta \En \brk*{\nrm*{ \nabla f(\w_{t}; z_t) - \grad F(\w_t)}_2^2} + \eta \nrm*{\grad F(\w_t)}_2^2  \\ 
& \qquad \qquad + \frac{1}{2\eta} \prn[\big]{ \nrm*{\w_{t} - \w^*}_2^2 - \nrm*{\w_{t+1}- \w^*}_2^2} \\
&\leq \eta(\sigma^2 + L^2) + \frac{1}{2\eta} \prn[\big]{ \nrm*{\w_{t} - \w^*}_2^2 - \nrm*{\w_{t+1}- \w^*}_2^2},
\end{align*}
where the inequality in the second line is given by the fact that $(a - b)^2 \leq 2 a^2 + 2b^2$ and the last line follows from using Assumption II (see \pref{eq:ass2}) which implies that $F(\w)$ is $L$-Lipschitz in $\w$ and that $ \En \brk{\nrm{ \nabla f(w; z_t) - \grad F(w)}_2^2} \leq \sigma^2$ for any $w$. Next, using convexity of the function $F$, we have that $F(\w^*) \geq F(\w_{t}) - \tri*{\nabla F(\w_{t}), \w_{t} - \w^*}$.  Thus 
\begin{align*} 
F(\w_{t}) - F^* &\leq  \eta(\sigma^2 + L^2) + \frac{1}{ 2\eta} \prn[\big]{ \nrm*{\w_{t} - \w^*}_2^2 - \nrm*{\w_{t + 1} - \w^*}_2^2}. 
\end{align*} 

Telescoping the above for $t$ from $1$ to $n$, we get that 
\begin{align*}
\sum_{t=1}^{n}\prn*{ F(\w_{t}) - F^* } &\leq \eta n (\sigma^2 + L^2)  + \frac{1}{2\eta} \prn[\big]{ \nrm*{\w_1 - \w^*}_2^2 - \nrm*{\w_{n + 1} - \w^*}_2^2} \\ 
					 &\leq \eta n (\sigma^2 + L^2) + \frac{1}{2 \eta} { \nrm*{\w_1 - \w^*}_2^2}. 
					 \end{align*} 
					Dividing both the sides by $n$, we get that 
					 \begin{align*} 
\frac{1}{n} \sum_{t=1}^{n}\prn*{ F(\w_{t}) - F^* }  &\leq  \eta (\sigma^2 + L^2) + \frac{1}{2 \eta n} { \nrm*{\w_1 - \w^*}_2^2}. 
\end{align*} 

An application of Jensen's inequality on the left hand side, implies that the point $\wh \w_n \ldef{} \frac{1}{n} \sum_{t=1}^n \w_t$ satisfies  
 \begin{align*}
\En \brk*{F(\wh \w_n) - F^* } &\leq \eta (\sigma^2 + L^2) + \frac{1}{2 \eta n} { \nrm*{\w_1 - \w^*}_2^2}. 
\end{align*}
Setting $\eta = \frac{1}{\sqrt{n}}$ and using the fact that $\nrm*{\w_1 - \w^*}_2 \leq B$ in the above, we get that  
 \begin{align*} 
\En \brk*{F(\wh \w_n) - F^* } &\leq \frac{1}{\sqrt{n}} \prn*{ \sigma^2 + L^2 + B^2}, 
\end{align*}
which is the desired claim. Dependence on the problem specific constants ($\sigma, L$ and $B$) in the above bound can be improved further with a different choice of the step size $\eta$; getting the optimal dependence on these constants, however, is not the focus of this work . 
\end{proof}

\subsection{Basic probability results}  \label{app:basic_probability} 
\begin{lemma}[Hoeffding's inequality] \label{lem:hoeffding}  
 Let $X_1, \ldots, X_n$ be independent random variables with values in the interval $[a, b]$, and the expected value $\En \brk*{X} = \mu$. Then, for every $t \geq 0$, 
\begin{align*}
\Pr \prn[\Big]{\abs[\big]{\frac{1}{n} \sum_{j=1}^{n} X_j - \mu} \geq t} \leq 2\exp \prn[\Big]{-\frac{2t^2n}{(b - a)^2}}. 
\end{align*}
\end{lemma}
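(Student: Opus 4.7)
The plan is to use the standard Chernoff-bound approach combined with Hoeffding's lemma on bounded centered random variables. First I would reduce to a one-sided tail bound: by a union bound it suffices to show that each of $\Pr(\frac{1}{n}\sum_j X_j - \mu \ge t)$ and $\Pr(\mu - \frac{1}{n}\sum_j X_j \ge t)$ is at most $\exp(-2t^2 n/(b-a)^2)$, because the two-sided probability is then bounded by the sum. Replacing $X_j$ by $-X_j$ (which lies in $[-b,-a]$, an interval of the same length $b-a$) turns the lower tail into an upper tail, so I only need to handle one side.

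Next, for the upper tail I would apply the exponential Markov inequality: for any $s > 0$,
\begin{align*}
\Pr\prn[\Big]{\tfrac{1}{n}\textstyle\sum_j X_j - \mu \ge t} &= \Pr\prn[\Big]{e^{s \sum_j (X_j-\mu)} \ge e^{s n t}} \\
&\le e^{-snt}\,\En\brk*{e^{s\sum_j (X_j - \mu)}} = e^{-snt}\prod_{j=1}^n \En\brk*{e^{s(X_j-\mu)}},
\end{align*}
where the last equality uses independence of the $X_j$. The centered variables $Y_j := X_j - \mu$ are independent, satisfy $\En[Y_j]=0$, and lie in an interval of length $b-a$.

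The main technical step — and the only nontrivial one — is Hoeffding's lemma: if $Y$ is a zero-mean random variable with $Y \in [a',b']$ almost surely, then $\En[e^{sY}] \le \exp(s^2(b'-a')^2/8)$ for every $s \in \R$. I would prove this by writing $Y$ as a convex combination $Y = \lambda b' + (1-\lambda)a'$ with $\lambda = (Y-a')/(b'-a')$, using convexity of $u \mapsto e^{su}$ to get $e^{sY} \le \lambda e^{sb'} + (1-\lambda) e^{sa'}$, and then taking expectations (which pins down $\lambda$ to $-a'/(b'-a')$ since $\En[Y]=0$). This reduces the claim to a deterministic inequality $\varphi(u) \le u^2/8$ for the function $\varphi(u) := -ph u + \log(1 - p + p e^{u})$ with $p = -a'/(b'-a')$ and $u = s(b'-a')$, which follows from $\varphi(0) = \varphi'(0) = 0$ together with the uniform bound $\varphi''(u) \le 1/4$ (obtained by noting $\varphi''(u) = q(1-q)$ for some $q \in [0,1]$). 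I expect this convexity-plus-Taylor-remainder computation to be the only place where real work happens.

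Finally I would combine the pieces: Hoeffding's lemma gives $\En[e^{s(X_j-\mu)}] \le \exp(s^2(b-a)^2/8)$ for each $j$, so the product bound yields
\begin{align*}
\Pr\prn[\Big]{\tfrac{1}{n}\textstyle\sum_j X_j - \mu \ge t} \le \exp\prn[\Big]{-snt + \tfrac{n s^2 (b-a)^2}{8}}.
\end{align*}
Minimizing the right-hand side in $s > 0$ gives the optimal choice $s = 4t/(b-a)^2$, producing an upper-tail bound of $\exp(-2nt^2/(b-a)^2)$. Applying the same argument to the lower tail and adding yields the factor of $2$ in the stated inequality, completing the proof.
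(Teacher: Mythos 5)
Your proposal is the standard Chernoff--Hoeffding argument and it is correct: the union bound and sign-flip reduction to one tail, the exponential Markov step with independence, Hoeffding's lemma for the moment generating function via convexity plus the bound $\varphi''(u)=q(1-q)\le 1/4$, and the optimization $s=4t/(b-a)^2$ all go through (the ``$-phu$'' in your definition of $\varphi$ is evidently a typo for $-pu$). The paper itself states this lemma as a classical result without proof, so there is no in-paper argument to compare against; your write-up is exactly the textbook proof one would supply.
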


\begin{lemma}  \label{lem:special_coordinate_exists1} Let $j^* \in [d]$. Let $X$ be a $\crl*{0, 1}^d$ valued random variable such that $X[j]$ is sampled independently from $\cB\prn*{p}$ for every $j \in [d]$.  Let the $\crl*{X_1, \ldots, X_n}$ denote n i.i.d.\ samples of the random variable $X$. If $$n \leq \log_{\frac{1}{1 - p}} \prn[\Big]{\frac{d - 1}{\ln(10)}},$$ then, with probability at least $0.9$, there exists a coordinate $\wh j \in [d]$ such that $\wh j \neq j^*$, and $X_i[\wh j] = 0$ for all $i \in [n]$. 
\end{lemma}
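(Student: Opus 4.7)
The statement is essentially a coupon-collector style calculation: among the $d-1$ coordinates different from $j^*$, I want at least one to be ``unseen'' as a $1$ across all $n$ samples. The plan is to compute the failure probability exactly using coordinate-wise independence, then use the hypothesis on $n$ to bound it by $0.1$.

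\textbf{Step 1 (single coordinate event).} For any fixed coordinate $j \in [d]$, the random variables $\crl{X_1[j], \ldots, X_n[j]}$ are i.i.d.\ $\cB(p)$, so the probability that $X_i[j] = 0$ for all $i \in [n]$ equals $(1-p)^n$. Hence the probability that coordinate $j$ fails to be all-zero across the $n$ samples is $1 - (1-p)^n$.

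\textbf{Step 2 (product over coordinates).} Because the entries of $X$ are independent across coordinates, the events $A_j \ldef \crl{\exists\, i\in[n]: X_i[j]=1}$ are independent over $j$. Restricting to the $d-1$ coordinates $j \in [d] \setminus \crl{j^*}$, the probability that \emph{no} such $\wh j$ exists is
\begin{align*}
\Pr\prn[\big]{\forall\, j \in [d]\setminus\crl{j^*},\ \exists\, i:\ X_i[j] = 1} \;=\; \prn[\big]{1 - (1-p)^n}^{d-1}.
\end{align*}
I want to show this is at most $0.1$ under the stated hypothesis.

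\textbf{Step 3 (applying the bound on $n$).} Using the elementary inequality $1 - x \le e^{-x}$ with $x = (1-p)^n$,
\begin{align*}
\prn[\big]{1 - (1-p)^n}^{d-1} \;\le\; \exp\prn[\big]{-(d-1)(1-p)^n}.
\end{align*}
The hypothesis $n \leq \log_{\frac{1}{1-p}}\!\prn[\big]{\tfrac{d-1}{\ln(10)}}$ rewrites as $n \ln\!\prn[\big]{\tfrac{1}{1-p}} \leq \ln\!\prn[\big]{\tfrac{d-1}{\ln(10)}}$, equivalently $(1-p)^n \geq \tfrac{\ln(10)}{d-1}$. Plugging in,
\begin{align*}
\exp\prn[\big]{-(d-1)(1-p)^n} \;\le\; \exp(-\ln(10)) \;=\; \tfrac{1}{10}.
\end{align*}
Thus the complement event holds with probability at least $0.9$, yielding the existence of the desired coordinate $\wh j$.

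\textbf{Anticipated difficulty.} There is essentially no obstacle; the only ``design'' choice is recognizing that the factor $\ln(10)$ in the hypothesis is tuned exactly so that $e^{-\ln(10)} = 0.1$, which fixes the failure probability at the advertised $0.1$. Everything else is independence plus the standard $1-x \le e^{-x}$ bound, so the proof is short and direct.
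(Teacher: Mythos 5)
Your proof is correct and follows essentially the same route as the paper's: you define the same coordinate-wise events, exploit independence across coordinates to get the exact failure probability $(1-(1-p)^n)^{d-1}$, and apply the same $1-x \le e^{-x}$ bound together with the hypothesis on $n$ to conclude. No meaningful difference in approach.
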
 
\begin{proof} Let $E_j$ denote the event the coordinate $X_i[j] = 0$ for all $i \in [n]$. We note that for any $j \in [d]$,  
\begin{align*}  
\Pr \prn*{E_j} &= (1 - p)^n, 
\end{align*}
Further, let $E$ denote the event that there exists a coordinate $\wh j \in [d] \setminus \crl{j^*}$ such that $X_i[\wh j] = 0$ for all $i \in [n]$. Thus, 
\begin{align*}
	\Pr \prn*{E^c} &= \Pr \prn[\Big]{\bigcap_{j \in [d] \setminus \crl{j^*} } E^c_j} \overeq{\proman{1}} \prod_{j \in [d] \setminus \crl{j^*}} \Pr \prn{E^c_j} 
\end{align*}
where the equality in $\proman{1}$ follows from the fact that the events $\crl*{E_j}$ are mutually independent to each other. Plugging in the bound for $\Pr\prn{E_j^c}$, we get
\begin{align*} 
		\Pr \prn*{E^c} &= (1 - \prn*{1 - p}^n)^{d-1}.    
\end{align*} 
Using the fact that $1 - a \leq e^{-a}$ for $a \geq 0$, we get 
\begin{align*} 
			\Pr \prn*{E^c} &\leq e^{- (1-p)^{n} (d-1) } \leq 0.1, 
\end{align*} 
where the second inequality above holds for $n \leq \log_{\frac{1}{1 - p}} \prn*{\frac{d - 1}{\ln(10)}}$. 
\end{proof}

\begin{lemma}[Proposition 7.3.2, \cite{matouvsek2001probabilistic}] 
\label{lem:rademacher_anti_concen}
For $n$ even, let $X_1, \ldots, X_n$ be independent samples from $\cB(1/2)$. Then, for any $t \in [0, n/8]$, 
\begin{align*}
		\Pr \prn[\Big]{ \sum_{i=1}^n X_1 \geq \frac{n}{2} + t} \geq \frac{1}{15} e^{-16 t^2 / n}. 
\end{align*} 	
\end{lemma}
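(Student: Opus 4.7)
The statement is a standard anti-concentration (``reverse Chernoff'') bound for sums of Rademacher/symmetric Bernoulli random variables, so the proof is essentially combinatorial/analytic rather than probabilistic in any deep sense. My plan is to reduce the event to a sum of binomial point-mass probabilities and then lower-bound each one by comparison to the central binomial coefficient.

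Concretely, let $S_n = \sum_{i=1}^n X_i$, so $S_n \sim \mathrm{Binomial}(n,\tfrac12)$ and $\Pr(S_n = n/2 + k) = 2^{-n}\binom{n}{n/2+k}$. The plan is to write
\[
\Pr\!\left(S_n \geq \tfrac{n}{2} + t\right) \;\geq\; \sum_{k = \lceil t \rceil}^{\lfloor 2t \rfloor} 2^{-n}\binom{n}{n/2+k},
\]
and then bound each summand from below using the ratio identity
\[
\frac{\binom{n}{n/2+k}}{\binom{n}{n/2}} \;=\; \prod_{j=0}^{k-1} \frac{n/2 - j}{n/2 + j + 1}.
\]
Taking logarithms and using $\log(1-x) \geq -x - x^2$ for $x \in [0,\tfrac12]$ together with $2k \leq n/4$ (which is where the hypothesis $t \leq n/8$ is used, giving $n/2 - j \geq n/4$ for all $j$ in the range), I can show that the product is at least $\exp(-\alpha k^2 / n)$ for some explicit $\alpha$. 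Keeping careful track of constants, $\alpha = 8$ should be achievable, and summing over $k \in [t, 2t]$ then gives a factor of $t$ times $\exp(-32 t^2/n)$; the combination with the central term yields the $\exp(-16 t^2/n)$ dependence in the statement after slightly widening the summation window.

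The central term $2^{-n}\binom{n}{n/2}$ is handled by Stirling's approximation in the quantitative form $\binom{n}{n/2} \geq 2^n \sqrt{2/(\pi n)} \cdot (1 - c/n)$, giving $\Pr(S_n = n/2) \geq c_0 / \sqrt{n}$ with an explicit $c_0$. Multiplying the $\Theta(t)$ summands each of size $\gtrsim n^{-1/2}\exp(-\Theta(t^2/n))$ then yields a bound of order $(t/\sqrt{n})\exp(-\Theta(t^2/n))$. For $t$ at least of order $\sqrt{n}$ this is already $\Omega(\exp(-16t^2/n))$; for $t \leq c\sqrt{n}$ the factor $t/\sqrt{n}$ is too small, so I will handle this regime separately by noting that $\Pr(S_n \geq n/2) \geq \tfrac12$ by the symmetry $S_n \overset{d}{=} n - S_n$ (using $n$ even so that the atom at $n/2$ is included on one side), and for such $t$, $\exp(-16 t^2/n)$ is bounded below by a constant, so one obtains the lemma after adjusting constants to get $\tfrac{1}{15}$ uniformly.

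The main obstacle is purely bookkeeping: getting the explicit constants $\tfrac{1}{15}$ and $16$ simultaneously requires choosing the summation window $[t,2t]$ (or a similar window), the Stirling constant, and the quadratic bound on the binomial ratio to all cooperate. There is also a minor edge case when $t$ is small or non-integer, which I will address by rounding up to the nearest integer and using the lower bound $\Pr(S_n \geq n/2) \geq \tfrac12$ as a safety net. Since the lemma is quoted from Matou\v{s}ek--Vondr\'ak, I would either cite it directly or, if a self-contained proof is desired, spell out the above two-regime argument with the constants chosen to match.
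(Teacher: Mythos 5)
Two things first for context: the paper does not actually prove this lemma --- it is quoted as Proposition 7.3.2 of \cite{matouvsek2001probabilistic} and used as a black box --- so your fallback of citing the source is exactly what the paper does, and your self-contained sketch follows the same standard route as the source's proof (sum the binomial point masses over a window above $n/2+t$, compare each to the central coefficient via the product of ratios, lower bound $2^{-n}\binom{n}{n/2}$ by $1/\sqrt{2n}$). The overall strategy is sound, but as written your sketch has one genuine hole and one constant-tracking plan that cannot work as described.

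The hole is the small-$t$ regime. The symmetry bound $\Pr(S_n \ge n/2) \ge 1/2$ says nothing about $\Pr(S_n \ge n/2 + t)$ once $t \ge 1$, yet your window bound's prefactor $t/\sqrt{n}$ is only adequate for $t \gtrsim \sqrt{n}$; so for $1 \le t \le c\sqrt{n}$ neither half of your argument applies, and this is precisely where the target $\tfrac{1}{15}e^{-16t^2/n}$ is of constant order, so nothing cheap rescues you. The standard patch is to combine symmetry with an upper bound on the central atoms: $\Pr(S_n \ge n/2 + t) \ge \Pr(S_n \ge n/2) - \lceil t\rceil \,\Pr(S_n = n/2) \ge \tfrac12 - t\sqrt{2/(\pi n)}$, which is at least an absolute constant (say $1/10 > 1/15$) for all $t \le \sqrt{n}/2$; you need this (or an equivalent) to glue the regimes. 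Separately, for large $t$ your constants do not cooperate: with the window $[\lceil t\rceil, \lfloor 2t\rfloor]$ every term is bounded using $k \le 2t$, so a ratio bound of $e^{-\alpha k^2/n}$ yields exponent $4\alpha t^2/n$; with your $\alpha = 8$ that is $e^{-32t^2/n}$, and ``slightly widening the summation window'' only makes the worst-case $k$ larger while adding polynomially many terms, so it cannot recover $e^{-16t^2/n}$ when $t^2 \gg n$. You need $\alpha \le 4$, which is in fact attainable: writing each factor as $\frac{n/2-j}{n/2+j+1}$ and using $\log(1-x) \ge -x - \frac{x^2}{2(1-x)}$ together with $k \le 2t \le n/4$ gives $\binom{n}{n/2+k} \ge \binom{n}{n/2}\,e^{-7k^2/3n}$, whence the window sum is at least $\frac{t-1}{\sqrt{2n}}e^{-28t^2/3n}$, which beats $\frac{1}{15}e^{-16t^2/n}$ for $t \ge c\sqrt{n}$. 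With those two repairs your outline becomes a correct proof; as stated, it is not yet one.
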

 
\begin{lemma}[\citet{panchenko2002some}] 
\label{lem:concentration_fast_rate}
Let $\cW$ denote a finite class of $k$ points $\crl*{\w_1, \ldots, \w_K}$, and let $f(\w; z)$ denote a loss function that is $L$-Lipschitz in the variable $\w$ for all $z \in \cZ$.  Further, let $S= \crl*{z_i}_{i=1}^n$ denote a dataset of $n$ samples, each drawn independently from some distribution $\cD$. Define the point $\wh \w_{S} = \argmin_{\w \in \cW}\sum_{i=1}^n \frac{1}{n} f(\w, z_i)$. Then, with probability at least $1 - \delta$ over the sampling of the set $S$, 
\begin{align*} 
F(\wh \w_S) \leq F^* + O\prn[\Big]{ \frac{L \log(K/\delta)}{n} + \sqrt{\frac{F^* L \log(K/\delta)}{n}}}, 
\end{align*}
where $F(\w) \ldef{} \En_{z \sim \cD} \brk*{f(\w; z)}$ and $F^* \ldef{} \min_{\w \in \cW} F(\w)$. 
\end{lemma}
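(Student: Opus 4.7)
The plan is to prove this Panchenko-style fast-rate bound by combining Bernstein's inequality, applied pointwise in $\cW$, with a union bound and the ERM optimality condition. The structural fact driving the $\sqrt{F^*/n}$ (rather than $\sqrt{1/n}$) rate is that for a nonnegative random variable $X$ bounded by $M$, $\mathrm{Var}(X)\leq \En[X^2] \leq M\,\En[X]$. I will use $L$ as the effective pointwise boundedness parameter of the loss, which is the natural reading given the way $L$ enters the target bound and which follows from $L$-Lipschitzness together with a bounded domain.

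First, for each fixed $\w \in \cW$, set $X_i = f(\w;z_i)$. With $\mathrm{Var}(X_i) \leq O(L)\cdot F(\w)$ and $X_i \leq O(L)$, Bernstein's inequality yields, with probability at least $1-\delta/(2K)$,
$$F(\w) - F_S(\w) \;\leq\; \sqrt{\frac{2L\,F(\w)\log(2K/\delta)}{n}} \;+\; O\!\prn[\Big]{\frac{L\log(2K/\delta)}{n}},$$
and symmetrically for the lower tail $F_S(\w) - F(\w)$. A union bound over all $\w \in \cW$ and over both tails ensures both inequalities hold simultaneously for every $\w \in \cW$ with probability $\geq 1-\delta$. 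In particular they apply to the (data-dependent) ERM output $\wh\w_S$ and to the fixed minimizer $\w^* \in \argmin_{\w\in\cW} F(\w)$.

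Next, combining the upper-tail bound applied to $\wh\w_S$, the lower-tail bound applied to $\w^*$, and the ERM inequality $F_S(\wh\w_S)\leq F_S(\w^*)$, I obtain
$$F(\wh\w_S) - F^* \;\leq\; \sqrt{\frac{2L\,F(\wh\w_S)\log(2K/\delta)}{n}} + \sqrt{\frac{2L\,F^*\log(2K/\delta)}{n}} + O\!\prn[\Big]{\frac{L\log(2K/\delta)}{n}}.$$
Writing $a = F(\wh\w_S)-F^* \geq 0$ and $c = L\log(2K/\delta)/n$, this reads $a \leq \sqrt{2(F^*+a)c} + \sqrt{2F^*c} + O(c)$. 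Splitting the first radical by $\sqrt{x+y}\leq\sqrt{x}+\sqrt{y}$ and then absorbing $\sqrt{2ac}\leq a/2 + c$ via AM--GM into the left-hand side leaves $a/2 \leq 2\sqrt{2F^*c} + O(c)$, i.e.\ $a = O\!\prn*{\sqrt{F^* L\log(K/\delta)/n} + L\log(K/\delta)/n}$, which is the stated bound.

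The only place that needs genuine care is this final quadratic-solving step: one must ensure that $F^*$, rather than $F(\wh\w_S)$, appears inside the final radical, which is precisely the content of the "fast rate" improvement over a naive Hoeffding-based argument. The $\sqrt{x+y}\leq\sqrt{x}+\sqrt{y}$ split followed by AM--GM absorption is what achieves this. Beyond that there is no real obstacle: the pointwise Bernstein step, the variance bound $\mathrm{Var}(X)\leq M\,\En[X]$, and the union bound over a finite class are all standard.
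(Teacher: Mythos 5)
The paper does not actually prove \pref{lem:concentration_fast_rate}; it is imported by citation from \citet{panchenko2002some} and used as a black box, so there is no in-paper argument to compare against. Your self-contained derivation is the standard one for a finite class and it is correct: pointwise Bernstein with the variance bound $\mathrm{Var}[f(\w;z)]\le \En[f(\w;z)^2]\le M\,F(\w)$ for a nonnegative loss bounded by $M$, a union bound over the $K$ points and both tails, the ERM inequality $F_S(\wh\w_S)\le F_S(\w^*)$, and then the self-bounding algebra $a \le \sqrt{2c(F^*+a)}+\sqrt{2cF^*}+O(c)$ resolved via $\sqrt{x+y}\le\sqrt{x}+\sqrt{y}$ and $\sqrt{2ca}\le a/2+c$, which correctly leaves $F^*$ (not $F(\wh\w_S)$) under the final radical. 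This is exactly the mechanism behind the cited fast-rate bound, and it is arguably more transparent than invoking Panchenko's relative-deviation inequalities, which are stated for general (infinite) classes via Rademacher/empirical-process machinery; for a $K$-point class your route buys a short elementary proof at no loss.

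The one point to state honestly is that your proof uses two hypotheses that are not literally in the lemma statement: $f(\cdot;z)\ge 0$ and $f(\cdot;z)\le O(L)$ pointwise. $L$-Lipschitzness in $\w$ alone gives neither (a Lipschitz function on $\bbR^d$ can be unbounded and negative); you need the domain of interest to have $O(1)$ diameter together with nonnegativity, or a known $O(L)$ bound at a reference point, to conclude $0\le f\le O(L)$ and hence the variance bound $\mathrm{Var}\le M\,\En[f]$ with $M=O(L)$. You flag this reading explicitly, and it matches how the lemma is applied in the paper (the candidate set $\wh\cW$ is bounded and the losses there are $O(1)$), but as written your argument proves a version of the lemma with these boundedness/nonnegativity hypotheses made explicit rather than the literal statement. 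With that caveat recorded, the proof is sound.
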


\section{Missing proofs from \pref{sec:SGDimp}}    
%!TEX root=../paper.tex
Throughout this section, we assume that a data sample $z$ consists of $(x, y, \alpha)$, where $x \in \{0,1\}^d$, $y \in \crl{-1, 1}$ and $\alpha \in \{0,e_1,\ldots,e_d\}$. The loss function $\fA:\reals^d \times \cZ$ is given by: 
\begin{align} 
  \fA(\w; (x,y,\alpha)) = y \nrm*{(\w - \alpha) \odot x}. \label{eq:first_function} 
\end{align} 
We also assume that $d \geq \ln(10) 2^{n} + 1$.  Since $\fA$ is not differentiable when $w = 0$, we define the sub-gradient $\partial \fA(\w) = 0$ at the point $w= 0$. Further, for the rest of the section, whenever clear from the context, we will ignore the subscript $(\ref{eq:empfn_basic_cons_sco})$, and denote the loss function by $f(\w; z)$. Additionally we define the following distribution over the samples $z = (x, y, \alpha)$. 

\begin{definition} \label{def:violating_distribution_defn}
For parameters $\delta \in [0, \frac{1}{2}]$, $p \in [0, 1]$, and $a \in \{0,e_1,\ldots,e_d\}$, define the distribution $\cD(\delta, p, a)$ over $z = (x, y, \alpha)$ as follows:
\begin{align*} x \sim \cB(p)^{\otimes d}, \qquad y = 2r - 1 \quad \text{for} \quad r \sim \cB(\tfrac{1}{2} + \delta), \qquad \alpha = a.
\end{align*}
The components $x$, $y$ and $\alpha$ are sampled independently.
\end{definition}

\subsection{Supporting technical results}  

The following lemma provides some properties of the population loss under $\cD(\delta, p, a)$ given in \pref{def:violating_distribution_defn}. 
\begin{lemma} \label{lem:pop-loss-convexity} For any $\delta \in [0, \frac{1}{2}]$, $p \in (0, 1]$, and $a \in \{0,e_1,\ldots,e_d\}$, the population loss under $\fA$ when the data is sampled i.i.d.\ from $\cD(\delta, p, a)$ is convex in the variable $\w$. Furthermore, the population loss has a unique minimizer at $\w = a$ with $F(a) = 0$, and any $\epsilon$-suboptimal minimizer $\w$ of the population loss must satisfy $\|\w - a\| \leq \epsilon / 2 \delta p$. 
\end{lemma}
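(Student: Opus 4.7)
\textbf{Proof plan for \pref{lem:pop-loss-convexity}.} The plan is to first write down the population loss in closed form by exploiting the product structure of $\cD(\delta, p, a)$, and then read off all three claims from that expression. Since $y$, $x$ and $\alpha$ are drawn independently under $\cD(\delta, p, a)$ and $\alpha = a$ almost surely, I would compute
\[
F(w) \;=\; \En_{(x,y,\alpha)\sim \cD(\delta,p,a)}\brk*{y\,\nrm*{(w-\alpha)\odot x}} \;=\; \En[y]\cdot \En_x\brk*{\nrm*{(w-a)\odot x}} \;=\; 2\delta\,\En_x\brk*{\nrm*{(w-a)\odot x}},
\]
using $\En[y] = 2\delta$ for $y = 2r-1$ with $r\sim\cB(\tfrac{1}{2}+\delta)$.

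Convexity is then immediate: for each fixed $x\in\{0,1\}^d$, the map $w\mapsto \nrm*{(w-a)\odot x}$ is a norm of an affine function of $w$, hence convex; taking the expectation over $x$ and multiplying by the nonnegative constant $2\delta$ preserves convexity. For the value at $w=a$, note $(a-a)\odot x = 0$ for every $x$, so $F(a)=0$, and since the integrand $\nrm*{(w-a)\odot x}$ is nonnegative and $2\delta\ge 0$, we have $F(w)\ge 0 = F(a)$ everywhere, so $a$ is a minimizer.

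The key step — both for uniqueness and for the suboptimality bound — is a one-line Jensen argument. Because the Euclidean norm is convex,
\[
\En_x\brk*{\nrm*{(w-a)\odot x}} \;\ge\; \nrm*{\En_x\brk*{(w-a)\odot x}} \;=\; \nrm*{(w-a)\odot (p\,\mb{1}_d)} \;=\; p\,\nrm*{w-a},
\]
where I used $\En[x] = p\,\mb{1}_d$ since $x\sim\cB(p)^{\otimes d}$. Plugging this back gives the linear lower bound $F(w) \ge 2\delta p\,\nrm*{w-a}$. Assuming $\delta>0$ (otherwise $F\equiv 0$ and the suboptimality claim is vacuous since its right-hand side is undefined), this strict inequality for $w\ne a$ yields uniqueness of the minimizer at $w=a$. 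For the suboptimality claim, any $\epsilon$-suboptimal $w$ satisfies $F(w) - F(a) = F(w) \le \epsilon$, so $2\delta p\,\nrm*{w-a} \le \epsilon$, i.e., $\nrm*{w-a}\le \epsilon/(2\delta p)$.

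There is no real obstacle here — the only subtle point is that the entire argument hinges on separating $\En[y]$ from the $x$-expectation (which needs the independence in \pref{def:violating_distribution_defn}), and on the one-line Jensen lower bound. I would also briefly remark on the boundary case $\delta=0$ to make clear that the uniqueness statement is meant for $\delta>0$, which is how the lemma will be applied later.
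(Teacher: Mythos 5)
Your proposal is correct and follows essentially the same route as the paper's proof: compute $F(w) = 2\delta\,\En_x[\|(w-a)\odot x\|]$, deduce convexity and $F(a)=0$ directly, then apply Jensen's inequality with $\En[x]=p\,\mb{1}_d$ to get the linear lower bound $F(w)\geq 2\delta p\,\|w-a\|$, from which both uniqueness and the $\epsilon$-suboptimality bound follow. The only cosmetic difference is that you factor out $\En[y]=2\delta$ using independence, whereas the paper splits on $y=\pm 1$; these are the same computation, and your aside on the degenerate $\delta=0$ case is a reasonable clarification that does not change the argument.
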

\begin{proof} Let $F$ be the population loss. Using the definition of $F$, we get 
\begin{align*} 
   F(\w) &= \En_{(x, y, \alpha) \sim \cD(\delta, p, a)} \brk*{y \nrm*{(\w - \alpha) \odot x}} \\
   		 &= \Pr(y = 1) \En_{x} \brk*{\nrm*{(\w - a) \odot x}} - \Pr(y=-1) \En_{x, \alpha} \brk*{\nrm*{(\w - a) \odot x}} \\
   		 &= (\tfrac{1}{2}+\delta) \En_{x} \brk*{\nrm*{(\w - a) \odot x}} - (\tfrac{1}{2}-\delta) \En_{x} \brk*{\nrm*{(\w - a) \odot x}} \\
   		 &= 2\delta\En_{x} \brk*{\nrm*{(\w - a) \odot x}}. 
\end{align*} 
Since, for any $x$ and $a$, the function $\nrm*{(\w - a) \odot x}$ is a convex function of $\w$, the above formula implies that $F$ is also a convex function of $\w$. Furthermore, $F$ is always non-negative and $F(a) = 0$. 

Now note that 
\begin{align*}
F(\w) = 2\delta \En_{x} \brk*{\nrm*{(\w - a) \odot x}} \geq 2\delta \nrm*{(\w - a) \odot \En_{x}[x]} = 2\delta p\nrm*{\w - a},
\end{align*}
where the second inequality follows from Jensen's inequality, and the last equality from the fact that $x \sim \cB(p)^d$. Now, if $\w$ is an $\epsilon$-suboptimal minimizer of $F$, then since $F(a) = 0$, the above bound implies that $\|\w - a\| \leq \epsilon / 2\delta p$. This also implies, in particular using $\epsilon = a$, that $a$ is a unique minimizer of $F$. 
\end{proof}

The next lemma establishes empirical properties of a dataset $S$ of size $n$ drawn from the distribution $\cD(1/10, 1/2, a)$ given in \pref{def:violating_distribution_defn}. 
\begin{lemma} \label{lem:datatset_properties_basic_sco} Let $j^* \in [d]$. Let  $S$ denote a dataset of size $n \leq \log_{2} \prn*{d/{\ln(10)}}$ sampled i.i.d.\ from a distribution $\cD(\frac{1}{10}, \frac{1}{2}, a)$ for some vector $a \in \crl{0, e_1, \ldots, e_d}$.  Then, with probability at least $0.9$ over the choice of the dataset $S$, there exists an index $\wh j \in [d]$ such that $\wh j \neq j^*$ and $x[\wh j]= 0$ for all $z \in S$ for which $y = 1$ and $x[\wh j]= 1$ for all $z \in S$ for which $y = -1$.  
\end{lemma}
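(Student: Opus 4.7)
The cleanest route is to reduce the statement to \pref{lem:special_coordinate_exists1} by a label-dependent coordinate flip that decouples $x$ from $y$. Define, for each $t \in [n]$ and $j \in [d]$,
\[
\tilde{x}_t[j] \;=\; \begin{cases} x_t[j] & \text{if } y_t = 1,\\ 1 - x_t[j] & \text{if } y_t = -1.\end{cases}
\]
The desired coordinate pattern ``$x_t[\hat{j}] = 0$ whenever $y_t = 1$ and $x_t[\hat{j}] = 1$ whenever $y_t = -1$'' is exactly the event $\{\tilde{x}_t[\hat{j}] = 0 \text{ for all } t \in [n]\}$, so it suffices to prove that with probability $\geq 0.9$ some coordinate $\hat{j} \neq j^*$ of the $\tilde{x}_t$ vectors is uniformly zero over the sample.

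First I would verify that the transformed samples $\tilde{x}_1, \ldots, \tilde{x}_n$ are i.i.d.\ from $\cB(1/2)^{\otimes d}$ and, crucially, independent of $(y_1, \ldots, y_n)$. Under $\cD(1/10, 1/2, a)$, the components $x_t$ and $y_t$ are drawn independently, with $x_t \sim \cB(1/2)^{\otimes d}$. Conditional on any fixed value of $y_t$, each $\tilde{x}_t[j]$ is either $x_t[j]$ or its complement $1 - x_t[j]$; both are $\cB(1/2)$ and independent across $j$. Hence $\tilde{x}_t \mid y_t \sim \cB(1/2)^{\otimes d}$ regardless of $y_t$, so $\tilde{x}_t$ is independent of $y_t$ and marginally $\cB(1/2)^{\otimes d}$. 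Independence across $t$ follows from the i.i.d.\ structure of the original samples.

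Having established that the $\tilde{x}_t$'s are i.i.d.\ $\cB(1/2)^{\otimes d}$, I would apply \pref{lem:special_coordinate_exists1} with $p = 1/2$ to conclude that, provided
\[
n \;\leq\; \log_{2}\!\prn[\Big]{\frac{d - 1}{\ln(10)}},
\]
with probability at least $0.9$ there exists some $\hat{j} \in [d] \setminus \{j^*\}$ with $\tilde{x}_t[\hat{j}] = 0$ for all $t \in [n]$, which by construction is equivalent to the conclusion of the lemma. The hypothesis $n \leq \log_2(d/\ln(10))$ in the lemma statement is (up to a harmless $d$ versus $d-1$ adjustment, easily absorbed since $d \geq \ln(10) \cdot 2^n + 1$) the same condition.

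The only conceptual subtlety—and the single step I expect to require care—is justifying that the label-dependent flip really does produce a product-measure sample independent of the $y_t$'s, since $\tilde{x}_t$ is a deliberately $y_t$-measurable function of the data. The rest is a direct quotation of \pref{lem:special_coordinate_exists1}, so no further computation is needed.
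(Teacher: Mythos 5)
Your proof is correct, and it takes a mildly different route from the paper's. The paper reproves the result directly: it defines the event $E_j$ that $x[j]=0$ on every sample with $y=1$ and $x[j]=1$ on every sample with $y=-1$, computes $\Pr(E_j) = 2^{-n}$ from the fact that each $x_i[j]$ is an independent $\cB(1/2)$ bit, and then runs the same independence-across-coordinates and $1-a \le e^{-a}$ argument that already appears in the proof of \pref{lem:special_coordinate_exists1}. You instead reduce to \pref{lem:special_coordinate_exists1} outright via the label-dependent flip $\tilde x_t[j] = x_t[j] \oplus \indicator{y_t = -1}$, observing that this coupling sends the joint sample to an i.i.d.\ $\cB(1/2)^{\otimes d}$ sequence and turns the asymmetric pattern in the statement into the plain ``coordinate is identically zero'' event that the earlier lemma handles. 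Both arguments hinge on exactly the same probabilistic fact ($\Pr(E_j) = 2^{-n}$, independence across coordinates), so the substance is identical; what your version buys is reuse of the existing lemma rather than a near-verbatim repetition of its proof. One small remark: the independence of $\tilde x_t$ from $y_t$ that you flag as the subtle step is nice to have but not strictly required --- all you need to invoke \pref{lem:special_coordinate_exists1} is that $(\tilde x_1,\ldots,\tilde x_n)$ are i.i.d.\ with marginal $\cB(1/2)^{\otimes d}$, which follows from $\tilde x_t$ being a measurable function of the i.i.d.\ pair $(x_t, y_t)$ together with the conditional-marginal computation you already carry out. Your observation about $d$ versus $d-1$ in the threshold is also correct and matches the slight slack the paper itself carries between the stated hypothesis and the inequality actually used in its proof.
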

\begin{proof} 
Let $S$ denote a set of $n$ samples drawn i.i.d.\ from $\cD(\frac{1}{10}, \frac{1}{2}, a)$. We define the sets $S^+$ and $S^-$ as follows: 
\begin{align*}
   S^+ &\ldef{} \crl*{z_i  \in S \mid{} y_i = +1},  \\ 
   S^- &\ldef{} \crl*{z_i \in S \mid{} y_i = -1}.  \numberthis \label{eq:set_plus_minus_definition2} 
\end{align*} 

Let $E_j$ denote the event that the coordinate $x[j] = 0$ for all $z =(x, y) \in S^+$, and $x[j] = 1$ for all $z =(x, y) \in S^-$. Since, for each sample, $x[j]$ is drawn independently from $\cB(1/2)$, we have that 
\begin{align*}
\Pr(E_j) &= \frac{1}{2^n}. 
\end{align*}

Next, let $E$ denote the event that there exists some $\wh j \in [d] \setminus \{j^*\}$ for which $x[\wh j] = 0$ for all $z =(x, y) \in S^+$, and $x[\wh j] = 1$ for all $z =(x, y) \in S^-$. We thus note that, 
\begin{align*} 
	\Pr \prn*{E^c} &= \Pr \prn[\big]{\bigcap_{j \in [d] \setminus \crl{j^*}} E^c_j} \overeq{\proman{1}} \prod_{j \in [d] \setminus \crl{j^*}} \Pr \prn[\big]{E^c_j}  
\end{align*}
where the equality in $\proman{1}$ follows from the fact that the events $\crl*{E_j}$ are mutually independent to each other. Plugging in the bound for $\Pr\prn{E_j^c}$, we get
\begin{align*} 
\Pr(E^c) &= \prn[\Big]{1 - \frac{1}{2^n}}^{d-1}. 
\end{align*}

Using the fact that $1 - a \leq e^{-a}$ for $a \geq 0$, we get
\begin{align*} 
			\Pr \prn*{E^c} &\leq e^{- \prn*{d-1}/2^n } \leq 0.1, 
\end{align*}
where the second inequality above holds for $n \leq \log_{2} \prn*{(d - 1)/{\ln(10)}}$. 
\end{proof} 

\subsection{Proof of \pref{thm:lowersco}}  \label{app:thm_lowersco_proof} 
We now have all the tools required to prove \pref{thm:lowersco}, which states for any regularization function $R(\w)$, there exists an instance of a SCO problem for which RERM fails to find $O(1)$-suboptimal solution, in expectation.  
\begin{proof}[{\bf Proof of \pref{thm:lowersco}}]   
In this proof, we will assume that $n \geq 300$, $d \geq \log(10)2^n + 1$ and the initial point $w_1 = 0$. Assume, for the sake of contradiction, that there exists a regularizer $R: \bbR^d \to \bbR$ such that for any SCO instance over $\bbR^d$ satisfying Assumption II with\footnote{The specific values $L, B = 1$ are used for convenience; the proof immediately yields the required SCO instances for arbitray values of $L$ and $B$ that are $O(1)$ in magnitude as a function of $n$.} $L, B = 1$ the expected suboptimality gap for RERM is at most $\epsilon = 1/ 20000$. Then, by Markov's inequality, with probability at least $0.9$ over the choice of sample set $S$, the suboptimality gap is at most $10\epsilon$.

Before delving into the proof, we first define some additional notation based on the regularization function $R(\cdot)$. For $j \in [d]$, define the points $\w^*_j$  such that 
\begin{align*} 
\w^*_{j}  \in \argmin_{\w \text{~s.t.~} {\nrm*{\w - e_j} \leq 100 \epsilon}} R(\w)  \numberthis \label{eq:w_jstar_defn}. 
\end{align*} and define the index $j^* \in [d]$ such that  $j^* \in \argmax_{j \in [d]} ~ R(\w^*_j).$ 

Now we will construct an instance of SCO in $d = \lceil2^n \ln(10) + 1\rceil$ dimensions. The instance will be based on the function $\fA$ given in \pref{eq:first_function}. The data distribution of interest is $\cD_1\ldef{} \cD(\frac{1}{10}, \frac{1}{2}, e_{j^*})$ (see \pref{def:violating_distribution_defn}) and suppose that the dataset $S = \crl*{z_i}_{i=1}^n$ is sampled i.i.d.\ from $\cD_1$. The population loss $F(w)$ corresponding to $\cD_1$ is given by 
\begin{align*} 
   F(\w) &= \En_{z \sim \cD_1} \brk*{\fA(\w; z)} = 0.2 \En_{x} \brk*{\nrm*{(\w-e_{j^*}) \odot x}}.  
\end{align*}
Clearly, $F(w)$ is convex in $w$ and so minimizing it is an instance of SCO. Furthermore, $\fA$ is $1$-Lipschitz, and the initial point $w_1 = 0$ satisfies $\nrm*{w_1 - w^*} \leq 1$. This implies that $\fA$ satisfies both Assumptions I and II given in (\ref{eq:ass1}) and (\ref{eq:ass2}) respectively. Additionally, note that $e_{j^*}$ is the unique minimizer of $F(\cdot)$, and as a consequence of  \pref{lem:pop-loss-convexity}, any $10\epsilon$-suboptimal minimizer $w'$ for $F(\cdot)$ must satisfy 
\begin{align*}
\nrm{w' - e_{j^*}} \leq 100 \epsilon.  \numberthis \label{eq:optimality_condition_a} 
\end{align*}

Using the dataset $S$, we define some additional sets as follows: 
\begin{enumerate}[label=$\bullet$, leftmargin=8mm]  
\item Define the set $S^+$ as the set of all the sample points in $S$ for which $y = +1$, i.e 
 $$S^+ \ldef{} \crl*{(x, y, \alpha) \in S \mid y = +1},$$
and define the set $S^- \ldef{} S \setminus S^+$. 
\item Define the set $U$ as the set of all the sample points in $S^+$ for which $x[j^*] = 1$, i.e.
$$U \ldef{} \crl*{(x, y, \alpha) \in S^+ \mid x[j^*] = 1}.$$ 
\item Similarly, define the set $V$ as the set of all the sample points in $S^-$ for which $x[j^*] = 1$, i.e. $$V \ldef{} \crl*{(x, y, \alpha) \in S^- \mid x[j^*] = 1}.$$
\end{enumerate} 
Next, define the event $E$ such that all of the following hold: 
\begin{enumerate}[label=$(\alph*)$, leftmargin=8mm]  
\item $\abs{S^-} \geq 7 n / 20$. 
\item $\abs{U} \leq  39n/100$. 
\item $\abs{V} \geq 7n/50$. 
\item There exists $\wh j$ such that $\wh j \neq j^*$ and $x[\wh j] = 0$ for all $z \in S^+$ and $x[\wh j] = 1$ for all $z \in S^-$.   
\item RERM with regularization $R(\cdot)$ and using the dataset $S$ returns an $10\epsilon$-suboptimal solution for the test loss $F(w)$. 
\end{enumerate} 
Using Hoeffding's inequality (\pref{lem:hoeffding}) and the fact that $\En[\abs{S^-}] = 2n/5$, $\En[\abs{U}] = 3n/10$, and $\En[\abs{V}] = n/5$, we get that parts (a), (b) and (c) hold simultaneously with probability at least $0.3$ for $n \geq 300$. Furthermore, by \pref{lem:datatset_properties_basic_sco}, part (d) holds with probability at least $0.9$ since $d \geq \ln(10) 2^{n} + 1$, and part (e) holds with probability $0.9$ by our assumption. Hence, the event $E$ occurs with probability at least $0.1$. In the following, we condition on the occurrence of the event $E$.

Consider the point $w^*_{\wh j}$ defined in \pref{eq:w_jstar_defn} corresponding to the coordinate $\wh j$ (that occurs in event $E$). By definition, we have that $\nrm{w^*_{\wh j} -e_{\wh j}} \leq 100 \epsilon$. Thus, we have
\begin{align*} 
\nrm{w^*_{\wh j} - e_{j^*}} \geq \nrm{e_{\wh j} - e_{j^*}} - \nrm{w^*_{\wh j} - e_{\wh j}} \geq \sqrt{2}  - 100 \epsilon > 100\epsilon. 
\end{align*} The first line above follows from Triangle inequality, and the second line holds because $\wh j \neq j^*$ and because $\epsilon = 1 / 20000$. As a consequence of the above bound and the condition in \pref{eq:optimality_condition_a}, we get that the point $w^*_{\wh j}$ is not an $10\epsilon$-suboptimal point for the population loss $F(\cdot)$, and thus would not be the solution of the RERM  algorithm (as the RERM solution is $10\epsilon$-suboptimal w.r.t $F(\cdot)$). Since, any RERM must satisfy condition \pref{eq:optimality_condition_a}, we have that 
\begin{align*}
\wh F(\w^*_{\wh j}) +  R(\w^*_{\wh j}) &> \min_{\w:\ \nrm*{\w - e_{j^*}} \leq 100\epsilon} \prn[\big]{\wh F(\w) +  R(\w)} \\ &\geq  \min_{\w:\ \nrm*{\w - e_{j^*}} \leq 100\epsilon} \wh F(\w) + \min_{\w:\ \nrm*{\w - e_{j^*}} \leq 100\epsilon} R(\w) \\ &\geq - 100\epsilon +  R(\w^*_{j^*}),  \numberthis \label{eq:RERM_proof_gen5_1}
\end{align*}
where $\wh F(w) = \frac{1}{n} \sum_{i=1}^n f(w; z_i)$ denotes the empirical loss on the dataset $S$, and the inequality in the last line follows from the definition of the point $w^*_{j^*}$ and by observing that $\wh F(w) \geq -100\epsilon$ for any $w$ for which $\nrm*{w - e_{j^*}} \leq 100\epsilon$ since $\wh F$ is $1$-Lipschitz and $\wh F(e_{j^*}) = 0$. For the left hand side, we note that    
\begin{align*} 
\wh F(\w^*_{\whj}) &= \frac{1}{n} \sum_{z \in S} y \nrm{(w^*_{\wh j} - e_{j^*} ) \odot x} \\  
 &\overleq{\proman{1}} 100\epsilon +  \frac{1}{n} \sum_{z \in S} y \nrm{(e_{\wh j} - e_{j^*} ) \odot x}  \\ 
 &\overleq{\proman{2}}  100\epsilon +  \frac{1}{n} \prn[\Big]{ \sum_{z \in S^+} \nrm{(e_{\wh j} - e_{j^*} ) \odot x} -  \sum_{z \in S^-} \nrm{(e_{\wh j} - e_{j^*} ) \odot x}} \\ 
 &\overleq{\proman{3}} 100\epsilon +  \frac{1}{n} \prn[\Big]{ \sum_{z \in S^+} \abs{x[j^*]} -  \sum_{z \in S^+} \sqrt{1 + (x[j^*])^2} } \\ 
  &\overleq{\proman{4}} 100\epsilon +  \frac{1}{n} \prn[\Big]{ \sum_{z \in U} 1 -  \sum_{z \in V} \sqrt{2}  - \sum_{z \in S^- \setminus V} 1} \\  
  &= 100\epsilon +  \frac{1}{n} \prn[\Big]{ \abs{U} - (\sqrt{2} - 1) \abs{V}  - \abs{S^-} } 
\end{align*} where the inequality $\proman{1}$ is due to the definition of the point $w^*_{\wh j}$, and because $\wh F$ is $1$-Lipschitz, the inequality in $\proman{2}$ follows from the definition of the sets $S^+$ and $S^-$, the inequality $\proman{3}$ holds due to the fact that $x[\wh j] = 0$ for all $(x, \alpha) \in S^+$, and $x[\wh j] = 1$ for all $x \in S^-$, and finally, the inequality $\proman{4}$ follows from the definition of the sets $U$ and $V$. Plugging the bounds on $\abs{U}$, $\abs{V}$  and $\abs{S^-}$ from the event $E$ defined above, we get that: 
\begin{align*}
\wh F(\w^*_{\whj}) 
&\leq 100\epsilon - \frac{3}{200}. 
\numberthis \label{eq:RERM_proof_gen6_1} 
\end{align*} 

Combining the bounds in \pref{eq:RERM_proof_gen5_1}  and \pref{eq:RERM_proof_gen6_1} and rearranging the terms, we get that 
\begin{align*}
200\epsilon &\geq \frac{3}{200} + R(\w^*_{j^*}) - R(\w^*_{\wh j})  \geq \frac{3}{200}. 
\end{align*} where the second inequality above holds because $j^* \in \argmax_{j \in [d]} R(w^*_j)$ (by definition). Thus, $\epsilon \geq 3 / 40000  > 1 / 20000$, a contradiction, as desired.\footnote{The constant in the lower bound for $\epsilon$ can be improved further via a tighter analysis for the sizes of the sets $\abs{U}, \abs{V}$ and $\abs{S^-}$ in the event $E$.}

Finally, note that since the function $\fA$ is $1-$Lipschitz, the initial point $w_1 = 0$ satisfies $\nrm*{w_1 - w^*} = \nrm*{w_1 - e_{j^*}} \leq 1$ and $F(w)$ is convex, due to \pref{thm:sgd_works_main}, SGD run with a step size of $1/\sqrt{n}$ for $n$ steps learns at a rate of $O(1/\sqrt{n})$. 
\end{proof} 

\subsection{Proof of \pref{corr:lowersco_lambda}} 
The following proof closely follows along the lines of the proof of \pref{thm:lowersco} above.   
\begin{proof} In this proof, we will assume that $n \geq 300$, $d \geq \log(10)2^n + 1$ and the initial point $w_1 = 0$. Assume, for the sake of contradiction, that there exists a regularizer $R: \bbR^d \to \bbR$ such that for any SCO instance over $\bbR^d$ satisfying Assumption II with $L, B = 1$, there exists a regularization parameter $\lambda$ such that  the expected suboptimality gap for the point $\w_{\mathrm{RERM}} = \argmin_{\w \in \W} F_S(\w)  + \lambda R(\w)$ is at most $\epsilon = 1/ 20000$. Then, by Markov's inequality, with probability at least $0.9$ over the choice of sample set $S$, the suboptimality gap is at most $10\epsilon$.

We next define the functions $f(w;z)$, the distribution $\cD_1$ and the population loss function $F(w)$ identical to the the corresponding quantities in the proof of \pref{thm:lowersco}. Furthermore, we also define the points $w_{j}$ for $j \in [d]$, the coordinates $j^*$ and the event $E$ identical to the corresponding definitions in the proof of \pref{thm:lowersco}. As we argued in the proof of \pref{thm:lowersco} above, we note that any $10\epsilon$-suboptimal minimizer $w'$ for $F(\cdot)$ must satisfy 
\begin{align*}
\nrm{w' - e_{j^*}} \leq 100 \epsilon.  \numberthis \label{eq:optimality_condition_a_2} 
\end{align*}

Thus, the point $w^*_{\wh j}$ (where $\wh j$ is defined in the event $E$) is not an $10\epsilon$-suboptimal point for the population loss $F(\cdot)$, and thus for any regularization parameter $\lambda$ (that can even depend on the dataset $S$) should not correspond to the RERM solution (as the point $\w_{\mathrm{RERM}}$ is $10\epsilon$-suboptimal with respect to $F(\cdot)$). Since, any RERM must satisfy condition \pref{eq:optimality_condition_a_2}, we have that  for any regularization parameter $\lambda$ that can even depend on the dataset $S$,  
\begin{align*}
\wh F(\w^*_{\wh j}) + \lambda R(\w^*_{\wh j}) &> \min_{\w:\ \nrm*{\w - e_{j^*}} \leq 100\epsilon} \prn[\big]{\wh F(\w) + \lambda R(\w)} \\ &\geq  \min_{\w:\ \nrm*{\w - e_{j^*}} \leq 100\epsilon} \wh F(\w) + \min_{\w:\ \nrm*{\w - e_{j^*}} \leq 100\epsilon} \lambda R(\w) \\ &\geq - 100\epsilon + \lambda R(\w^*_{j^*}),  \numberthis \label{eq:RERM_proof_gen5_1_1}
\end{align*} 
where the inequality in the last line follows from the definition of the point $w^*_{j^*}$ and by observing that $\wh F(w) \geq -100\epsilon$ for any $w$ for which $\nrm*{w - e_{j^*}} \leq 100\epsilon$ since $\wh F$ is $1$-Lipschitz and $\wh F(e_{j^*}) = 0$. For the left hand side, similar to the proof of \pref{thm:lowersco}, we upper bound     
\begin{align*} 
\wh F(\w^*_{\whj}) &\leq 100\epsilon - \frac{3}{200}. 
\numberthis \label{eq:RERM_proof_gen6_1_1} 
\end{align*} 

Combining the bounds in \pref{eq:RERM_proof_gen5_1_1}  and \pref{eq:RERM_proof_gen6_1_1} and rearranging the terms, we get that 
\begin{align*}
200\epsilon &\geq \frac{3}{200} + \lambda R(\w^*_{j^*}) - \lambda R(\w^*_{\wh j})   \geq \frac{3}{200}. 
\end{align*} where the second inequality above holds because $j^* \in \argmax_{j \in [d]} R(w^*_j)$ (by definition). Thus, $\epsilon \geq 3 / 40000  > 1 / 20000$, a contradiction, as desired. 

We remark that in the above proof the regularization parameter $\lambda$ can be arbitrary and can even depend on the dataset $S$, but $\lambda$ should not depend on $w$ as this will change the definition of the points $w^*_j$. Only the regularization function $R(\cdot)$ is allowed to depend on $w$. 
\end{proof}

\section{Missing proofs from \pref{sec:GD_fails}}  \label{app:gd_fails} 
\newcommand{\speta}{\eta}
\newcommand{\spT}{T}
\newcommand{\spoT}{\bar{T}} 
\newcommand{\spw}{\bar{w}}

\newcommand{\und}[1]{_{(#1)}} 
In this section, we first provide a learning problem for which, for any $\eta \in [1/n^2, 1)$ and $T \in [1, n^3)$, the point $\wgd_{\eta, T}$ returned by running GD algorithm with step size $\eta$ for $T$ time steps has the lower bound 
\begin{align*}
\En \brk{F(\wgd_{\eta, T})} - \inf_{\w \in \reals^d} F(\w) = \Omega\prn[\Big]{\frac{1}{\log^4(n)} \min\crl[\Big]{ \eta \sqrt{T} + \frac{1}{\eta T} + \frac{\eta T}{n},1}}. \numberthis \label{eq:app_GD_lower_bound_1} 
\end{align*} 

Then, we will provide a learning setting in which GD algorithm when run with step size $\eta \leq 1/64 n^{5/4}$ has the lower bound of $\Omega(1/n^{3/8})$ for all $T \geq 0$. Our final lower bound for GD algorithm, given in \pref{thm:GD_vs_SGD_comparison} then follows by considering the above two lower bound constructions together.

\subsection{Modification of \cite{amir2020gd} lower bound} 
\cite{amir2020gd} recently provided the following lower bound on the performance of GD algorithm. \begin{theorem}[Modification of Theorem 3.1, \cite{amir2020gd}]  
\label{thm:amir_lower_bound} 
Fix any  $n$,  $\bar{\eta}$ and $\spoT$. There exists a function $f(w; z)$ that is $4$-Lipschitz and convex in $w$, and a distribution $\cD$ over the instance space $\cZ$, such that for any $\eta \in [\bar{\eta}, \bar{\eta} \sqrt{3/2})$ and $T \in [\spoT, 2\spoT)$, the point $\wh w^{\text{GD}}[\eta, T]$ returned by running GD algorithm with a step size of $\eta$ for $T$ steps has excess risk 
\begin{align*} 
\En \brk{ F (\wgd[\eta, T])} - \inf_{\w \in \reals^d} F (\w) \ge \Omega\prn[\Big]{ \min\crl[\Big]{ \eta \sqrt{T} + \frac{1}{\eta T},1}},   \numberthis \label{eq:lower_bound_eqn_5} 
\end{align*}
where $F(w) \ldef{} \En_{z \sim \cD} \brk{f(w; z)}$. Additionally, there exists a point $w^* \in \argmin_{w} F(w)$ such that $\nrm{w^*} \leq 1$. 
\end{theorem}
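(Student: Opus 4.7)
The plan is to adapt Amir, Koren, and Livni's Theorem 3.1 construction with two mild modifications: (a) extending the lower bound from a single pair $(\eta, T)$ to the range $[\bar\eta, \bar\eta\sqrt{3/2}) \times [\bar T, 2\bar T)$, and (b) certifying that the minimizer $w^*$ satisfies $\nrm{w^*} \leq 1$ so that Assumption II is in force with $B = 1$. The bulk of the proof is inherited from their original argument; the modification is essentially a robustness observation about the admissible slack in their analysis.

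First, I would recall the structure of Amir et al.'s construction. Their loss has the form $f(w; z) = \phi(w) + \psi(w; z)$, where $\phi$ is a fixed convex Lipschitz ``drift'' term that pushes iterates away from the origin, and $\psi(\cdot; z)$ is a piecewise-linear function depending on a discrete label $z$ drawn from $\cD$. The construction is engineered so that the empirical loss has a saddle-like geometry: on any dataset $S$ of size $n$, GD iterates on $F_S$ monotonically activate new coordinates chosen by the empirical distribution, fitting sample-specific kinks while the population loss remains far from optimal. A trajectory analysis then yields an instability lower bound of $\Omega(\eta \sqrt{T})$, while the classical lower bound for first-order convex optimization supplies $\Omega(1/(\eta T))$. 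Combining the two via the usual ``either the iterates are too far or too close to the optimum'' dichotomy gives the $\min\{\eta\sqrt{T} + 1/(\eta T), 1\}$ bound at the anchor $(\bar\eta, \bar T)$.

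Second, to lift the bound to the entire range, I would set the construction parameters (number of bad coordinates and scale of $\phi$) as explicit functions of $\bar\eta$ and $\bar T$, and then verify that the trajectory analysis only uses lower bounds on $\eta$ and $T$. Concretely, the instability step requires only that $\eta \geq \bar\eta$ so that consecutive iterates separate far enough to activate a new kink, and that $T \geq \bar T$ so that enough kinks are activated; both are preserved throughout the range. Conversely, on the range, $\eta\sqrt{T}$ changes by at most a factor of $\sqrt{3}$ and $1/(\eta T)$ by at most $\sqrt{6}$ relative to the anchor values, so the $\Omega(\cdot)$-bound at $(\bar\eta, \bar T)$ implies the same bound (up to absolute constants) for all admissible $(\eta, T)$. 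The specific slack factors $\sqrt{3/2}$ and $2$ are calibrated exactly so that this absorption is valid.

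Finally, to enforce $\nrm{w^*} \leq 1$, I would rescale the instance: Amir et al.'s natural construction already has an optimum of bounded norm, but the scale is chosen freely and can be rescaled into the unit ball, at the cost of a constant rescaling of the Lipschitz parameter (we absorb this into the constant $4$ in the statement). The main obstacle will be the bookkeeping in the robustness argument: Amir et al.'s trajectory analysis is delicate and uses specific parameter values, so one must carefully track how each constant in their proof depends on $\eta$ and $T$ to confirm that the multiplicative slack factors $\sqrt{3/2}$ and $2$ are tolerated. Once this uniform-over-range check is verified, the stated bound follows immediately, with the $\min\{\cdot, 1\}$ structure handling the degenerate regime where $\eta\sqrt{T} + 1/(\eta T)$ is already $\Omega(1)$ and the suboptimality is trivially bounded below by a constant.
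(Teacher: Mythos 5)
Your high-level plan is in the right spirit, but it stops short of the crux of the proof, which the paper carries out explicitly. The heart of the modification is not a generic robustness observation but a concrete re-choice of the construction parameters: the paper sets $d = 2^n/(3\bar\eta^2)$, chooses $\gamma_1$ so that $\gamma_1(1+5\bar\eta\bar T)\sqrt{d} \leq \tfrac{1}{64}\min\{\bar\eta\sqrt{\bar T},\tfrac{1}{3}\}$, sets $\gamma_2 = 2\gamma_1\bar\eta\bar T$, $\gamma_3 = 1$, and takes $0 < \epsilon_1 < \cdots < \epsilon_d < \gamma_1\bar\eta/(2n)$, and then verifies that Amir et al.'s Lemma 4.1, Theorem 6.1, Lemma 6.2, and Claim 6.3 all go through with these \emph{fixed} parameters for every $\eta \in [\bar\eta,\bar\eta\sqrt{3/2})$ and $T\in[\bar T,2\bar T)$, yielding the lower bound directly in terms of $\eta$ and $T$. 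You flag this as the ``main obstacle'' and say one ``must carefully track'' the constants, but you never propose specific parameter settings nor verify the lemma premises across the range; that verification is the substance of the theorem, not bookkeeping. Your claim that ``the instability step requires only that $\eta\geq\bar\eta$'' is plausible but unsubstantiated — Amir et al.'s trajectory argument also requires $\eta$ not to be too large (else iterates jump over the engineered kinks), so lower bounds alone do not suffice.

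Two smaller discrepancies. First, your absorption step (bounding $\eta\sqrt T + 1/(\eta T)$ within a constant of the anchor values) is correct but unnecessary once the lemma premises are verified across the range, since the Amir et al. machinery then yields the lower bound directly in $(\eta,T)$; the paper does not use the absorption route. Second, for $\nrm{w^*}\leq 1$ you propose rescaling, which would require re-verifying that the scaling does not disturb the lower bound; the paper instead cites Lemma 4.1 of Amir et al., which already shows the iterates and the minimizer lie in the unit ball. Finally, you omit a point the paper makes explicit: Amir et al. analyze \emph{projected} GD, while this paper uses unprojected GD as in \pref{eq:GD}. The paper observes that the projection is never invoked (iterates stay in the unit ball), so the two dynamics coincide — a necessary check that your proposal does not address.
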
 
\begin{proof} We refer the reader to \cite{amir2020gd} for full details about the loss function construction and the lower bound proof, and discuss the modifications below: 

\begin{enumerate}[label=$\bullet$, leftmargin=8mm] 
\item   \cite{amir2020gd} provide the lower bound for a fixed $\eta$ and $T$. In particular, their loss function construction given in eqn-(16) (in their paper) consists of parameters $\gamma_1$, $\gamma_2$, $\gamma_3$, $\epsilon_1, \ldots, \epsilon_3$ and $d$ which are chosen depending  $\eta$ and $T$. However, a slight modification of these parameters easily extends the lower bound to hold for all $\eta \in [\bar{\eta}, \bar{\eta} \sqrt{3/2})$ and $T \in [\spoT, 2\spoT)$. The modified parameter values are: 
\begin{enumerate}[label=$\bullet$,  leftmargin=8mm]  
\item Set $d = \tfrac{2^n}{3 \bar{\eta}^2}$. 
\item Set $\gamma_1$ such that $\gamma_1(1 + 5 \bar{\eta} \spoT) \sqrt{d} \leq \tfrac{1}{64} \min\crl{\bar{\eta} \sqrt{\spoT}, \tfrac{1}{3}}$. 
\item Set $\gamma_2 = 2 \gamma_1 \bar{\eta} \spoT$ and $\gamma_3 = 1$.  
\item Set $0 < \epsilon_1 < \cdots < \epsilon_d < \tfrac{\gamma_1 \bar{\eta}}{2n}$. 
\end{enumerate} 

Their proof of the lower bound in Theorem 3.1 follows using Lemma 4.1, Theorem 6.1, Lemma 6.2 and Claim 6.3 (in their paper respectively).  We note that the above parameter setting satisfies the premise of Lemma 4.1 and Claim 6.3 for all $\eta \in [\bar{\eta}, \bar{\eta} \sqrt{3/2})$ and $T \in [\spoT, 2\spoT)$. Furthermore, it can be easily verified that the proofs of Theorem 6.1 and Lemma 6.2 also follow through with the above parameter setting for all $\eta \in [\bar{\eta}, \bar{\eta} \sqrt{3/2})$ and $T \in [\spoT, 2\spoT)$. Thus, the desired lower bound holds for all $\eta \in [\bar{\eta}, \bar{\eta} \sqrt{3/2})$ and $T \in [\spoT, 2\spoT)$. 

\item \cite{amir2020gd} consider GD with projection on the unit ball as their training algorithm. However, their lower bound also holds for GD without the projection step (as we consider in our paper; see \pref{eq:GD}). In fact, as pointed out in the proof of Lemma 4.1  (used to Prove Theorem 3.1) in their paper, the iterates $w_t$ generated by the GD algorithm never leave the unit ball. Thus, the projection step is never invoked, and GD with projection and GD without projection produce identical iterates. 
\end{enumerate}

The upper bound on $\nrm{w^*}$ also follows from the provided proof in  \cite{amir2020gd}. As they show in Lemma 4.1, all the GD iterates as well as the minimizer point $w^*$ lie a ball of unit radius. 

Finally, note that the loss function in \pref{thm:amir_lower_bound} is $4$-Lipschitz, and bounded over the unit ball which contains all the iterates generated by GD algorithm. Thus, a simple application of the Markov's inequality suggests that the lower bound in  \pref{eq:lower_bound_eqn_5} also holds with constant probability. However, for our purposes, the in-expectation result suffices. 
\end{proof}

The loss function construction in \pref{thm:amir_lower_bound}  depends on $\bar{\eta}$ and $\spoT$, and thus the lower bound above only holds when the GD algorithm is run with step size $\eta \in [\bar{\eta}, \bar{\eta} \sqrt{3/2})$ and for $T \in [\spoT, 2\spoT)$. In the following, we combine together multiple such lower bound instances to get an anytime and any stepsize guarantee. 

\begin{theorem} 
\label{thm:amir_lower_bound_modified}
Fix any $n \geq 200$.  There exists a function $f(w; z)$ that is $1$-Lipschitz and convex in $w$, and a distribution $\cD$ over $\cZ$ such that, for any $\eta' \in [1/n^2, 1)$ and $T' \in [1, n^3)$, the point $\wh w^{\text{GD}}[\eta', T']$ returned by running GD algorithm with a step size of $\eta'$ for $T'$ steps satisfies the lower bound  
\begin{align*} 
\En \brk{ F (\wgd[\eta', T'])} - \inf_{\w \in \reals^d} F (\w) \ge \Omega\prn[\bigg]{\frac{1}{\log^4(n)} \min\crl[\Big]{ \eta' \sqrt{T'} + \frac{1}{\eta' T'},1}},  \numberthis \label{eq:anytime_lb1} 
\end{align*}
 where $F(w) \ldef{} \En_{z \sim \cD} \brk*{f(w; z)}$. Additionally, there exists a minimizer $w^* \in \argmin_{w} F(w)$ such that $\nrm{w^*} = O(1)$. 
\end{theorem}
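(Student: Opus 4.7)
The strategy is to take a dyadic cover of the admissible parameter region $[1/n^2, 1) \times [1, n^3)$ and combine the $K = O(\log^2 n)$ corresponding single-instance lower bounds from \pref{thm:amir_lower_bound} into one function on a direct sum of disjoint coordinate blocks. Specifically, cover $[1/(4n^2), 1/4)$ by intervals $[\bar\eta_i, \bar\eta_i \sqrt{3/2})$ for $i = 1, \dots, I = O(\log n)$ with $\bar\eta_i = (\sqrt{3/2})^{i-1}/(4n^2)$, and cover $[1, n^3)$ by intervals $[\bar T_j, 2\bar T_j)$ for $j = 1, \dots, J = O(\log n)$ with $\bar T_j = 2^{j-1}$; for each pair $(i,j)$ let $f_{i,j}$, $\cD_{i,j}$, $d_{i,j}$ denote the Amir construction tuned to $(\bar\eta_i, \bar T_j)$. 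Setting $d = \sum_{i,j} d_{i,j}$ and writing $w = (w_{i,j})$ as a concatenation of disjoint blocks $w_{i,j} \in \bbR^{d_{i,j}}$, I define
\[
f(w; z) \ldef{} \frac{1}{4K} \sum_{i,j} f_{i,j}\!\bigl(\sqrt{K}\, w_{i,j};\, z_{i,j}\bigr),
\]
with $z = (z_{i,j})$ drawn from the product distribution $\cD = \bigotimes_{i,j} \cD_{i,j}$.

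Convexity of $f$ in $w$ is immediate as a sum of convex functions, and a direct calculation $\nrm{\nabla_w f}^2 = \tfrac{1}{16K}\sum_{i,j} \nrm{\nabla f_{i,j}}^2 \le 1$ (using that each $f_{i,j}$ is $4$-Lipschitz per \pref{thm:amir_lower_bound}) shows $f$ is $1$-Lipschitz. A minimizer $w^*$ of $F(w) = \En_z f(w;z)$ satisfies $w^*_{i,j} = \tilde w^*_{i,j}/\sqrt{K}$, where $\tilde w^*_{i,j} \in \argmin F_{i,j}$ has $\nrm{\tilde w^*_{i,j}} \le 1$ by \pref{thm:amir_lower_bound}, so $\nrm{w^*}^2 = \tfrac{1}{K} \sum_{i,j} \nrm{\tilde w^*_{i,j}}^2 \le 1$, giving $\nrm{w^*} = O(1)$.

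The key observation for the lower bound is that GD on $f$ decouples across blocks: the update on block $(i,j)$ is $w_{i,j}^{(t+1)} = w_{i,j}^{(t)} - \tfrac{\eta'}{4\sqrt{K}} \nabla f_{i,j}(\sqrt{K} w_{i,j}^{(t)})$, which in the rescaled variable $\tilde w_{i,j}^{(t)} \ldef{} \sqrt{K} w_{i,j}^{(t)}$ is exactly GD on $f_{i,j}$ with effective step size $\eta'/4$. For any $(\eta', T')$ in the admissible range, the dyadic cover yields an index $(i^*, j^*)$ with $\eta'/4 \in [\bar\eta_{i^*}, \bar\eta_{i^*}\sqrt{3/2})$ and $T' \in [\bar T_{j^*}, 2\bar T_{j^*})$, and \pref{thm:amir_lower_bound} applied to block $(i^*, j^*)$ yields excess risk $\Omega\!\bigl(\min\{\eta'\sqrt{T'} + 1/(\eta' T'),\, 1\}\bigr)$ on that block. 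Multiplying by the outer factor $1/(4K)$ and discarding the nonnegative contributions of the other blocks gives a total lower bound of $\Omega(1/\log^2 n)$ times the target quantity, which certainly implies the stated $1/\log^4(n)$ bound. The main obstacle is balancing the two scalings so that unit Lipschitzness, minimizer norm $O(1)$, and a surviving per-block lower bound all hold simultaneously: the inner factor $\sqrt K$ shrinks each block's minimizer by $1/\sqrt K$ so the total norm stays $O(1)$, while the outer factor $1/(4K)$ restores unit Lipschitzness but costs a $1/K = 1/\log^2 n$ factor in the surviving lower bound, which is comfortably inside the $1/\log^4(n)$ slack in the theorem's statement.
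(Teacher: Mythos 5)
Your proof is correct and follows essentially the same plan as the paper's: a dyadic grid over the $(\eta, T)$ parameter space (step sizes spaced by $\sqrt{3/2}$, times doubled), a direct-sum of $K = O(\log^2 n)$ independent Amir instances on disjoint coordinate blocks, and per-block application of \pref{thm:amir_lower_bound}. Where you differ is in the scaling bookkeeping, and it is worth noting that your version is both cleaner and sharper. The paper rescales the block variables by $1/\log n$ and puts a $1/M$ prefactor on the sum; it then reads the effective per-block step size as $\eta'/M = \Theta(\eta'/\log^2 n)$, which forces a relaxation of the per-block lower bound by a factor $1/M$, and combining with the outer $1/M$ weighting yields the stated $\Omega(1/\log^4 n)$. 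Your scaling — inner dilation by $\sqrt K$ together with outer prefactor $1/(4K)$ — makes the effective per-block step size exactly $\eta'/4$ (independent of $K$), so the per-block lower bound is $\Omega(\min\{\eta'\sqrt{T'} + 1/(\eta'T'), 1\})$ with no extra loss, and the only attenuation is the single outer $1/(4K) = \Theta(1/\log^2 n)$ factor. All the side conditions (1-Lipschitzness via $\nrm{\nabla f}^2 \le \frac{1}{16K}\cdot K\cdot 16 = 1$, minimizer norm $\nrm{w^*}^2 \le \frac{1}{K}\cdot K = 1$, GD decoupling across blocks with effective step $\eta'/4$, range check $\eta'/4 \in [1/(4n^2),1/4)$) check out. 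Your argument thus proves a strictly stronger $\Omega(1/\log^2(n) \cdot \min\{\cdots\})$ bound, which of course implies the theorem as stated.
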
  
\begin{proof} Set $\gamma \ldef{} \sqrt{3/2}$. We consider a discretization of the intervals $[1/ n^3, 1)$ for the step sizes (we take a slightly larger interval that the domain $[1/n^2, 1)$ of the step size) and $[1, n^3)$ for the time steps respectively. Define the set 
\begin{align*} 
& \cN \ldef{} \crl[\Big]{\frac{1}{n^3}, \frac{\gamma}{n^3}, \frac{\gamma^2}{n^3}, \cdots, \frac{\gamma^{\ceil*{3 \log (n) / \log (\gamma)}}}{n^3}} 
\intertext{of step sizes such that for any $\eta' \in [1/n^3, 1)$, there exists an ${\speta} \in \cN$ that satisfies ${\speta} \leq \eta' < \gamma {\speta}$. Similarly, define the set }
&\cT \ldef{} \crl[\big]{1, 2, 4, \ldots, 2^{\ceil*{3\log(n)}}}
\end{align*} 
of time steps such that for any $T' \in [1, n^3)$, there exists a ${\spT} \in \cT$ that satisfies ${\spT} \leq T' < 2 {\spT}$. Further, define $M = \abs{\cN} \abs{\cT}$. Clearly, $M =  \ceil*{3 \log (n) / \log (\gamma)} \cdot \ceil*{3 \log(n)}$ and for $n \geq 20$ satisfies the bound $40 \log^2(n) \leq M \leq 80 \log^2(n)$. 

In the following, we first define the component function $f_{\speta, \spT}$ for every $\speta \in \cN$ and $\spT \in \cT$. We then define the loss function $f$ and show that it is convex and Lipschitz in the corresponding optimization variable. Finally, we show the lower bound for GD for this loss function $f$ for any step size $\eta \in [1/n^2, 1)$ and time steps $T \in [1, n^3)$. 

\paragraph{Component functions.} For any $\speta \in \cN$ and $\spT \in \cT$, let $\spw_{\speta, \spT}$, $z_{\speta, \spT}$,  $f_{\speta, \spT}$ and $\cD_{\speta, \spT}$ denote the optimization variable,  data instance, loss function and the corresponding data distribution in the lower bound construction in \pref{thm:amir_lower_bound} where $\bar{\eta}$ and $\bar{T}$ are set as $\speta$ and $\spT$ respectively. We note that: 
\begin{enumerate}[label=(\alph*), leftmargin=8mm]
\item For any $z_{\speta, \spT}$, the function $f_{\speta, \spT}(\spw_{\speta, \spT}; z_{\speta, \spT})$ is $4$-Lipschitz and convex in $\spw_{\speta, \spT}$.
\item For any $\eta'' \in [\speta, \gamma \speta)$ and $T'' \in [\spT, 2\spT)$, the output point\footnote{We use the notation $\wgd_{\speta, \spT}[\eta'', T'']$ to denote the value of the variable $\spw_{\speta, \spT}$ computed by running GD algorithm with the step size $\eta''$ for $T''$ time steps. The subscripts $\speta, \spT$ are used to denote the fact that the corresponding variables are associated with the loss function construction in \pref{thm:amir_lower_bound} where $\bar{\eta}$ and $\bar{T}$ are set as $\speta$ and $\spT$ respectively.}  $\wgd_{\speta, \spT}[\eta'', T'']$ returned by running GD algorithm with a step size of $\eta''$ for $T''$ steps has excess risk 
\begin{align*} 
\En \brk[\big]{ F_{\speta, \spT} \prn[\big]{\wgd_{\speta, \spT}[\eta'', T'']}} - \inf_{\spw_{\speta, \spT}} F_{\speta, \spT} (\spw_{\speta, \spT}) \ge \Omega\prn[\Big]{ \min\crl[\Big]{ \eta'' \sqrt{T''} + \frac{1}{\eta'' T''},1}}, \numberthis \label{eq:lower_bound_eqn_mix_1} 
\end{align*} 
where the population loss $F_{\speta, \spT} (\spw_{\speta, \spT}) \ldef{} \En_{z_{\speta, \spT} \sim \cD_{\speta, \spT}} \brk{f_{\speta, \spT}(\spw_{\speta, \spT}; z_{\speta, \spT})}$. 
\item There exists a point $\spw_{\speta, \spT}^* \in \argmin F_{\speta, \spT}(\spw_{\speta, \spT})$ such that $\nrm{\spw^*_{\speta, \spT}} \leq 1$. 
\end{enumerate}

\paragraph{Lower bound construction.} We now present our lower bound construction: 
\begin{enumerate}[label=$\bullet$, leftmargin=8mm] 
\item \textit{Optimization variable}: For any $\speta$ and $\spT$, define  $w_{\speta, \spT} \ldef{} \spw_{\speta, \spT} / \log(n)$. The optimization variable $w$ is defined as the concatenation of the variables $\prn{w_{\speta, \spT}}_{\speta \in \cN, \spT \in \cT}$. 
\item \textit{Data instance}: $z$ is defined as the concatenation of the data instances $\prn{z_{\speta, \spT}}_{\speta \in \cN, \spT \in \cT}$. 
\item \textit{Data distribution}: $\cD$ is defined as the cross product of the distributions $\prn{\cD_{\speta, \spT}}_{\speta \in \cN, \spT \in \cT}$. Thus, for any $\speta \in \cN$ and $\spT \in \cT$, the component $z_{\speta, \spT}$ is sampled independent from $\cD_{\speta, \spT}$. 
\item \textit{Loss function}: is defined as 
\begin{align*} 
f(w; z) 
&= \frac{1}{M} \sum_{\speta \in \cN, \spT \in \cT} f_{\speta, \spT}\prn*{\spw_{\speta, \spT}; z_{\speta, \spT}},  \numberthis \label{eq:grid_loss_function}
\end{align*}
where recall that $\spw_{\speta, \spT} = \w_{\speta, \spT} \log(n)$. Additionally, we define the population loss~$F(w) \ldef{} \En_{z \sim \cD} \brk*{f(w; z)}$.  
\end{enumerate} 

\paragraph{$f$ is convex and $1$-Lipschitz.} Since, for any $\speta \in \cN$ and $\spT \in \cT$, the function $f_{\speta, \spT}(\spw_{\speta, \spT}; z_{\speta, \spT})$ is convex in $\spw_{\speta, \spT}$ for every $z_{\speta, \spT}$, and since $w_{\speta, \spT} = \spw_{\speta, \spT} / \log(n)$, we immediately get that the function $f(w;z)$ is also convex in $w$ for every $z$. Furthermore, for any $w$, $w'$ and $z$, we have
\begin{align*} 
\abs*{f(w; z) - f(w'; z)} &\overleq{\proman{1}} \frac{1}{M} \sum_{\speta \in \cN, \spT \in \cT}  \abs*{f_{\speta, \spT}(\spw_{\speta, \spT}; z_{\speta, \spT}) - f_{\speta, \spT}(\spw'_{\speta, \spT}; z_{\speta, \spT})} \\
&\overleq{\proman{2}}  \frac{4 \log(n)}{M} \sum_{\speta \in \cN, \spT \in \cT}   \nrm{ w_{\speta, \spT} - w'_{\speta, \spT}} \\ 
&=  \frac{4 \log(n)}{M} \sum_{\speta \in \cN, \spT \in \cT}   \sqrt{\nrm{ w_{\speta, \spT} - w'_{\speta, \spT}}^2} \\
&\overleq{\proman{3}}  4 \log(n)    \sqrt{\frac{1}{M} \sum_{\speta \in \cN, \spT \in \cT}  \nrm{ w_{\speta, \spT} - w'_{\speta, \spT}}^2} \\
&=     \frac{4 \log(n)}{\sqrt{M}}  \sqrt{ \sum_{\speta \in \cN, \spT \in \cT}  \nrm{ w_{\speta, \spT} - w'_{\speta, \spT}}^2} \\ 
&=     \frac{4 \log(n)}{\sqrt{M}}  \sqrt{ \sum_{\speta \in \cN, \spT \in \cT}  \nrm{w_{\speta, \spT} - w'_{\speta, \spT}}^2} \\ 
&\overeq{\proman{4}}     \frac{4 \log(n)}{\sqrt{M}}  \sqrt{\nrm{ w - w'}^2} \\ 
&\overleq{\proman{5}} \nrm*{w - w'}, 
\end{align*} where the inequality $\proman{1}$ follows from Triangle inequality and the inequality $\proman{2}$ holds because $f_{\speta, \spT}$ is $4$-Lipschitz in the variable $\spw_{\speta, \spT}$ for every $\speta \in \cN$ and $\spT \in \cT$, and by using the fact that $\spw_{\speta, \spT} = w_{\speta, \spT} \log(n)$. The inequality $\proman{3}$ above follows from an application of Jensen's inequality and using the concavity of square root. Furthermore, the equality $\proman{4}$ holds by the construction of the variable $w$ as concatenation of the variables $\prn{w_{\speta, \spT}}_{\speta \in \cN, \spT \in \cT}$. Finally, the inequality $\proman{5}$ follows from the fact that $M \geq 16 \log^2(n)$ for $n \geq 4$. Thus, the function $f(w;z)$ is  $1$-Lipschitz in $w$ for every $z$. 

\paragraph{Bound on the minimizer.} Since the components $\prn{w_{\speta, \spT}}_{\speta \in \cN, \spT \in \cT}$ do not interact with each other in the loss function $f$, any point $w^* \in \argmin F(w)$ satisfies: 
\begin{align*}
\nrm*{w^*} &= \sqrt{\sum_{\speta \in \cN, \spT \in \cT} \nrm{w^*_{\speta, \spT}}^2} = \frac{1}{\log(n)} \sqrt{\sum_{\speta \in \cN, \spT \in \cT} \nrm{\spw^*_{\speta, \spT}}^2}, 
\end{align*}
where $\spw^*_{\speta, \spT}$ denote the corresponding minimizers of the population loss $F_{\speta, \spT}$. Due to \pref{thm:amir_lower_bound}, we have that for any $\speta$ and $\spT$, there exists a $ \spw^*_{\speta, \spT}$ that satisfies $\nrm{\spw^*_{\speta, \spT}} \leq 1$. Plugging these in the above, we get that there exists a $w^*$ for which  
\begin{align*}
\nrm*{w^*} &\leq \frac{1}{\log(n)} \sqrt{M} \leq 10,  
\end{align*} where the second inequality follows by using the fact that $M \leq 100 \log^2(n)$ for $n \geq 20$. 

\paragraph{Lower bound proof.} We next provide the desired lower bound for the loss function in \pref{eq:grid_loss_function}. First, note that when running GD update on the variable $w$, with a step size of $\eta'$ and using the loss function $f(w; z)$, each of the component variables $\spw_{\speta, \spT}$ are updated as if independently performing GD  on the function $f_{\speta, \spT}$ but with the step size of $\eta' / M$. 

Now, suppose that we run GD on the loss function $f$ with step size  $\eta' \in [1/n^2, 1)$ and for $T' \in [1, n^3)$ steps. For $n \geq 20$, the step size $\wt \eta \ldef{} \eta' / M \leq \eta / {40 \log^2(n)}$ with which each component is updated clearly satisfies $\wt \eta \in [1/n^3, 1]$. Thus, by construction of the sets $\cN$ and $\cT$, there exists some $\bar{\eta} \in \cN$ and $\bar{T} \in \cT$ such that $\bar{\eta} \leq \wt \eta' < \gamma \bar{\eta}$ and $\bar{T} \leq T' < 2 \bar{T}$. Thus, due to \pref{eq:grid_loss_function}, we have that for the component function corresponding to $(\bar{\eta}, \bar{T})$, the point $\wgd_{\bar{\eta}, \bar{T}}[\wt \eta, T']$ returned after running GD with step size $\wt \eta$ for $T'$ steps satisfies
\begin{align*} 
\En \brk[\big]{ F_{\bar{\eta}, \bar{T}} \prn[\big]{\wgd_{\bar{\eta}, \bar{T}}[\wt \eta, T']}} - \inf_{\spw_{\bar{\eta}, \bar{T}}} F_{\bar{\eta}, \bar{T}} (\spw_{\bar{\eta}, \bar{T}}) &\ge \Omega\prn[\Big]{ \min\crl[\Big]{ \wt \eta \sqrt{T'} + \frac{1}{\wt \eta ~ T'},1}}   \\ 
&\geq \Omega\prn[\Big]{  \frac{1}{M} \min\crl[\Big]{ \eta' \sqrt{T'} + \frac{1}{\eta' T'},1}},  \numberthis \label{eq:lower_bound_eqn_mix_6} 
\end{align*} 
where the last line holds for $\wt \eta = \eta' / M$.

Our desired lower bound follows immediately from \pref{eq:lower_bound_eqn_mix_6}. Let $\wgd[\eta', T']$ be the output of running GD algorithm on the function $f$ with step size $\eta'$ and for $T'$ steps. We have that 
\begin{align*} 
 \En \brk*{ F\prn[\big]{\wgd[\eta', T']}} - \min_{w} F(w) &=  \frac{1}{M} \sum_{\speta \in \cN, \spT \in \cT} \prn[\Big]{\En \brk[\big]{ F_{\speta, \spT} \prn[\big]{\wgd_{\speta, \spT}[\wt \eta, T']}} - \inf_{\w_{\speta, \spT}} F_{\speta, \spT} (\w_{\speta, \spT}) } \\
 &\geq \frac{1}{M} \prn[\Big]{ \En \brk[\big]{ F_{\bar{\eta}, \bar{T}} \prn[\big]{\wgd_{\bar{\eta}, \bar{T}}[\wt \eta, T']}} - \inf_{\w_{\bar{\eta}, \bar{T}}} F_{\bar{\eta}, \bar{T}} (\w_{\bar{\eta}, \bar{T}})} \\ 
 &=  \Omega\prn[\Big]{  \frac{1}{M^2} \min\crl[\Big]{ \eta' \sqrt{T'} + \frac{1}{\eta' T'},1}}, 
\end{align*}
where the equality in the first line holds because the variables $\crl{w_{\speta, \spT}}$ do not interact with each other, the inequality in the second line follows by ignoring rest of the terms which are all guaranteed to be positive, and finally, the last line follows by plugging in \pref{eq:lower_bound_eqn_mix_6}. Using the fact that $M \geq 40 \log^2(n)$ in the above, we get that for any  $\eta' \in [1/n^2, 1)$ and for $T' \in [1, n^3)$, the point $\wgd[{\eta', T'}]$ satisfies 
\begin{align*}
\En \brk*{ F(\wgd[\eta', T'])} - \min_{w} F(w)  &=  \Omega\prn[\Big]{  \frac{1}{\log^4(n)} \min\crl[\Big]{ \eta' \sqrt{T'} + \frac{1}{\eta' T'},1}}. 
\end{align*} 
\end{proof} 

\subsection{Lower bound of $\eta T / n$} 
The lower bound in \pref{eq:anytime_lb1} already matches the first two terms in our desired lower bound in \pref{eq:app_GD_lower_bound_1}. In the following, we provide a function for which GD algorithm has expected suboptimality of $\eta T / n$. 

\begin{lemma} 
\label{lem:GD_lower_bound_ours}
Fix any $n$. Let $w \in \bbR$ denote the optimization variable and $z$  denote a data sample from the instance space $\cZ = \crl*{-1, 1}$.  There exists a $2$-Lipschitz function $f(w; z)$ and a distribution $\cD$ over $\cZ$ such that: 
\begin{enumerate}[label=$(\alph*)$, leftmargin=8mm] 
\item The population loss $F(w) \ldef{} \En_{z \sim \cD}\brk{f(w; z)}$ is convex in $w$. Furthermore, $w^* = 0$ is the unique minimizer of the population loss $F(w)$. \item For any $\eta$ and $T$, the point $\wgd[\eta, T]$ returned by running GD with a step size of $\eta$ for $T$ steps satisfies  
\begin{align*} 
	\En \brk*{F(\wgd[\eta, T])} - \inf_{w \in \bbR} F(w) \geq \Omega\prn[\Big]{\frac{\eta T}{n}}. 
\end{align*} 
\end{enumerate} 
\end{lemma}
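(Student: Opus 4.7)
The plan is to construct a one-dimensional SCO instance on $\cZ = \crl*{-1, +1}$ whose empirical loss carries a systematic gradient bias of order $1/\sqrt{n}$, steadily pushing GD away from the population minimizer by an amount $\Omega(\eta/\sqrt{n})$ per step. Concretely, I would take $\cD$ uniform on $\crl*{-1, +1}$, initial point $w_1 = 0$, and
\[
  f(w; z) = z\,w + \frac{c}{\sqrt{n}}\,\abs*{w}
\]
for an absolute constant $c \in (0, 1/2]$ (say $c = 1/2$). The requirements of part~(a) then follow by direct computation: $F(w) = (c/\sqrt{n})\abs*{w}$ is convex with unique minimizer $w^* = 0$; since $\abs*{\partial_w f(w; z)} \leq 1 + c/\sqrt{n} \leq 2$, $f$ is $2$-Lipschitz in $w$; the variance of $\nabla f - \nabla F$ is bounded by $\En[z^2] = 1$; and $\abs*{w_1 - w^*} = 0$.

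For part~(b), let $\bar{z} \ldef{} \tfrac{1}{n}\sum_{i=1}^n z_i$, so that $F_S(w) = \bar{z}\,w + (c/\sqrt{n})\abs*{w}$. I would condition on the event $E^+ \ldef{} \crl*{\bar{z} > c/\sqrt{n}}$, on which both one-sided derivatives of $F_S$ are strictly positive and $F_S$ has no finite minimizer. Using the convention that the subgradient of $\abs*{w}$ at $w = 0$ equals $0$ (as is done for $\fA$ in the appendix), the GD iterates obey
\[
  w^{\mathrm{GD}}_2 = -\eta\,\bar{z}, \qquad w^{\mathrm{GD}}_{t+1} = w^{\mathrm{GD}}_t - \eta\bigl(\bar{z} - c/\sqrt{n}\bigr) \quad \text{for } t \geq 2,
\]
so by induction $w^{\mathrm{GD}}_t < 0$ and $\abs*{w^{\mathrm{GD}}_t} \geq \eta(t-1)(\bar{z} - c/\sqrt{n})$ for every $t \geq 2$. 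All summands of $\wgd[\eta, T] = \tfrac{1}{T}\sum_{t=1}^T w^{\mathrm{GD}}_t$ share sign, and summing the recursion yields $\abs*{\wgd[\eta, T]} = \Omega\bigl(\eta T (\bar{z} - c/\sqrt{n})\bigr)$ on $E^+$, hence $F(\wgd[\eta, T]) \geq \Omega\bigl((c/\sqrt{n})\,\eta T\,(\bar{z} - c/\sqrt{n})\bigr)$ pointwise on $E^+$. The symmetric event $\crl*{\bar{z} < -c/\sqrt{n}}$ contributes the same bound by flipping signs.

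To close the argument I would apply the Rademacher anti-concentration bound from the appendix (\pref{lem:rademacher_anti_concen}, with $X_i = (1 + z_i)/2$ and $t = c\sqrt{n}$) to obtain $\Pr\bigl(\bar{z} \geq 2c/\sqrt{n}\bigr) = \Omega(1)$; on this sub-event $\bar{z} - c/\sqrt{n} \geq c/\sqrt{n}$. Taking expectations then gives
\[
  \En\bigl[F(\wgd[\eta, T])\bigr] \;\geq\; \Omega\Bigl(\tfrac{c}{\sqrt{n}}\cdot \eta T \cdot \tfrac{c}{\sqrt{n}}\Bigr) \;=\; \Omega(\eta T / n),
\]
as required.

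The main subtlety is the subgradient convention at $w = 0$, since $\nabla F_S(0)$ is not classically defined; fixing the subgradient of $\abs*{w}$ at $0$ to be $0$ makes the first step clean, and on $E^+$ the iterates are strictly negative from step~$2$ onward, so all subsequent gradients are classical and the recursion is exact. A secondary bookkeeping issue is very small $T$, where the averaged lower bound is essentially delivered by the first step alone; the bound $\Omega(\eta T / n)$ is still achieved up to absolute constants, and in any case the use of this lemma inside \pref{thm:GD_vs_SGD_comparison} is in the regime $T \gg n$ where the analysis is clean.
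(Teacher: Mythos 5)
Your proposal is correct and follows essentially the same strategy as the paper, but with a genuinely different one-dimensional loss. The paper uses $f(w;z) = \bigl(\tfrac{1}{4\sqrt{n}} + z\bigr)\abs{w}$, so that the empirical loss is the symmetric cone $\bigl(\tfrac{1}{4\sqrt{n}}+\bar z\bigr)\abs{w}$, which flips to a \emph{concave} cone with constant probability by the same Rademacher anti-concentration argument you invoke, driving the iterates outward at rate $\Omega(\eta/\sqrt{n})$ per step. You instead use $zw + \tfrac{c}{\sqrt{n}}\abs{w}$, whose empirical version $\bar z\,w + \tfrac{c}{\sqrt{n}}\abs{w}$ is asymmetric. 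Both give the same $\Omega(\eta T/n)$ conclusion, but your variant is arguably a small improvement: because the empirical gradient at $w=0$ is $\bar z\ne 0$, GD initialized at the origin immediately moves, and from step $2$ on all gradients are classical. The paper's construction has gradient $\mathrm{sign}(w_t)\bigl(\tfrac{1}{4\sqrt{n}}+\bar z\bigr)$, which vanishes at $w=0$ under the $\mathrm{sign}(0)=0$ convention; the paper's proof papers over this with ``without loss of generality, assume $w_1>0$,'' but the lemma is then applied in \pref{thm:GD_vs_SGD_comparison} with $w_1=0$, where that WLOG does not literally apply. Your construction removes this wrinkle. One small correction to your closing remark: for $T=1$ the returned point is exactly $w_1=0$ with suboptimality $0$, so the bound $\Omega(\eta T/n)$ is \emph{not} achieved even ``up to constants'' at $T=1$ — the clean statement holds only for $T\ge 2$ (indeed $\Omega(\eta(T-1)/n)$). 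The paper's version has the same corner case, and it is harmless for the downstream use where $T$ is large, but it is worth stating the restriction rather than asserting the bound survives.
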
 
\begin{proof} For $z \in \crl{-1, 1}$ and $w \in \bbR$, define the instance loss function $f$ as 
\begin{align*} 
	f(w; z) \ldef{} \prn[\big]{\frac{1}{4 \sqrt{n}} + z} \abs*{w}. 
\end{align*} 
Define the distribution $\cD$ such that $z = +1$ or $z=-1$ with probability $1/2$ each. Clearly, 	$f(w; z)$ is $2$-Lispchitz w.r.t. $w$ for any $z \in \cZ$. Furthermore, for any $w \in \bbR$, 
\begin{align*} 
	F(w) = \En_{z \sim \cD} \brk*{ f(w; z) } &= \En_{z} \brk[\Big]{ \prn[\big] { \frac{1}{4 \sqrt{n}} + z} \abs*{w} } = \frac{1}{4 \sqrt{n}} \abs{w}, 
\end{align*} where the last equality holds because $\En \brk*{z} = 0$. Thus, $F(w)$ is convex in $w$ and $w^* = 0$ is the unique minimizer of the population loss $F(w)$. This proves part-(a).

We now provide a lower bound for GD algorithm. Let $S = \crl*{z_i}_{i=1}^n$ denote a dataset of size $n$ sampled i.i.d.\ from $\cD$. The update rule for GD algorithm from \pref{eq:GD} implies that 
\begin{align*}
w^\text{GD}_{t + 1} &\leftarrow w^\text{GD}_{t} - \eta \cdot \text{sign}\prn{{w^\text{GD}_{t}}} \cdot \prn[\big]{\frac{1}{4\sqrt{n}} + \frac{1}{n} \sum_{i=1}^n z_i},  \numberthis \label{eq:GD_lower_bound_update_rule} 
\end{align*} 
and finally the returned point is given by $\wgd[\eta, T] = \frac{1}{n} \sum_{t=1}^T w^{\text{GD}}_t$. 

For $i \in [n]$, define the random variables $y_i = (1 - z_i) / 2$. Note that $y_i \sim \cB(1/2)$, and thus \pref{lem:rademacher_anti_concen} implies that 
\begin{align*}
\sum_{i=1}^n y_i \geq \frac{n}{2} + \frac{\sqrt{n}}{4}, 
\end{align*} with probability at least $1 / 15 e$. Rearranging the terms, we get that $ \sum_{i=1}^{n} z_i \leq - \sqrt{n} / 2$ with probability at least $1 / 15 e$.  Plugging this in \pref{eq:GD_lower_bound_update_rule}, we have that with probability at least $1 / 15 e$, for all $t \geq 0$, 
\begin{align*}
w^\text{GD}_{t + 1} &\geq w^\text{GD}_{t} + \frac{\eta}{4 \sqrt{n}}  \text{sign}\prn{w^\text{GD}_{t}}. 
\end{align*} 

Without loss of generality, assume that $w_1 > 0$. In this case, the above update rule implies that 
\begin{align*}
w_t &\geq w_1 + \frac{t \eta}{4 \sqrt{n}} 
\end{align*}
for all $t \geq 0$, which further implies that $$\wgd[\eta, T] = \frac{1}{T} \sum_{t=1}^T w_t \geq w_1  + \frac{\eta (T - 1)}{8 \sqrt{n}}. $$ Thus, 
\begin{align*}
F\prn{\wgd[\eta, T]} - \inf_{w} F(w) &= F\prn{\wgd[\eta, T]} \geq \frac{w_1}{4 \sqrt{n}} + \frac{\eta (T-1)}{32 n}, 
\end{align*} 
giving us the desired lower bound on the performance guarantee of the returned point $\wgd[\eta, T]$. The final in-expectation statement follows by observing that the above holds with probability at least $1/15e$. The proof follows similarly when $w_1 \leq 0$.  \end{proof} 

We next prove the lower bound in \pref{eq:app_GD_lower_bound_1} which follows by combining the lower bound construction from \pref{thm:amir_lower_bound} and \pref{lem:GD_lower_bound_ours}.   
\begin{theorem} 
\label{thm:GD_fails_lb} 
 Fix any $n \geq 200$. There exists a $3$-Lipschitz function $f(w; z)$ and a distribution $\cD$ over $z$, such that:  
\begin{enumerate}[label=$(\alph*)$, leftmargin=8mm] 
\item The population loss $F(w) \ldef{} \En_{z \sim \cD}\brk{f(w; z)}$ is convex in $w$. Furthermore, there exists a $w^* \in \argmin_{w} F(w)$ such that $\nrm{w^*} = O(1)$. 
\item For any $\eta \in [1/n^2, 1)$ and $T \geq 1$, the point $\wgd[\eta, T]$ returned by running GD with a step size of $\eta$ for $T$ steps has excess risk  
\begin{align*} 
\En \brk*{F(\wgd[\eta, T])} - \inf_{\w \in \reals^d} F(\w) = \Omega\prn[\Big]{ \frac{1}{\log^4(n)} \min\crl[\Big]{ \eta \sqrt{T} + \frac{1}{\eta T} + \frac{\eta T}{n},1}}. \numberthis \label{eq:app_GD_lower_bound} 
\end{align*} 
\end{enumerate} 
\end{theorem}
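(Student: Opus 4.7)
The natural approach is to superimpose on disjoint coordinate blocks the two constructions developed above --- the modification of the Amir--Livni--Srebro lower bound from \pref{thm:amir_lower_bound_modified} and the $\eta T/n$ construction from \pref{lem:GD_lower_bound_ours} --- so that GD decouples across the blocks and the two excess-risk lower bounds simply add. Writing $f_1(u; z_u)$ and $f_2(v; z_v)$ for the two instance losses, with respective population losses $F_1$ and $F_2$, I would set $w = (u, v)$ and $z = (z_u, z_v)$ with the two data components sampled independently, and define
\begin{align*}
f(w; z) \ldef{} f_1(u; z_u) + f_2(v; z_v),
\end{align*}
so that $F(w) = F_1(u) + F_2(v)$. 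Since $f_1$ is $1$-Lipschitz and $f_2$ is $2$-Lipschitz, $f$ is $\sqrt{1^2 + 2^2} < 3$-Lipschitz in $w$; convexity of $F$ and the $O(1)$-bound on $\nrm{w^*}$ follow component-wise from the two earlier results, with $w^* = (u^*, 0)$.

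The key observation is that $\nabla f(w; z) = (\nabla_u f_1(u; z_u), \nabla_v f_2(v; z_v))$, so running GD on $f$ with step size $\eta$ evolves the $u$- and $v$-blocks completely independently, each behaving exactly as a GD run with step size $\eta$ on the corresponding component function. Consequently the averaged iterate splits as $\wgd[\eta, T] = (\bar u, \bar v)$, and
\begin{align*}
\En\brk*{F(\wgd[\eta, T])} - \inf_{w} F(w) = \En\brk*{F_1(\bar u) - \inf F_1} + \En\brk*{F_2(\bar v) - \inf F_2}.
\end{align*}
For $\eta \in [1/n^2, 1)$ and $T \in [1, n^3)$, \pref{thm:amir_lower_bound_modified} lower bounds the first expectation by $\Omega(\min\crl*{\eta\sqrt{T} + 1/(\eta T),\, 1}/\log^4(n))$, while \pref{lem:GD_lower_bound_ours} lower bounds the second by $\Omega(\eta T/n)$. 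A short case split based on whether the quantity $\eta\sqrt{T} + 1/(\eta T) + \eta T/n$ is bounded by $1$ (both individual minima are attained by their ``unsaturated'' branch, and one simply sums using $1 \leq \log^4(n)$) or exceeds $1$ (at least one of the two bounds already contributes $\Omega(1/\log^4(n))$) rewrites the sum in the form $\Omega(\min\crl*{\eta\sqrt{T} + 1/(\eta T) + \eta T/n,\, 1}/\log^4(n))$ required by the theorem.

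The only range not handled by the argument above is $T \geq n^3$, which lies outside the regime covered by \pref{thm:amir_lower_bound_modified}. But for any $\eta \geq 1/n^2$ one then has $\eta T/n \geq \eta n^2 \geq 1$, so \pref{lem:GD_lower_bound_ours} alone yields an $\Omega(1)$ excess risk --- strictly stronger than the $\Omega(1/\log^4(n))$ needed once the $\min$ with $1$ saturates. I do not anticipate a real obstacle in executing this plan: the block-decoupling of GD is immediate, and verifying Lipschitzness, convexity, and the bound on $\nrm{w^*}$ for the concatenated construction is mechanical. The substantive work is already isolated inside \pref{thm:amir_lower_bound_modified} and \pref{lem:GD_lower_bound_ours}, whose proofs contain all of the nontrivial analysis.
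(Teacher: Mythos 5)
Your proposal is correct and is essentially identical to the paper's own proof: both take $f(w;z) = f_1(u;z_u) + f_2(v;z_v)$ on disjoint blocks with an independent product distribution, observe that GD decouples coordinate-wise so the excess risks add, invoke \pref{thm:amir_lower_bound_modified} and \pref{lem:GD_lower_bound_ours} to bound the two summands for $T < n^3$, and dispatch $T \geq n^3$ separately via $\eta T/n \geq \eta n^2 \geq 1$. The minor constants also agree (the paper likewise certifies the composite loss as $3$-Lipschitz, matching your $\sqrt{5} < 3$ observation).
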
 
\begin{proof} 
We first define some additional notation. Let $w\und{1}$, $z\und{1}$, $f\und{1}$ and $\cD\und{1}$ denote the optimization variable, the data sample, the instance loss and the distribution over $z\und{1}$ corresponding to the lower bound construction in \pref{thm:amir_lower_bound_modified}. Additionally, let $F\und{1}(w\und{1})$ denote the corresponding population loss under the distribution $D\und{1}$. We note that the function $f\und{1}$ is $1$-Lipschitz in $w \und {1}$ for any $z\und{1}$. Furthermore,\pref{thm:amir_lower_bound_modified} implies that $F\und{1}(w\und{1})$ is convex in $w\und{1}$, there exists a minimizer $w\und{1}^* \in \argmin_{w\und{1}} F\und{1}(w\und{1})$ such that $\nrm{w \und{1}^*} = O(1)$, and that for any $\eta \in [1/n^2, 1)$ and $T \in [1, n^3)$, the point $\wgd_{(1)}[\eta, T]$ returned by running GD algorithm with step size $\eta$ for $T$ time steps satisfies
\begin{align*}
\En \brk[\big]{ F\und{1} (\wgd_{(1)}[\eta, T])} - \inf_{\w\und{1}} F (\w\und{1}) = \Omega\prn[\Big]{\frac{1}{\log^4(n)} \min\crl[\Big]{ \eta \sqrt{T} + \frac{1}{\eta T},1}}.  \numberthis \label{eq:part1_lb1} 
\end{align*}

Similarly, let $w\und{2}$, $z\und{2}$, $f\und{2}$ and $\cD\und{2}$ denote the corresponding quantities for the lower bound construction in \pref{lem:GD_lower_bound_ours}, and let  $F\und{2}(w\und{2})$  denote the corresponding population loss under the distribution $\cD\und{2}$. We note that the function $f\und{2}$ is $2$-Lipschitz in $w \und {2}$ for any $z\und{2}$. Furthermore, \pref{lem:GD_lower_bound_ours} implies that $F\und{2}(w\und{2})$ is convex in $w\und{2}$ with  $w^* = 0$ being the unique minimizer, and that for any $\eta$ and $T$ the point $\wgd_{(2)}[\eta, T]$ returned by running GD algorithm with step size $\eta$ for $T$ time steps satisfies
\begin{align*}
\En \brk[\big]{ F\und{2} (\wgd_{(2)}[\eta, T])} - \inf_{\w\und{2}} F (\w\und{2}) = \Omega \prn[\Big]{\frac{\eta T}{n}}.  \numberthis \label{eq:part1_lb2} 
\end{align*}

Our desired lower bound follows by combining the lower bound constructions from \pref{thm:amir_lower_bound} and \pref{lem:GD_lower_bound_ours} respectively.\paragraph{Lower bound construction.}  Consider the following learning setting: 
\begin{enumerate}[label=$\bullet$, leftmargin=8mm] 
\item \textit{Optimization variable}: $w$ is defined as the concatenation of the variables $\prn{w\und{1}, w\und{2}}$.
\item \textit{Data instance}: $z$ is defined as the concatenation of the data instances $\prn{z\und{1}, z\und{2}}$. 
\item \textit{Data distribution}: $\cD$ is defined as $\cD\und{1} \times \cD\und{2}$, i.e. $z\und{1}$ and $z\und{2}$ are sampled independently from $\cD\und{1}$ and $\cD\und{2}$ respectively.  
\item \textit{Loss function}: is defined as 
\begin{align*}
f(w; z)  \ldef{} f\und{1}(w\und{1}; z\und{1}) + f\und{2}(w\und{2}; z\und{2}), 
\end{align*} 
Additionally, define the population loss $F(w) \ldef{} \En_{z \sim \cD} \brk*{f(w; z)}$. 
\end{enumerate}

Since, $f\und{1}$ is $1$-Lipschitz in $w\und{1}$ and  $f\und{2}$ is $2$-Lipschitz in $w\und{2}$, we have that the function $f$ defined above is $3$-Lipschitz in $w$. Furthermore, the population loss 
\begin{align*}
F(w) = F\und{1}(w\und{1}) + F\und{2}(w\und{2}),
\end{align*}
is convex in $w$ as both $F\und{1}(w\und{1})$ and $F\und{2}(w\und{2})$ are convex functions. Furthermore, since the components $\prn{w\und{1}, w\und{2}}$ do not interact with each other in the function $f$, we have that there exists a $w^* \in \argmin_{w} F(w)$ such that: 
\begin{align*}
\nrm{w^*} \leq \nrm{w^*\und{1}} + \nrm{w^*\und{2}} = O(1),  
\end{align*} where $w^*\und{1}$ denotes a minimizer of $F\und{1}$ with $\nrm{w^*\und{1}} = O(1)$ and $w^*\und{1} = 0$ denotes the unique minimizer of $F\und{2}$.

\paragraph{GD lower bound.} From the construction of the function $f$, we note that the variables $w\und{1}$ and $w\und{2}$ are updated independent to each other by GD algorithm. Thus, for any $\eta \in [1/n^2, 1)$ and $T \in [1, n^3)$, the point $\wgd[{\eta, T}]$ returned by running GD algorithm on the function $f$ with step size $\eta$ for $T$ time steps satisfies 
\begin{align*}
\En \brk*{F\prn[\big]{\wgd[{\eta, T}]}} - \min_{w} F(w)  &= \En \brk[\big]{F\und{1}\prn[\big]{\wgd_{(1)}[\eta, T]} + F\prn[\big]{\wgd_{(2)}[\eta, T]}} - \min_{w\und{1}, w\und{2}} F\und{1}(w\und{1})  + F\und{2}(w\und{2})  \\ 
&=   \En \brk[\big]{ F\und{1}\prn[\big]{\wgd_{(1)}[{\eta, T}]} }   - \min_{w\und{1}} F\und{1}(w\und{1}) +  \En \brk[\big]{ F\prn[\big]{\wgd_{(2)} [{\eta, T}]} } - \min_{w\und{2}} F(w\und{2})  \\
&\overeq{\proman{1}} \Omega\prn[\Big]{\frac{1}{\log^4(n)} \min\crl[\Big]{ \eta \sqrt{T} + \frac{1}{\eta T},1}} + \Omega\prn[\Big]{\frac{\eta T}{n}} \\
&= \Omega\prn[\Big]{\frac{1}{\log^4(n)} \min\crl[\Big]{ \eta \sqrt{T} + \frac{1}{\eta T} + \frac{\eta T}{n},1}},
\end{align*}
where the lower bound in $\proman{1}$ follows from combining the lower bounds in \pref{eq:part1_lb1}  and \pref{eq:part1_lb2}. 

Finally, we note that when $\eta \in [1/n^2, 1)$ and $T \geq n^3$, we have that 
\begin{align*}
\En \brk*{F(\wgd[{\eta, T}])} - \min_{w} F(w)  &\geq  \En \brk[\big]{F(\wgd_{(2)}[\eta, T])} - \min_{\und{2}} F\und{2}(w\und{2})  \\ 
&=  \Omega\prn[\Big]{\frac{\eta T}{n}} \\
&= \Omega(1), 
\end{align*} where the second line follows by using the lower bound in \pref{eq:part1_lb2}, and the last line holds for $T > n^3$ because $\eta \geq 1/n^2$. Thus, the desired lower bound holds for all $\eta \in [1/n^2 , 1]$ and $T \geq 1$. 
\end{proof} 

\subsection{Lower bound for small step size ($\eta < 1/64 n^{5/4}$)}  

In the following, we provide a learning setting for which GD run with step size $\eta < 1/64 n^{5/4}$ has lower bound of $\Omega(1/n^{3/8})$. 
 
\begin{lemma}
\label{lem:small_step_size_lb}
 Let $w \in \bbR$ denote the optimization variable, and $z$ denote a data sample. There exists a function $f(w; z)$ and a distribution $\cD$ over $z$ such that: 
\begin{enumerate}[label=(\alph*), leftmargin=8mm]  
\item The population loss $F(w) \ldef{} \En_{z \sim \cD} \brk*{f(w; z)}$ is $1$-Lipschitz and convex in $w$. Furthermore, there exists a point $w^* \in \argmin_{w} F(w)$ such that $\nrm*{w^*} = 1$. 
\item The variance of the gradient is bounded, i.e. $\En_{z \sim \cD} \brk{\nrm*{\grad f(w; z) - \grad F(w)}^2} \leq 1$ for any $w \in \bbR$. 
\item If $\eta < 1/ 64 n^{5/4}$, then for any $T > 0$, the point $\wgd_T$ returned by running GD algorithm with step size $\eta$ for $T$ steps satisfies 
\begin{align*} 
	\En \brk{ F({{\widehat{w}}^{\text{GD}}}_T)} - \min_{w \in \bbR} F(w) = \Omega\prn[\Big]{\frac{1}{n^{3/8}}}. \numberthis \label{eq:SGD_upper_bound_fn} 
\end{align*}  
\end{enumerate} 
\end{lemma}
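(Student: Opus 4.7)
The strategy is to construct a one-dimensional learning problem on $\bbR$ in which the population loss $F$ is convex, $1$-Lipschitz, and has a minimizer $w^* = 1$ at distance $1$ from $w_1 = 0$, while the empirical loss $F_S$ on any random sample $S$ almost surely develops a local minimum close to $w_1$. The key scales are: a grid of width $\delta = 1/n^{5/4}$ on which ``kinks'' are placed, and a step size $\eta < 1/(64 n^{5/4}) = \delta/64$, which is smaller than the basin width of each kink, so GD cannot jump over a local minimum it has entered. Once GD is pinned at such a local minimum near the origin, its iterate remains there for all subsequent rounds, and the resulting suboptimality is $\Omega(1) \gg \Omega(1/n^{3/8})$.

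\textbf{Construction.} Take $F(w) = \max(-w/2, -1/2)$, which is $1/2$-Lipschitz, convex, and has minimum value $-1/2$ achieved on $[1,\infty)$. Partition $[0,1]$ into $N = n^{5/4}$ grid cells of width $\delta$, and set $\phi_k(w) := \min(|w - k\delta|, \delta/2)$ to be a narrow $V$-shaped bump centered at $k\delta$. For each sample $z = (z_1, \dots, z_N) \in \bbR^N$, define
\[
  f(w; z) = F(w) + \sum_{k=1}^{N} z_k \, \phi_k(w),
\]
where the $z_k$ are i.i.d.\ symmetric heavy-tailed variables taking values $\pm K$ with probability $p$ each (and $0$ otherwise), with $K = n$ and $p = 1/(2n^2)$. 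Since $\En[z_k] = 0$ and at most one bump is active at any $w$, one immediately checks $\En[f(w;z)] = F(w)$ (part~(a)), and $\En[\nrm{\grad f - \grad F}^2] \leq \operatorname{Var}(z_k) = 2pK^2 = 1$ (part~(b)). The initial point $w_1 = 0$ satisfies $\nrm{w_1 - w^*} = 1$.

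\textbf{Trapping analysis.} A direct computation shows that the grid point $k\delta$ is a local minimum of $F_S$ iff $S_k := \tfrac{1}{n}\sum_i z_{k,i} > 1/2$. The event ``exactly one of the $n$ samples has $z_{k,i} = +K$, and none have $z_{k,i} = -K$'' occurs with probability $\Theta(1/n)$ and already produces $S_k = K/n = 1$. Hence, the set of ``bad'' grid points at which $F_S$ has a local minimum has expected cardinality $N/n = \Theta(n^{1/4})$; by a standard Poisson approximation, with constant probability the nearest bad grid point to the right of $w_1 = 0$ lies at distance $O(1/n^{1/4})$. Once the GD iterates enter the basin of such a kink, the gradient near $k\delta$ has magnitude at most $1/2 + S_k = O(1)$, so each step displaces by at most $\eta(1/2 + S_k) < \delta/32$, and the iterates can never escape the basin. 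Thus the running average $\wgd_T$ stays within $O(\delta)$ of $k\delta$ for all $T$, giving $\nrm{w^* - \wgd_T} \geq 1 - O(1/n^{1/4})$, and by $1$-Lipschitzness of $F$, the excess risk is $\Omega(1) \geq \Omega(1/n^{3/8})$.

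\textbf{Main obstacle.} The most delicate step is controlling the upper tail of $|S_k|$: in the rare event that multiple samples have $z_{k,i} = \pm K$ at the same grid point $k$, the gradient magnitude at the basin can blow up and the per-step displacement may exceed the basin width, potentially letting GD escape. This has to be handled either by showing the tail contribution to $\En[F(\wgd_T)]$ is negligible, or by incorporating a truncation (e.g.\ redefining $z_k$ to be $0$ whenever it would concentrate too much mass at one grid point) without upsetting either the mean-zero property or the variance bound. A secondary technical point is converting the ``first bad grid point near $w_1$'' event into a uniform-in-$T$ statement for the averaged iterate, which requires a short but careful 1D dynamical analysis.
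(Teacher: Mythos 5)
Your construction is genuinely different from the paper's, and, modulo the gaps you flag, it does prove the statement (indeed it overshoots: your bound is $\Omega(1)$, strictly stronger than the claimed $\Omega(1/n^{3/8})$, since your population loss has slope $\Theta(1)$ rather than $\Theta(n^{-3/8})$). Here is how the two approaches compare. The paper places a \emph{deterministic} W-shaped ``kink'' $h$ of slope $\pm n^{5/8}$ at a \emph{random} offset $\beta$, multiplied by a Rademacher sign $y$; the crucial design choice is that the kink slope is exactly $n$ times the population slope, so that for a ``lucky'' $y=+1$ sample the $\tfrac{1}{n}\nabla h$ term in the empirical gradient \emph{exactly cancels} the population drift, creating an interval of width $2/64n^{5/4}$ where $\nabla \hat F \equiv 0$. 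This forces the paper to take the population slope to be $n^{-3/8}$ (slope times $n$ must equal the kink slope, and the kink slope squared times the hit probability must be $O(1)$ for the variance bound), which is why the lemma only claims $\Omega(1/n^{3/8})$. Once an iterate lands in the flat region, GD stops exactly -- a one-line argument. Your construction instead fixes the kink positions and randomizes the \emph{coefficient}, using heavy-tailed $z_k \in \{0,\pm n\}$ so that the per-point gradient variance is still $\leq 1$ but the empirical spike $S_k$ can overwhelm the $O(1)$ population slope with probability $\Theta(1/n)$. This decouples the kink steepness from the population slope and lets you use a constant population slope, which is where the $\Omega(1)$ comes from. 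The price is that you do not get a flat region: you get a one-sided V-shaped local minimum of $F_S$, and you have to argue that GD \emph{oscillates} in the basin rather than \emph{halting}. That argument is correct but needs to be written: since the slope is strictly negative just left of $k\delta$ and strictly positive just right of it, every step moves the iterate toward $k\delta$, and the displacement per step is $\eta(1/2+S_k)\leq \frac{3}{2}\eta < \delta/32$ in the typical case $S_k=1$, which is less than the basin half-width, so the iterate cannot overshoot. You flag the two loose ends correctly, but they are both minor: (i) for the heavy tail, note that $F$ is bounded below by $-1/2$, so the contribution of the exponentially rare event $\{\exists k: S_k \geq 32\}$ to $\En[F(\hat w_T^{\mathrm{GD}})]$ is negligible and does not need a truncation; (ii) the ``uniform-in-$T$'' issue for the average does not require any dynamical work beyond what you already have, because every iterate $w_t$ is bounded by the first trap location plus $O(\eta)$, and therefore so is the average. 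Two small inaccuracies to fix: since $S_k \in \mathbb{Z}$, the trap condition $S_k > 1/2$ should read $S_k \geq 1$; and your final line invokes $1$-Lipschitzness of $F$ to get a \emph{lower} bound on the excess risk, which is the wrong direction -- the correct step is that $F$ is \emph{affine} with slope $-1/2$ on $[0,1]$, so $F(\hat w) - \min F = (1-\hat w)/2 \geq 1/8$ whenever $\hat w \leq 3/4$.
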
 
\begin{proof} Before delving into the construction of the function $f$, we first define some  auxiliary functions and  notation. Define the kink function $h(w)$ as 
\begin{align*} 
h(w) \ldef{} \begin{cases} 0 & \text{if} \quad w < 0 \\   
- n^{5/8} w & \text{if} \quad  0 \leq w < \frac{1}{64 n^{5/4}} \\  
~~~ n^{5/8} w - \frac{2}{64 n^{5/8}} & \text{if} \quad \frac{1}{ 64 n^{5/4}} \leq w \leq \frac{3}{ 64 n^{5/4}} \\  
- n^{5/8} w + \frac{4}{64 n^{5/8}}  & \text{if} \quad \frac{3}{64 n^{5/4}} < w   \leq \frac{1}{16 n^{5/4}} \\ 
0 & \text{if} \quad \frac{1}{16 n^{5/4}} \leq w 
\end{cases},  \numberthis \label{eq:kink_definition} 
\end{align*} and the corresponding gradients $\grad h(w)$ as 
\begin{align*}
\grad h(w) \ldef{} \begin{cases} 0 & \text{if} \quad w < 0 \\ 
- n^{5/8} & \text{if} \quad  0 \leq w < \frac{1}{64 n^{5/4}} \\  
n^{5/8}& \text{if} \quad \frac{1}{ 64 n^{5/4}} \leq w \leq \frac{3}{ 64 n^{5/4}} \\ 
- n^{5/8}  & \text{if} \quad \frac{3}{64 n^{5/4}} < w   \leq \frac{1}{16 n^{5/4}} \\ 
0 & \text{if} \quad \frac{1}{16 n^{5/4}} \leq w  
\end{cases}.  \numberthis \label{eq:kink_grad_definition} 
\end{align*} 

Additionally, define the set 
\begin{align*}
H \ldef{} \crl*{\frac{1}{4} + \frac{1}{8 n^{5/4}},   \frac{1}{4} + \frac{2}{8 n^{5/4}}, \ldots, \frac{3}{4}}, 
\end{align*}
where the set $H$ has $4 n^{5/4}$ numbers from the interval $[1/4, 3/4]$ spaced at a distance of $1 / 8n^{5/4}$. 

We now present our learning setting:  
\begin{enumerate}[label=$\bullet$, leftmargin=8mm] 
\item \textit{Data sample:} $z$ consists of the tuple $(\beta, y)$ where $\beta \in H$ and $y \in \crl{-1, +1}$. 

\item \textit{Data Distribution:} $\cD$ over the instances $z = (\beta, y)$ is defined such that 
\begin{align*} 
	\beta \sim \text{Uniform}(H) \qquad \text{and} \qquad y \sim \text{Uniform}(\crl{-1, 1}),  \numberthis \label{eq:step_lb_dist} 
\end{align*} where $\beta$ and $y$ are sampled independent of each other. 

\item \textit{Loss function:} is defined as 
\begin{align*}
	f(w; z) &\ldef{} \frac{1}{n^{3/8}} \max\crl*{-w, -1} + y \cdot {h(w + \beta)},  \numberthis \label{eq:step_lb_func} 
\end{align*} 

Additionally, define the population loss $F(w) \ldef{} \En_{z \sim \cD} \brk*{f(w; z)}$. 
\end{enumerate} 

We next show the desired statements for this learning setting: 
\begin{enumerate}[label=$(\alph*)$, leftmargin=8mm]  
\item For the distribution $\cD$ defined in \pref{eq:step_lb_dist}, since $\En \brk*{y} = 0$ and $y$ is sampled independent of $\beta$, we have that the population loss  
\begin{align*} 
F(w) = \En_{z \sim \cD} \brk*{f(w; z)} = \frac{1}{n^{3/8}} \max \crl{-w, -1}.  
\end{align*} 

Clearly, $F(w)$ is $1$-Lipschitz and convex is $w$. Additionally, the point $w^* = 1$ is a minimizer of $F(w)$. 

\item We next bound variance of the stochastic gradient. For any $w \in \bbR$, we have 
\begin{align*}
\En_{z \sim \cD} \brk{\abs*{\grad f(w; z) - \grad F(w)}^2} &= \En_{(\beta, y)} \brk*{ \abs{ y \grad h(w + \beta) }^2} \\ 
&= \En_{\beta} \brk*{ \abs{ \grad h(w + \beta) }^2}  \\ 
&=  \En_{\beta} \brk[\Big]{ \indic\crl[\Big]{\beta \in \brk[\big]{w - \frac{1}{16 n^{5/4}}, w}} \cdot n^{5/4}} \\ 
&= n^{5/4} \Pr \prn[\big]{\beta \in \brk[\big]{w - \frac{1}{16 n^{5/4}}, w}}  
\end{align*} where the equality in the first line holds because $y \in \crl*{-1, 1}$, and the second line follows from the construction of the function $h$ which implies that $\abs{\grad h(w + \beta)} \leq n^{5/8}$ for any $w$ and $\beta$ (see \pref{eq:kink_grad_definition}). Using the fact that $\beta \sim \text{Uniform}(B)$ in the above, we get that  
\begin{align*} 
\En_{z \sim \cD} \brk{\abs*{\grad f(w; z) - \grad F(w)}^2} &\leq 1/4. 
\end{align*} 

\item We next show that GD algorithm when run with the step size of $\eta \leq 1/ 64 n^{5/4}$ fails to converge to a good solution. 

Define $\beta_{(1)}, \ldots, \beta_{(n)}$ such that $\beta_{(j)}$ denotes the $j$th smallest item in the set $\crl*{\beta_1, \ldots, \beta_n}$, and define $y_{(j)}$ as the corresponding $y$ variable for the random variable $\beta_{(j)}$. An application of \pref{lem:technical_lemma_4} implies that with probability at least $1 - 2 / n^{1/4}$, there exists a $\wh j \leq \ceil{{\log(n)}/{2 n ^{1/4}}}$ such that: 
\begin{enumerate}[label=$(\alph*)$, leftmargin=8mm]
\item[$\proman{1}$] $\beta_{(1)} \neq \beta_{(2)} \neq \beta_{(3)} \neq \cdots \neq \beta_{(\wh j + 1)}$, and 
\item[$\proman{2}$] $y_{(\wh j)} = +1$. 
\end{enumerate} 

In the following, we condition on the occurrence of the above two events. The key idea of the proof is that at the level of the empirical loss, the first $\wh j$ kinks (that arise as a result of the stochastic function $h$) would be isolated from each other due to event-$\proman{1}$. Thus, the norm of the gradient of the empirical loss would be bounded by $2 / n^{3/8}$ for all points before $\beta_{\wh j}$. At the same time, since $y_{(\wh j)} = +1$ from event-$\proman{2}$, the empirical loss would be flat in a small interval around $\beta_{\wh j}$. As we show in the following, when GD is run on the empirical loss with step size $\eta \leq 1/64 n^{5/4}$, some GD iterate will lie in this small flat region, and after that GD will fail to make progress because the gradient is $0$; hence outputting a bad solution. On the other hand, GD / SGD run with a large step size e.g. $1/\sqrt{n}$ will easily jump over these kinks and converge to a good solution. We illustrate this intuition in \pref{fig:emp_loss}, and provide the proof below. 

\begin{figure}[ht]
\begin{flushright}
{\includegraphics[width=1.1\textwidth]{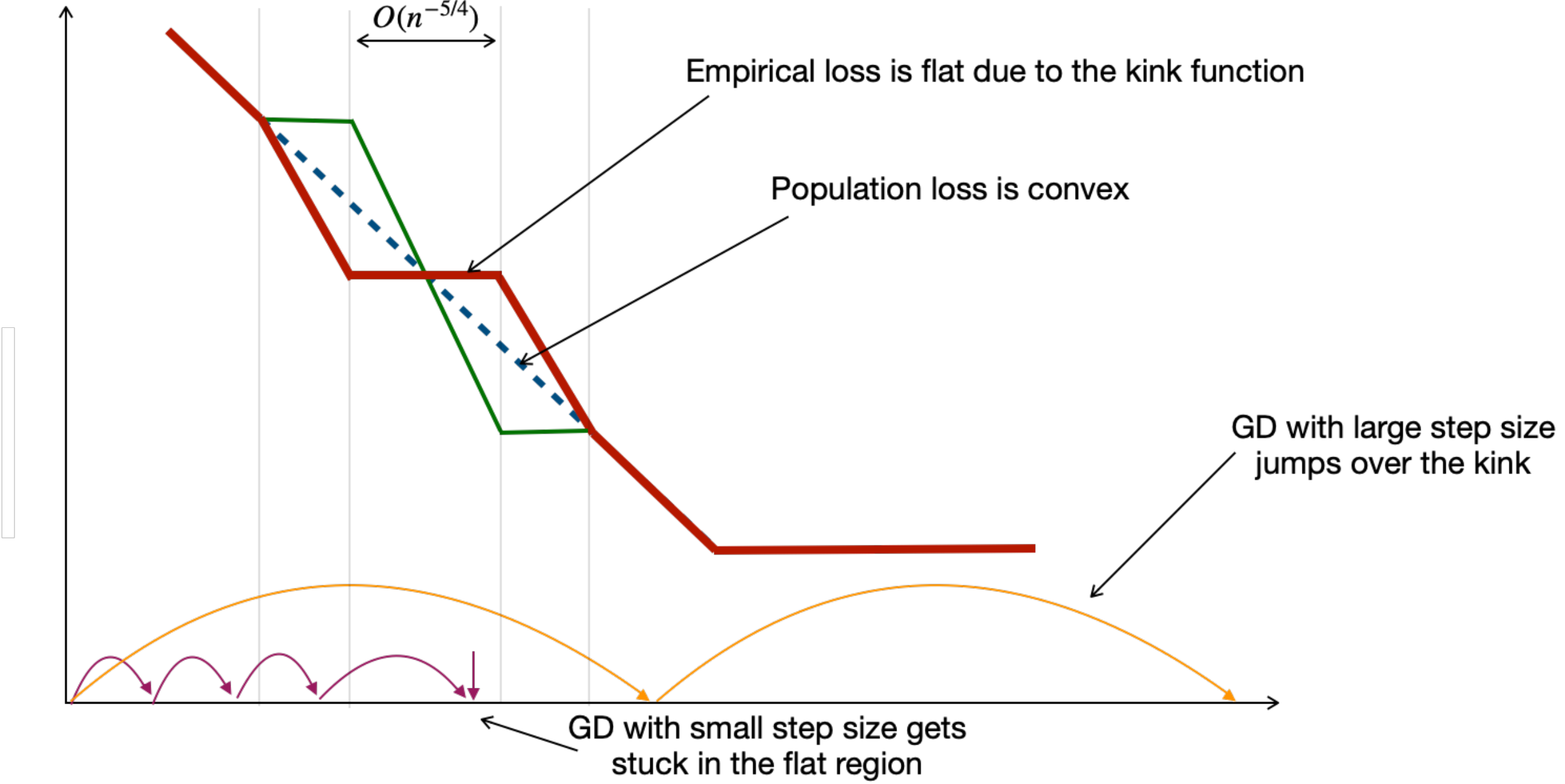}}
\end{flushright} 

\caption{(Picture not drawn according to scale) The solid red line shows the empirical loss induced by the kink function when $y = +1$, the solid green line shows the empirical loss when $y = -1$, and the dotted blue line shows the convex population loss. The empirical loss when $y = +1$ has gradient $0$ in a region of width $1/32 n^{5/4}$. Gradient descent with step size smaller than $1/64 n^{5/4}$, shown in the bottom, will get stuck in this flat region and thus fail to find a good solution. On the other hand, gradient descent with large step size will jump over the kink and find a good solution.} 
\label{fig:emp_loss}  	
\end{figure} 

Recall that the empirical loss on the dataset $S$ is given by: 
\begin{align*} 
	\wh F(w) = \frac{1}{n^{3/8}} \max \crl{-w, -1} + \frac{1}{n} \sum_{j=1}^n y_{j} \prn*{h(w + \beta_j)} 
\end{align*} 
Note that for any $\beta \in H$, the set of $w$ for which $h(w + \beta)$ is non-zeros is given by the interval $[\beta, \beta + 1/16n^{5/4}]$. Furthermore, any two numbers in the set $H$ are at least $1/8n^{5/4}$ apart from each other. Thus, the event-$\proman{1}$ above implies that non-zero parts of the functions $\crl{h(w + \beta_{(1)}), \ldots, h(w + \beta_{(\wh j)})}$, i.e. the first $\wh j$ kinks do not overlap with each other. This implies that for $w <  \beta_{(\wh j + 1)}$, 
\begin{align*}
\abs{\grad \wh F(w)} &\leq \frac{1}{n^{3/8}} + \frac{1}{n} \sum_{j=1}^n \abs{\grad {h(w + \beta_j)}} \\ &\leq \frac{1}{n^{3/8}} + \frac{n^{5/8}}{n} \leq \frac{2}{n^{3/8}},  \numberthis \label{eq:gradient_lipschitz}  
\end{align*}
where the first inequality in the above follows from Triangle inequality and the second inequality holds because at most one of the $n$ terms in the stochastic component $\sum_{j=1}^n y_{j} \prn*{h(w + \beta_j)}$ above is non-zero for $w <  \beta_{(\wh j + 1)}$. Thus, $\wh F(w)$ is $1$-Lipschitz for $w \in [0, \beta_{(\wh j + 1)})$. 

Next, the event-$\proman{2}$ above implies that $y_{(\wh j)} = 1$. Define the interval $\wt \cW \ldef{} (\beta_{(\wh j)} + 1/64 n^{5/4}, \beta_{(\wh j)} + 3/64 n^{5/4})$ and note that for any $w \in \wt \cW$, 
\begin{align*}
\grad \wh F(w) &= - \frac{1}{n^{3/8}} + \frac{1}{n} \grad h(w + \beta_{(\wh j)}) \\ 
 &= - \frac{1}{n^{3/8}}  + \frac{1}{n} \cdot n^{5/8} = 0,  \numberthis \label{eq:gradient_flat} 
\end{align*} where the first equality holds because $y_{(\wh j)} = +1$ and using the event-$\proman{1}$ above. This implies that the empirical loss $\wh F(w)$ has $0$ gradient for $w \in \wt \cW$, and thus GD algorithm will stop updating if any iterate reaches $\wt \cW$.

In the following, we will show that for $\eta \leq 1/ 64 n^{5/4}$, GD algorithm is bound to get stuck in the interval $\wt \cW$, and will thus fail to find a good solution. Consider the dynamics of the GD algorithm:
\begin{align*} 
w_{t + 1} \leftarrow w_{t} - \eta \grad \wh F(w_{t}), 
\end{align*} 
where the initial point  $w_1 = 0$. Suppose there exists some time $\tau$ for which  $w_{\tau} > \beta_{(\wh j + 1)}$, and let $t_0 > 0$ denote the smallest such time. Thus, for any $t < t_0$ and $\eta \leq 1/64 n^{5/4}$, we have that 
\begin{align*}
\abs{w_{t + 1} - w_{t}} &= \eta \abs{ \grad \wh F(w)} \leq \eta \leq 1/64 n^{5/4}, 
\end{align*}
where the first inequality holds due to \pref{eq:gradient_lipschitz}.  This implies that any two consecutive iterates produced by the GD algorithm for $t < t_0$ are at most $1/64 n^{5/4}$ apart from each other. However, note that the interval $\wt \cW  \subseteq  [0, \beta_{(\wh j + 1)})$ and has width of $2/64 n^{5/4}$. Thus, there exists some time $t' \leq t_0$ for which $w_{t'}$ will lie in the set $\wt \cW$. However, recall that $\grad F(w) = 0$ for any $w \in \wt \cW$ as shown in \pref{eq:gradient_flat}. Thus, once $w_{t'} \in \wt \cW$, GD will not update any further implying that for all $t > t'$, $w_{t} = w_{t'}.$ This shows via contradiction that no such time $\tau$ exists for which $w_{\tau} > \beta_{(\wh j + 1)}$. 

Hence for $\eta < 1/64 n^{5/4}$, all iterates $\crl{w_t}_{t \geq 0}$ generated by the GD algorithm will lie in the set $[0, \beta_{(\wh j + 1)})$. Thus, for any $T > 1$, the returned point $\wgd_T$ satisfies
\begin{align*}
\wgd_T \leq \beta_{(\wh j + 1)} \leq \frac{3}{4}, 
\end{align*} where the second inequality holds because $\beta_{(\wh j + 1)} \in H$ and is thus smaller than $3/4$. Thus, 
\begin{align*}
 F({{\widehat{w}}^{\text{GD}}}_T) - \min_{w \in \bbR} F(w) \geq \frac{1}{4 n^{3/8}}. \numberthis \label{eq:lower_bound_flat_final} 
\end{align*}

Since, the events $\proman{1}$ and $\proman{2}$ occur simultaneously with probability at least $1 - 2 / n^{1/4}$, we have that for any $T \geq 0$, 
\begin{align*} 
	\En \brk{ F({{\widehat{w}}^{\text{GD}}}_T)} - \min_{w \in \bbR} F(w) = \Omega\prn[\big]{\frac{1}{n^{3/8}}}. 
\end{align*} 
\end{enumerate} 
\end{proof}

\begin{lemma} 
\label{lem:technical_lemma_4}  
Suppose $(\beta_1, y_1), \ldots, (\beta_n, y_n)$ be $n$ samples drawn independently from $\textit{Uniform}(H) \times \text{Uniform}(\crl{-1, 1})$, where the set 
$H \ldef{} \crl{\tfrac{1}{4} + \tfrac{1}{8 n^{5/4}},   \tfrac{1}{4} + \tfrac{2}{8 n^{5/4}}, \ldots, \tfrac{3}{4}}$. Further, define $\beta_{(j)}$ to denote the $j$th smallest number in the set $\crl*{\beta_1, \ldots, \beta_n}$, and define $y_{(j)}$ as the corresponding $y$ variable for $\beta_{(j)}$. Then, with probability at least $1 - 2 / n^{1/4}$, there exists a $\wh j \leq  \ceil{{\log(n)}/{2 n ^{1/4}}}$ such that: 
\begin{enumerate}[label=$(\alph*)$, leftmargin=8mm]
\item[$\proman{1}$] $\beta_{(1)} \neq \beta_{(2)} \neq \beta_{(3)} \neq \cdots \neq \beta_{(\wh j + 1)}$, and 
\item[$\proman{2}$] $y_{(\wh j)} = +1$. 
\end{enumerate} 
\end{lemma}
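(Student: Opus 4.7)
The plan is to prove the lemma by separately bounding the probabilities of two bad events and then applying a union bound. Since the pairs $(\beta_i, y_i)$ are drawn iid and $\beta_i$ is independent of $y_i$, the permutation $\pi$ that sorts the $\beta$'s (breaking ties by original index) is a measurable function of $(\beta_1, \ldots, \beta_n)$ alone; consequently, the sequence $(y_{(j)})_{j=1}^n = (y_{\pi(j)})_{j=1}^n$ is, conditionally on $(\beta_1, \ldots, \beta_n)$, iid $\mathrm{Uniform}(\{-1,+1\})$, hence also iid unconditionally and independent of the full $\beta$ family. This decoupling is the conceptual key that lets us reason about the order-statistic signs and the $\beta$-distinctness separately.

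Set $K = \lceil \log_2(n)/4\rceil$ (or whatever matches the stated bound on $\wh j$) and define the two bad events
\begin{align*}
E_1 &= \{y_{(j)} = -1 \text{ for every } j \in [1, K]\}, \\
E_2 &= \{\beta_{(1)}, \ldots, \beta_{(K+1)} \text{ are not all distinct}\}.
\end{align*}
On the complement $E_1^c \cap E_2^c$, let $\wh j$ be the smallest index in $[1, K]$ with $y_{(\wh j)} = +1$; then $\wh j \leq K$, $\beta_{(1)} < \beta_{(2)} < \cdots < \beta_{(\wh j + 1)}$, and $y_{(\wh j)} = +1$, giving both required properties. By the iid uniformity of $(y_{(j)})$, we immediately have $\Pr(E_1) = 2^{-K} \leq 1/n^{1/4}$.

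The main work, and the chief obstacle, is to show $\Pr(E_2) \leq 1/n^{1/4}$ (up to absorbable constants). A naive union bound over all $\binom{n}{2}$ pairs gives $\binom{n}{2}/|H| \approx n^{3/4}$, which is hopeless; one must exploit that we are only asking about distinctness of the \emph{smallest} $K+1$ of the samples. I would use a two-step conditioning argument. Let $L_0 = C K n^{1/4}$ for a sufficiently large constant $C$ and $X_\ell = |\{i : \beta_i = h_\ell\}|$. First, the count $\sum_{\ell \leq L_0} X_\ell$ is $\mathrm{Binomial}(n, L_0/|H|)$ with mean $\approx CK/4$, so a Chernoff bound yields $\Pr(\beta_{(K+1)} > h_{L_0}) \leq e^{-\Omega(K)}$. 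Second, on the good event $\beta_{(K+1)} \leq h_{L_0}$, any collision among the first $K+1$ order statistics forces some bin $\ell \leq L_0$ to have $X_\ell \geq 2$; and $\Pr(X_\ell \geq 2) \leq \binom{n}{2}/|H|^2 = O(1/n^{1/2})$, so a union bound over the $L_0$ bins gives $O(L_0/n^{1/2}) = O(K/n^{1/4})$.

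Combining the two pieces yields $\Pr(E_2) \leq e^{-\Omega(K)} + O(K/n^{1/4})$, and adding $\Pr(E_1) \leq 1/n^{1/4}$ gives the target bound $2/n^{1/4}$ after choosing $C$ and $K$ appropriately (tuning $K$ just large enough to defeat $E_1$ while keeping $K \cdot n^{-1/4}$ within the allowed budget). The delicate point is calibrating $K$ and $C$ so that the logarithmic factor coming from the union bound over bins is absorbed into the stated constant; if needed, one can replace the blanket distinctness requirement by asking only for distinctness up to the random stopping index $\wh j$, which shaves off the extra $\log$ factor because the truncation happens at the first positive $y_{(j)}$ rather than at the worst case $K$.
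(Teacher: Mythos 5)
Your proof is correct in outline and takes a genuinely different route from the paper's. The paper translates the problem into a colored balls-and-bins model: it lets $A_k$ be the event that some bin among the first $k$ receives two or more balls and $B_k$ the event that the first $k$ bins contain no red ball, union-bounds $\Pr(A_k) \le k \cdot \Pr(\text{a fixed bin has} \ge 2 \text{ balls}) = O(k/\sqrt{n})$, and bounds $\Pr(B_k) \le (1 - \Theta(k/n^{5/4}))^n$; the desired $\wh j$ is then the rank of the red ball in the first distinct bin. Your decoupling observation — that $(y_{(j)})_j$ is iid uniform and independent of the $\beta$'s — replaces the paper's $B_k$ computation with the trivial exact identity $\Pr(E_1) = 2^{-K}$, which is the cleaner move: the paper's $B_k$ is a strictly stronger bad event (it fails whenever the first $k$ bins are merely \emph{empty}), which is why the paper must take $k = \Theta(n^{1/4}\log n)$ rather than $\Theta(\log n)$. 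The price you pay is on the distinctness side: because your $E_2$ is phrased in terms of the first $K+1$ order statistics rather than the first $k$ bins, you need the extra Chernoff step to locate $\beta_{(K+1)}$ in a bounded prefix of bins before you can union-bound collisions, whereas the paper's bin-centric $A_k$ avoids that step entirely. Both routes end up with a $\log n$ overhead from the collision union bound (the paper needs $k/\sqrt{n} = \Theta(\log n / n^{1/4})$, you need $L_0/\sqrt{n} = \Theta(K/n^{1/4})$ with $K=\Theta(\log n)$), so both actually prove $1 - O(\log n/n^{1/4})$ rather than the stated $1-2/n^{1/4}$; the paper's written choice of $k = \lceil \log(n)/(2n^{1/4}) \rceil$ is not internally consistent with its own probability calculation and appears to be a typo. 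None of this matters downstream, since the lemma is only used to guarantee a probability bounded away from zero and to ensure $\beta_{(\wh j + 1)} \le 3/4$, which holds for any $\wh j$. Your awareness of the calibration issue (and the stopping-time suggestion to shave the log) is the right instinct, though the stopping-time variant would need a more careful argument to handle the dependence between $\wh j$ and the collision pattern once you start optimizing $L_0$ over $m$.
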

\begin{proof} Let $k = \ceil{{\log(n)}/{2 n ^{1/4}}}$. We give the proof by translating our problem into a ``ball and bins" problem. 

Let the set $H$ denote $m = 4 n^{5/4}$ distinct bins, and let there be $n$ distinct balls. Each of the $n$ balls are tossed independently into one of the $m$ bins drawn uniformly at random. For any $i, j$, the event where the ball $j$ is tossed into the bin $i$ corresponds to the event that $\beta_j = \tfrac{1}{4} + \frac{i}{8 n^{5/4}}$. In addition to this, the balls are such that they change color after they are tossed into a bin. In particular, after being tossed into a bin, a ball changes color to either red or blue with equal probability. For any $i, j$, the event where the ball $j$ takes red color corresponds to the event that $y_j = 1$; Similarly, blue color corresponds to $y = -1$. Thus, in the balls and bins model, the position of the $n$ balls in the bins and their corresponding colors after tossing reveals the value of the random variables $(\beta_1, y_1), \ldots, (\beta_n, y_n)$.

In the following, we first show that with probability at least $1 - \frac{2}{n^{1/4}}$, the first $k$ bins will have at most one ball each, and there will be some bin amongst the first $k$ bins that contains a red color ball. Define $E_{i}$ to denote the event that the bin $i$ gets more than one ball. Thus, 
\begin{align*}
	\Pr(E_i) &= 1 - \Pr (\text{Bin $i$ has $0$ or $1$ ball in it}) \\
					&= 1 - \prn[\Big]{1 - \frac{1}{n^{5/4}}}^n -  n \prn[\Big]{1 - \frac{1}{n^{5/4}}}^{n-1} \frac{1}{n^{5/4}} \\ 					
					&\leq  1 - \prn[\Big]{1 - \frac{1}{n^{1/4}}} - \frac{1}{n^{1/4}}\prn[\Big]{1 - \frac{n - 1}{n^{5/4}}} \\
					&= \frac{n - 1}{n^{3/2}} \leq \frac{1}{\sqrt{n}}  \numberthis \label{eq:E_i_bound_bnb}
\end{align*} where the first inequality in the above follows from the fact that $(1- \alpha)^n \leq 1 - \alpha n$ for any $\alpha \geq -1$ and $n \geq 1$. Let $A_k$ denote the event that there exists some bin among the first $k$ bins that has more than one ball. Taking a union bound, we get   
\begin{align*}
\Pr \prn{A_k} = \Pr \prn{\cup_{i=1}^{k} E_i}  &\leq \sum_{i=1}^{k} \Pr \prn*{E_i}  \leq \frac{k}{\sqrt{n}}. 
\end{align*}

Next, let $B_k$ denote the event that there is no red ball in the first $k$ bins after $n$ tosses. When we throw a ball, the probability that it will fall in the first $k$ bins is given by ${k}/{n^{5/4}}$, and the probability that it takes the color red is ${1}/{2}$. Since the bin chosen and the final color are independent of each other for each ball, the probability that a ball falls into the first $k$ bins and is of red color is given by $k/2n^{5/4}$. Furthermore, since the balls are thrown independent to each other, the probability that if we throw $n$ balls, no ball falls into the first $k$ bins that takes the red color is given by 
\begin{align*}
\Pr \prn*{B_k} &= \prn[\Big]{1 - \frac{k}{2 n^{5/4}}}^n. 
\end{align*}

Finally, let us define $C_k$ to denote the event that there is at most one ball per bin amongst the first $k$ bins, and that there exists a bin amongst the first $k$ bins with a red ball in it. By the definition of the events $A_k$ and $B_k$, we note that 
\begin{align*}
\Pr \prn*{C_k} &= \Pr \prn*{A_k^{c} \cap B_k^{c}} \\
						&= 1 - \Pr \prn*{A_k \cup B_k}  \\ 
						  &\geq 1 - \Pr(A_k) - \Pr(B_k) \\ 
						  &\geq 1 - \frac{k}{\sqrt{n}} - \prn[\Big]{1 - \frac{k}{2n ^{5/4}}}^n \\ 
						  &\geq 1 - \frac{k}{\sqrt{n}} - e^{- k/ 2 n^{1/4}}, 
\end{align*} where the first inequality in the above follows from the union bound, the second inequality holds by plugging in the corresponding bounds for $P(A_k)$ and $P(B_k)$, and the last line is due to the fact that $(1 + \alpha)^n \leq e^{\alpha n}$ for any $\alpha$. Plugging in the value of $k = \ceil{{\log(n)}/{2 n ^{1/4}}}$ in the above, we get that 
\begin{align*} 
\Pr \prn*{C_k} &\geq 1 - \frac{2}{n^{1/4}}. 
\end{align*} 
 
Thus, with probability at least $1 - 2 / n^{1/4}$, the first $k$ bins have at most one ball each, and there exists some bin $\wh j \in [k]$ that contains a red ball; when this happens, the corresponding random variables  $(\beta_1, y_1), \ldots, (\beta_n, y_n)$ satisfy: 
\begin{enumerate}[label=$(\alph*)$, leftmargin=8mm] 
\item $\beta_{(1)} \neq \beta_{(2)} \neq \beta_{(3)} \neq \cdots \neq \beta_{(\wh j + 1)}$, and 
\item $y_{(\wh j)} = +1$. 
\end{enumerate} 
\end{proof} 

\subsection{Proof of \pref{thm:GD_vs_SGD_comparison}} 
In this section, we combine the lower bound constructions in \pref{thm:GD_fails_lb} and \pref{lem:small_step_size_lb} to provide an instance of a SCO problem on which, for any step size $\eta$ and time steps $T$, GD has the lower bound of $\Omega(1/n^{5/12})$. 

\begin{proof}[{Proof of \pref{thm:GD_vs_SGD_comparison}} ] Throughout the proof, we assume that $n \geq 200$ and the initial point $w_1 = 0$. 

We first define some additional notation: Let $w\und{1}$, $z\und{1}$, $f\und{1}$ and $\cD\und{1}$ denote the optimization variable, the data sample, the instance loss and the distribution over $z\und{1}$ corresponding to the lower bound construction in \pref{thm:GD_fails_lb}. The statement of  \pref{thm:GD_fails_lb} implies that:  
\begin{enumerate}[label=(\alph*), leftmargin=8mm] 
\item $f\und{1}(w\und{1}; z\und{1})$ is $3$-Lipschitz in the variable $w\und{1}$ for any $z\und{1}$. 
\item The population loss $F\und{1}(w\und{1}) \ldef \En_{z\und{1} \sim \cD\und{1}} \brk{f(w\und{1}; z\und{1})}$ is $3$-Lipschitz and convex in $w\und{1}$. Additionally, there exists a point $w^*\und{1} \in \argmin_{w\und{1}} F\und{1}(w\und{1})$ such that $\nrm{w^*\und{1}} = O(1)$. 
\item For any  $\eta \in [1/n^2, 1)$ and $T \geq 1$,  the point $\wgd_{(1)}[\eta, T]$ returned by running GD algorithm with a step size of $\eta$ for $T$ time steps satisfies: 
\begin{align*} 
	\En \brk[\big]{F\und{1}\prn[\big]{\wgd_{(1)} [\eta, T]}} - \min_{w\und{1}} F\und{1}(w\und{1}) = \Omega\prn[\Big]{ \frac{1}{\log^4(n)} \min\crl[\Big]{ \eta \sqrt{T} + \frac{1}{\eta T} + \frac{\eta T}{n},1}}. \numberthis \label{eq:GD_final_lb1}  
\end{align*} 
\end{enumerate} 

Similarly, let $w\und{2}$, $z\und{2}$, $f\und{2}$ and $\cD\und{2}$ denote the optimization variable, the data sample, the instance loss and the distribution over $z\und{2}$ corresponding to the lower bound construction in \pref{lem:small_step_size_lb}. The statement of \pref{lem:small_step_size_lb} implies that:  
\begin{enumerate}[label=(\alph*), leftmargin=8mm] 
\item The population loss $F\und{2}(w\und{2}) \ldef{} \En_{z\und{2} \sim \cD\und{2}} \brk*{f(w\und{2}; z\und{2})}$ is $1$-Lipschitz and convex in $w\und{2}$.  Additionally, there exists a point $w^*\und{2} \in \argmin_{w\und{2}} F(w\und{2})$ such that $\nrm{w^*\und{2}} \leq 1$. 
\item The variance of the gradient is bounded, i.e.  for any $w\und{2}$,  
\begin{align*}
\En_{z\und{2} \sim \cD\und{2}} \brk*{\nrm{\grad f\und{2}(w\und{2}; z\und{2}) - \grad F\und{2}(w\und{2})}^2} \leq 1.  \numberthis \label{eq:GD_final_lb2}  
\end{align*}
\item If $\eta < 1/ 64 n^{5/4}$, then for any $T > 0$, the point $\wgd_{(2)}[ \eta, T]$ returned by running GD algorithm with step size $\eta$ for $T$ steps satisfies 
\begin{align*} 
	\En \brk{ F\und{2}\prn[\big]{\wgd_{(2)} [\eta, T]}} - \min_{w\und{2}} F\und{2}(w\und{2}) = \Omega\prn[\Big]{\frac{1}{n^{3/8}}}. \numberthis \label{eq:GD_final_lb8} 
\end{align*}  \end{enumerate} 

\paragraph{Lower bound construction.}  Consider the following learning setting: 
\begin{enumerate}[label=$\bullet$, leftmargin=8mm] 
\item \textit{Optimization variable}: $w$ is defined as the concatenation of the variables $\prn{w\und{1}, w\und{2}}$. 
\item \textit{Data instance}: $z$ is defined as the concatenation of the data instances $\prn{z\und{1}, z\und{2}}$. 
\item \textit{Data distribution}: $\cD$ is defined as ${\cD\und{1} \times \cD\und{2}}$, i.e. $z\und{1}$ and $z\und{2}$ are sampled independently from $\cD\und{1}$ and $\cD\und{2}$ respectively.  
\item \textit{Loss function}: is defined as 
\begin{align*}
f(w; z)  \ldef{} f\und{1}(w\und{1}; z\und{1}) + f\und{2}(w\und{2}; z\und{2}), 
\end{align*} 
Additionally, define the population loss $F(w) \ldef{} \En_{z \sim \cD} \brk*{f(w; z)} = F\und{1}(w\und{1}) + F\und{2}(w\und{2})$.  
\end{enumerate}

\paragraph{Excess risk guarantee for SGD.} We first show that the above learning setting is in SCO. Note that the population loss 
\begin{align*}
F(w) = F\und{1}(w\und{1}) + F\und{2}(w\und{2}). 
\end{align*}

Clearly, $F(w)$ is convex in $w$ since the functions $F\und{1}(w\und{1})$ and $ F\und{2}(w\und{2})$ are convex in $w\und{1}$ and $w\und{2}$ respectively. Next, note that for any $w$ and $w'$, 
\begin{align*}
\abs{F(w) - F(w')} &\overleq{\proman{1}} \abs{F\und{1}(w\und{1}) - F\und{1}(w'\und{1})}  +  \abs{F\und{2}(w\und{2}) - F\und{2}(w'\und{2})} \\
&\overleq{\proman{2}}  3 \nrm{ w\und{1} - w'\und{1}} + \nrm{w\und{2} - w'\und{2}} \\
&\leq 3 \prn[\big]{\sqrt{\nrm{ w\und{1} - w'\und{1}}^2} + \sqrt{\nrm{w\und{2} - w'\und{2}}^2}} \\
&\overleq{\proman{3}} 3 \sqrt{2} \cdot \sqrt{\nrm{ w\und{1} - w'\und{1}}^2 +  \nrm{w\und{2} - w'\und{2}}^2} \\
&\overeq{\proman{4}} 3 \sqrt{2} \cdot \sqrt{\nrm*{w - w'}^2} \\ 
&=3 \sqrt{2} \nrm*{w - w'}, 
\end{align*} where the inequality $\proman{1}$ above follows from Triangle inequality and the inequality $\proman{2}$ is given by the fact that $F\und{1}$ is $3$-Lipschitz in $w\und{1}$ and that $F\und{2}$ is $1$-Lipschitz in $w\und{2}$. The inequality in $\proman{3}$ follows from an application of Jensen's inequality and using concavity of square-root. Finally, the equality in $\proman{4}$ is by construction of the variable $w$ as concatenation of the variables $\prn{w\und{1}, w\und{2}}$. Thus, the population loss $F(w)$ is $3 \sqrt{2}$-Lipschitz in $w$. 

We next show a bound on the variance of the gradient. Note that for any $w$, 
\begin{align*}
\hspace{1.5cm} & \hspace{-1.5cm} \En_{z \sim \cD} \brk*{\nrm{\grad f(w; z) - \grad F(w)}^2} \\ 
&= \En_{z \sim \cD} \brk*{\nrm{\grad f\und{1}(w\und{1}; z\und{1}) + \grad f\und{2}(w\und{2}; z\und{2}) - \grad F\und{1}(w\und{1}) - \grad F\und{2}(w \und{2})}^2} \\ 
&\overleq{\proman{1}}  2 \En_{z\und{1} \sim \cD\und{1}} \brk*{\nrm{\grad f\und{1}(w\und{1}; z\und{1}) - \grad F\und{1}(w\und{1})}^2}  \\
& \qquad \qquad\qquad  +  2\En_{z\und{2} \sim \cD\und{2}} \brk*{\nrm{\grad f\und{2}(w\und{2}; z\und{2})  - \grad F\und{2}(w \und{2})}^2} \\ 
&\overleq{\proman{2}} 72 +  2\En_{z\und{2} \sim \cD\und{2}} \brk*{\nrm{\grad f\und{2}(w\und{2}; z\und{2})  - \grad F\und{2}(w \und{2})}^2} \overleq{\proman{3}} 74, 
\end{align*} where the inequality  $\proman{1}$ above follows from the fact that $(a + b)^2 \leq 2a^2 + 2 b^2$ for any $a, b$. The inequality $\proman{2}$ holds because the function $f\und{1}(w\und{1}; z\und{1})$ is $3$-Lipchitz in $w\und{1}$ for any $z\und{1}$, and because $F\und{1}(w\und{1})$ is $3$-lipschitz in $w\und{1}$. Finally, the inequality $\proman{3}$ is due to the bound in \pref{eq:GD_final_lb2}. 

\par We next show that there exists a $w^* \in \argmin_{w} F(w)$ such that $\nrm*{w^*} = O(1)$. Since the components $\prn{w\und{1}, w\und{2}}$ do not interact with each other in the above construction of the loss function $f$, we note that the point $w^* = (w^*\und{1}, w^*\und{2})$ is a minimizer of $F(w) = F\und{1}(w\und{1}) + F\und{1}(w\und{1})$. This point $w^*$ satisfies 
\begin{align*}
\nrm{w^*} \leq \nrm{w^*\und{1}} + \nrm{w^*\und{2}} = O(1),  
\end{align*} where we used the fact that $\nrm{w^*\und{1}} = O(1)$ and $\nrm{w^*\und{2}} = O(1)$. 

Combining the above derived properties, we get that:
\begin{enumerate}[label=(\alph*), leftmargin=8mm] 
\item The population loss $F(w)$  is $3 \sqrt{2}$-Lipschitz and convex in $w$. 
\item There exists a point $w^* \in \argmin_{w} F(w)$ such that $\nrm*{w^*} = O(1)$. 
\item For any $w$, the gradient variance $\En_{z \sim \cD} \brk*{\nrm{\grad f(w; z) - \grad F(w)}^2}  \leq 72.$ 
\end{enumerate} 

Thus, as a consequence of \pref{thm:app_SGD_convergence}, we get that running SGD with step size $\eta = 1/\sqrt{n}$ and initialialization $w_1 = 0$, returns the point $\wh w^{\mathrm{SGD}}_n$ that satisfies 
\begin{align*}
\En \brk*{F(\wh w^{\mathrm{SGD}}_n)} - \min_{w} F(w) = O\prn[\Big]{\frac{1}{\sqrt{n}}}. \numberthis \label{eq:GD_final_lb3}
\end{align*}

\paragraph{Lower bound for GD.} We next show that  GD algorithm fails to match the performance guarantee of SGD in \pref{eq:GD_final_lb3}, for any step size $\eta$ and time step $T$. 

Let $\wgd[{\eta, T}]$ denote the point returned by running GD algorithm on the function $f$ with step size $\eta$ for $T$ steps. Since the components $w\und{1}$ and $w\und{2}$ do not interact with each other in the GD update step due to the construction of the function $f$, we have that  
\begin{align*}
\En \brk*{F(\wgd[{\eta, T}])} - \min_w F(w) &= \En \brk{F\und{1}(\wgd_{(1)} [\eta, T])} - \min_{w\und{1}} F\und{1}(w\und{1})  \\ 
& \qquad \qquad \qquad + \En \brk{F\und{2}(\wgd_{(2)} [\eta, T])} - \min_{w\und{2}} F\und{2}(w\und{2}) \numberthis \label{eq:GD_final_lb4} 
\end{align*} 

The key idea behind the lower bound for GD is that first components has a lower bound of $\Omega(1/n^{5/12})$ when $\eta$ is larger than $1/64n^{5/4}$. Specifically, in order to improve the excess risk bound over the rate of $1/n^{5/12}$ w.r.t. the variable $w\und{1}$, we need $\eta$ to be smaller than $1/64n^{5/4}$. However, any choice of $\eta < 1 / 64n^{5/4}$ fails to find a good solution w.r.t. the component $w\und{2}$.  We formalize this intuition by considering the two cases (a) $\eta < 1 / 64n^{5/4}$, and (b) $\eta \geq 1 / 64n^{5/4}$ separately below. 
\begin{enumerate}[label=$\bullet$, leftmargin=8mm]
\item \textbf{Case 1: $\mb{\eta < 1 / 64n^{5/4}}$.} Using the fact that $\En \brk{F\und{1}(\wgd_{(1)}[\eta, T])} - \min_{w\und{1}} F\und{1}(w\und{1})  \geq 0$ in \pref{eq:GD_final_lb4}, we get that 
\begin{align*}
\En \brk*{F(\wgd[{\eta, T}])} - \min_w F(w) &\geq \En \brk[\big]{F\und{2}\prn[\big]{\wgd_{(2)}[{ \eta, T}]}} - \min_{w\und{2}} F\und{2}(w\und{2}) \\ 
&= \Omega\prn[\Big]{\frac{1}{n^{3/8}}},  \numberthis \label{eq:GD_final_lb12}
\end{align*} where the inequality in the second line above is due to the lower bound in \pref{eq:GD_final_lb8} which holds for all $\eta < 1 / 64n^{5/4}$ and $T \geq 1$. 

\item  \textbf{Case 2: $\mb{\eta \geq 1 / 64n^{5/4}}$.} Using the fact that $\En \brk{F\und{2}\prn{\wgd_{(2)}[{ \eta, T}]}} - \min_{w\und{2}} F\und{2}(w\und{2}) \geq 0$ in \pref{eq:GD_final_lb4}, we get that 
\begin{align*} 
	\En \brk*{F(\wgd_{\eta, T})} - \min_w F(w) &\geq \En \brk[\big]{F\und{1}\prn[\big]{\wgd_{(1)}[{ \eta, T}]}} - \min_{w\und{1}} F\und{1}(w\und{1}).   \numberthis \label{eq:GD_lower_bound_fn3}
\end{align*}

The lower bound in \pref{eq:GD_final_lb1} suggests that for $\eta \in [1/n^2, 1)$ and $T \geq 1$, 
\begin{align*}
\En \brk[\big]{F\und{1}\prn[\big]{\wgd_{(1)}[{ \eta, T}]}} - \min_{w\und{1}} F\und{1}(w\und{1})	&= \Omega\prn[\Big]{ \frac{1}{\log^4(n)} \min\crl[\Big]{ \eta \sqrt{T} + \frac{1}{\eta T} + \frac{\eta T}{n},1}} \\
	&= \Omega\prn[\Big]{ \frac{1}{\log^4(n)} \min\crl[\Big]{\eta \sqrt{T} + \frac{1}{2\eta T} + \frac{1}{2\eta T} + \frac{\eta T}{n}, 1}} \\
	&\overeq{\proman{1}}  \Omega\prn[\Big]{ \frac{1}{\log^4(n)} \min\crl[\Big]{\eta \sqrt{T} + \frac{1}{2\eta T} + \frac{1}{\sqrt{2n}}, 1}} \numberthis \label{eq:GD_lower_bound_fn2} \\ 
	&\overeq{\proman{2}}  \Omega\prn[\Big]{ \frac{1}{\log^4(n)} \min\crl[\Big]{\eta^{1/3} + \frac{1}{\sqrt{n}}, 1}} 
\end{align*}
 where $\proman{1}$ follows from an application of the AM-GM inequality for the last two terms, and $\proman{2}$ holds by setting $T = 1/\eta^{4/3}$ which minimizes the expression in \pref{eq:GD_lower_bound_fn2}. Plugging the above lower bound in \pref{eq:GD_lower_bound_fn3}, we get that  
\begin{align*}
	\En \brk*{F\prn[\big]{\wgd[{\eta, T}]}} - \min_w F(w) &\geq \Omega\prn[\Big]{ \frac{1}{\log^4(n)} \min\crl[\Big]{\eta^{1/3} + \frac{1}{\sqrt{n}}, 1}}. 
\end{align*}

Finally, using the fact that $\eta \geq 1/ {64 n^{5/4}}$ in the above bound, we get  
\begin{align*}
	\En \brk*{F\prn[\big]{\wgd[{\eta, T}]}} - \min_w F(w) &\geq \Omega\prn[\Big]{ \frac{1}{\log^4(n)} \min\crl[\Big]{\frac{1}{n^{5/12}} + \frac{1}{\sqrt{n}}, 1}}. \numberthis \label{eq:GD_final_lb11}
\end{align*} 
\end{enumerate}

Combining the lower bound from \pref{eq:GD_final_lb12} and  \pref{eq:GD_final_lb11} for the two cases above, we get that for all $\eta \geq 0$ and $T \geq 1$, the point  $\wgd[{\eta, T}]$ returned by running GD algorithm on the function $f$ with step size $\eta$ for $T$ steps satisfies: 
\begin{align*}
	\En \brk*{F\prn[\big]{\wgd[{\eta, T}]}} - \min_w F(w) &\geq \Omega\prn[\Big]{ \frac{1}{\log^4(n)} \min\crl[\Big]{\frac{1}{n^{5/12}}, 1}}. 
\end{align*}
\end{proof}

\section{Missing proofs from \pref{sec:multi_pass}} \label{app:multi_pass} 
%!TEX root=../paper.tex

The pseudocode for multi-pass SGD algorithm given in \pref{alg:multi_pass_SGD} is slightly different from the description of the algorithm given at the beginning of \pref{sec:multi_pass}. In particular, at the start of every epoch, \pref{alg:multi_pass_SGD} uses the following projection operation:
\[\Pi_{w_1, B}(w) = \begin{cases}
   w & \text{ if }\|w - w_1\| \leq B \\
   w_1 + \frac{B}{\|w-w_1\|}(w-w_1) & \text{ otherwise.}
\end{cases}\]
This ensures that the iterate at the start of every epoch has bounded norm. Rest of the algorithm is the same as in the description in the main body.   
 
\begin{algorithm}[h]  
\caption{Multi-pass SGD algorithm}  
\begin{algorithmic}[1]
   \Require Dataset $S = \crl{z_i}_{i=1}^{n}$, number of passes $k$, initial point $w_1$. 
   \State Define $\newn \ldef{} n / 2$, $S_1 \ldef{} \crl*{z_i}_{i=1}^{\newn}$ and $S_2 \ldef{} S \setminus S_1$ 
   \State Initialize $\wh \cW \leftarrow \emptyset$ 
   \For{$j = 1, \ldots, k$} \Comment{Multiple passes} 
   \State $w_{\newn(j-1)  +1} \leftarrow \Pi_{w_1, B}(w_{\newn(j-1)  +1})$
   \State $\eta_j \leftarrow \frac{1}{\sqrt{n j}}$ 
   \For{$i = 1, 3, \ldots, \newn$} 
	\State $w_{\newn(j-1)+i+1} \leftarrow w_{\newn(j-1)+i} - \eta_{j} \grad f(w_{\newn(j-1) +i}; z_i)$. 
	\EndFor
	\State $\wh w_j \leftarrow \frac{1}{\newn j} \sum_{t=1}^{\newn j} w_j$ 
	\State $\wh \cW \leftarrow \wh \cW \cup \crl{\wh w_j}$ 
   \EndFor 
   \State Return the point $\wh w^{\text{MP}} \in \argmin_{w \in \wh \cW} F_{S_2}(W) \ldef{} \frac{1}{\newn} \sum_{i=1}^{\newn} f(w; z_{\newn + 1})$  \Comment{Validation} 
\end{algorithmic} 
\label{alg:multi_pass_SGD}  
\end{algorithm} 

\subsection{Proof of \pref{prop:multi_pass_works}} 
\begin{proof}  Assume that $f$ is bounded by $M$. Since $S_2$ is independent of $S_1$, we note $S_2$ is also independent of the set of points $\wh \cW$. Hoeffding's inequality (\pref{lem:hoeffding}) thus implies that, with probability at least $1 - \delta$, for all $w \in \wh \cW$, 
\begin{align*}
\abs{F_{S_2}(\w) - F(\w)} \leq M \sqrt{\frac{\log(2 k/\delta)}{n} }. 
\end{align*} 

Thus, the returned point $\wh w^{\mathrm{MP}} \in \argmin_{w \in \wh \cW} F_{S_2}(w)$ satisfies: 
\begin{align*}
F(\wh w^{\mathrm{MP}}) &\leq \min_{w \in \wh \cW} F(w) + 2M \sqrt{\frac{\log(2k /\delta)}{n} } \\
&\leq F(\wh w_1) + 2M \sqrt{\frac{\log(2k /\delta)}{n} }.  
\end{align*}

Observing that the point $\wh w_1$ denotes $\wh w^{\mathrm{SGD}}_{n/2}$, and converting the above high probability statement into an in-expectation result (since $f$ is bounded by $M$) gives us the desired statement.  
\end{proof}

\subsection{Proof of \pref{thm:multipass}}

For the upper bound of part (b) in \pref{thm:multipass}, we need the following slight generalization of \pref{thm:sgd_works_main}.

\begin{lemma}\label{lem:multipass_sgd_technical} 
Consider any SCO problem and initial point $w_1$ satisfying Assumption I in \pref{eq:ass1} and Assumption II in \pref{eq:ass2}. Suppose starting from the point $w_1$, we run SGD algorithm with step size $\eta$ for $n$ steps. Then the average iterate $\widehat{\w}^{\mathrm{SGD}}_{n} \ldef{} \frac{1}{n} \sum_{i=1}^n w_i$ enjoys the bound
$$ 
\En_{S}[F(\widehat{\w}^{\mathrm{SGD}}_{n})] - \inf_{w} F(\w) \le \eta (\sigma^2 + L^2) + \frac{1}{2 \eta n} \nrm*{w_1 - w^*}^2,$$ 
for any point $w^* \in \argmin_{w} F(w)$. 
\end{lemma}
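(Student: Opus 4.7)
The plan is to replay the standard SGD convergence argument already carried out in the proof of \pref{thm:app_SGD_convergence}, but to stop one step earlier, before plugging in the specific choice $\eta = 1/\sqrt{n}$ and before using the bound $\|w_1 - w^*\| \leq B$. Since the statement keeps $\eta$ and the initial distance $\|w_1 - w^*\|$ symbolic, this generalization is essentially free.

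Concretely, I would fix any $w^* \in \argmin_w F(w)$ and start from the SGD update $w_{t+1} = w_t - \eta \nabla f(w_t; z_t)$. Expanding $\|w_{t+1} - w^*\|^2$ and rearranging yields
\begin{align*}
\langle \nabla f(w_t; z_t), w_t - w^*\rangle \leq \frac{\eta}{2}\|\nabla f(w_t; z_t)\|^2 + \frac{1}{2\eta}\prn[\big]{\|w_t - w^*\|^2 - \|w_{t+1} - w^*\|^2}.
\end{align*}
Taking expectation conditioned on $w_t$, using $(a+b)^2 \leq 2a^2 + 2b^2$ to split the stochastic gradient into $\nabla F(w_t)$ plus noise, and invoking Assumption II (the $L$-Lipschitzness of $F$ gives $\|\nabla F(w_t)\| \leq L$, and the variance bound gives $\En\|\nabla f(w_t;z_t) - \nabla F(w_t)\|^2 \leq \sigma^2$), I get
\begin{align*}
\langle \nabla F(w_t), w_t - w^*\rangle \leq \eta(\sigma^2 + L^2) + \frac{1}{2\eta}\prn[\big]{\|w_t - w^*\|^2 - \|w_{t+1} - w^*\|^2}.
\end{align*}

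Convexity of $F$ (Assumption I) then gives $F(w_t) - F(w^*) \leq \langle \nabla F(w_t), w_t - w^*\rangle$. Telescoping over $t=1,\dots,n$, the distance terms collapse into $\|w_1 - w^*\|^2 - \|w_{n+1} - w^*\|^2 \leq \|w_1 - w^*\|^2$, and dividing by $n$ yields
\begin{align*}
\frac{1}{n}\sum_{t=1}^n \prn*{F(w_t) - F(w^*)} \leq \eta(\sigma^2 + L^2) + \frac{1}{2\eta n}\|w_1 - w^*\|^2.
\end{align*}
Finally, applying Jensen's inequality to the left-hand side, using that $\widehat{w}^{\mathrm{SGD}}_n = \frac{1}{n}\sum_{t=1}^n w_t$ and that $F$ is convex, produces the stated bound.

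There is essentially no obstacle: the argument is identical to that of \pref{thm:app_SGD_convergence} up to the final substitution step, so the generalization is purely a matter of carrying through the parameter $\eta$ and the distance $\|w_1 - w^*\|$ without specializing them. The only reason to state this as a separate lemma is that in the multi-pass analysis we need to apply this bound at the start of each epoch, where the "initial point" of the current epoch is the last iterate of the previous epoch (hence neither $\eta = 1/\sqrt{n}$ nor the original distance bound $B$ is the right quantity to plug in).
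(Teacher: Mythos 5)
Your proof is correct and is exactly what the paper does: the paper's own proof of this lemma is a one-line pointer to the proof of \pref{thm:app_SGD_convergence}, stopping before the substitutions $\eta = 1/\sqrt{n}$ and $\|w_1 - w^*\| \leq B$. Nothing to add.
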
 
\begin{proof} The proof follows exactly along the lines of the proof of \pref{thm:app_SGD_convergence} given on page~\pageref{thm:app_SGD_convergence}. 
\end{proof} 

\begin{theorem}\label{thm:sgd_works_multipass}
Consider any SCO problem satisfying Assumption I in \pref{eq:ass1} and Assumption II in \pref{eq:ass2}. Suppose we run the following variant of SGD: for some integer $k \geq 1$, we run \pref{alg:multi_pass_SGD} for $nk$ steps with the only change being that fresh samples $z_t$ are used in each update step (line 5) instead of reusing samples. Then the average iterate $\widehat{\w}^{\mathrm{SGD}}_{nk}$ enjoys the bound
$$ 
\En_{S}[F(\widehat{\w}^{\mathrm{SGD}}_{nk})] - \inf_{w:\ \|w - w_1\| \leq B} F(\w) \le 2(B^2 + L^2 + \sigma^2) \sqrt{nk}.  
$$ 
\end{theorem}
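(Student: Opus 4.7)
The plan is to run the standard SGD one-step analysis \emph{within each pass} using the pass-specific step size, telescope within each pass, and then combine across passes via an Abel-summation / projection argument. Under the assumption that fresh samples are used at every step, conditioning each step on the previous iterate makes $\En[\nabla f(w_t;z_t)\mid w_t]=\nabla F(w_t)$ and keeps the gradient noise bounded by $\sigma^2+L^2$ (by Assumption II plus $(a+b)^2\le 2a^2+2b^2$). Thus, exactly as in \pref{thm:app_SGD_convergence}, for each iterate $w_t$ in pass $j$ (step size $\eta_j=1/\sqrt{nj}$) one obtains, by convexity of $F$,
\begin{equation*}
F(w_t)-F(w^*)\;\le\;\tfrac{1}{2\eta_j}\,\En\!\bigl[\|w_t-w^*\|^2-\|w_{t+1}-w^*\|^2\bigm| w_t\bigr]+\tfrac{\eta_j(L^2+\sigma^2)}{2}.
\end{equation*}
Summing the $n$ inequalities inside pass $j$ telescopes the distance terms, yielding
\begin{equation*}
\sum_{t\in\text{pass }j}\!\!\bigl(F(w_t)-F(w^*)\bigr)\;\le\;\tfrac{1}{2\eta_j}\bigl(\|w_{n(j-1)+1}-w^*\|^2-\|w_{nj+1}-w^*\|^2\bigr)+\tfrac{n\eta_j(L^2+\sigma^2)}{2}.
\end{equation*}

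Next I would sum over $j=1,\dots,k$ and exploit the projection step $\Pi_{w_1,B}$ performed at the start of every pass in \pref{alg:multi_pass_SGD}. Because $w^*$ lies in the ball $\{w:\|w-w_1\|\le B\}$ (by the infimum being taken over that ball, together with Assumption II), projection onto this ball is non-expansive toward $w^*$, i.e.\ the ``reset'' iterate $w_{n(j-1)+1}$ after projection satisfies $\|w_{n(j-1)+1}-w^*\|\le\|w_{nj}^{\text{pre-proj}}-w^*\|$ and also $\|w_{n(j-1)+1}-w^*\|\le 2B$ by the triangle inequality. Writing $D_j^2=\|w_{n(j-1)+1}-w^*\|^2$ and $E_j^2=\|w_{nj+1}^{\text{pre-proj}}-w^*\|^2$, the pass-level distance sum becomes
\begin{equation*}
\sum_{j=1}^{k}\tfrac{1}{2\eta_j}(D_j^2-E_j^2)\;\le\;\sum_{j=1}^{k}\tfrac{1}{2\eta_j}(D_j^2-D_{j+1}^2)\;=\;\tfrac{D_1^2}{2\eta_1}+\sum_{j=2}^{k}D_j^2\Bigl(\tfrac{1}{2\eta_j}-\tfrac{1}{2\eta_{j-1}}\Bigr)-\tfrac{D_{k+1}^2}{2\eta_k},
\end{equation*}
using Abel summation. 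Since $1/\eta_j=\sqrt{nj}$ is increasing, each increment $1/(2\eta_j)-1/(2\eta_{j-1})$ is nonnegative, and bounding $D_1^2\le B^2$ and $D_j^2\le 4B^2$ for $j\ge 2$ collapses the sum to at most $2B^2/\eta_k=2B^2\sqrt{nk}$.

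For the step-size cost $\sum_{j=1}^{k}\tfrac{n\eta_j(L^2+\sigma^2)}{2}$, I would just plug in $\eta_j=1/\sqrt{nj}$ and use $\sum_{j=1}^{k}j^{-1/2}\le 2\sqrt{k}$, giving $(L^2+\sigma^2)\sqrt{nk}$. Adding the two contributions, dividing by $nk$, and invoking Jensen's inequality on the average $\widehat{w}^{\mathrm{SGD}}_{nk}=\frac{1}{nk}\sum_{t=1}^{nk}w_t$ yields
\begin{equation*}
\En\bigl[F(\widehat{w}^{\mathrm{SGD}}_{nk})\bigr]-\!\!\inf_{\|w-w_1\|\le B}F(w)\;\le\;\frac{2B^2+L^2+\sigma^2}{\sqrt{nk}},
\end{equation*}
which is (up to constants) the claimed $\tfrac{2(B^2+L^2+\sigma^2)}{\sqrt{nk}}$ rate; the displayed statement appears to be missing a division by $\sqrt{nk}$.

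The main obstacle is dealing with the \emph{non-constant} step-size schedule across passes, which is why one cannot simply quote \pref{thm:sgd_works_main} or \pref{lem:multipass_sgd_technical} per pass and add — doing so would incur a $\sum_j\sqrt{j/n}=\Theta(k^{3/2}/\sqrt{n})$ term that degrades the rate. The Abel-summation step, together with the non-expansiveness of the start-of-epoch projection (which ensures $D_j$ never blows up even though there is no projection inside a pass), is what lets the $1/\eta_j$ differences telescope and recover the optimal $1/\sqrt{nk}$ rate.
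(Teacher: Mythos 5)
Your proof is correct and takes essentially the same route as the paper's: a one-step potential analysis telescoped within each pass, an Abel-summation rearrangement across passes (exploiting non-expansiveness of the start-of-epoch projection to keep all $D_j^2 \le 4B^2$), the observation that the $1/\eta_j$ differences are nonnegative so the sum collapses to $O(B^2/\eta_k)$, and a final application of Jensen's inequality. You also correctly caught that the theorem's displayed bound should be $2(B^2+L^2+\sigma^2)/\sqrt{nk}$ (the paper derives the cumulative bound and then divides by $nk$ via Jensen, so the right-hand side as printed is missing the $1/(nk)$ normalization).
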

\begin{proof}
   The proof is completely standard, we just include it here for completeness. Let $w^* \in \bbR^d$ be any point such that $\|w_1 - w^*\| \leq B$. We use $\|w_t - w^*\|^2$ as a potential function. Within epoch $j$, we have
   \[\|w_{t+1} - w^*\|^2 = \|w_t - w^*\|^2 - 2\eta_j \langle \nabla f(w_t; z_t), (w_t - w^*)\rangle + \eta_j^2\|\nabla f(w_t; z_t)\|^2.\]
   Since $\|\nabla f(w_t; z_t)\|^2 \leq 2L^2 + 2\sigma^2$, by rearranging the above, taking expectation over $z_t$ conditioned on $w_t$, and using convexity of $F$ we have
   \[F(w_t) - F(w^*) \leq \frac{\|w_t - w^*\|^2 - \En[\|w_{t+1} - w^*\|^2 | w_t]}{2\eta_j} + \eta_j (L^2 + \sigma^2).\]
   Now taking expectation over the randomness in $w_t$, and summing up the inequality for all iterations in epoch $j$, we get 
   \[\sum_{t=n(j-1)+1}^{nj} F(w_t) - F(w^*) \leq \frac{\En[\|w_{n(j-1)+1} - w^*\|^2 - \En[\|w'_{nj+1} - w^*\|^2]}{2\eta_j} + \eta_j (L^2 + \sigma^2)n,\]
   where we use the notation $w'_{nj+1}$ to denote the iterate computed by SGD before the $\Pi_{w_1, B}$ projection to generate $w_{nj+1}$. Summing over all the $k$ epochs, we have
   \begin{align*}
   \sum_{t=1}^{nk} F(w_t) - F(w^*) & \leq \frac{\|w_1 - w^*\|^2}{2\eta_1} + \sum_{j=1}^{k-1}\prn[\bigg]{\frac{\En[\|w_{nj+1} - w^*\|^2]}{2\eta_{j+1}}- \frac{\En[\|w'_{nj+1} - w^*\|^2]}{2\eta_j}} \\ 
   & - \frac{\En[\|w_{nk+1} - w^*\|^2]}{2\eta_k} + \sum_{j=1}^k \eta_j (L^2 + \sigma^2)n\\
   & \overleq{\proman{1}} \frac{\|w_1 - w^*\|^2}{2\eta_1} + \sum_{j=1}^{k-1}\prn[\Big]{\frac{1}{2\eta_{j+1}} - \frac{1}{2\eta_j}}\En[\|w_{nj+1} - w^*\|^2]  + \sum_{j=1}^k \eta_j (L^2 + \sigma^2)n \\ 
   & \overleq{\proman{2}} \frac{2B^2}{\eta_k} + \sum_{j=1}^k \eta_j (L^2 + \sigma^2)n \\
   & \overleq{\proman{3}} 2(B^2 + L^2 + \sigma^2) \sqrt{nk}. 
   \end{align*}
   Here, $\proman{1}$ follows since $\|w_{nj+1} - w^*\|^2 \leq \|w'_{nj+1} - w^*\|^2$ since the $\Pi_{w_1, B}$ projection can only reduce the distance to $w^*$, and $\proman{2}$ follows since $\|w_{nj+1} - w^*\|^2 \leq 4B^2$ for all $j = 0, 1, \ldots, k-1$, and telescoping, and finally $\proman{3}$ follows by plugging in $\eta_j = 1 / \sqrt{nj}$. 

   Finally, the stated bound on $F(\widehat{\w}^{\mathrm{SGD}}_{nk})$ follows via an application of Jensen's inequality to the convex function $F$.  The dependence on problem specific constants ($\sigma, L$ and $B$) in the above bound can be improved further with a different choice of the step size $\eta$; getting the optimal dependence on these constants, however, is not the focus of this work. 
\end{proof}

We now prove \pref{thm:multipass}. To begin, we first define the instance space, the loss function, and the data distribution. 

\paragraph{Instance space and loss function.} We define the instance space 
$\cZ = \{0,1\}^{kd} \times \{\pm 1\}^k \times \{0,e_1,\ldots,e_d\}^k$. That is, each instance $z \in \cZ$ can be written as a 3-tuple $z = (x,y,\alpha)$ where $x \in \{0,1\}^{kd}$, $y \in \{\pm1\}^k$, and $\alpha \in \{0,e_1,\ldots,e_d\}^k$. For each $s \in [k]$, we define $x_s, y_s, \alpha_s$ to be the $s$-th parts of $x, y, \alpha$ respectively when these vectors are split into $k$ contiguous equal sized blocks of sizes $d, 1, 1$ respectively. Define the function $\fD: \bbR^{d+ n + 2} \times \cZ \to \bbR$ on the variables $u \in \bbR, v \in \bbR^{n+1}, \tau \in \bbR^d$ and instance $z = (x, y, \alpha) \in \cZ$ as follows. First, define the intervals $I_1 = (-\infty, \frac{1}{k}]$, $I_s = (\frac{(s-1)}{k}, \frac{s}{k}]$ for $s = 2, 3, \ldots, k-1$, and $I_k = (\frac{(k-1)}{k},\infty)$. Then define 
\vspace{-0.1em} 
\begin{align} 
  \fD((u,v,\tau); z) &:= f_N(v) + \sum_{s=1}^{k} \!\indic[u \in I_s] \fA(\tau; (x_s, y_s, \alpha_s)) - \tfrac{2}{\sqrt{kn}}\min\{u, 1\} + c_1,\tag{C} \label{eq:multipass-fn}
\end{align} 
where the function $f_N: \bbR^{n+1} \to \bbR$ is defined as 
\[f_N(v) := (\tfrac{\sqrt{n+1}}{1+\sqrt{n+1}})\max_{i \in [n+1]} v_i + (\tfrac{1}{2+2\sqrt{n+1}})\|v\|^2,\]
and the constant $c_1 = \frac{2}{\sqrt{kn}} + \frac{1}{2 + 2 \sqrt{n + 1}}$. Finally, we define the variable $w$ to denote the tuple $(u, v, \tau)$. We also assume that $n \geq 300$ and $d \geq \log(10)2^n + 1$. 

For the purpose of defining the SGD update on the function $\fD(\cdot;z)$ for any given $z \in \cZ$, we make the following convention for defining subgradients: when $u \in I_s$ for some $s \in [k]$, we use a subgradient of 
\[f_N(v) + \fA(\tau; (x_s, y_s, \alpha_s)) - \tfrac{2}{\sqrt{kn}}\min\{u, 1\}.\] It is easy to check that with the above convention, the norm of the subgradient is always $O(1)$.  

\paragraph{Input distribution.} The samples $z \in \cZ$ are drawn by sampling $\langle (x_s, y_s, \alpha_s) \rangle_{s=1}^k \sim \cD(\frac{1}{10}, \frac{1}{2}, e_{j^*})^{\otimes k}$, for some $j^* \in [d]$ that will be defined in the proof (see \pref{def:violating_distribution_defn} for the definition of $\cD$). 

\begin{proof}[{\bf Proof of \pref{thm:multipass}}]  
Let $\cD'$ denote the distribution specified over the instance space $\cZ$. The exact choice of $j^*$ will be specified later in this proof. Note that due to the indicators in \eqref{eq:multipass-fn}, the population loss under $\fD$ when the data are sampled from $\cD'$ can be written as 
\begin{equation} \label{eq:F_D-pop-loss}
F(w) = \En_{z \sim \cD'}[\fD((u, v, \tau); z')] =  f_N(v) + \En_{z\sim \cD'}[\fA(\tau; z)] - \tfrac{2}{\sqrt{kn}}\min\{u, 1\}. 
\end{equation} 
Note that by \pref{lem:pop-loss-convexity}, $\En_{z\sim \cD'}[\fA(\tau; z)]$ is a convex function of $\tau$. Hence, $F$ is convex, $1$-Lispchitz and $\tau = e_{j^*}$ denotes its unique minimizer. Also note that $F$ nicely separates out as convex functions of $u, v, \tau$ and hence it is minimized by minimizing the component functions separately. In particular, it is easy to check by computing the subgradient that the optima are
\[u = 1,\qquad v = -\tfrac{1}{\sqrt{n+1}}\mathbf{1}, \qquad w = e_{j^*},\]
where $\mathbf{1}$ is the all 1's vector. The corresponding optimal values are $-\frac{2}{\sqrt{kn}}$, $-\frac{1}{2+2\sqrt{n+1}}$, and $0$. This implies that $F^* = \min_{w} F(w) = 0$. More importantly, the suboptimality gap also decomposes as the sum of suboptimality gaps for the three functions, a fact we will use repeatedly in our analysis. 

Finally, we will assume that the initial value of the variables $u, v, \tau$ is $0$. 
\paragraph{Proof of part-(a).}  Since $\nrm{\grad f(u,v,\tau; z)} \leq O(1)$ for any $z \in \cZ$ and $u, v, \tau$, and the optimal values of $u, v, \tau$ are $O(1)$ in magnitude, and $F(\omega)$ is convex in $\omega = (u, v, \tau)$, \pref{thm:sgd_works_main} implies that single pass SGD run for $n$ steps with a step size of $\frac{1}{\sqrt{n}}$ will obtain $O(\frac{1}{\sqrt{n}})$ suboptimality gap in expectation. 

As for the lower bound, note that the function $f_N$ is exactly the same one constructed in \citep[Section 3.2.1]{nesterov}. There, it is shown (in Theorem 3.2.1) that a certain class of gradient based methods (including GD/SGD) have suboptimality at least $\Omega(1/\sqrt{n})$ after $n$ iterations, which completes the proof of part-(a) of \pref{thm:multipass}.

\paragraph{Proof of part-(b).} In the following, we give convergence guarantee for multi-pass SGD algorithm that does $k$ passes over the dataset $S_1$ of $n/2$ samples (see \pref{alg:multi_pass_SGD}).  First, note that the deterministic components of the function $\fD$, viz. $f_N$ and $- \tfrac{2}{\sqrt{kn}}\min\{u, 1\}$, are unaffected by the randomized component in the iterates produced by SGD update. Since these deterministic components are $1$-Lipschitz, their optimal values are $O(1)$ in magnitude and the corresponding population losses are convex, we conclude via \pref{thm:sgd_works_multipass} that the multi-pass SGD, which is equivalent to GD, attains a suboptimality gap of $O(\frac{1}{\sqrt{kn}})$ on these components. Specifically, the points $\wh u_k$ and $\wh v_k$ returned after the $k$-th pass satisfy 
\begin{align*}
f_N(\wh v_k) - \tfrac{2}{\sqrt{kn}}\min\{\wh u_k, 1\}  - \min_{v, u} \prn*{f_N(v) - \tfrac{2}{\sqrt{kn}}\min\{u, 1 \} } \leq O(\frac{1}{\sqrt{nk}}). 
\end{align*}

Coming to the randomized component of $\fD$, we note that as long as $u < 1$, the gradient of $\fD$ with respect to $u$ is always $- 2/ {\sqrt{kn}}$. Thus, $u$ keeps monotonically increasing at each step of SGD with an increment equal to the step size times $2 / {\sqrt{kn}}$. Suppose we run $k$ pass SGD with step size set to $1 / {\sqrt{kn}}$ and $u$ starting at $0$, where in each pass we take $n/2$ steps of SGD using the dataset $S_1$. It is easy to see that for all $s \in [k]$, the value of $u$ stays in $I_s$ in the $s$-th pass, and traverses to the next interval $I_{s+1}$ as soon as the $(s+1)$-th pass starts. Thus, within each pass $s \in [k]$ and for all $i \in [n]$, multi-pass SGD encounters a fresh i.i.d.\ sample $(x_{i,s}, y_{i,s}, \alpha_{i,s}) \sim \cD$ for every update. This is thus equivalent to running SGD with $kn/2$ such i.i.d.\ samples over the first $kn/2$ iterations. An application of \pref{thm:sgd_works_multipass} (where we set $n$ to be $n/2$) implies that the suboptimality gap of the iterate $\wh \tau_k$ generated after the $k$th-pass of SGD algorithm on the randomized component of $\fD$ is
\begin{align*}
\En_{z\sim \cD}[\fA(\wh \tau_k; z)] - \min_{w} \En_{z\sim \cD}[\fA(\w; z)] &\leq O\prn[\Big]{\frac{1}{\sqrt{nk}}}. 
\end{align*}
Taking the above two bounds together, we get that the point $\wh w_k = (\wh u_k, \wh v_k, \wh \tau_k)$ satisfies
\begin{align*}
F(\wh w_k) = F(\wh w_k) - \min_{w} F(w) \leq O\prn[\Big]{\frac{1}{\sqrt{nk}}},  \numberthis \label{eq:multipass_bound1} 
\end{align*} where the equality in the first line above follows from using the fact that $ \min_{w} F(w) = 0$ by construction. 

Finally, note that the returned point $\wmp \in \argmin_{w \in \wh \cW} F_{S_2}(W)$ where $\cW = \crl*{\wh w_1, \ldots, \wh w_k}$. \pref{lem:concentration_fast_rate} thus implies that the point $\wmp$ satisfies
\begin{align*} 
F(\wmp) &\leq \min_{j \in [k]} F(\wh w_j) + O\prn[\Big]{ \frac{L \log(k/\delta)}{n} + \sqrt{\frac{ \min_{j \in [k]} F(\wh w_j)\,  L \log(k/\delta)}{n}}} \\
&\leq F(\wh w_k) + O\prn[\Big]{ \frac{L \log(k/\delta)}{n} + \sqrt{\frac{ F(\wh w_k)\,  L \log(k/\delta)}{n}}} \\
&\leq \frac{1}{\sqrt{nk}} + O\prn[\Big]{ \frac{\log(k/\delta)}{n} + \frac{1}{n} \sqrt{\frac{\log(k/\delta)}{k}}}, 
\end{align*} where $L$-denotes the Lipschitz constant for the function $f$ and the inequality in the last line follows by plugging in the bound on $F(\wh w_k)$ from \pref{eq:multipass_bound1}  and using the fact that $L = O(1)$. Finally, observing that $\min_{w} F(w) = 0$, the above bound implies that 
\begin{align*}
F(\wmp)  - \min_{w} F(w) \leq O\prn[\Big]{\frac{1}{\sqrt{nk}}} 
\end{align*} for $k = o(n)$; proving the desired claim. 

\paragraph{Proof of part-(c).} The proof follows exactly along the lines of the proof of \pref{thm:lowersco} in \pref{app:thm_lowersco_proof}. Recall that $n \geq 300$ and $d \geq \log(10)2^n + 1$. 

Assume, for the sake of contradiction, that there exists a regularizer $R: \bbR \times \bbR^{n +1} \times \bbR^d \to \bbR$ such that for any distribution $\cD = \cD(\frac{1}{10}, \frac{1}{2}, e_{j^*})$ (see \pref{def:violating_distribution_defn}) for generating the components of $z$, the expected suboptimality gap for RERM is at most $\epsilon = 1 / {20000}$. Then, by Markov's inequality, with probability at least $0.9$ over the choice of sample set $S$, the suboptimality gap is at most $10\epsilon$.

First, since the population loss separates out nicely in terms of losses for $u, v, \tau$ in \eqref{eq:F_D-pop-loss}, we conclude that if $(u', v', \tau')$ is a $10\epsilon$-suboptimal minimizer for the population loss when components of $z$ are drawn from $\cD$, then $u'$, $v'$, and $\tau'$ must be individually $10\epsilon$-suboptimal minimizers for the functions $-\tfrac{1}{\sqrt{kn}}\min\{u, 1\}$, $f_N(v)$ and $\En_{z\sim \cD}[\fA(\tau; z)]$ respectively. Additionally, from \pref{lem:pop-loss-convexity}, we must have that $\|\tau' - e_{j^*}\| \leq 100\epsilon$. With this insight, for any $j \in [d]$, define the set 
\begin{align*} 
G_j &:= \left\{(u, v, \tau):\ -\tfrac{1}{\sqrt{kn}}\min\{u, 1\} \leq -\tfrac{1}{\sqrt{kn}} + 10\epsilon,\ f_N(v) \leq -\tfrac{1}{2+2\sqrt{n+1}} + 10\epsilon,\ \|\tau\!-\!e_j\| \leq 100\epsilon\right\}.
\end{align*}

This set covers all possible $10\epsilon$-suboptimal minimizers of the population loss. Also, for convenience, we use the notation \[ \fullvariable = (u, v, \tau).\]

As in the proof of \pref{thm:lowersco}, define the points $\fullvariable_j^*$ for $j \in \{0, 1, \ldots, d\}$ to be
\begin{align*} 
\fullvariable^*_{j}  \in \argmin_{\fullvariable \in G_j} R(\fullvariable)
\end{align*} 
Now we are ready to define $j^*$: let $j^* \in [d]$ be any element of $\argmax_{j \in [d]} ~ R(\fullvariable^*_j)$. The proof now follows similarly to the proof of \pref{thm:lowersco}. In particular, in the following, we condition on the occurrence of the event $E$ defined in the proof of \pref{thm:lowersco}. 

Now define $c := -\frac{1}{\sqrt{kn}} - \frac{1}{2+2\sqrt{n+1}}$. It is easy to see that for any $\fullvariable = (u, v, \tau) \in G_j$ for any $j \in [d]$, we have
\begin{equation} \label{eq:u-v-suboptimality}
  c \leq -\tfrac{1}{\sqrt{kn}}\min\{u, 1\} + f_N(v) \leq c + 20\epsilon.
\end{equation}

Next, consider the point $\fullvariable^*_{\wh j}$ defined in \pref{eq:w_jstar_defn} for the special coordinate $\wh j$. Reasoning similarly to the proof of \pref{thm:lowersco}, we conclude that that $\fullvariable^*_{\wh j}$ cannot be an $\epsilon$-suboptimal minimizer of $F$, and thus $\fullvariable_{\wh j}^*$ can not be a minimizer of the regularized empirical risk (as all RERM solutions are $10\epsilon$-suboptimal w.r.t. the population loss $F\,$). Thus, we must have   
\begin{align*} 
\wh F(\fullvariable^*_{\wh j}) + R(\fullvariable^*_{\wh j}) &> \min_{\fullvariable \in G_{j^*}} \prn*{\wh F(\fullvariable) + R(\fullvariable)} \\ &\geq \min_{\fullvariable \in G_{j^*}} \wh F(\fullvariable) + \min_{\fullvariable \in G_{j^*}} R(\fullvariable) \\ &\geq  R(\fullvariable^*_{j^*}) + c - 100\epsilon, \numberthis \label{eq:multipass_rerm_fails_5}
\end{align*} 
where $\wh F$ denotes the empirical loss on $S$,, and the last inequality above follows from \eqref{eq:u-v-suboptimality} and due to the fact that: if $\fullvariable^* = (u^*, v^*, \tau^*)$ is the minimizer of $\wh F(\fullvariable)$ over $G_{j^*}$, and $s \in [k]$ is the index such that $u^* \in I_s$, then the function $\tau \mapsto \frac{1}{n}\sum_{i=1}^n y_{i,s}\|\tau \odot x_{i,s}\|$ is 1-Lipschitz and takes the value $0$ at $\tau = e_{j^*}$. 

On the other hand, if $\fullvariable^*_{\wh j} = (\wh u, \wh v, \wh \tau)$ and $s \in [k]$ is the index such that $\wh u \in I_s$, then using \eqref{eq:u-v-suboptimality}, we have 
\begin{align*}
\wh F(\fullvariable^*_{\wh j}) &\leq \tfrac{1}{n}\textstyle\sum_{i=1}^n y_{i,s}\|(\wh \tau - e_{j^*}) \odot x_{i,s}\| -\frac{1}{\sqrt{kn}}\min\{\wh u, 1\} + f_N(\wh v) \\ 
&\leq \tfrac{1}{n}\textstyle\sum_{i=1}^n y_{i,s}\|(\wh \tau - e_{j^*}) \odot x_{i,s}\| + c + 20\epsilon. \numberthis \label{eq:multipass_rerm_fails_6} 
\end{align*}
Now, since we are conditioning on the occurrence of the event $E$, using the same chain of inequalities leading to \pref{eq:RERM_proof_gen6_1}, we conclude that 
\begin{equation} \label{eq:multipass_rerm_fails_7}
   \tfrac{1}{n}\textstyle\sum_{i=1}^n y_{i,s}\|(\wh \tau - e_{j^*}) \odot x_{i,s}\| \leq 100\epsilon - \frac{3}{200}. 
\end{equation}
Combining \pref{eq:multipass_rerm_fails_5}, \pref{eq:multipass_rerm_fails_6}, and \pref{eq:multipass_rerm_fails_7} and rearranging the terms, we get  
\begin{align*}
220\epsilon \geq R(\fullvariable^*_{j^*}) -  R(\fullvariable^*_{\wh j}) + \frac{3}{200} \geq \frac{3}{200}. \numberthis \label{eq:multipass_rerm_doesnt_work3} 
\end{align*} 
where the second inequality above holds because $j^* \in \argmax_{j \in [d]} R(\fullvariable^*_j)$ (by definition). Thus, $\epsilon \geq 3 / 44000 > 1 / 20000$, a contradiction, as desired.
\end{proof}

\section{Missing proofs from \pref{sec:dist_free_rerm}} 
%!TEX root=../paper.tex
Throughout this section, we assume that a data sample $z$ consists of $(x, \alpha)$, where $x \in \{0,1\}^d$ and $\alpha \in \{0,e_1,\ldots,e_d\}$. The loss function $f_{(\ref{eq:empfn_basic_cons})}:\reals^d \times \cZ$ is given by: 
\begin{align} 
	f_{(\ref{eq:empfn_basic_cons})}(\w; z) = & \frac{1}{2} \nrm*{\prn*{\w - \alpha} \odot x}^2 - \frac{c_n}{2} \nrm*{\w - \alpha}^2  +  \max\{1, \nrm*{\w}^4\},   \label{eq:second_function} 
\end{align} 
where $c_n \ldef{}  n^{-(\frac{1}{4} - \gamma)}$ for some $\gamma > 0$. We will also assume that $c_n \leq \frac{1}{4}$. Furthermore, since $f_{(\ref{eq:empfn_basic_cons})}(\w; z)$ is not differentiable when $\|\w\| = 1$, we make the following convention to define the sub-gradient:
\begin{align*}
   \partial f(\w; z) &=  {(\w - \alpha) \odot x - c_n(\w - \alpha)} + 4 \indicator{\|\w\| > 1} \nrm*{\w}^2\w. \numberthis \label{eq:subgradient-definition}
\end{align*}
 
 Whenever clear from the context in the rest of the section, we will ignore the subscript $(\ref{eq:empfn_basic_cons})$, and denote the loss function by $f(w; z)$.  Additionally we define the following distribution over the samples $z = (x, \alpha)$. 

\begin{definition} \label{def:violating_distribution_defn_a2}
For parameters $\delta \in [0, \frac{1}{2}]$, $c_n \in [0, 1]$, and $v \in \crl{0, e_1, \ldots, e_d}$, define the distribution $\bar{\cD}(\delta, c_n, v)$ over $z = (x, \alpha)$ as follows: 
\begin{align*} x \sim \cB\prn*{c_n + \delta}^{\otimes d} \qquad \text{and} \qquad \alpha = v. 
\end{align*} 
The components $x$ and $\alpha$ are sampled independently. 
\end{definition}

\subsection{Supporting technical results} 
Before delving into the proofs, we first establish some technical properties of the empirical loss $f(\w; z)$ and the population loss $F(\w)$. The following lemma states that for any distribution $\cD$, the minimizers of the population loss $F(\w) \ldef{} \En_{z \sim \cD} \brk*{f(\w; z)}$ are bounded in norm. 

\begin{lemma} \label{lem:minimizer-bound} 
Suppose $c_n \leq \frac{1}{4}$. Then for any data distribution $\cD$, the minimizer of $F(\w) \ldef{} \En_{z \sim \cD} \brk*{f(\w; z)}$ has norm at most $1$. 
\end{lemma}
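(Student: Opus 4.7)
The plan is to argue by contradiction: assume a minimizer $w^*$ of $F$ has $\|w^*\| > 1$, and show that the inner product $\langle \nabla F(w^*), w^* \rangle$ must be strictly positive, which is impossible at a minimizer since then the directional derivative along $-w^*$ would be strictly negative. Note that $F$ is continuous and coercive (the $\max\{1,\|w\|^4\}$ term forces $F(w)\to\infty$ as $\|w\|\to\infty$), so a global minimizer exists; and for $\|w\| > 1$ the function $F$ is differentiable, so I can work with the ordinary gradient given by \pref{eq:subgradient-definition}.

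Next I would write out $\nabla F(w) = \En_z[(w-\alpha)\odot x] - c_n \En_z[w-\alpha] + 4\|w\|^2 w$ for $\|w\|>1$ and compute
\begin{align*}
\langle \nabla F(w), w \rangle
= \sum_{j} w_j^2\, \En[x_j] \;-\; \En_z\langle w, \alpha \odot x\rangle \;-\; c_n\|w\|^2 \;+\; c_n\En_z\langle w,\alpha\rangle \;+\; 4\|w\|^4.
\end{align*}
The first term is nonnegative since $x_j \in \{0,1\}$. The two cross terms involving $\alpha$ are bounded in absolute value using $\|\alpha\|\le 1$ and $\|\alpha\odot x\|\le\|\alpha\|\le 1$, giving $|\En_z\langle w,\alpha\odot x\rangle| \le \|w\|$ and $|\En_z\langle w,\alpha\rangle| \le \|w\|$. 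Combining these estimates with the hypothesis $c_n \le 1/4$ yields
\begin{align*}
\langle \nabla F(w), w \rangle \;\geq\; 4\|w\|^4 - c_n\|w\|^2 - (1+c_n)\|w\| \;\geq\; 4\|w\|^4 - \tfrac{1}{4}\|w\|^2 - \tfrac{5}{4}\|w\|.
\end{align*}

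Finally I would verify that the right-hand side is strictly positive for every $\|w\| > 1$: at $\|w\|=1$ the value is $5/2>0$, and differentiating in $r = \|w\|$ gives $16r^3 - r/2 - 5/4$, which is positive and increasing on $r\ge 1$. Hence $\langle \nabla F(w^*), w^*\rangle > 0$, contradicting first-order optimality and proving $\|w^*\|\le 1$. I do not anticipate a real obstacle; the only subtle point is that $F$ is nonconvex (the $-\tfrac{c_n}{2}\|w-\alpha\|^2$ term is concave), so I must argue via the directional derivative at a candidate minimizer rather than invoking convexity, and I must ensure the bound $4\|w\|^4 - \tfrac{1}{4}\|w\|^2 - \tfrac{5}{4}\|w\| > 0$ holds strictly for all $\|w\|>1$ (which is where the constraint $c_n \le 1/4$ is actually used).
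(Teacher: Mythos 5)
Your proof is correct, but it takes a genuinely different route from the paper's. The paper never touches first-order conditions: it simply lower-bounds $F(w)$ for $\|w\|>1$ by dropping the nonnegative $\tfrac{1}{2}\|(w-\alpha)\odot x\|^2$ term, using $\|w-\alpha\|^2\leq 2\|w\|^2+2\|\alpha\|^2$ and $\|\alpha\|\leq 1$ to get $F(w)>1-2c_n$, and then observes $F(0)\leq\tfrac{1-c_n}{2}\leq 1-2c_n$ when $c_n\leq\tfrac13$. This is a pure value comparison and sidesteps any question of differentiability, existence of a minimizer, or validity of first-order conditions for a nonconvex $F$ — which are the three extra pieces of machinery your argument has to supply (coercivity for existence, smoothness of $F$ on $\{\|w\|>1\}$, and the observation that $\nabla F(w^*)=0$ must hold at an unconstrained minimizer regardless of convexity). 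Your gradient computation and the polynomial bound $4r^4-\tfrac14 r^2-\tfrac54 r>0$ for $r\geq 1$ are all correct, so the argument goes through; it is just heavier than what the problem requires. One small correction to your closing remark: the constraint $c_n\leq\tfrac14$ is not really load-bearing in your version of the argument — your inequality $4r^4-c_n r^2-(1+c_n)r>0$ for $r\geq 1$ already holds for any $c_n\leq 1$, since the value at $r=1$ is $3-2c_n$. The hypothesis $c_n\leq\tfrac14$ is what makes the paper's simpler comparison $\tfrac{1-c_n}{2}\leq 1-2c_n$ go through, which is one reason that route is the natural one to take here.
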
 	
\begin{proof} 
Note that for any $\w$ such that $\|\w\| > 1$, we have 
\[f(\w; z) \geq - c_n(\|\w\|^2 + \|\alpha\|^2) + \|\w\|^4 \geq -c_n\|\w\|^2 - c_n + \|\w\|^4 > 1 - 2c_n.\]
Thus, $F(\w) > 1 - 2c_n$.
On the other hand, $f(0; z) = \frac{1}{2}\|\alpha \odot x\|^2 - \frac{c_n}{2}\|\alpha\|^2 \leq \frac{1-c_n}{2}$. Hence, $F(0) \leq \frac{1-c_n}{2} \leq 1 - 2c_n$ since $c_n \leq \frac{1}{4}$. This implies that such a $\w$ is not a minimizer of $F(\w)$. 
\end{proof}

We show now that the single stochastic gradient descent update keeps the iterates bounded as long as the learning rate $\eta$ is chosen small enough. 

\begin{lemma} \label{lem:bounded-iterates}
Suppose that the learning rate $\eta \leq \frac{1}{100}$ and the point $\w$ satisfies $\nrm*{\w} \leq 2.5$. Let $\w^+ = \w - \eta \cdot \partial f(\w; z)$ for an arbitrary data point $z$ in the support of the data distribution. Then $\nrm*{\w^+} \leq 2.5$.
\end{lemma}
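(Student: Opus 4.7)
The plan is to split into two cases based on whether $\nrm*{w} \le 1$ or $\nrm*{w} > 1$, since the indicator in the subgradient formula \pref{eq:subgradient-definition} activates only in the latter case.

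In the easy case $\nrm*{w} \le 1$, the subgradient reduces to $\partial f(w; z) = (w-\alpha)\odot x - c_n(w-\alpha)$. Using $\nrm*{x}_\infty \le 1$, the triangle inequality gives $\nrm*{\partial f(w;z)} \le (1+c_n)\nrm*{w-\alpha} \le 2(1+c_n) \le 5/2$, where we used $c_n \le 1/4$, $\nrm*{w} \le 1$, and $\nrm*{\alpha} \le 1$. A single update therefore gives $\nrm*{w^+} \le \nrm*{w} + \eta \cdot 5/2 \le 1 + 1/40 < 2.5$.

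The harder case is $1 < \nrm*{w} \le 2.5$. Here I would rewrite the update by grouping the cubic term together with $w$:
\[w^+ = \prn*{1 - 4\eta \nrm*{w}^2} w - \eta (w-\alpha) \odot x + \eta c_n (w-\alpha).\]
The key observation is that the scalar coefficient $1 - 4\eta \nrm*{w}^2$ is nonnegative when $\eta \le 1/100$: since $\nrm*{w}^2 \le 6.25$, one has $4\eta \nrm*{w}^2 \le 1/4$. Applying the triangle inequality and bounding $\nrm*{w-\alpha} \le \nrm*{w}+1$,
\[\nrm*{w^+} \le (1 - 4\eta \nrm*{w}^2)\nrm*{w} + \eta(1+c_n)(\nrm*{w}+1).\]
Writing $r = \nrm*{w}$ and using $c_n \le 1/4$, $\eta \le 1/100$, it suffices to verify that $g(r) \ldef{} r - 4\eta r^3 + (5\eta/4)(r+1) \le 2.5$ on $(1, 2.5]$. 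Computing $g'(r) = 1 - 12\eta r^2 + 5\eta/4$ and checking at the endpoint $g'(2.5) = 1 - 3/4 + 1/80 > 0$ shows $g$ is increasing on $(1, 2.5]$, so its maximum is $g(2.5) = 2.5 - 0.625 + 7/160 < 2 < 2.5$.

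The main (and only non-routine) step is to see that factoring the update as $(1 - 4\eta \nrm*{w}^2)w + (\text{perturbation})$ yields a nonnegative shrinkage factor; the cubic regularizer $\max\crl*{1,\nrm*{w}^4}$ then actively pulls iterates back once $\nrm*{w}$ exceeds $1$. After that, the bound follows from one-variable monotone calculus with substantial numerical slack in the constants.
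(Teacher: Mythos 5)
Your proof is correct and takes a genuinely different route from the paper's. The paper splits at $\nrm{\w} < 2$ versus $2 \leq \nrm{\w} \leq 2.5$; in the easier case it crudely bounds $\nrm{\partial f} \leq 36$ and additively absorbs the step, and in the harder case it expands $\nrm{\w^+}^2$ and establishes a lower bound $\tri{\w, \partial f(\w;z)} \geq 60$ together with $\nrm{\partial f} \leq 70$, so that $\nrm{\w^+}^2 \leq \nrm{\w}^2 + 4900\eta^2 - 120\eta \leq \nrm{\w}^2$. You instead split exactly at the kink $\nrm{\w} = 1$ of the regularizer, and in the regime $\nrm{\w} > 1$ factor the update as $\w^+ = (1 - 4\eta\nrm{\w}^2)\w + (\text{perturbation})$, exploiting that the scalar contraction factor is nonnegative for $\eta \leq 1/100$. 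Both arguments hinge on the same mechanism (the $\nrm{\w}^4$ term actively shrinks iterates once they leave the unit ball), but your factoring makes the contraction explicit and avoids the inner-product lower bound entirely; it is arguably cleaner, at the cost of slightly more care with the dependence on $r = \nrm{\w}$.

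Two small points of rigor worth tightening. First, "checking at the endpoint $g'(2.5) > 0$ shows $g$ is increasing on $(1, 2.5]$" is valid, but only because $g'(r) = 1 + 5\eta/4 - 12\eta r^2$ is a strictly decreasing function of $r$ on $(0,\infty)$ (so its minimum over the interval is attained at $r = 2.5$); you should say so explicitly. Second, your final numerical evaluation $g(2.5) = 2.5 - 0.625 + 7/160 < 2$ is for $\eta = 1/100$ exactly; for smaller $\eta$ the value of $g(2.5) = 2.5 - 58.125\,\eta$ creeps up toward $2.5$, so the stated inequality $g(2.5) < 2$ does not hold uniformly. What does hold uniformly, and is all you need, is $g(2.5) \leq 2.5$ for all $\eta \in (0, 1/100]$, because the coefficient of $\eta$ is negative. (Indeed, a still shorter finish: $g(r) = r + \eta\bigl(-4r^3 + \tfrac{5}{4}(r+1)\bigr)$ and the bracketed term is negative on $[1, 2.5]$ since it equals $-1.5$ at $r=1$ and is decreasing, so $g(r) < r \leq 2.5$ directly, with no monotonicity argument in $r$ required.)
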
 
\begin{proof} We prove the lemma via a case analysis:
\begin{enumerate}
\item \textbf{Case 1: $\mb{\nrm*{\w} < 2}$}. Using \pref{eq:subgradient-definition} and the Triangle inequality, we get that 
\begin{align*}
\nrm{\partial f(\w; z)} &\leq \nrm*{(\w - \alpha) \odot x} +  \nrm*{c_n(\w - \alpha)} + 4 \nrm{\w}^3  \\ 
									 &\leq (1 + c_n) \prn*{\nrm{\w} + \nrm{ \alpha}}  + 4 \nrm{\w}^3 \leq  36, \numberthis \label{eq:sgd_success_gl3} 
\end{align*} where the last inequality follows from the fact that the iterate $\w$ satisfies $\nrm*{\w} \leq 2$, and that the parameter $c_n \leq \frac{1}{4}$ and $\nrm{\alpha} \leq 1$. 

Now, for the gradient descent update rule $w_{t+1} = \w_{t} - \eta \partial f\prn*{\w; z}$, an application of the triangle inequality implies that  
\begin{align*}
\nrm{\w_{t+1}} &= \nrm*{\w - \eta \partial f(\w; z)}  \leq \nrm*{ \w}  + \eta \nrm*{\partial f(\w; z)}. \numberthis \label{eq:sgd_success_gl1} 
\end{align*}

Plugging in the bounds on $\nrm*{\partial f(\w; z)}$ derived above, we get that 
\begin{align*}
\nrm{\w^{+}} \leq  2 + 36 \eta \leq 2.5   \numberthis \label{eq:sgd_success_gl6} 
\end{align*}
since $\eta \leq \frac{1}{100}$. 

\item \textbf{Case 2: $\mb{2 \leq \nrm*{\w} \leq 2.5}$.} We start by observing that the new iterate $\w^{+}$ satisfies 
\begin{align*}
 \nrm{\w^{+}}^2 &= \nrm*{\w_{t} - \eta \partial f(\w; z)}^2  \\ 
 							   &= \nrm*{\w}^2 + \eta^2 \nrm*{\partial f(\w; z)}^2 - 2 \eta \tri*{\w, \partial f(\w; z)}. \numberthis \label{eq:sgd_success_gl2} 
\end{align*} 
Reasoning as in case 1 above, using the fact that $\nrm*{\w} \leq 2.5$, we have $\|\partial f(\w; z)\| \leq 70$. Furthermore, 
\begin{align*}
\tri*{\w, \partial f(\w; z)}  &= \tri{\w, {(\w - \alpha) \odot x - c_n(\w - \alpha)} + 4\nrm*{\w}^2\w } \\ 
&= 4 \nrm*{\w}^4 + \tri*{\w, \w \odot x} - \tri*{\w, \alpha \odot x + c_n \prn*{\w - \alpha}} \\  
&\overgeq{\proman{1}} 4 \nrm*{\w}^4  - \tri*{\w, \alpha \odot x + c_n \prn*{\w - \alpha}} \\ 
&\overgeq{\proman{2}} 4 \nrm*{\w}^4  - \nrm*{\w} \prn*{\prn*{1 + c_n}\nrm*{\alpha} + c_n \nrm*{\w} }  
\end{align*} where the inequality in $\proman{1}$ follows from the fact that $\tri*{\w, w \odot x} \geq 0$, the inequality in $\proman{2}$ is given by an application of Cauchy-Schwarz inequality followed by Triangle inequality. Next, using the fact that $c_n \leq \frac{1}{4}$ and $\nrm*{\alpha} \leq 1$, we get 
\begin{align*} 
\tri*{\w, \partial f(\w; z)} &\geq 4 \nrm{\w}^4  - \nrm*{\w} \prn[\Big]{\frac{5}{4} + \frac{1}{4} \nrm*{\w} }  \geq 60, \numberthis \label{eq:sgd_success_gl4}  
\end{align*}
 where the last inequality holds as the polynomial $f(a) \ldef{} 4a^4 - a \prn*{ 1.25 + 0.25 a}$ is an increasing function of $a$ over the domain $a \in [2, \infty)$. 

Plugging the bound $\|\partial f(\w; z)\| \leq 70$ and and \pref{eq:sgd_success_gl4} in \pref{eq:sgd_success_gl2}, we have
\begin{align*}
 \nrm{\w^{+}}^2 &\leq \nrm*{\w}^2 + 4900 \eta^2 - 60 \eta \leq \nrm*{\w}^2,
\end{align*} 
since $\eta \leq \frac{1}{100}$.
Thus, $\nrm{\w^{+}} \leq \nrm*{\w} \leq 2.5$.
\end{enumerate}

Thus, in either case, $\nrm{\w^{+}} \leq 2.5$, completing the proof.
\end{proof} 

\begin{corollary}
Suppose that the learning rate $\eta \leq 1/100$ and the initial point $\w_1$ be such that $\nrm*{\w_1} \leq 2.5$. Then the iterates obtained when running either full gradient descent on the empirical risk, or by running SGD, have norm bounded by $2.5$ at all times $t \geq 0$.
\end{corollary}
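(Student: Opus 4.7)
The plan is to prove the statement by induction on the iteration counter $t$, using \pref{lem:bounded-iterates} as the key inductive step for SGD and extending its proof to averaged gradients for GD.

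For SGD the argument is immediate. Given the inductive hypothesis $\nrm{\w_t} \leq 2.5$, the iterate $\w_{t+1} = \w_t - \eta \partial f(\w_t; z_t)$ is precisely the update considered in \pref{lem:bounded-iterates}, and hence $\nrm{\w_{t+1}} \leq 2.5$. The base case $\nrm{\w_1} \leq 2.5$ is given. So the hard work is already done by the lemma; I would simply invoke it at each step.

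For full GD on the empirical risk, the update involves the averaged gradient $g(\w) := \frac{1}{n}\sum_{i=1}^n \partial f(\w; z_i)$ rather than a single-sample subgradient, so I cannot cite \pref{lem:bounded-iterates} as a black box. Instead, I would repeat the two-case analysis used in its proof, observing that the bounds were derived \emph{pointwise} in $z$ and therefore survive averaging. Concretely, assuming $\nrm{\w} \leq 2.5$: in Case 1 ($\nrm{\w} < 2$), the bound $\nrm{\partial f(\w; z)} \leq 36$ holds for every $z$, so by the triangle inequality $\nrm{g(\w)} \leq \frac{1}{n}\sum_i \nrm{\partial f(\w; z_i)} \leq 36$, and the computation $\nrm{\w - \eta g(\w)} \leq 2 + 36\eta \leq 2.5$ goes through unchanged. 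In Case 2 ($2 \leq \nrm{\w} \leq 2.5$), both key pointwise bounds used in the lemma ($\nrm{\partial f(\w; z)} \leq 70$ and $\tri{\w, \partial f(\w; z)} \geq 60$ for every $z$) are preserved under averaging by the triangle inequality and linearity of the inner product respectively. Plugging these into
\[\nrm{\w - \eta g(\w)}^2 = \nrm{\w}^2 + \eta^2 \nrm{g(\w)}^2 - 2\eta \tri{\w, g(\w)} \leq \nrm{\w}^2 + 4900\eta^2 - 120\eta \leq \nrm{\w}^2\]
for $\eta \leq 1/100$ yields $\nrm{\w - \eta g(\w)} \leq \nrm{\w} \leq 2.5$. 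Combined with the base case and inductive hypothesis, this gives the GD part.

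I do not anticipate a genuine obstacle here: the corollary is essentially the statement "the per-step bound iterates to an any-time bound." The one small subtlety worth stating explicitly is that the pointwise estimates from \pref{lem:bounded-iterates} are closed under convex combinations, which is what makes the same analysis apply to GD on the empirical risk; this is why the paper separates the per-step lemma from the any-time corollary rather than stating them together.
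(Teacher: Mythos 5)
Your proof is correct, but it takes a more laborious route than the paper. For the GD part you reopen the proof of \pref{lem:bounded-iterates} and argue that each pointwise estimate survives averaging: $\nrm{\partial f(\w;z)}\le 36$ (resp.\ $\le 70$) averages to the same bound on $\nrm{g(\w)}$ by the triangle inequality, and $\tri{\w,\partial f(\w;z)}\ge 60$ averages to the same bound on $\tri{\w,g(\w)}$ by linearity, and then you redo the $\nrm{\w-\eta g(\w)}^2$ expansion. The paper instead black-boxes the lemma entirely: the GD step is literally a convex combination of single-sample SGD steps, $\w - \eta g(\w) = \frac{1}{n}\sum_{i=1}^n \bigl(\w - \eta\,\partial f(\w;z_i)\bigr)$, each summand has norm at most $2.5$ by the lemma, and convexity of the Euclidean norm (equivalently, the triangle inequality applied once at the iterate level rather than the gradient level) finishes it. That single observation covers SGD and full GD uniformly and never touches the internals of the lemma. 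Mathematically the two arguments encode the same fact, but the paper's phrasing is the one that makes the separation of lemma and corollary pay off, whereas yours treats the lemma as a template rather than a primitive. Incidentally, your $-120\eta$ coefficient in the Case 2 expansion is the correct one; the paper's Lemma proof writes $-60\eta$, which looks like a dropped factor of $2$, though the conclusion is unaffected since $4900\eta^2 < 60\eta$ anyway for $\eta \le 1/100$.
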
 
\begin{proof}  
The iterates obtained during the running of any of the gradient descent variants described in the statement of the lemma can be seen as convex combinations of single sample gradient descent updates. Thus, the bound on the norm follows immediately from \pref{lem:bounded-iterates} via the convexity of the norm.
\end{proof} 

The following lemma shows that the loss function $f(\w; z)$ is Lipschitz in the domain of interest, i.e. the ball of radius 2.5. This is the region where all iterates produced by gradient based algorithms and the global minimizers of the population loss (for any distribution $\cD$) are located in. 
\begin{lemma} \label{lem:lipschitzness}
In the ball of radius $2.5$ around $0$, and for any data point $z$, the function $\w \mapsto f(\w; z)$ is $70$-Lipschitz.
\end{lemma}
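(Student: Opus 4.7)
The plan is to bound the norm of the (sub)gradient of $f(\cdot; z)$ uniformly over the ball of radius $2.5$ around the origin, and conclude Lipschitzness via the standard fact that a locally Lipschitz function whose Clarke subdifferential has norm bounded by $L$ on a convex set is $L$-Lipschitz on that set. The function $f(w;z)$ decomposes cleanly into three pieces, namely $g_1(w) := \tfrac12 \|(w-\alpha)\odot x\|^2$, $g_2(w) := -\tfrac{c_n}{2}\|w-\alpha\|^2$, and $g_3(w) := \max\{1, \|w\|^4\}$, so I would bound the Lipschitz constant of each piece separately and then sum using the triangle inequality.

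For $g_1$, the gradient is $(w-\alpha)\odot x$, whose norm is at most $\|w-\alpha\|\le \|w\|+\|\alpha\|\le 2.5+1 = 3.5$ since $x\in\{0,1\}^d$ zeroes out or preserves entries. For $g_2$, the gradient is $-c_n(w-\alpha)$, whose norm is at most $c_n\cdot 3.5 \le 0.875$ using the standing assumption $c_n\le \tfrac14$. For $g_3$, on the open region $\|w\|>1$ the gradient is $4\|w\|^2 w$, with norm at most $4\cdot 2.5^3 = 62.5$, and on $\|w\|<1$ the gradient is $0$; at the non-smooth set $\|w\|=1$ every element of the Clarke subdifferential is a convex combination of these, hence also has norm at most $62.5$. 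Summing yields a bound of $3.5+0.875+62.5 = 66.875 \le 70$ on the subgradient norm throughout the ball.

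This subgradient bound transfers to a Lipschitz bound by a standard argument: for any $w,w'$ in the ball, parametrize the segment $w_t := (1-t)w + tw'$ for $t\in[0,1]$, which lies in the (convex) ball, and apply the mean value inequality for locally Lipschitz functions, giving $|f(w;z)-f(w';z)| \le \sup_{t\in[0,1]} \sup_{g\in \partial f(w_t;z)} \|g\| \cdot \|w-w'\| \le 70\|w-w'\|$. The only mildly delicate point is handling the non-smoothness of $g_3$ at the sphere $\|w\|=1$, but this is immediate once one observes that $g_3$ is the maximum of two smooth convex functions and so its Clarke subdifferential is the convex hull of the active gradients, whose norms are already bounded by $62.5$. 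I expect no substantive obstacle in this proof; it is essentially a bookkeeping exercise that reuses the norm bounds already carried out in the proof of \pref{lem:bounded-iterates}.
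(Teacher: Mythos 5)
Your proposal is correct and takes essentially the same approach as the paper: bound the norm of the subgradient $\partial f(\w;z) = (\w-\alpha)\odot x - c_n(\w-\alpha) + 4\indicator{\|\w\|>1}\|\w\|^2\w$ term by term via the triangle inequality, arriving at the same numerical bound $66.875 \leq 70$. The extra care you take in justifying why a bounded Clarke subdifferential yields Lipschitzness (via the mean-value inequality on the segment) is a welcome bit of rigor that the paper leaves implicit, but it is not a different route.
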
 
\begin{proof}
For $\w$ such that $\nrm*{\w} \leq 2.5$, using \pref{eq:subgradient-definition} and the Triangle inequality, we get that 
\begin{align*} 
\nrm*{\partial f(\w; z)} &\leq \nrm*{(\w - \alpha) \odot x} +  \nrm*{c_n(\w - \alpha)} + 4 \nrm*{\w}^3  \\ 
                            &\leq (1 + c_n) \prn*{\nrm*{\w} + \nrm*{ \alpha}}  + 4 \nrm*{\w}^3 \leq  70. 
\end{align*} \end{proof} 

\subsection{Proof of  \pref{prop:general_learning_lb}} 

\begin{proof}
The proof is rather simple. To show this statement, we consider two distributions $\cD_1$ and $\cD_2$ on the instance space. What we will show is that if a learning algorithm succeeds with a rate any better than $c_n$ on one distribution, then it has to have a worse rate on the other distribution. Thus, we will conclude that any learning algorithm cannot obtain a rate better than $c_n$. 

Without further delay, let us define the two distributions we will use in this proof. The first distribution $\cD_1$ is given by: 
\begin{align*} 
x \sim \cB\prn*{\frac{1}{2}}^{\otimes d} ~~\textrm{and}~~~ \alpha = 0 
\end{align*} 
and the second distribution we consider is $\cD_2$ given as follows, first, we draw $\tilde{j} \sim \mathrm{Unif}[d]$, next we set $x[\tilde{j}] = 0$ deterministically, finally, on all other coordinates $i \ne \tilde{j}$, $x[\tilde{j}] \sim \mathrm{unif}\{0,1\}$. We also set $\alpha = 0$ deterministically.

Now the key observation is the following. Since $d > 2^n$, when we draw $n$ samples from distribution $\cD_1$ with constant probability there is a coordinate $\hat{j}$ such that $x_t[\hat{j}] = 0$ for all $t \in [n]$. Further, $\hat{j}$ can be any one of the $d$ coordinates with equal probability. However, on the other hand, if we draw $n$ samples from distribution $\cD_2$, then by definition of $\tilde{j}$, we have that $x_t[\tilde{j}] = 0$ for any $t \in [n]$. The main observation is that the learning algorithm is agnostic to the distribution on instances. Now since a draw of $n$ samples from $\cD_1$ has a coordinate $\hat{j}$ that to the algorithm is indistinguishable from coordinate $\tilde{j}$ when $n$ samples are drawn from $\cD_2$, the algorithm cannot identify which of the two distributions the sample if from. 

Hence, with constant probability both samples from $\cD_1$ and $\cD_2$ will be indistinguishable. However note that for distribution $\cD_1$ we have 
$$ 
F_1(\w) = \En_{z \sim \cD_1} f(\w,z) = \frac{1}{2}\prn[\Big]{\frac{1}{2} - c_n} \|\w\|^2 +   \max\{1, \nrm*{\w}^4\},  
$$
and for distribution $\cD_2$,
$$
F_2(\w) = \En_{z \sim \cD_2} f(\w,z) = \frac{1}{2}\prn[\Big]{\frac{1}{2} - c_n} \|\w_{[d] \setminus \{\tilde{j}\}}\|^2  - \frac{c_n}{2} |\w[\tilde{j}]|^2 +   \max\{1, \nrm*{\w}^4\}.
$$
Hence notice that for any $\w$,
\begin{align*}
F_1(\w) - \inf_{\w}F_1(\w) &= \frac{1}{2}\prn[\Big]{\frac{1}{2} - c_n} \|\w\|^2 +   \max\{1, \nrm*{\w}^4\} - 1\\
&\ge \frac{1}{2}\prn[\Big]{\frac{1}{2} - c_n} \|\w\|^2 \ge \frac{1}{2}\prn[\Big]{\frac{1}{2} - c_n} |\w[\tilde{j}]|^2
\end{align*}
and 
\begin{align*}
F_2(\w) - \inf_{\w}F_2(\w) &= \frac{1}{2}\prn[\Big]{\frac{1}{2} - c_n} \|\w_{[d] \setminus \{\tilde{j}\}}\|^2  - \frac{c_n}{2} |\w[\tilde{j}]|^2 +   \max\{1, \nrm*{\w}^4\} +  \frac{c_n}{2}  - 1\\
&\ge \frac{1}{2}\prn[\Big]{\frac{1}{2} - c_n} \|\w_{[d] \setminus \{\tilde{j}\}}\|^2  - \frac{c_n}{2} |\w[\tilde{j}]|^2  +  \frac{c_n}{2}  \\
&\ge   - \frac{c_n}{2} |\w[\tilde{j}]|^2  + \frac{c_n}{2} 
\end{align*} 

Now as mentioned before, they key observation is that with constant probability we get a sample using which we cant distinguish between whether we got samples from $\cD_1$ or from $\cD_2$. Hence in this case, if we want to obtain a good suboptimality, we need to pick a common solution $\w$ for which both $F_1(\w) - \inf_{\w} F_1(\w)$, and $F_2(\w) - \inf_{\w} F_2(\w)$ are small. However, note that if we want a $\w$ for which $F_2(\w) - \inf_{\w} F_2(\w) \le c_n/4$, then it must be the case that 
$$
\frac{c_n}{4} \ge - \frac{c_n}{2} |\w[\tilde{j}]|^2  + \frac{c_n}{2}  
$$
and hence, it must be the case that, $|\w[\tilde{j}]| \ge \frac{1}{\sqrt{2}}$. However, for such a $\w$ from the above we clearly have that
$$
F_1(\w) - \inf_{\w}F_1(\w) \ge \frac{1}{2}\prn[\Big]{\frac{1}{2} - c_n} |\w[\tilde{j}]|^2 \ge \frac{1}{4}\prn[\Big]{\frac{1}{2} - c_n} \ge \frac{1}{9}
$$
(as long as $c_n = o(1)$). Thus we can conclude that no learning algorithm can attain a rate better than $c_n/4$. 
\end{proof}

\subsection{Proof of \pref{thm:sgd_learns}}

We prove part-(a) and part-(b) separately below. The following proof of \pref{thm:sgd_learns}-(a) is similar to the proof of  \pref{thm:lowersco} given in \pref{app:thm_lowersco_proof}. 

\begin{proof}[{\bf Proof of \pref{thm:sgd_learns}-(a)}] 
In the following proof, we will assume that $n$ is large so that $n c_n^2 \geq 200$ and that $d \geq \ln(10) (1 - c_n)^{-n} + 1$.  
 
Assume, for the sake of contradiction, that there exists a regularizer $R: \bbR^d \to \bbR$ such that for any distribution $\cD \in \convexclass$ the expected suboptimality gap for the RERM solution is at most $\epsilon / 10$. Then, by Markov's inequality, with probability at least $0.9$ over the choice of sample set $S$, the suboptimality gap is at most $\epsilon$. We will show that $\epsilon$ must be greater than $c_n^2/{3200}$ for some distribution in the class $\convexclass$, hence proving the desired claim. 

We first define additional notation. Set $\delta = c_n / 10$ and define $\wt \epsilon \ldef{}  \epsilon / \delta$. For the regularization function $R(\cdot)$, define the points $\w^*_j$ for $j \in [d]$ such that 
\begin{align*} 
\w^*_{j}  \in \argmin_{\w \text{~s.t.~} {\nrm*{\w - e_j}^2 \leq \tepsilon}} R(\w)  \numberthis \label{eq:w_jstar_defn2}. 
\end{align*} and define the index $j^* \in [d]$ such that  $j^* \in \argmax_{j \in [d]} ~ R(\w^*_j).$ We are now ready to prove the desired claim. 

Consider the data distribution $\cD_1\ldef{} \bar{\cD}(\delta, c_n, e_{j^*})$ (see \pref{def:violating_distribution_defn_a2}) and suppose that the dataset $S = \crl*{z_i}_{i=1}^n$ is sampled i.i.d.\ from $\cD_1$. The population loss $F(w)$ corresponding to $\cD_1$ is given by 
\begin{align*} 
   F(\w) &= \En_{z \sim \cD_1} \brk*{f(\w; z)} = \frac{\delta}{2} \nrm*{\w-e_{j^*}}^2 + \max \crl{1, \nrm*{\w}^4}. 
\end{align*}
Clearly, $F(w)$ is convex in $w$ and thus the distribution $\cD_1 \in \convexclass$. Furthermore, $e_{j^*}$ is the unique minimizer of $F(\cdot)$, and any $\epsilon$-suboptimal minimizer $w'$ for $F(\cdot)$ must satisfy 
\begin{align*}
\nrm{w' - e_{j^*}}^2 \leq \frac{2\epsilon}{\delta} = 2\wt \epsilon \leq \frac{1}{5}.   \numberthis \label{eq:optimality_condition} 
\end{align*}
To see the above, note that if $w'$ is an $\epsilon$-suboptimal minimizer of $F(\w)$, then $F(\w) \leq F(e_{j^*}) + \epsilon = 1+\epsilon$. However, we also have that $F(\w') \geq \delta \nrm*{\w- e_{j^*}}^2 + 1$. Taking the two inequalities together, and using the fact that $\tepsilon \leq 1/10$, we get the desired bound on $\nrm*{\w - e_{j^*}}$. 

\par Next, define the event $E$ such that the following hold: 
\begin{enumerate}[label=$(\alph*)$] 
\item For the coordinate $j^*$, we have $\sum_{z \in S} x[j^*] \leq  n(c_n + 2 \delta)$. 
\item There exists $\wh j$ such that $\wh j \neq j^*$ and $x[\wh j] = 0$ for all $z \in S$.  
\item RERM (with regularization $R(\cdot)$) using the dataset $S$ returns an $\epsilon$-suboptimal solution for the test loss $F(w)$. 
\end{enumerate} 
Since $x[j^*] \sim \cB(c_n + \delta)$, Hoeffding's inequality (\pref{lem:hoeffding}) implies that the event (a) above occurs with probability at least $0.8$ for $n \geq 2 / \delta^2 = 200 / c_n^2$. Furthermore, \pref{lem:special_coordinate_exists1} gives us that the event (b) above occurs with probability at least $0.9$ for $d \geq \ln(10) (1 - c_n)^{-n}$. Finally, the assumed performance guarantee for RERM with regularization $R(\cdot)$ implies that the event (c) above occurs with probability at least $0.9$. Thus, the event $E$ occurs with probability at least $0.6$. In the following, we condition on the occurrence of the event $E$. 

\par Consider the point $w^*_{\wh j}$, defined in \pref{eq:w_jstar_defn2} corresponding to the coordinate $\wh j$ (that occurs in event $E$). By definition, we have that $\nrm{w^*_{\wh j} -e_{\wh j}}^2 \leq \tepsilon$, and thus 
\begin{align*}
\nrm{w^*_{\wh j} - e_{j^*}}^2 &\geq \frac{1}{2}\nrm{e_{\wh j} - e_{j^*}}^2 - \nrm{w^*_{\wh j} - e_{\wh j}}^2 \\ 
&\geq 1 - \tepsilon \geq \frac{1}{4}, 
\end{align*} where the first line above follows from the identity that $(a + b)^2 \leq 2a^2 + 2b^2$ for any $a, b > 0$, and the second line holds because $\wh j \neq j^*$ and because $\wt \epsilon \leq 1/10$. As a consequence of the above bound and the condition in \pref{eq:optimality_condition}, we get that the point $w^*_{\wh j}$ is not an $\epsilon$-suboptimal point for the population loss $F(\cdot)$, and thus would not be the solution of the RERM  algorithm (as the RERM solution is $\epsilon$-suboptimal w.r.t $F(w)$). Since, any RERM must satisfy condition \pref{eq:optimality_condition}, we have that

\begin{align*} 
\wh F(\w^*_{\wh j}) + R(\w^*_{\wh j}) &> \min_{\w:\ \nrm*{\w - e_{j^*}}^2 \leq \tepsilon} \prn[\big]{\wh F(\w) + R(\w)} \\
&\geq \min_{\w:\ \nrm*{\w - e_{j^*}}^2 \leq \tepsilon} \wh F(\w) + \min_{\w:\ \nrm*{\w - e_{j^*}}^2 \leq \tepsilon} R(\w) \\   
&\overgeq{\proman{1}} \min_{\w:\ \nrm*{\w - e_{j^*}}^2 \leq \tepsilon} \prn[\Big]{-\frac{c_n}{2}\nrm*{\w - e_{j^*}}^2 + \max\crl{1, \nrm*{w}^4}} + R(\w^*_{j^*}) \\ 
&\geq \min_{\w:\ \nrm*{\w - e_{j^*}}^2 \leq \tepsilon} \prn[\Big]{-\frac{c_n}{2}\nrm*{\w - e_{j^*}}^2 + 1} + R(\w^*_{j^*}) \\ 
&= -\frac{c_n}{2} \cdot \tepsilon + 1 + R(\w^*_{j^*}) \\ 
&\overgeq{\proman{2}} - \frac{c_n}{20} + 1 + R(\w^*_{j^*}) \numberthis \label{eq:RERM_proof_gen5} 
\end{align*} where $\wh F(w) \ldef{} \frac{1}{n} \sum_{i=1}^n f(w; z_i)$ denotes the empirical loss on the dataset $S$, the inequality $\proman{1}$ follows by ignoring non-negative terms in the empirical loss $\wh F(w)$, and the inequality $\proman{2}$ is due to the fact that $\tepsilon \leq 1/10$. For the left hand side, we note that 
\begin{align*} 
\wh F(\w^*_{\whj}) &= \frac{1}{2n}\sum_{(x, \alpha) \in S}\|(\w^*_{\wh j}-e_{j^*})\odot x\|^2 - \frac{c_n}{2}\|(\w^*_{\wh j}-e_{j^*})\|^2 + \max\crl{1, \nrm{w^*_{\wh j}}^4} \\ 
&\overleq{\proman{1}} \wt \epsilon +  \frac{1}{2n}\sum_{(x, \alpha) \in S}\|(e_{\wh j}-e_{j^*})\odot x\|^2 - \frac{c_n}{2}\|(e_{\wh j}-e_{j^*})\|^2 + \max\crl{1, (1 + \tepsilon)^4} \\
&\overleq{\proman{2}} 16 \wt \epsilon +  \frac{1}{2n}\sum_{(x, \alpha) \in S}\|(e_{\wh j}-e_{j^*})\odot x\|^2 - \frac{c_n}{2}\|(e_{\wh j}-e_{j^*})\|^2 + 1 \\
&\overleq{\proman{3}} 16 \wt \epsilon + \frac{1}{2} (c_n + 2 \delta) - \frac{\sqrt{2}c_n}{2} + 1,  \numberthis \label{eq:RERM_proof_gen6} 
\end{align*} where the inequality $\proman{1}$ above holds because $\nrm{w^*_{\wh j} - e_{\wh j}} \leq \tepsilon$ (by definition) and because $\nrm{e_{\wh j}} = 1$. The inequality in $\proman{2}$ follows from the fact that $(1 + a)^4 \leq 1 + 15 a$ for $a < 1$. Finally, the inequality $\proman{3}$ is due to the fact that $x[\wh j] = 0$ for all $(x, \alpha) \in S$ and because $\sum_{(x, \alpha) \in S} x[j^*] \leq n(c_n + 2\delta)$ due to the conditioning on the event $E$.  

Combining the bounds in \pref{eq:RERM_proof_gen5}  and \pref{eq:RERM_proof_gen6}, plugging in $\delta = c_n / 10$, and rearranging the terms, we get that 
\begin{align*}
16 \wt \epsilon &\geq \frac{c_n}{20} + R(\w^*_{j^*}) - R(\w^*_{\wh j}) \geq \frac{c_n}{20},  
\end{align*} where the second inequality above holds because $j^* \in \argmax_{j \in [d]} R(w^*_j)$ (by definition). Since, $\tepsilon = \epsilon / \delta$, we conclude that 
\begin{align*}
\epsilon &\geq \delta \cdot \frac{c_n}{320} \geq \frac{c_n^2}{3200}. 
\end{align*}

The above suggests that for data distribution $\cD_1$, RERM algorithm with the regularization function $R(w)$ will suffer an excess risk of at least $\Omega(c_n^2)$ (with probability at least $0.9$). Finally note that $R(w)$ could be any arbitrary function in the above proof, and thus we have the desired claim for any regularization function $R(w)$. 
\end{proof} 

We now prove \pref{thm:sgd_learns}-(b). In the following, we give convergence guarantee for SGD algorithm given in \pref{eq:SGD} when run with step size $\eta = 1/20 \sqrt{n} < 1/100$ using the dataset $S = \crl{z_i}_{i=1}^n$ drawn i.i.d.\ from a distribution $\cD$. We further assume that assume that the initial point $\w_1 = 0$, and thus $\nrm*{\w_1} \leq 2.5$. Note that having initial weights to be bounded is typical when learning with non-convex losses, for eg. in deep learning.

\begin{proof}[{\bf Proof of \pref{thm:sgd_learns}}-(b)]  We consider the two cases, when $D \in \convexclass$ and when $D \notin \convexclass$, separately below: 
\paragraph{Case 1: When $\cD \in \convexclass$ .} In this case, we note the population loss $F(\w) \ldef{} \En_{z \sim \cD} \brk*{f(\w; z)}$ is convex in $\w$ by the definition of the set $\convexclass$ in \pref{eq:convdis}.  Further, since $\eta = 1/20 \sqrt{n} < 1/100$,  \pref{lem:bounded-iterates} and \pref{lem:lipschitzness} imply that $f$ is $70$-Lipschitz. Finally, due to \pref{lem:minimizer-bound}, the initial point $w_1 = 0$ satisfies $\nrm*{w_1 - w^*} \leq 2.5$. Thus, we satisfy both  Assumption I  and Assumption II in \pref{eq:ass1} and \pref{eq:ass2} respectively, and an application of \pref{thm:sgd_works_main} implies that the point $\wh w^{\mathrm{SGD}}_n$ enjoys the performance guarantee 
\begin{align*}
\En \brk*{F(\wh w^{\mathrm{SGD}}_n) - \w^* \mid E} \leq O\prn[\Big]{\frac{1}{\sqrt{n}}}. 
\end{align*} 

\paragraph{Case 2: When $\cD \notin \convexclass$ .} 
In order to prove the performance guarantee of SGD algorithm in this case, we split up the loss function $f(\w; z)$ into convex and non-convex parts $g$ and $\wt g$, defined as: 
\[g(\w; z) := \frac{1}{2}\|(\w-\alpha)\odot x\|^2 + \max\{1, \|\w\|^4\} \qquad \text{ and } \qquad \wt g(\w; z) := -c_n\|w-\alpha\|^2.\]
Further, we define the functions $G(\w)$ and $\wt{G}(\w)$  to denote their respective population counterparts, i.e. $G(\w) = \En_{z \sim \cD}[g(\w; z)]$ and $\wt G(\w) = \En_{z \sim \cD}[\wt g(\w; z)]$. Clearly, 
\begin{align*}
F(\w) = \En_{z \sim \cD} \brk*{f(\w; z)} = G(\w) + \wt G(\w).  \numberthis \label{eq:poploss_decomposition} 
\end{align*} 

Let $\w^*$ be a minimizer of $F(\w; \cD)$. By \pref{lem:minimizer-bound}, we have $\|\w^*\| \leq 1$. The folltowing chain of arguments follows along the lines of proof of SGD in Case 1 above (see the proof of \pref{thm:app_SGD_convergence} on page~\pageref{thm:app_SGD_convergence}).  

Let the sequence of iterates generated by SGD algorithm be given by $\crl*{\w_t}_{t=1}^T$.  We start by observing that for any $t \geq 0$,  
\begin{align*}
\nrm*{\w_{t+1} - \w^*}^2_2 &=  \nrm*{\w_{t+1} - \w_{t} + \w_{t}  - \w^*}^2_2 \\
					&= \nrm*{\w_{t+1} - \w_{t} }_2^2 + \nrm*{\w_{t}  - \w^*}^2_2 + 2 \tri*{ \w_{t+1} - \w_{t},  \w_{t}  - \w^*} \\ 
					&= \nrm*{ - \eta \nabla f_w(\w_{t}; z_t)}_2^2 + \nrm*{\w_{t} - \w^*}_2^2 + 2\tri*{  - \eta \nabla f_w(\w_{t}; z_t), \w_{t} - \w^* },  
\intertext{where the last line follows from plugging in the update step $ w_{t+1} = \w_{t} - \eta \nabla f_w(\w_{t}; z_t)$. Rearranging the terms in the above, we get that }
\tri*{\nabla f(\w_{t}; z_t), \w_{t} - \w^* } &\leq \frac{\eta}{2} \nrm*{ \nabla f(\w_{t}; z_t)}_2^2  + \frac{1}{2\eta} \prn[\big]{ \nrm*{\w_{t} - \w^*}_2^2 - \nrm*{\w_{t+1}- \w^*}_2^2} \\
&\leq 2450 \eta   + \frac{1}{2\eta} \prn[\big] { \nrm*{\w_{t} - \w^*}_2^2 - \nrm*{\w_{t+1}- \w^*}_2^2},\numberthis \label{eq:sgd_upper_bound_1}
\end{align*}
where the second inequality in the above follows from the bound on the Lipschitz constant of the function $f$ when the iterates stay bounded in a ball of radius 2.5 around $0$ (see \pref{lem:lipschitzness}). We split the left hand side as: 
\begin{align*}
\tri*{\nabla f(\w_{t}; z_t), \w_{t} - \w^* }  &= \tri*{\nabla g(\w_{t}; z_t), \w_{t} - \w^* }  +  \tri*{\nabla \wt g(\w_{t}; z_t), \w_{t} - \w^* }. 
\end{align*}
This implies that 
\begin{align*}
\tri*{\nabla g(\w_{t}; z_t), \w_{t} - \w^* }  &\leq  2450 \eta   + \frac{1}{2\eta} \prn[\big]{ \nrm*{\w_{t} - \w^*}_2^2 - \nrm*{\w_{t+1}- \w^*}_2^2}  + \tri*{\nabla \wt g(\w_{t}; z_t), \w^* - \w_{t}}. 
\end{align*} 
Taking expectation on both the sides with respect to the data sample $z_t$, while conditioning on the point $\w_t$ and the occurrence of the event $E$, we get  
\begin{align*}
\tri*{\nabla G(\w_{t}), \w_{t} - \w^* }  &\leq  2450\eta   + \frac{1}{2\eta} \En \brk[\big]{ \nrm*{\w_{t} - \w^*}_2^2 - \nrm*{\w_{t+1}- \w^*}_2^2}  + \tri{\nabla \wt G(\w_{t}), \w^* - \w_{t}}. 
\end{align*} 

Next, note that the function $G(\w)$ is convex (by definition). This implies that $G(\w^*) \geq G(\w_{t}) - \tri*{\nabla G(\w_{t}), \w_{t} - \w^*}$, plugging which in the above relation gives us 
\begin{align*} 
G(\w_{t}) - G(\w^*) &\leq 2450\eta   + \frac{1}{2\eta} \En \brk[\big]{ \nrm*{\w_{t} - \w^*}_2^2 - \nrm*{\w_{t+1}- \w^*}_2^2}  + \tri{\nabla \wt G(\w_{t}), \w^* - \w_{t}}.
\intertext{Telescoping the above for $t$ from $1$ to $n$, we get that}   
\sum_{t=1}^{n}\prn*{ G(\w_{t}) - G(\w^*) } &\leq 2450\eta n + \frac{1}{2\eta} \prn[\big]{ \nrm*{\w_1 - \w^*}_2^2 - \nrm*{\w_{n + 1} - \w^*}_2^2} + \sum_{t=1}^n  \tri{\nabla \wt G(\w_{t}), \w^* - \w_{t}} \\ 
					 &\leq 2450\eta n + \frac{\nrm*{\w_1 - \w^*}_2^2}{2\eta} + \sum_{t=1}^n  \nrm{\nabla \wt G(\w_{t})} \nrm*{\w^* - \w_{t}}, 
\end{align*} where the inequality in the second line follows by ignoring negative terms, and through an application of Cauchy-Schwarz inequality. Since, $\|\w_1\| = 0$ and $\|\w^*\| \leq 1$ (due to \pref{lem:minimizer-bound}), we have that $\|\w_1 - \w^*\| \leq 1$. Setting $\eta = 1/20 \sqrt{n}$, using the bound  $\|\w_1 - \w^*\| \leq 1$ and by an application of Jensen's inequality in the left hand side, we get that the point $\wh w^{\mathrm{SGD}}_n \ldef{} \frac{1}{n}\sum_{t=1}^n \w_t$ satisfies 
\begin{align*} 
 G(\wh w^{\mathrm{SGD}}_n) - G(\w^*)  &\leq \frac{245}{\sqrt{n}} + \frac{1}{n} \sum_{t=1}^{n}  \nrm{\nabla \wt G(\w_{t})} \nrm*{\w^* - \w_{t}} \numberthis  \label{eq:projected_gradient_descent_guarantee}. 
\end{align*} 
Furthermore, by \pref{lem:bounded-iterates}, since $\eta = \frac{1}{20 \sqrt{n}} \leq \frac{1}{100}$ and $\nrm*{\w_1} \leq 2.5$ (by construction), we have that $\nrm*{w_t} \leq 2.5$ for all $t$. Thus, $\nrm*{\w^* - \w_{t}} \leq 3.5$ for all $t$. Plugging this bound in \pref{eq:projected_gradient_descent_guarantee}, we get that 
\begin{align*}
 G(\wh w^{\mathrm{SGD}}_n) - G(\w^*)  &\leq \frac{245}{\sqrt{n}} + 3.5 \max_{t \leq n} ~ \nrm{ \grad \wt G(\w_t)}. 
\end{align*} 
Next, note that from the definition of the function $\wt G(\w_t)$, we have that 
\[\nrm{ \grad \wt G(\w_t)} = c_n\nrm*{\w_t-\alpha} \leq 3.5 c_n,\] and thus
\begin{align*}
 G(\wh w^{\mathrm{SGD}}_n) - G(\w^*)  &\leq \frac{245}{\sqrt{n}} + 3.5 c_n.  \numberthis \label{eq:poploss_decomposition2} 
\end{align*} 
Finally, using the relation \pref{eq:poploss_decomposition} and taking expectations on both the sides, we have that 
\begin{align*} 
	\En \brk*{ F(\wh w^{\mathrm{SGD}}_n) - F(\w^*) \mid E} &= \En \brk*{G(\wh w^{\mathrm{SGD}}_n) - G(\w^*)} + \En \brk{\wt G(\wh w^{\mathrm{SGD}}_n) - \wt G(\w^*)} \\
	&\leq \frac{245}{\sqrt{n}} + 3.5 c_n + c_n\En_{z}[\w^* - \alpha\|^2] \\ 
   &\leq  \frac{245}{\sqrt{n}} + 6c_n, \numberthis \label{eq:desired_sgd_guarantee}  
\end{align*} 
where the inequality in the second line is due to \pref{eq:poploss_decomposition2} and by using the definition of the function $\wt G(\w)$ and by ignoring negative terms. The last line holds because $\|\w^* - \alpha\| \leq \nrm*{\w^*} + \nrm*{\alpha} \leq 2$. 
\end{proof}

%!TEX root=../paper.tex
\section{Missing proofs from \pref{sec:nn_connections}}

\subsection{$\alpha$-Linearizable functions} In the following, provide the proof of the performance guarantee for SGD for $\alpha$-Linearizable functions. 
\begin{proof}[Proof of \pref{thm:sgd_works_linearizable}] \label{proof:sgd_works_linearizable_proof} The proof follows along the lines of the proof of \pref{thm:app_SGD_convergence} on page~\pageref{app:basic_algorithms}. Let $\crl{w_t}_{t \geq 1}$ denote the sequence of iterates generated by the SGD algorithm. We note that for any $w^* \in \argmin_{w} F(w)$ and time $t \geq 1$, 
\begin{align*} 
\nrm*{\w_{t +1} - \w^*}^2_2 &=  \nrm*{\w_{t +1} - \w_{t} + \w_{t}  - \w^*}^2_2 \\
					&= \nrm*{\w_{t +1} - \w_{t} }_2^2 + \nrm*{\w_{t}  - \w^*}^2_2 + 2 \tri*{  \w_{t +1} - \w_{t},  \w_{t}  - \w^*} \\ 
					&= \nrm*{ - \eta \nabla f(\w_{t}; z_t)}_2^2 + \nrm*{\w_{t} - \w^*}_2^2 + 2\tri*{  - \eta \nabla f(\w_{t}; z_t), \w_{t} - \w^* }, 
\end{align*} where the last line follows from plugging in the SGD update rule that  $ \w_{t + 1} = \w_{t} - \eta \nabla f(\w_{t}; z_t)$. 

Rearranging the terms in the above, we get that 
\begin{align*} 
\tri*{\nabla f(\w_{t}; z_t), \w_{t} - \w^* } &\leq \frac{\eta}{2} \nrm*{ \nabla f(\w_{t}; z_t)}_2^2  + \frac{1}{2 \eta} \prn[\big]{ \nrm*{\w_{t} - \w^*}_2^2 - \nrm*{\w_{t+1}- \w^*}_2^2}. 
\intertext{Taking expectation on both the sides, while conditioning on the point $\w_t$, implies that} 
\tri*{\nabla F(\w_{t}), \w_{t} - \w^* } &\leq \frac{\eta}{2} \En \brk[\big]{\nrm*{ \nabla f(\w_{t}; z_t)}_2^2 } + \frac{1}{2\eta} \prn[\big]{ \nrm*{\w_{t} - \w^*}_2^2 - \nrm*{\w_{t+1}- \w^*}_2^2} \\
&\leq \eta \En \brk[\big]{\nrm*{ \nabla f(\w_{t}; z_t) - \grad F(\w_t)}_2^2} + \eta \nrm*{\grad F(\w_t)}_2^2  \\ 
& \qquad \qquad + \frac{1}{2\eta} \prn[\big]{ \nrm*{\w_{t} - \w^*}_2^2 - \nrm*{\w_{t+1}- \w^*}_2^2} \\
&\leq \eta(\sigma^2 + L^2) + \frac{1}{2\eta} \prn[\big]{ \nrm*{\w_{t} - \w^*}_2^2 - \nrm*{\w_{t+1}- \w^*}_2^2},  \numberthis \label{eq:linearization_condition1} 
\end{align*}
where the inequality in the second line is given by the fact that $(a - b)^2 \leq 2 a^2 + 2b^2$ and the last line follows from using Assumption II (see \pref{eq:ass2}) which implies that $F(\w)$ is $L$-Lipschitz in $\w$ and that $ \En \brk{\nrm{ \nabla f(w; z_t) - \grad F(w)}_2^2} \leq \sigma^2$ for any $w$. Next, using the fact that $F$ is $\alpha	$-Linearizable, we have that there exists a $w^* \in \argmin_w F(w)$ such that for any $w$,  
\begin{equation}
	F(\w) -  F(w^*) \leq  \alpha \tri*{\nabla F(\w), \w - \w^*}.  \numberthis \label{eq:linearization_condition2}
\end{equation}
Setting $w^*$ to the one that satisfies \pref{eq:linearization_condition2}, and using the above bound for $w = w_t$ with the bound in \pref{eq:linearization_condition1}, we get that for any $t \geq 1$,  
\begin{align*} 
\frac{1}{\alpha} \prn*{ F(\w_{t}) - F^* } &\leq  \eta(\sigma^2 + L^2) + \frac{1}{ 2\eta} \prn[\big]{ \nrm*{\w_{t} - \w^*}_2^2 - \nrm*{\w_{t + 1} - \w^*}_2^2}, 
\end{align*} 
where $F^* \ldef{} F(w^*)$. Telescoping the above for $t$ from $1$ to $n$, we get that 
\begin{align*} 
\sum_{t=1}^{n} \frac{1}{\alpha} \prn*{ F(\w_{t}) - F^* } &\leq \eta n (\sigma^2 + L^2)  + \frac{1}{2\eta} \prn[\big]{ \nrm*{\w_1 - \w^*}_2^2 - \nrm*{\w_{n + 1} - \w^*}_2^2} \\ 
					 &\leq \eta n (\sigma^2 + L^2) + \frac{1}{2 \eta} { \nrm*{\w_1 - \w^*}_2^2}. 
					 \end{align*} 
					Dividing both the sides by $n$, we get that 
					 \begin{align*} 
\frac{1}{\alpha n} \sum_{t=1}^{n}\prn*{ F(\w_{t}) - F^* }  &\leq  \eta (\sigma^2 + L^2) + \frac{1}{2 \eta n} { \nrm*{\w_1 - \w^*}_2^2}. 
\end{align*} 

An application of Jensen's inequality on the left hand side, implies that for the point $\wh \w^{\mathrm{SGD}}_n \ldef{} \frac{1}{n} \sum_{t=1}^n \w_t$, we have that 
 \begin{align*}
\frac{1}{\alpha} \En \brk*{F(\wh \w^{\mathrm{SGD}}_n) - F^* } &\leq \eta (\sigma^2 + L^2) + \frac{1}{2 \eta n} { \nrm*{\w_1 - \w^*}_2^2}. 
\end{align*}
Setting $\eta = \frac{1}{\sqrt{n}}$ and using the fact that $\nrm*{\w_1 - \w^*}_2 \leq B$ in the above, we get that  
 \begin{align*} 
\En \brk*{F(\wh \w^{\mathrm{SGD}}_n) - F^* } &\leq \frac{\alpha}{\sqrt{n}} \prn*{ \sigma^2 + L^2 + B^2}, 
\end{align*}
which is the desired claim. Dependence on the problem specific constants ($\sigma, L$ and $B$) in the above bound can be improved further with a different choice of the step size $\eta$; getting the optimal dependence on these constants, however, is not the focus of this work. 
\end{proof} 

\subsection{Proof of \pref{thm:two_layer_failure}} 
\begin{figure} 
\begin{center} 
\includegraphics[height=0.4\textwidth]{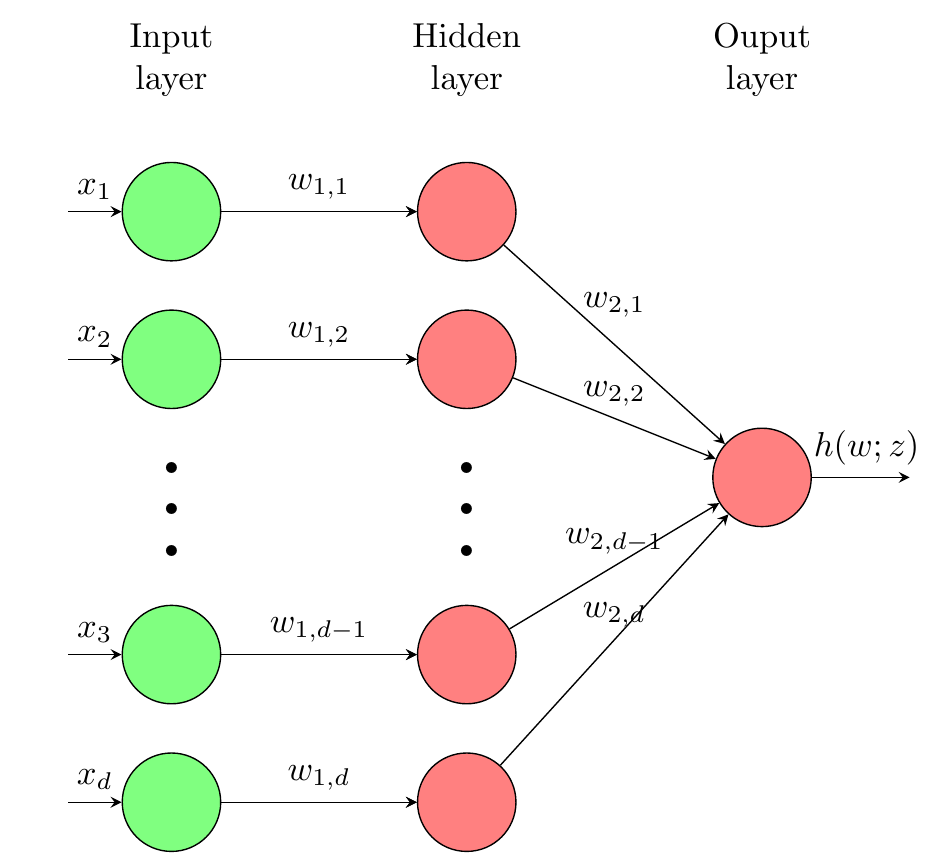} 	
\end{center} 
\caption{Two layer diagonal neural network. The weights are given by $w_1 \in \bbR^d$ and $w_2 \in \bbR^d$ for the first layer and the second layer respectively. The green nodes denote input nodes with linear activation function and red node denote hidden units with ReLU activation function.}  
\label{fig:nn_diagonal} 	
\end{figure} 

Let the input $\cX \in \crl*{0, 1}^d$ and the label $\cY \in \brk{-1, 1}$. Consider a two layer neural network with $\relu$ activation and weights given by $w = (w_1, w_2)$ where $w_1 \in \bbR^d$ denotes the weights of the first layer and $w_2 \in \bbR^d$ denotes the weights of the second layer, as shown in \pref{fig:nn_diagonal}. When given the input $x$, the output of the network with weights $(w_1, w_2)$ is computed as 
\begin{equation*} 
	h(w; x) = \relu\prn{w^\top_2 \relu(w_1 \odot x)}. 
\end{equation*}
Here, the first layer of the neural network has sparse connections, i.e. each input node connects to only one hidden node. Such networks are denoted as diagonal two layer neural networks. We assume that the neural network is trained using absolute loss function. Specifically, the instantaneous loss on a sample $z = (x, y)$ is given by 
\begin{align*}
f(w; z) = \abs{y - h(w; x)} = \abs{y - \relu\prn{w^\top_2 \relu(w_1 \odot x)}}. \numberthis  \label{eq:nn_diagonal_value}
\end{align*} 

Since $f(w;z)$ is not smooth, for any weights $w = (w_1,  w_2)$ and sample $z = (x, y)$, we define the gradient of $f(w; z)$ as  
\begin{align*} 
\nabla_{\w_1} \ls(\w; z)[i]  &= - \sign{y - h(w; x)} \cdot  \mathbf{1} \crl{\w_{2}^\top \relu \prn*{\w_{1} \odot x} > 0 } \cdot (\w_{2, i} \cdot \mathbf{1}\{\w_{1,i} \cdot x_i > 0\} \cdot x_i) 
\intertext{and}  
\nabla_{\w_2} \ls(\w; z)[i]   &= - \sign{y - h(w; x)} \cdot \mathbf{1} \crl{\w_{2}^\top \relu \prn*{\w_{1} \odot x} > 0 } \cdot (\relu\prn{\w_{1,i} \cdot x_i}), \numberthis  \label{eq:nn_diagonal_grad} 
\end{align*} for $i \in [d]$. We next show that the population loss $F(w) \ldef{} \En_{z \sim \cD} \brk*{f(w; z)}$ is $1/2$-Linearizable.    
\begin{lemma} 
\label{lem:f_linearizable} 
Let $\cD$ be defined such that $x$ and $y$ are independent random variables with distributions
\begin{align*}
x \sim \text{Uniform}(\crl{0, 1}^d) \qquad \text{and} \qquad y \sim \cB(1/4). 
\end{align*}
Then, the population loss $F(w) \ldef{} \En_{z \sim \cD}[f(w; z)]$ is $1/2$-Linearizable. 
\end{lemma}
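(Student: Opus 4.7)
The plan is to identify $w^\star = 0$ as a global minimizer of $F$ and then to establish the Linearizability inequality pointwise in $x$, exploiting the fact that the two-layer network $w \mapsto h(w;x)$ is positively $2$-homogeneous.

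First, because $y$ is independent of $x$ and $\Pr(y=1) = 1/4$, the Bayes-optimal predictor is the conditional median of $y$, which equals $0$; hence $F(w) \geq \En[|y|] = 1/4$ for every $w$, and the choice $w^\star = 0$ (for which $h(w^\star;x) \equiv 0$) attains this bound. Thus $F^\star = 1/4$, so the Linearizability claim reduces to showing
\[
F(w) - 1/4 \;\leq\; \tfrac{1}{2}\,\langle \nabla F(w), w\rangle \qquad \text{for all } w.
\]

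The next step is the key identity $\langle w, \nabla f(w;z)\rangle = -2\,\operatorname{sign}(y - h(w;x))\cdot h(w;x)$. This can be proved either by invoking Euler's relation for $h(\cdot\,;x)$ (which satisfies $h(cw;x) = c^2\, h(w;x)$ for $c>0$), or directly from the sub-gradient formulas in \pref{eq:nn_diagonal_grad}: a short calculation shows that $\langle w_1,\nabla_{w_1} f\rangle$ and $\langle w_2,\nabla_{w_2} f\rangle$ each simplify to $-\operatorname{sign}(y-h)\cdot h(w;x)$, because on the event where the output neuron is active, the sum $\sum_i w_{2,i}\relu(w_{1,i} x_i)$ reproduces $h(w;x)$ exactly, while on the complementary event both $h$ and the inner product vanish. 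Taking expectations and conditioning on $x$, both $F(w) - 1/4$ and $\langle \nabla F(w), w\rangle$ reduce to expectations over $x$ of elementary functions of the scalar $h = h(w;x) \geq 0$. Explicitly, using $\Pr(y=1) = 1/4$ and splitting on $h \in (0,1]$ versus $h > 1$, one obtains $F(w) - 1/4 = \En_x[\phi(h)]$ with $\phi(h) = h/2$ for $h \in (0,1]$ and $\phi(h) = h - 1/2$ for $h > 1$, whereas $\langle \nabla F(w), w\rangle = \En_x[\psi(h)]$ with $\psi(h) = h$ for $h \in (0,1)$ and $\psi(h) = 2h$ for $h > 1$. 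The one-variable inequality $\phi(h) \leq \tfrac{1}{2}\psi(h)$ then holds pointwise (with equality on $(0,1)$ and slack $1/2$ on $(1,\infty)$), and integrating in $x$ delivers the desired $1/2$-Linearizability.

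The main obstacle is justifying the homogeneity identity at non-differentiable points: since $h(\cdot\,;x)$ has ReLU kinks, the argument must confirm that the sub-gradient convention fixed in \pref{eq:nn_diagonal_grad} actually realizes the Euler relation $\langle w,\nabla h(w;x)\rangle = 2\,h(w;x)$, rather than merely a sub-differential inclusion. Once this is verified, the rest is bookkeeping: computing $\En_y[\operatorname{sign}(y-h)]$ in the two regimes for $h$, and checking the elementary scalar inequality $\phi \leq \psi/2$.
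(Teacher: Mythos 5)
Your proof is correct and follows essentially the same approach as the paper: both identify $w^\star = 0$ with $F(w^\star) = 1/4$, and both establish the key Euler-type identity $\langle w, \nabla f(w;z)\rangle = -2\,\sign{y - h(w;x)}\, h(w;x)$ directly from the sub-gradient convention in \pref{eq:nn_diagonal_grad}, which you correctly flag as the step that must be verified rather than merely asserted via $2$-homogeneity. The only variation is cosmetic bookkeeping at the end: the paper closes with the pointwise identity $-\sign{y-h}\,h = |y-h| - \sign{y-h}\,y$ together with the bound $\En[\sign{y-h}\,y] \leq \En[|y|] = F(0)$, whereas you condition on $x$ and check a scalar inequality between two piecewise-linear functions of $h$ --- both are equivalent in effect.
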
 
\begin{proof} The population loss $F(w)$ is given by:
\begin{align*}
	F(w) = \En_{x, y} \brk*{\abs{y - h(w; x)}},
\end{align*} where $h(w; x) = \relu\prn{w^\top_2 \relu(w_1 \odot x)}$. Using the fact that $y \in \crl{-1, 1}$, and $\Pr(y = 1) = 1/4$ and is independent of $x$, the above can be written as: 
\begin{align*}
F(w) = \En_{x} \brk*{ \frac{3}{4} \abs{h(w; x)} + \frac{1}{4} \abs{1 - h(w; x)}}. 
\end{align*}
It is easy to verify that $F(w)$ is minimized when $h(w; x) = 0$ for every $x \in \crl*{0, 1}^d$, which occurs at the point $w = 0$. Furthermore, $F(0) = 1/4$. Next, note that for any $w$, and sample $z = (x, y)$, 
\begin{align*} 
\tri*{w, \grad_{w} f(w; z)} &= \tri*{w_1, \grad_{w_1} f(w; z)} + \tri*{w_2, \grad_{w_2} f(w; z)} \\ 
 &\overeq{\proman{1}} - \sum_{i=1}^{d} w_{1, i} \cdot \sign{y - h(w; x)} \cdot  \mathbf{1} \crl{\w_{2}^\top \relu \prn*{\w_{1} \odot x} > 0 } \cdot (\w_{2, i} \cdot \mathbf{1}\{\w_{1,i} \cdot x_i > 0\} \cdot x_i)  \\
 & \qquad  \qquad - \sum_{i=1}^{d} w_{2, d} \cdot  \sign{y - h(w; x)} \cdot \mathbf{1} \crl{\w_{2}^\top \relu \prn*{\w_{1} \odot x} > 0 } \cdot (\relu\prn{\w_{1,i} \cdot x_i}) \\ 
 &= - \sign{y - h(w; x)} \cdot  \mathbf{1} \crl{\w_{2}^\top \relu \prn*{\w_{1} \odot x} > 0 } \cdot (w_{2}^\top \relu \prn{\w_{1} \odot x})  \\
 & \qquad  \qquad - \sign{y - h(w; x)} \cdot \mathbf{1} \crl{\w_{2}^\top \relu \prn*{\w_{1} \odot x} > 0 } \cdot (w_{2}^\top \relu\prn{\w_{1} \odot x}) \\
 &=  - 2 \sign{y - h(w; x)} \, \relu (w_{2}^\top \relu \prn{\w_{1} \odot x})  \\
 &=  - 2 \sign{y - h(w; x)} \, h(w; x)  \\ 
 &= 2 \abs{y - h(w; x)} - 2 \sign{y - h(w; x)} y 
\end{align*} where the equality $\proman{1}$ follows from using the definition of $\grad_{w_1}f(w; z)$ and  $\grad_{w_2}f(w; z)$  from \pref{eq:nn_diagonal_grad}. Taking expectations on both the sides with respect to $z$, we get that 
\begin{align*}
\tri*{w, \grad_{w} F(w)} &= 2 \En_{x, y} \brk*{\abs{y - h(w; x)}} - 2 \En_{x, y} \brk*{\sign{y - h(w; x)} y} \\
		&= 2 F(w) - 2 \En_{x, y} \brk*{\sign{y - h(w; x)} y} \\
		&\geq 2 F(w) - 2 \En_{x, y} \brk*{\abs{y}} \\
		&= 2(F(w) - F(0)), 
\end{align*} where the last line follows by observing that $\En \brk*{\abs{y}} = \En_{x, y} \brk*{\abs{y} - h(0; x)} = F(0)$. Defining $w^* \ldef{} 0$, the above implies that  
\begin{align*} 
F(w) - F(w^*) \leq \frac{1}{2} \tri*{w - w^*, \grad_{w} F(w)}, 
\end{align*} thus showing that $F(w)$ is $1/2$-Linearizable. 
\end{proof} 
\begin{proof}[Proof of \pref{thm:two_layer_failure}] Consider the distribution $\cD$ over the instance space $\cZ = \crl{0, 1}^d \times \crl{0, 1}$ where 
\begin{align*} 
x \sim \text{Uniform}(\crl*{0, 1}^d),\qquad \text{and} \qquad y \sim \cB(1/4). \numberthis \label{eq:nn_distribution_defn} 
\end{align*}

We now prove the two parts separately below:  
\begin{enumerate}[label=$(\alph*)$, leftmargin=8mm] 
\item In the following, we show that SGD algorithm, run with an additional projection on the unit norm ball after every update, learns at a rate of $O(1/\sqrt{n})$. In particular, we use the following update step for $t \in [n]$, 
\begin{align*} 
w^{\mathrm{SGD}}_{t+1} \leftarrow \Pi_{w_1}\prn[\big] {\w^{\mathrm{SGD}}_{t} - \eta \nabla f(\w^{\mathrm{SGD}}_t;z_t)} 
\end{align*} 
where the initial point $w_1$ is chosen by first sampling $w'_1 \sim \cN(0, \bbI_d)$ and then setting $w_1 = w'_1/\nrm{w_1}$, and the projection operation $\Pi$ is given by 
\[\Pi_{w_1}(w) = \begin{cases}
   w & \text{ if }\|w - w_1\| \leq 1 \\
   w_1 + \frac{1}{\|w-w_1\|}(w-w_1) & \text{ otherwise.} 
\end{cases}\] 

After taking $n$ steps, the point returned by the SGD algorithm is given by $\wh w^{\text{SGD}}_n \ldef{} \sum_{t=1}^n w^{\text{SGD}}_t / n$. 

First note that, for the distribution given in \pref{eq:nn_distribution_defn}, \pref{lem:f_linearizable} implies that the population loss $F(w)$ is $1/2$-Liearizable w.r.t. the global minima $w^* = 0$. Next, note that for any point $w$ and data sample $z$, 
\begin{align*} 
\nrm{\grad f(w; z)}^2 &= \nrm{\grad_{w_1} f(w; z)}^2 + \nrm{\grad_{w_2} f(w; z)}^2  \\
								 &\leq \sum_{j=1}^d (w_{2, j} \cdot x_j)^2 + \sum_{j=1}^d (\relu(w_{1, j} \cdot x_j))^2 \\ 
								 &\leq\sum_{j=1}^d w_{2, j}^2 + \sum_{j=1}^d w_{1, j}^2 \\
								 &= \nrm*{w}^2 
\end{align*} where the inequality in the second line follows by plugging in the definition of $\grad_{w_1} f(w; z)$ and $\grad_{w_2} f(w; z)$, and by upper bounding the respective indicators by $1$. The inequality in the third line above holds because $\relu(h) \leq \abs{h}$ for any $h \in \bbR$,  and by using the fact that $x_j \in \crl{0, 1}$ for $j \in [d]$. Since, the iterates produced by SGD algorithm satisfy $\nrm{w_t^{\mathrm{SGD}}} \leq 1$  due to the projection step, the above bound implies that $\nrm{\grad f(w_t^\text{SGD}; z)} \leq 1$ for any $t \geq 0$, and thus $\nrm{\grad F(w_t^\text{SGD})} \leq 1$. 

The above bounds imply that Assumption II (in \pref{eq:ass2}) holds on the iterates generated by the SGD algorithm with $\max\crl{B, \sigma, L} \leq 2$. Furthermore, the population loss $F(w)$ is $1/2$-Linearizable. Thus, repeating the steps from the proof of \pref{thm:sgd_works_linearizable} on page~\pageref{proof:sgd_works_linearizable_proof}, and using the fact that $\nrm{\Pi_{w_1}(w) - w^*} \leq \nrm{w - w^*}$ for $w^* = 0$ to account for the additional projection step, we get that the point $\wh w^{\text{SGD}}_n$ returned by the SGD algorithm enjoys the performance guarantee 
\begin{align*} 
F(\wh w^{\text{SGD}}_n) - F^* \leq O\prn[\Big]{\frac{1}{\sqrt{n}}}. 
\end{align*}

\item  In the following, we will show that for $d > \log(10) 2^n + 1$, with probability at least $0.9$ over the choice of $S \sim \cD^n$, there exists an ERM solution for which 
$$ 
F(\w_{\mathrm{ERM}}) - \inf_{\w \in \reals^d} F(\w) \ge \Omega(1). 
$$  

Suppose that the dataset $S = \crl*{(x_i, y_i)}_{i=1}^n$ is sampled i.i.d.\ from the distribution $\cD$ given in \pref{eq:nn_distribution_defn}. A slight modification of \pref{lem:datatset_properties_basic_sco} implies that for $n \leq \log_2(d / \log(10)) $, with probability at least $0.9$, there exists a coordinate $\wh j$ such that $x_i[\wh j] = y_i$ for all $i \in [n]$. In the following, we condition on the occurrence of such a coordinate $\wh j$. Clearly, the empirical loss at the point $\wh w = (e_{\wh j}, e_{\wh j})$ is: 
\begin{align*}
	\wh F(\wh w) 
						 &= \sum_{i=1}^n \abs{y_i - \relu(e_{\wh j} \relu(e_{\wh j} \odot x_i))} = \sum_{i=1}^n \abs{y_i - x_i[\wh j]} = 0, 
\end{align*} where the last equality follows the fact that $x_i[\wh j] = y_i$ for all $i \in [n]$. Since $\wh F(w) \geq 0$ for any $w$, we get that the point $\wh w$ is an ERM solution. Next, we note that the population loss at the point $\wh w$ satisfies 
\begin{align*}
F(\wh w) - \min_{w} F(w)  &\overeq{\proman{1}} F(\wh w) - \frac{1}{4} \\ 
										 &\overeq{\proman{2}} \En_{x, y} \brk*{\abs*{y - \relu(e_{\wh j} \relu(e_{\wh j} \odot x))}} - \frac{1}{4} \\ 
										 &=  \En_{x, y} \brk{\abs{y - x[\wh j]}} - \frac{1}{4} \overeq{\proman{3}} \frac{1}{4}  
\end{align*} where the equality $\proman{1}$ follows by observing that $\min_{w} F(w) = 1/4$ (see the proof of \pref{lem:f_linearizable} for details), the equality $\proman{2}$ holds for $\wh w = (e_{\wh j}, e_{\wh j})$ and $\proman{3}$ follows by using the fact that $y \sim \cB(1/4)$ and $x[\wh j] \sim \cB(1/2)$ and that $x$ and $y$ are sampled independent to each other. Thus, there exists an ERM solution $w_{\text{ERM}} = \wh w$ for which the excess risk: 
\begin{align*}
F(w_{\text{ERM}}) - \min_{w} F(w) = \frac{1}{4}. 
\end{align*}
The desired claim follows by observing that the coordinate $\wh j$ above occurs with probability at least $0.9$ over the choice of $S \sim \cD^n$.  
\end{enumerate}
\end{proof}

\subsection{Expressing $\fA$ and $\fB$ using neural networks} \label{app:NN_representations} 

In this section, we show that the loss functions $\fA$ and $\fB$ can be represented using restricted neural networks (where some of the weights are fixed to be constant and thus not trained)  with $\poly(d)$ hidden units. We first provide neural network constructions that in addition to $\relu$, use square function  $\sigma(a) = a^2$  and square root function $\wt \sigma(a) = \sqrt{a}$ as activation functions.  We then give a general representation result in \pref{lem:relurepresentation} which implies that the activations $\sigma$ and $\wt \sigma$ can be approximated both in value and in gradients simultaneously using $\poly(d)$ number of $\relu$ units. This suggests that the functions  $\fA$ and $\fB$ can be represented using restricted RELU networks of $\poly(d)$ size.   

Note that in our constructions, the NN approximates the corresponding functionst in both value and in terms of the gradient. Thus, running gradient based optimization algorithms on the corresponding NN representation will produce similar solutions as running the same algorithm on the actual function that it is approximating. 

\begin{proposition}\label{prop:nn} 
Function $ \fA$ in Equation (\ref{eq:empfn_basic_cons_sco}) can be represented as a restricted diagonal neural network with square and square root activation functions with $O(d)$ units and constant depth. 
\end{proposition}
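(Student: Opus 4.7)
The plan is to build a feed-forward network whose learnable weights are exactly $w_1,\ldots,w_d$, arranged diagonally in the first hidden layer, using only square and square-root activations, with every other weight fixed. The key algebraic identity is that, since $x_i \in \{0,1\}$ gives $x_i^2 = x_i$,
$$\fA(w;(x,y,\alpha)) = y\,\nrm*{(w-\alpha)\odot x} = y\,\sqrt{\sum_{i=1}^d \big((w_i-\alpha_i)x_i\big)^2},$$
so it suffices to produce the $d$ scalars $(w_i-\alpha_i)x_i$, square them, sum them, square-root the sum, and finally multiply by the sign input $y$.

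The construction I would write down treats the coordinates of $x,\alpha,y$ as inputs together with the bilinear features $\alpha_i x_i$ for $i\in[d]$; these latter features are harmless because each $\alpha_i x_i$ with $\alpha_i,x_i\in\{0,1\}$ can be computed inside the network by a width-$d$ square-activation layer via the polarization identity $ab = \tfrac{1}{2}\big((a+b)^2 - a^2 - b^2\big)$ using only fixed weights. The first hidden layer is then diagonal of width $d$: unit $i$ has learnable weight $w_i$ on input $x_i$ and fixed weight $-1$ on input $\alpha_i x_i$, preactivation $(w_i-\alpha_i)x_i$, and after squaring outputs $\big((w_i-\alpha_i)x_i\big)^2$. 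A single summation unit with fixed all-ones weights then forms the preactivation $S=\sum_i\big((w_i-\alpha_i)x_i\big)^2 = \nrm*{(w-\alpha)\odot x}^2$, to which the square-root activation is applied, producing $u = \nrm*{(w-\alpha)\odot x}$. To realize the outer multiplication by $y\in\{\pm 1\}$, I would add a final module consisting of two square units computing $(y+u)^2$ and $(y-u)^2$ followed by a fixed linear readout $\tfrac{1}{4}\big((y+u)^2-(y-u)^2\big)$, which by polarization equals $yu = \fA(w;(x,y,\alpha))$. Overall the net has $O(d)$ units, constant depth, a diagonal first layer, and $w_1,\ldots,w_d$ as the only learnable parameters.

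The only nontrivial obstacle is that $y$ multiplies the output of the network rather than passing through it linearly; this is not natively a feed-forward operation but is resolved by the polarization trick at the cost of $O(1)$ additional square units and one extra layer. The gradients of the assembled network with respect to $w$ agree with the (sub)gradients of $\fA$, so the representation is faithful for gradient-based optimization, and \pref{lem:relurepresentation} can then convert the square and square-root activations into restricted ReLU networks preserving both values and gradients if such a conversion is subsequently needed.
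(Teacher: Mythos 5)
Your construction is correct and achieves the same bounds, but it packages the pieces differently from the paper. The paper's proof avoids both of your polarization tricks: it simply assumes the input is preprocessed outside the network into the feature vector $\tilde{x} = (x,\, -\alpha \odot x)^\top \in \mathbb{R}^{2d}$, and it handles the multiplication by $y$ by declaring the network output to be the prediction $h(w;\tilde{x}) = \nrm{(w-\alpha)\odot x}$ and training with the \emph{linear} loss $\ell(w;\tilde{x}) = y \cdot h(w;\tilde{x})$. With that framing the network itself is just: a diagonal first layer with identity activation producing $(w\odot x,\ -\alpha\odot x)$, a width-$d$ layer summing coordinate pairs and squaring, and a single square-root unit over the all-ones sum — three layers, $O(d)$ units, the only learnable weights being the $d$ diagonal entries. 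Your version instead absorbs the $\alpha_i x_i$ products and the $y$-multiplication into the network via the polarization identity $ab = \tfrac12\bigl((a+b)^2 - a^2 - b^2\bigr)$; this costs an extra constant number of layers and $O(d)$ additional fixed-weight square units but makes the construction self-contained (no preprocessing, no nonstandard choice of training loss). Both routes give constant depth and $O(d)$ units, and both preserve gradients with respect to $w$, so the choice is one of presentation: the paper's is closer to the usual ``predict then lose'' decomposition and keeps the network strictly diagonal with identity activation in the first layer, while yours demonstrates that the full loss map $\fA$ itself is a network output. One small point worth stating explicitly if you go your route: the ``diagonal'' constraint in the paper applies only to the learnable weights, so your fixed polarization layers do not violate it even though those units fan in from two inputs.
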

\begin{proof} In the following, we will assume that before passing to the neural network, each data sample $z = (x, \alpha, y)$ is preprocessed to get the features $\wt x$ defined as $\tilde{x} \ldef{} (x,  -\alpha \odot x)^T \in \bbR^{2d}$. The vector $\wt x$ is given as the input to the neural network. 

We construct a three layer neural network with input $\wt x \in \bbR^{2d}$, and weight matrices $W_1 \in \bbR^{2d \times 2d}$, $W_2 \in \bbR^{2d \times d}$ and $W_3 \in \bbR^{d \times 1}$  for the first layer, the second layer and the third layer respectively. The activation function after the $i$th layer is given by $\sigma_i: \bbR \mapsto \bbR$. The neural network consists of $d$ trainable parameters, given by $\w \in \bbR^d$, and is denoted by the function  $h(\w; \wt x):  \bbR^d  \times \bbR^{2d} \mapsto \bbR$. In the following, we describe the corresponding weight matrices and activation functions for each layer of the network: 

\begin{enumerate}[label=$\bullet$] 
\item \textbf{Layer 1:} Input: $\wt x \in \bbR^{2d}$. The weight matrix $W_1: \bbR^{2d \times 2d}$ is given by  
\begin{align*} 
	W_1[i, j] \ldef{} \begin{cases} \w[j] &\text{if~} 1 \leq i = j \leq d \\ 
	1 &\text{if~} d < i = j \leq 2d \\ 
	0 &\text{otherwise}
	\end{cases}. 
\end{align*}
The activation function $\sigma_1$ is given by $\sigma_1(a) = a$. Let $h_1(\w; \wt x) \ldef{} \sigma_1(\wt x W_1)$ denote the output of this layer. We thus have that  $h_1(\w;  \wt x)  = (\w \odot x, - \alpha \odot x)^T$. 

\item \textbf{Layer 2:} Input: $h_1(\w;  \wt x) \in \bbR^{2d}$. The weight matrix $W_2: \bbR^{2d \times d}$ is given by 
\begin{align*}
W_1[i, j] \ldef{} \begin{cases}
	1 &\text{if} ~ i - j \in \crl{0, d} \\
	0 & \text{otherwise} 
\end{cases}
\end{align*}
The activation function $\sigma_2$ is given by $\sigma_2(a) = a^2$. Let $h_2(\w;  \wt x) \ldef{} \sigma_2(h_1(\wt z, \w) W_2)$ denote the output of this layer. We thus have that $h_2(\w;  \wt x)[j] = \prn*{w[j] \odot x[j] - \alpha[j] \odot x[j]}^2$ for any $j \in [d]$.  

\item \textbf{Layer 2:} Input: $h_2(\w;  \wt x) \in \bbR^{d}$. The weight matrix $W_3: \bbR^{d \times 1}$ is given by the vector $\mb{1}_d = \prn{1, \ldots, 1}^T$. 

The activation function $\sigma_2$ is given by $\sigma_2(a) = \sqrt{a}$. Let $h_3(\w;  \wt x) \ldef{} \sigma_3(h_2(\w;  \wt x) W_3)$ denote the output of this layer. We thus have that $h_3(\w;  \wt x)[j] = \nrm*{\w \odot x - \alpha \odot x}$.
\end{enumerate} 

Thus, the output of the neural network is 
\begin{align*}
h(\w;  \wt x) &= h_3(\w;  \wt x) \\ 
&= \sigma_3( \sigma_2(\sigma_1(\wt x W_1) W_2) W_3) = \nrm*{(\w - \alpha) \odot x}. 
\end{align*}

Except for the first $d$ diagonal elements of $W_1$, all the other weights of the network are kept fixed during training. Thus, the above construction represents a restricted neural network with trainable parameters given by $\w$. Also note that in this first layer, any input node connects with a single node in the second layer. Such networks are known as diagonal neural networks \citep{gunasekar2018implicitb}. 

We assume that the network is trained using linear loss function, i.e. for the prediction $h(\w;  \wt x)$ for data point $(\wt x, y)$, the loss is given by
\begin{align*}
\ls(\w;  \wt x) &= y \cdot h(\w;  \wt x) = y \nrm*{ (\w - \alpha) \odot x},   \numberthis \label{eq:NN_loss_function} 
\end{align*}

Note that the above expression exactly represents $\fA(\w; z)$. This suggests that learning with the loss function $\fA$ is equivalent to learning with the neural network $h$ (defined above with trainable parameter given by $\w$) with linear loss. Furthermore, the network $h$ has a constant depth, and $O(d)$ units, proving the desired claim. 
\end{proof} 

We next show how to express the loss function $\fB$ using a neural network. 
\begin{proposition} \label{prop:nn2} 
Function $ f_{(\ref{eq:empfn_basic_cons})}$ in Equation (\ref{eq:empfn_basic_cons}) can be represented as a restricted diagonal neural network with square and square root activation functions with $O(d)$ units and constant depth.
\end{proposition}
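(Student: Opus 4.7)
The plan is to decompose $f_{(\ref{eq:empfn_basic_cons})}$ into its three additive components,
\[f_{(\ref{eq:empfn_basic_cons})}(w;z) = \tfrac{1}{2}\|(w-\alpha)\odot x\|^2 \;-\; \tfrac{c_n}{2}\|w-\alpha\|^2 \;+\; \max\{1,\|w\|^4\},\]
build a separate parallel sub-network for each, and sum them at a single scalar output via a fixed linear layer. The trainable parameters will be exactly the coordinates $w[1],\ldots,w[d]$ placed on the diagonal of the first layer, as in \pref{prop:nn}, while the input features will be preprocessed to include $x$, $-\alpha$, $-\alpha\odot x$, and the scalar $1$, all feeding into fixed (non-trainable) weights.

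For the first two pieces I would reuse the construction of \pref{prop:nn} almost verbatim: a diagonal linear layer produces the coordinates of $(w-\alpha)\odot x$ and $w-\alpha$; a square activation yields $((w-\alpha)\odot x)^2$ and $(w-\alpha)^2$ coordinatewise; a final fixed linear layer sums them with constant coefficients $1/2$ and $-c_n/2$. This uses $O(d)$ units and constant depth with the square activation only.

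The genuinely new ingredient is the max term $\max\{1,\|w\|^4\}$. The key identity is $\max\{a,b\} = \tfrac{1}{2}\bigl(a+b+\sqrt{(a-b)^2}\bigr)$, which uses only square and square-root, combined with $\max\{1,t^2\} = \max\{1,t\}^2$ valid for $t\ge 0$. Concretely: (i) the same diagonal first layer produces $w[j]$ coordinatewise; (ii) a square layer gives $w[j]^2$; (iii) a fixed linear layer produces two scalars $\|w\|^2$ and $\|w\|^2-1$; (iv) another square layer gives $\|w\|^4$ and $(\|w\|^2-1)^2$; (v) a square-root layer recovers $\|w\|^2$ (using $\|w\|^2\ge 0$) and $|\|w\|^2-1|$; (vi) a fixed linear combination yields $\tfrac{1}{2}(1+\|w\|^2+|\|w\|^2-1|) = \max\{1,\|w\|^2\}$; (vii) one more square activation produces $\max\{1,\|w\|^2\}^2 = \max\{1,\|w\|^4\}$. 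This branch uses $O(d)$ units in its first two layers and $O(1)$ units thereafter.

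The main obstacle will be reconciling the per-layer activation convention of \pref{prop:nn} (all units in a given layer apply the same activation) with the fact that the three parallel branches need different operations at the same depth; I plan to resolve this by aligning branches with identity-activation layers ($\sigma(a)=a$) and by packing units that share an activation into the same layer. Since only a constant number of layers is involved, this adds at most a constant factor to the depth. A final bookkeeping step will verify that all non-trainable quantities ($\alpha$, $x$, $c_n$, and $1$) are realized either as fixed weights or as input features, so the only trainable parameters remain the $d$ diagonal entries, total width is $O(d)$, and depth is constant, as required.
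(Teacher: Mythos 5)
Your proposal is correct and uses the same overall scaffold as the paper—preprocessed features $(x,\,-\alpha\odot x,\,-\alpha,\,\mathbf{1})$ feeding a diagonal first layer whose nonzero entries are the trainable $w[j]$, square activations to build $\|(w-\alpha)\odot x\|^2$, $\|w-\alpha\|^2$ and $\|w\|^2$, and a fixed linear layer to form the $\tfrac12$ and $-\tfrac{c_n}{2}$ combination—but it differs, and arguably improves, on the handling of $\max\{1,\|w\|^4\}$. The paper's construction short-circuits this term by declaring a bespoke activation $\sigma(a)=\max\{1,a^2\}$ on the relevant unit, which is not literally a square or a square-root activation; the paper also asserts a $\sqrt{\cdot}$ activation on the other head of its third layer even though the stated output is the un-rooted sum $\tfrac12\|(w-\alpha)\odot x\|^2 - \tfrac{c_n}{2}\|w-\alpha\|^2$, which looks like a leftover from the construction for $f_{(\ref{eq:empfn_basic_cons_sco})}$. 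Your route instead uses the elementary identity $\max\{a,b\}=\tfrac12\bigl(a+b+\sqrt{(a-b)^2}\bigr)$ together with $\max\{1,t\}^2=\max\{1,t^2\}$ for $t\ge 0$ to realize $\max\{1,\|w\|^4\}$ genuinely within the advertised activation family, at the cost of a few more constant-width layers and identity-activation padding to keep the branches aligned in depth. Both approaches give $O(d)$ units and constant depth; yours is closer in letter to the proposition's claim, while the paper's is shorter but relies on an activation the proposition did not promise (and which it would then have to handle separately in its later ReLU-approximation lemma).
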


\begin{proof}[{\bf Proof of \pref{prop:nn2}}~] 
In the following, we will assume that before passing to the neural network, each data sample $z = (x, \alpha)$ is preprocessed to get the features $\wt x \ldef{} (x,  -\alpha \odot x, \mb{1}_d, -\alpha, \mb{1}_d, \mb{1}_d)^T \in \bbR^{6d}$. The vector $\wt x$ is given as the input to the neural network.

We construct a four layer neural neural network with input $\wt x \in \bbR^{6d}$ and weight matrices $W_1 \in \bbR^{6d \times 6d}$, $W_2 \in \bbR^{6d \times 3d}$, $W_3 \in \bbR^{3d \times 2}$, $W_4 \in \bbR^{2 \times 1}$  for the four layers respectively. The activation function after the $i$th layer is given by $\sigma_i: \bbR \mapsto \bbR$. The neural network consists of $d$ trainable parameters, given by $\w \in \bbR^d$, and is denoted by the function  $h(\w;  \wt x):  \bbR^d  \times \bbR^{6d} \mapsto \bbR$. In the following, we describe the corresponding weight matrices and activation functions for each layer of the network: 

\begin{enumerate}[label=$\bullet$] 
\item \textbf{Layer 1:} Input: $\wt x \in \bbR^{4d}$. The weight matrix $W_1: \bbR^{6d \times 6d}$ is given by 
\begin{align*} 
	W_1[i, j] \ldef{} \begin{cases} \w[j] &\text{if~} i = j \text{~and~} j - \alpha d \leq d \text{~for~} \alpha \in \crl{0, 2, 4} \\
	1 &\text{if~} i = j \text{~and~} j - \alpha d \leq d \text{~for~} \alpha \in \crl{1, 3, 5} \\ 
	0 &\text{otherwise}
	\end{cases}. 
\end{align*}
The activation function $\sigma_1$ is given by $\sigma_1(a) = a$. Let $h_1(\w;  \wt x) \ldef{} \sigma_1(\wt x W_1)$ denote the output of this layer. We thus have that  $h_1(\w;  \wt x)  = (\w \odot x, - \alpha \odot x, \w, - \alpha, \w, \mb{1}_d)^T$.  

\item \textbf{Layer 2:} Input: $h_1(\w;  \wt x) \in \bbR^{6d}$. The weight matrix $W_2: \bbR^{6d \times 3d}$ is given by 
\begin{align*}
W_1[i, j] \ldef{} \begin{cases}
	1 &\text{if} ~ i - j \in \crl{0, d} \text{~and~} j \leq d \\
		1 &\text{if} ~ i - j \in \crl{d, 2d} \text{~and~} d < j \leq 2d \\ 
		1 &\text{if} ~ i = j + 3d  \text{~and~} 2d < j \\ 	
	0 & \text{otherwise} 
\end{cases}. 
\end{align*} 
The activation function $\sigma_2$ is given by $\sigma_2(a) = a^2$. Let $h_2(\w;  \wt x) \ldef{} \sigma_2(h_1(\wt z, \w) W_2)$ denote the output of this layer. We thus have that $h_2(\w;  \wt x)[j] = \prn*{\w[j] \odot x[j] - \alpha[j] \odot x[j]}^2$ for any $j \in [d]$, $h_2(\w;  \wt x)[j] = \prn*{\w[j] - \alpha[j]}^2$ for any $d < j \leq 2d$ and $h_2(\w;  \wt x)[j] = \prn*{\w[j]}^2$ for $2d < j \leq 3d$.   

\item \textbf{Layer 3:} Input: $h_2(\w;  \wt x) \in \bbR^{3d}$. The weight matrix $W_3: \bbR^{3d \times 2}$ is given by 
\begin{align*}
W_3[i, j] &= \begin{cases}
	\frac{1}{2} & \text{if}~ j = 1~ \text{and}~ 1 \leq i \leq d \\ 
	- \frac{c_n}{2} & \text{if} ~ j = 1 ~ \text{and}~ d + 1 \leq i \leq 2d \\ 
	1   & \text{if}~ j = 2~\text{and}~2d + 1 \leq i \leq 3d
\end{cases}. 
\end{align*}
For the first node in the output of this layer, we use the activation function $\sigma_2(a) = \sqrt{a}$ and for the second node, we use the activation function  $\sigma_2(a) = \max\crl{1, a^2}$. Let $h_3(\w;  \wt x) \ldef{} \sigma_3(h_2(\w;  \wt x) W_3)$ denote the output of this layer. We thus have that $h_3(\w;  \wt x)[j] = (\frac{1}{2}\nrm*{(\w - \alpha) \odot x}^2 - \frac{c_n}{2}\nrm*{(\w - \alpha) \odot x}^2, \max\crl{1, \nrm{\w}^4})$.

\item \textbf{Layer 4:} Input: $h_3(\w;  \wt x) \in \bbR^{2}$. The weight matrix $W_4: \bbR^{2 \times 1}$ is given by $W_4 = (1, 1)$, and the activation function $\sigma_4$ is given by $\sigma_4(a) = a$. Let $h_4(\w;  \wt x) \ldef{} \sigma_3(h_2(\w;  \wt x) W_3)$ denote the output of this layer. We thus have that $h_4(\w;  \wt x)[j] = \frac{1}{2}\nrm*{(\w - \alpha) \odot x}^2 - \frac{c_n}{2}\nrm*{(\w - \alpha) \odot x}^2 + \max\crl{1, \nrm{\w}^4}$.
\end{enumerate} 

Thus, the output of the neural network is given by 
\begin{align*}
h(\w;  \wt x) &= h_4(\w;  \wt x) \\ 
&= \sigma_4(\sigma_3( \sigma_2(\sigma_1(\wt x W_1) W_2) W_3)W_4) \\
&= \frac{1}{2}\nrm*{(\w - \alpha) \odot x}^2 - \frac{c_n}{2}\nrm*{(\w - \alpha) \odot x}^2 + \max\crl{1, \nrm{\w}^4}. 
\end{align*}

In the above construction, the first layer can be thought of as a convolution with filter weights given by $\diag(\w, \mb{1}_d)$ and stride $2d$. While training the neural network, we keep all the weights of the network fixed except for the ones that take values from $\w$ (in the weight $W_1$). Thus, the above construction represents a restricted CNN with trainable parameters given by $\w$. 

Furthermore, for the prediction $h(\w;  \wt x)$ for data point $\wt x$, we treat the output of the neural network as the loss, which is given by 
\begin{align*}
\ls(\w;  \wt x) &= h(\w;  \wt x)  + \max\crl{1, \nrm*{w}^4} \\
&= \frac{1}{2}\nrm*{(\w - \alpha) \odot x}^2 - \frac{c_n}{2}\nrm*{(\w - \alpha) \odot x}^2  + \max\crl{1, \nrm*{w}^4}.
\end{align*}

Note that the above expression exactly represents $\fB(\w; z)$. This suggests that learning with the loss function $\fB$ is equivalent to learning with the neural network $h$ (defined above with trainable parameter given by $\w$).  Furthermore, the network $h$ has a constant depth, and $O(d)$ units, proving the desired claim. 
\end{proof} 

We next provide a general representation result which implies that the activation functions $\sigma(a) = a^2$ (square function) and $\wt \sigma(a) = \sqrt{a}$ (square root function), used in the constructions above, can be approximated both in value and in gradients simultaneously using $\poly(d)$ number of $\relu$ units. 

\begin{lemma} \label{lem:relurepresentation} 
Let $f: [a, b] \to \bbR$ be an $L$-Lipschitz and $\alpha$-smooth function. Then for any $\epsilon > 0$, there is a function $h: [a, b] \to \bbR$ that can be written as a linear combination $h$ of $\ceil[\big]{\frac{(b-a)\max\{L,\alpha\}}{\epsilon}} + 1$ ReLUs with coefficients bounded by $2L$ such that for all $x \in [a, b]$, we have $|f(x) - h(x)| \leq \epsilon$, and if $h$ is differentiable at $x$, then $|f'(x) - h'(x)| \leq \epsilon$.
\end{lemma}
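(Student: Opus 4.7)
The natural approach is to build $h$ as the piecewise linear interpolant of $f$ on a uniform grid, and then rewrite it as a sum of ReLUs using one ReLU per kink. Set $\delta = \epsilon / \max\{L, \alpha\}$ and let $N = \lceil (b-a)/\delta \rceil$, so that grid points $x_i = a + i\delta$ (with $x_N$ replaced by $b$) partition $[a,b]$ into at most $N$ intervals. Let $h$ be the continuous piecewise linear function that agrees with $f$ at every $x_i$. Denote its slope on the $i$-th interval by $s_i = (f(x_i) - f(x_{i-1}))/(x_i - x_{i-1})$; since $f$ is $L$-Lipschitz we have $|s_i| \leq L$.

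The identity $h(x) = f(a) + s_1 \cdot \mathrm{ReLU}(x-a) + \sum_{i=1}^{N-1}(s_{i+1}-s_i)\,\mathrm{ReLU}(x-x_i)$ holds on $[a,b]$ by telescoping (one may verify by induction on the interval containing $x$). The constant $f(a)$ can be absorbed into one extra bias-type ReLU, e.g.\ $f(a)\cdot \mathrm{ReLU}(1)$ (a trivially nonzero ReLU), bringing the count to $N+1$ ReLUs. Each coefficient is either $s_1$ or a difference $s_{i+1}-s_i$, and the latter has magnitude at most $2L$ since each slope lies in $[-L,L]$; the constant-ReLU coefficient $f(a)$ can be rescaled (by tuning the constant input) to have magnitude $\leq L(b-a)$, which is $\leq 2L$ on a normalized domain, or one can simply use two affine ReLUs of the form $\mathrm{ReLU}(x-a') - \mathrm{ReLU}(x-a')+f(a)$ appropriately (bookkeeping only).

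For the error bounds, fix $x \in [x_i, x_{i+1}]$. Smoothness gives $|f'(t) - s_{i+1}| \leq \alpha\delta$ for all $t$ in this interval, because $s_{i+1} = f'(\xi)$ for some $\xi$ in the interval by the mean value theorem. For the derivative estimate (at any $x$ where $h$ is differentiable), $|f'(x) - h'(x)| = |f'(x) - s_{i+1}| \leq \alpha\delta \leq \epsilon$. For the value estimate, integrate: $f(x) - h(x) = \int_{x_i}^{x}(f'(t) - s_{i+1})\,dt$, so $|f(x) - h(x)| \leq \alpha\delta \cdot (x-x_i) \leq \alpha\delta^2 \leq \epsilon \cdot \epsilon/\max\{L,\alpha\} \leq \epsilon$ provided $\epsilon \leq \max\{L,\alpha\}$. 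In the easy regime $\epsilon > \max\{L,\alpha\}$, the total variation of $f$ across $[a,b]$ is at most $L(b-a) \leq \epsilon(b-a)/\max\{L,\alpha\}\cdot\max\{L,\alpha\}$, and a single affine piece (two ReLUs) already suffices, so this boundary case is handled by taking $N=1$.

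The only genuine subtlety is the cosmetic one of realizing the affine offset $f(a) + s_1(x-a)$ inside the ReLU-count budget of $N+1$; this is handled by either (i) using a bias-ReLU with positive constant input as above, or (ii) using $\mathrm{ReLU}(x-a)$ directly on $[a,b]$ and absorbing the remaining constant into the first kink by shifting. Everything else is a routine linear-interpolation-error computation, so the plan is essentially complete once the ReLU-decomposition formula is written down.
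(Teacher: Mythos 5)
Your overall strategy is the same as the paper's: piecewise-linear interpolation of $f$ on a uniform grid of spacing $\delta = \epsilon/\max\{L,\alpha\}$, a standard ReLU decomposition of the interpolant, the mean value theorem to identify each segment slope with a value of $f'$, and smoothness to control the derivative error. The derivative bound $\lvert f'(x) - h'(x)\rvert \leq \alpha\delta \leq \epsilon$ is fine and matches the paper.

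However, there is a genuine gap in your value-error bound and in the regime split you introduce to patch it. You bound $\lvert f(x) - h(x)\rvert \leq \alpha\delta^2 = \alpha\epsilon^2/\max\{L,\alpha\}^2 \leq \epsilon^2/\max\{L,\alpha\}$, which is $\leq \epsilon$ only when $\epsilon \leq \max\{L,\alpha\}$, and you then claim that in the complementary regime $\epsilon > \max\{L,\alpha\}$ "a single affine piece (two ReLUs) already suffices, so this boundary case is handled by taking $N=1$." This does not follow. The condition $\epsilon > \max\{L,\alpha\}$ does not force $N=1$: if, say, $L = \alpha = 1$, $\epsilon = 10$, $b-a = 100$, then $\delta = 10$ and $N = 10$, yet your bound $\alpha\delta^2 = 100 > \epsilon$ fails, and a single secant over $[a,b]$ need not be within $\epsilon$ either (the error of a single chord can scale like $\alpha(b-a)^2$, which is unbounded in $b-a$). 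The "total variation $\leq L(b-a)$" observation you invoke says nothing about the pointwise error of a one-piece interpolant. The correct fix — and what the paper does — is to use the Lipschitz bound instead: on each interval $g := f-h$ vanishes at both endpoints and has $\lvert g'\rvert \leq 2L$, so $\lvert g(x)\rvert \leq 2L\min(x-a_i, a_{i+1}-x) \leq L\delta$, and $L\delta \leq L\epsilon/\max\{L,\alpha\} \leq \epsilon$ unconditionally, with no regime split. If you prefer to keep your integral argument, simply bound the integrand by $\min(\alpha\delta, 2L)$ and use symmetry from both endpoints; this recovers $L\delta$ and closes the gap. (Your treatment of the constant offset $f(a)$ via a bias-ReLU is also informal about the coefficient bound, but the paper is equally loose there, so I would not call that a gap so much as a shared bookkeeping glitch.)
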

\begin{proof}
Consider dividing up the interval $[a, b]$ into equal $n$ equal sized intervals of length $\delta = \frac{b-a}{n}$, for $n = \lceil\frac{(b-a)\max\{L,\alpha\}}{\epsilon}\rceil$. Define $a_i = a + i\delta$ for $i = 0, 1, \ldots, n$. Let $h$ be a piecewise linear function that interpolates $f$ on the $n+1$ endpoints of the intervals. For any such interval $[a_i, a_{i+1}]$, by the mean value theorem, the slope of $h$ on the interval is equal to $f'(x_i)$ for some $x_i \in (a_i, a_{i+1})$, and hence is bounded by $L$ since $f$ is $L$-Lipschitz. Furthermore, by the $L$-Lipschitzness and $\alpha$-smoothness of $f$, for any $x \in (a_i, a_{i+1})$, we have $$|f(x) - h(x)| \leq \delta L \leq \epsilon$$ and $$|f'(x) - h'(x)| = |f'(x) - h'(x_i)| = |f'(x) - f'(x_i)| \leq \delta \alpha \leq \epsilon.$$ Now, by \pref{lem:piecewiselinear} we can represent $h$ as a linear combination of $n+1$ ReLUs with coefficients bounded by $2L$. 
\end{proof}

\begin{lemma} \label{lem:piecewiselinear} 
Any piecewise linear function $f: \bbR \rightarrow \bbR$ with $K$ segments can be written as a linear combination of $K+2$ ReLUs with coefficients bounded by twice the maximum slope of any segment of $f$.
\end{lemma}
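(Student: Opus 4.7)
Let $f: \bbR \to \bbR$ be piecewise linear with $K$ segments, with breakpoints $a_1 < a_2 < \cdots < a_{K-1}$ and slopes $s_0, s_1, \ldots, s_{K-1}$ on the successive pieces; set $L := \max_i |s_i|$. The strategy is to use one ReLU per kink to match the $K-1$ slope changes, and then to absorb the remaining affine residual using only a constant number of additional ReLUs with small coefficients.

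Step 1: I would form the auxiliary function
$$g(x) := \sum_{i=1}^{K-1} (s_i - s_{i-1}) \relu(x - a_i),$$
and verify by a direct slope calculation that $g$ has slope $s_i - s_0$ on each segment $[a_i, a_{i+1}]$ (with the convention $a_0 := -\infty$, $a_K := +\infty$). Hence $f - g$ has the constant slope $s_0$ on all of $\bbR$, so $f(x) - g(x) = s_0 x + \beta$ for some constant $\beta \in \bbR$. The $K-1$ coefficients appearing in $g$ are slope differences, each bounded by $|s_i| + |s_{i-1}| \leq 2L$, as required.

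Step 2: To absorb the affine residual $s_0 x + \beta$, I would use the identity
$$\alpha x + \gamma \;=\; \alpha \relu(x + \gamma/\alpha) \;-\; \alpha \relu(-x - \gamma/\alpha) \qquad (\alpha \neq 0),$$
which is verified by a two-case analysis on the sign of $\alpha x + \gamma$ (setting $u = x + \gamma/\alpha$, both sides equal $\alpha u$). Applying this with $\alpha = s_0$ and $\gamma = \beta$ represents $s_0 x + \beta$ using two ReLUs with coefficients $\pm s_0$, each bounded by $L \leq 2L$. Combined with Step 1, this yields $K - 1 + 2 = K + 1 \leq K + 2$ ReLUs overall, with all coefficients bounded by $2L$, matching the claimed bound.

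The main subtlety is the edge case $s_0 = 0$, where the identity in Step 2 breaks down. I would handle this by re-running Step 1 with right-facing ReLUs $\relu(a_i - x)$ in place of $\relu(x - a_i)$; a parallel slope calculation then shows that the residual becomes $s_{K-1} x + \beta'$, and the identity applies with $\alpha = s_{K-1}$. More generally, one can always orient the decomposition so that the residual has slope equal to some nonzero $s_i$ (which exists unless $f$ is a constant function, in which case the claim must be interpreted modulo a constant in the usual way for ReLU-representation bounds). The only real obstacle is this bookkeeping between left- and right-facing orientations; all the identities themselves are straightforward sign analyses.
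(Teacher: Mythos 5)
Your proof is correct and follows essentially the same approach as the paper: one ReLU per breakpoint with coefficient equal to the slope difference (bounded by $2L$), plus a small number of extra ReLUs to absorb the remaining affine part. The only genuine difference is in how you handle that affine residual. The paper keeps an explicit additive constant $f(a_1)$ in the representation and writes the affine piece as $f(a_1) + m_0\sigma(x - a_1) - m_0\sigma(a_1 - x)$, which works uniformly (including when $m_0 = 0$) because the constant is carried separately. You instead try to produce a pure linear combination of ReLUs with no additive constant by shifting the ReLU argument to $x + \gamma/\alpha$, which forces the case split on $s_0 = 0$ and the reorientation trick for the $s_0 = s_{K-1} = 0$ corner. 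Your observation that a nonzero constant function cannot be written as a constant-free linear combination of ReLUs is correct, and is the reason the paper (implicitly) allows the bias term $f(a_1)$; if you also allow a bias, your Step 2 simplifies to $s_0\relu(x) - s_0\relu(-x) + \beta$ with no edge cases. Both proofs give the claimed $K+2$ budget and the $2L$ coefficient bound.
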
 
\begin{proof}
A piecewise linear function $f: \bbR \rightarrow \bbR$ is fully determined by the endpoints $a_1 < a_2 < \cdots < a_K$ for some positive integer $K$ which define the segments of $f$, the value $f(a_1)$, and the slopes $m_0, m_1, \ldots, m_K \in \bbR$, such that the slope of $f$ on the segment $(a_i, a_{i+1})$ is $m_i$, where we define $a_0 := -\infty$ and $a_{K+1} = +\infty$ for convenience. Specifically, we can write $f$ as the following:
\[f(x) = f(a_1) + \begin{cases}
	m_0(x - a_1) & \text{ if } x < a_1 \\
	\sum_{i=1}^{\ell-1} m_i(a_{i+1} - a_i) + m_\ell(x - a_\ell) & \text{ if } x \in [a_\ell, a_{\ell+1})
\end{cases}\] 
Now define $\sigma(x) = \max\{x, 0\}$ to be the ReLU function, and consider the function $h$ defined as
\[h(x) = f(a_1) - m_0\sigma(a_1 - x) + m_0\sigma(x - a_1) + \sum_{i=1}^K(m_i - m_{i-1})\sigma(x - a_i).\]
Since this is a linear combination of ReLUs, it is a piecewise linear function. By direct calculation, one can check that $h(a_1) = f(a_1)$, the endpoints of the segments of $h$ are $a_1, a_2, \ldots, a_K$, and the slopes of the segments are $m_0, m_1, \ldots, m_K$. Hence, the function $f = h$. 
\end{proof}

\end{document}